\def\pa{{\mathtt{pa}}}
\newcommand{\head}[1]{{({#1})}}
\newcommand{\h}{{\head{h}}}
\newcommand{\hprime}{{\head{h'}}}
\newcommand{\veczero}{\mathbf{0}}
\newcommand{\LayerNorm}{\text{LN}}
\newcommand{\HleqD}{[H]_{\leq D}}
\newcommand{\pifromP}{{\pi\sim\cP}}
\newcommand{\EEpifromP}{{\EE_\pifromP}}
\newcommand{\seq}{{\mathtt{seq}}}
\acrodef{RPE}{Relative Positional Embedding}
\crefname{figure}{Figure}{Figures}
\Crefname{figure}{Figure}{Figures}
\title{Unveiling Induction Heads: Provable Training Dynamics and Feature Learning in Transformers}
\author{
    Siyu Chen$^1$
\and 
    Heejune Sheen$^1$ 
\and 
    Tianhao Wang$^2$
\and 
    Zhuoran Yang$^1$
\and 
{\small\textit{$^1$Department of Statistics and Data Science, Yale University}} 
\and
{\small\textit{$^2$Toyota Technological Institute at Chicago}}
\and
{
    \small\texttt{\{siyu.chen.sc3226, heejune.sheen, zhuoran.yang\}@yale.edu}\quad \texttt{tianhao.wang@ttic.edu}
}
}
\date{}
\begin{document}
\maketitle

\begin{abstract}
In-context learning (ICL) is a cornerstone of large language model (LLM) functionality, yet its theoretical foundations remain elusive due to the complexity of transformer architectures. In particular, most existing work only theoretically explains how the attention mechanism facilitates ICL under certain data models. It remains unclear how the other building blocks of the transformer contribute to ICL. To address this question, we study how a two-attention-layer transformer is trained to perform ICL on $n$-gram Markov chain data, where each token in the Markov chain statistically depends on the previous $n$ tokens. 
We analyze a sophisticated transformer model featuring relative positional embedding, multi-head softmax attention,  and a feed-forward layer with normalization. 
We prove that the gradient flow with respect to a cross-entropy ICL loss converges to a limiting model that performs a generalized version of the ``{\bf \textit{induction head}}'' mechanism with a learned feature, resulting from the congruous contribution of all the building blocks. 
In the limiting model, the first attention layer acts as a {\bf \textit{copier}}, copying past tokens within a given window to each position, and the feed-forward network with normalization acts as a {\bf 
 \textit{selector}} that generates a feature vector by only looking at informationally relevant parents from the window. 
Finally, the second attention layer is a {\bf \textit{classifier}} that
compares these features with the feature at the output position, and uses the resulting similarity scores to generate the desired output. Our theory is further validated by experiments.

\begin{figure}[h!]
    \centering
    \includegraphics[width=0.75\textwidth]{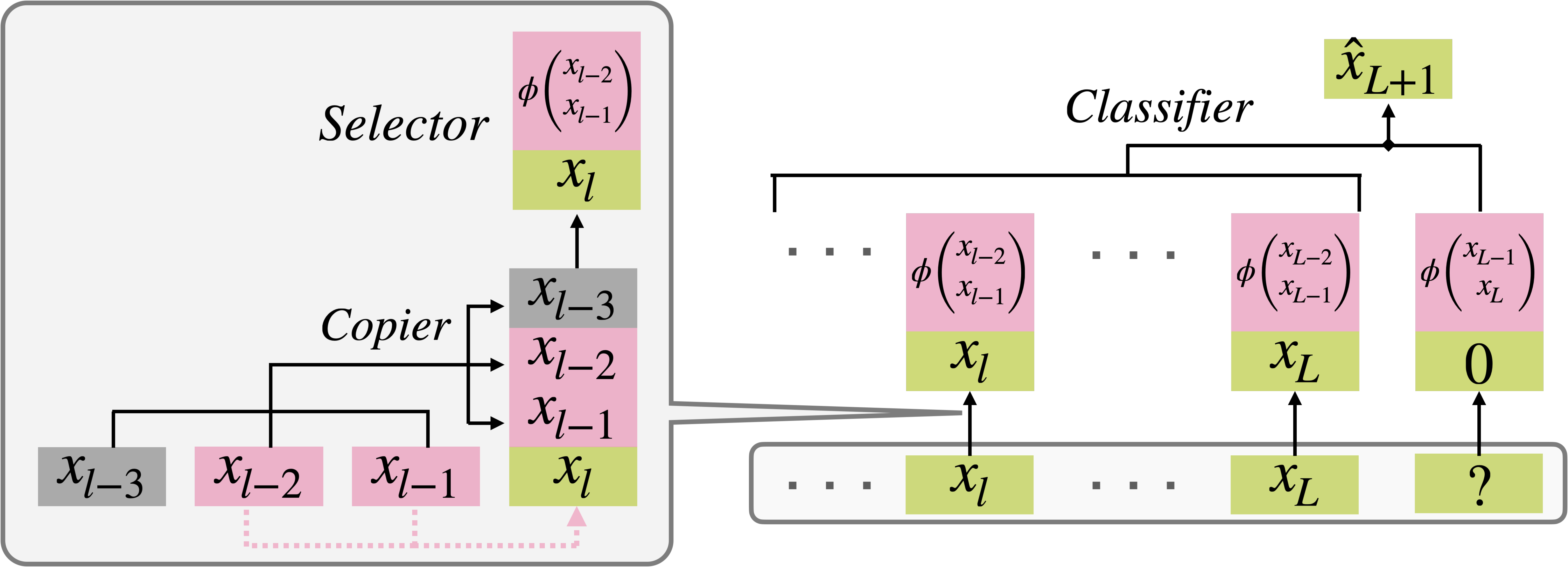}
     \caption{An illustration of the Generalized Induction Head (GIH) mechanism. Given the first $L$ tokens $\{x_{l}\}_{l\in [L]} $ of a Markov chain, we aim to predict $x_{L+1}$.  In this figure, the {\bf copier} copies a partial history of size three containing the parents at each token position, then the {\bf selector} chooses a subset of these tokens to form a feature. Finally, each $x_{l}$ and the feature at the $l$-th position are fed into a {\bf classifier} to obtain the prediction $\hat x_{L+1}$. }
    \label{fig:enter-label}
\end{figure}

\end{abstract}


\section{Introduction}\label{sec:intro}

In-context learning (ICL) \citep{brown2020language} has emerged as a crucial aspect of large language model (LLM) \citep{radford2019language,brown2020language,achiam2023gpt,anthropic2023claude,team2023gemini} functionality, enabling pre-trained LLMs to solve user-specified tasks during inference without updating model parameters. In ICL, a pre-trained LLM, typically a transformer, receives prompts containing a few demonstration examples sampled from a task-specific distribution and produces the desired output for that task. This capability is noteworthy because the tasks addressed during the ICL might not be part of the original training data set. The success of ICL requires the LLM to perform certain learning processes during inference.

Although many previous works aim to demystify ICL from either empirical or theoretical perspectives, the theoretical foundations of ICL remain elusive. 
This is primarily due to the complexity of transformer architectures, which integrate token and position embeddings, multiple layers of multi-head softmax attention, layer normalization, and feedforward neural networks. 
When it comes to understanding how the ICL ability emerges in transformers after training, existing works often focus on simplified models, such as linear attention mechanisms or single-layer transformers \citep{von2023transformers}, and ICL tasks are typically confined to linear regression \citep{akyurek2023what}. 
This leaves a gap in understanding how full-fledged transformer architectures facilitate ICL of more complex tasks, especially when latent causal structures exist among the tokens in a sequence.

In this paper, our aim is to narrow this gap by studying {\color{OrangeRed} \textbf{how a two-attention-layer transformer is trained to perform ICL of an $n$-gram Markov chain model}}, where each token in the Markov chain statistically depends on the $n$ tokens before it, known as the parent set. 
Specifically, we consider a transformer model with relative positional embedding (RPE)~\citep{he2020deberta}, multi-head softmax attention, and a feed-forward network (FFN) layer with normalization. 
We employ such a transformer model to predict the $(L+1)$-th token of an $n$-gram Markov chain, with the first $L$ tokens given as the prompt, where $L+1$ is the sequence length. 
Here the $L$-token sequence is sampled from a random Markov chain model, where a random transition kernel obeying the $n$-gram Markov property is used to generate sequences. The token sequence is fed into the transformer model, which outputs a probability distribution over the vocabulary set to predict the $(L+1)$-th token.
To train the transformer model, we sample token sequences from these random Markov chain models and minimize the cross-entropy  loss between the predicted token distribution and the true token distribution.

Under this setting, we aim to answer the following three questions: {\color{OrangeRed}(i)} \emph{Does the gradient flow with respect to the cross-entropy loss converge during training?} {\color{OrangeRed}(ii)} \emph{If yes, how does the limiting model perform ICL?} {\color{OrangeRed}(iii)} \emph{How do the building blocks of the transformer model contribute to ICL?}

\paragraph{Main Results} 
We provide an affirmative answer to the Question {\color{OrangeRed}(i)}  by proving that the gradient flow converges during training. 
In particular, we identify three phases of training dynamics: in the first stage, FFN learns the potential parent set; in the second stage, each attention head of the first multi-head softmax attention layer learns to focus on a single parent token selected by FFN; and in the final stage, the parameter of the second attention layer increases, and the transformer approaches the limiting model.
Moreover, for Questions {\color{OrangeRed}(ii)} and {\color{OrangeRed}(iii)}, we show that the limiting model performs a specialized form of exponential kernel regression, dubbed ``{\textbf{generalized induction head}}'', which requires the congruous contribution of all the building blocks.
Specifically, the first attention layer acts as a \textit{\color{OrangeRed}copier}, copying past tokens within a given window to each position. The FFN layer acts as a \textit{\color{OrangeRed} selector} that generates a feature vector by only looking at informationally relevant parents from the window according to a modified $\chi^2$-mutual information.  
Finally, the second attention layer is an \textit{exponential kernel classifier} that
compares the features at each position with those created for the output position $L+1$, and uses the resulting similarity scores to generate the desired output.  
When specialized to the case where $n=1$, the limiting model selects the true parent token and implements the \emph{induction head} mechanism \citep{elhage2021mathematical}. 
In this case, we recover the theory in \citet{nichani2024transformers}. Our theory is complemented by numerical experiments, which validate the three-phase training dynamics and mechanism of generalized induction head. 

To our best knowledge, our work is the first to provide a comprehensive understanding of how ICL is empowered by a collaboration of different building blocks in a transformer model. In particular, we identify the pivotal roles played by RPE in the {\color{OrangeRed}copier} component, the FFN layer with normalization in the {\color{OrangeRed}selector} component, and attention in the {\color{OrangeRed} classifier} component. 
We believe our work will shed light on the theoretical understanding of ICL for more complicated tasks. 

\subsection{Related Works}\label{sec:related_works}
Our work adds to the rapidly growing literature on understanding ICL by transformers. 





\paragraph{In Context Learning (ICL)} 
Commercial Large Language Models (LLMs) such as ChatGPT \citep{brown2020language}, GPT-4 \citep{achiam2023gpt}, and Gemini \citep{team2023gemini} typically operate in an autoregressive manner. These models exhibit remarkable ICL capabilities, without requiring further training. Previous research explores various aspects of the in-context learning (ICL) ability of these models. This includes their performance in zero-shot and few-shot learning scenarios \citep{honovich2022instruction, wei2021finetuned}, the use of the chain of thought method to enhance reasoning \citep{wei2022chain, zhou2022least}, and learning with multi-modalities \citep{alayrac2022flamingo}.
Moreover, recent research highlights the properties and advantages of using transformers beyond the traditional ICL setting, thereby broadening our understanding of their capabilities and applications \citep{edelman2022inductive,Li2023HowDT,jelassi2022vision,sanford2023representational,pmlr-v202-giannou23a,Liu2022TransformersLS,Tarzanagh2023TransformersAS,tarzanagh_max-margin_2023,tian2023joma,tian2023scan,song2023uncovering,deora2023optimization,chen2024provably, rajaraman2024toward}.

There is a large and growing body of literature on understanding how transformer architecture enables ICL. 
One strand of research proposes to understand ICL by casting it as a version of Bayesian inference expressed by the transformer architecture. See, e.g., 
\cite{xie2021explanation, Muller2021TransformersCD, zhang2022analysis, zhang2023and, ahuja2023context, jeon2024information, he2024words} and the references therein.
Another line of work investigates how transformers internally emulate specific algorithms to solve ICL tasks, where \citet{akyurek2023what, von2023transformers, fu2023transformers, ahn2023transformers, mahankali2023one, giannou2024well, wu2023many} focus on learning with linear regression tasks and \citet{bai2023transformers, cheng2023transformers, collins2024context, guo2023transformers} investigate transformers' capabilities in learning with nonlinear functions.
However, all of these works above focus on regression tasks where token (or token pairs) in the prompt sequences are i.i.d. or uncorrelated, which may not capture the more sophisticated data structures in real-world applications.


In addition, 
to study ICL with correlated data, 
there is also substantial interest in understanding how ICL operates over data drawn from Markov chains, providing insight into how transformer architectures contribute to ICL in these settings \citep{edelman2024evolution, makkuva2024attention,chen2024can}. Furthermore,  \cite{lin2023transformers,sinii2023context} show how transformers can solve reinforcement learning problems in an in-context  fashion. 

While many of the aforementioned works focus on the expressivity of the transformer model on different ICL tasks and the statistical properties of the learned models, 
understanding training dynamics from an optimization perspective is also crucial for comprehending ICL by transformers. 
The training dynamics for one-layer attention models have been investigated under different data models for both regression and classification tasks \citep{zhang2023trained, huang2023context, Tarzanagh2023TransformersAS,tarzanagh_max-margin_2023,kim2024transformers,chen2024training,vasudeva2024implicit,li2024mechanics,thrampoulidis2024implicit,sheen2024implicit}.
These studies offer a thorough characterization of the training process, yet they have limitations --- they are not directly applicable to data drawn from Markov processes and are confined to single-layer attention.
Our work belongs to this line of research and we adopt a two-attention-layer transformer architecture, which is more complicated than the transformer studied in these works.

\paragraph{Induction Head}
\cite{elhage2021mathematical} introduce the concept of ``induction heads'' as the mechanism underlying the ICL capabilities of transformers.
Since then, there has been a surge of interest in understanding the induction head mechanism and its role in ICL.
At a high level, the induction head mechanism works by matching the history of the current token with those seen previously in the sequence and then predicting the next token based on the matched historical sub-sequences.
\cite{olsson2022context} provide empirical evidence highlighting that induction heads are crucial in facilitating the ICL capabilities of transformers.
\cite{bietti2024birth,edelman2024evolution} conduct a further empirical investigation into the development of induction heads specifically tailored for the ICL of bi-gram data models. 
\citet{rajaraman2024transformers} provide explicit constructions of single-head transformers with constant depths that can learn $n$-gram data.
Also, a wider range of functionalities exhibited by induction heads that interact with various other mechanisms have been observed by \cite{wang2022interpretability}.

From a theoretical perspective, 
\cite{nichani2024transformers} study the ICL of \emph{first-order} Markov chains using a two-layer transformer and demonstrate the formation of the induction head mechanism.
\citet{makkuva2024local} also prove that training a single layer attention with a feed-forward layer on \emph{first-order} Markov data (with $\{0, 1\}$ vocabulary) can converge to either to global or local minima depending on the initialization. 
However, the \emph{first-order} assumption seems to be quite restrictive, especially when modeling the natural language, where the tokens can depend on multiple previous tokens.
Most related to our work is \citet{nichani2024transformers}, where they analyzed how training by gradient descent enables a two-layer transformer to learn the latent causal graph underlying the ICL data.
However, the analysis in \citet{nichani2024transformers} applies to Markov chains where each token has at most one parent, and it remains unclear how to extend the analysis to more general $n$-gram Markov chains.

In this work, we show that a generalized version of the induction head mechanism can emerge when training a two-layer transformer on $n$-gram Markov chains.
Moreover, our transformer models are more sophisticated, incorporating features like relative positional embedding, multi-head attention,  an FNN layer, and normalization. Notably, we provide an in-depth dynamics analysis of the corresponding FFN layer and two-layer multi-head attention.

\paragraph{Roadmap}
The rest of the paper is organized as follows:
We introduce the problem setup of ICL of Markov chains in \Cref{sec:setup}.
Then in \Cref{sec:theory}, we present the main theoretical results and related discussions.
A proof sketch is provided in \Cref{sec:proof_sketch}.
Finally, we present corresponding experiment results in \Cref{sec:experiments}, and the detailed proofs are deferred to the Appendix.

 \paragraph{Notation}
We denote by $e_1, \dots, e_d$ the standard basis vectors in $\RR^d$ and by $\bm{1}$ the all-one vector in $\RR^d$. 
We denote by $\sigma(\cdot)$ the softmax function such that the $i$-th coordinate of $\sigma(x)$ is $\sigma_i(x) = {\exp(x_i)}/{\sum_{l=1}^L \exp(x_l)}$ for $x\in\RR^L$.
By default, the softmax operation will always be applied row-wise.
For any integer $n>0$, we denote $[n]:=\{1, \dots, n\}$.
For a vector $w\in\RR^M$, we denote by $w_i$ the $i$-th entry of $w$ and $w_{-i}$ the $(M+1-i)$-th entry of $w$ for positive integer $i\in[M]$.
For a matrix $W$, we denote by $W(i,j)$ the entry at the $i$-th row and $j$-th column of $W$.
For two vectors $u$ and $v$, we write $u/v$ as the vector obtained by taking element-wise division between $u$ and $v$.
We denote by $a\lor b$ and $a\land b$ the maximum and minimum of $a$ and $b$, respectively.
We denote by $x_{s:t}$ the sequence $\{ x_{s}, x_{s+1}, \ldots, x_{t}\}$.
For a class $\cX$, we denote by $\Delta(\cX)$ the space of probability measures over $\cX$.
We use the standard big O notation throughout the paper.

\section{Problem Setup: In-Context Learning of Markov Chains}
\label{sec:setup}
In this section, we present the details of the problem setting. In particular, we first introduce the statistical problem of ICL of $n$-gram Markov chains in \cref{sec:icl_mc} and then lay out the details of the transformer model in \Cref{sec:transformer_model}.

\subsection{In-Context Learning and \(n\)-Gram Markov Chains} \label{sec:icl_mc}
We study how autoregressive transformers are trained to perform in-context learning (ICL). 
A pre-trained transformer can be viewed as a conditional distribution $f_{\mathtt{tf}}(\cdot \given \mathtt{prompt})$ over a finite vocabulary set $\cX$, where $\mathtt{prompt}$ is a sequence of tokens in $\cX$. 
We consider an in-context unsupervised learning problem where the pre-trained transformer $f_{\mathtt{tf}} $ is used to predict the $(L+1)$-th token $x_{L+1}$ with the first $L$ tokens being the prompt. 
Here $L$ is a fixed number and the joint distribution of the sequence $x_{1:(L+1)}$ is sampled from a random $n$-gram Markov chain.
In other words, with $x_{1:(L+1)}$ sampled from some distribution, we  evaluate how well $f_{\mathtt{tf}}(\cdot \given x_{1:L})$ predicts the distribution of $x_{L+1}$. 

\begin{wrapfigure}{r}{0.45\textwidth}
\centering
  \begin{minipage}[h]{0.45\textwidth}
        \centering
        \includegraphics[width=\textwidth]{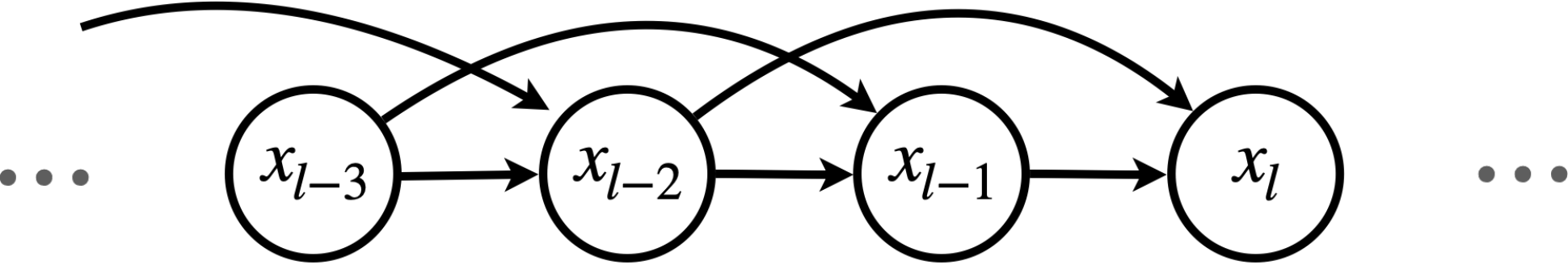}
    \caption{A two-gram Markov chain with parent set $\pa = \{-1, -2\}$.}
    \label{fig:n-gram}
    \end{minipage}
\end{wrapfigure}
\paragraph{$n$-Gram Markov Chains} We assume the data comes from a mixture of $n$-gram Markov chain model, denoted by a tuple $(\cX, \mathtt{pa}, \cP, \mu_0)$, where $\cX$ is the state space and  
$ \pa=(-r_1, \dots, -r_n)$ is the parent set with positive integers $r_1 < r_2 < \dots < r_n$.
That is, for each $l > r_n$, $x_{l}$   only statistically depends on $( x_{l-r_n}, \dots, x_{l-r_1})$, which is denoted by $X_{\pa(l)} $ and referred to as the parent tokens of $x_l$. 
We let $d = |\cX|$ denote the vocabulary size. 
Moreover, 
$\cP$ is a probability distribution over the set of Markov transition kernels respecting the parent structure specified by $\mathtt{pa}$, and $\mu_0$ is the joint distribution of the first $r_n$ tokens $x_{1:r_n}$. 
Note that the size of the parent set $n$ can be smaller than or equal to $r_n$.
Thus, the sequence $x_{1:(L+1)}$ is generated as follows: (i)~sample initial $r_n$ tokens $(x_1, \dots, x_{r_n}) \sim \mu_0$, (ii) sample a random transition kernel $\pi\sim \cP$, where $\pi \colon \cX^n \rightarrow \Delta(\cX)$, and (iii) sample token $x_l \sim \pi(\cdot \given X_{\pa(l)})$ for $l = r_n+1, \ldots, L+1$. 
See Figure \ref{fig:n-gram} for an illustration of the generating model of $x_{1:(L+1)}$.

\paragraph{Cross-Entropy Loss} When $x_{1:(L+1)}$ is generated, $x_{1:L}$ is fed into the transformer $f_{\mathtt{tf}}$ to predict $x_{L+1}$. To assess the performance of ICL, we adopt the population cross-entropy (CE) loss
\begin{align} \label{eq:cross_entropy}
    \cL(f_{\mathtt{tf}}) = - \EE_{\pi \sim \cP, x_{1:(L+1)}} \bigl[\log \bigl(f_{\mathtt{tf}}(x_{L+1}  \given x_{1:L})  + \epsilon \bigr)  \bigr] ,
\end{align}
where $\epsilon>0$ is a small constant introduced for numerical stability and in the sequel we will take $\varepsilon = O(L^{-1/2})$. 
Here, the expectation is taken with respect to the joint distribution of $x_{1:(L+1)}$ (including the randomness of $\pi \sim \cP$).
When setting $\epsilon=0$, we note that minimizing this cross-entropy loss is equivalent to minimizing the KL divergence 
\begin{align}
\label{eq:kl_divergence_loss}
\EE_{\pi \sim \cP, x_{1:L}} \bigl[ \mathtt{KL} ( \pi(\cdot \given X_{\mathtt{pa}(L+1)}) \,\|\, f_{\mathtt{tf}}( \cdot \given x_{1:L}) ) \bigr].
\end{align}
As a remark, we also relax a condition in \citet{nichani2024transformers} where the last token $x_L$ has to be resampled from a uniform distribution.
In addition, our analysis can also be extended to sequential CE loss, which corresponds to predicting every token in the sequence given the past rather than just the last token $x_{L+1}$. 
This is closer to the training paradigm used in practice \citep{brown2020language}.
See \cref{sec:sequential_ce_loss} for a further discussion on the sequential CE loss.
 



\subsection{A Two-Layer Transformer Model} \label{sec:transformer_model}

We consider a class of two-attention-layer transformer model, denoted by   $\mathtt{TF}(M, H, d, D)$, which incorporates Relative Positional Embedding (RPE) \citep{he2020deberta}, Multi-Head Attention (MHA) \citep{vaswani2017attention}, and a Feed-Forward network (FFN)   with normalization.
Here $M$ is an integer that specifies the window size of RPE, $H$ is the number of heads in the first attention layer, $d$ is the vocabulary size, and $D$ is an integer that controls the complexity of FFN. The details of $\mathtt{TF}(M, H, d, D)$ are as follows. 

\paragraph{Token Embedding, Input and Output} Note that each token takes values in $\cX$ with $d = |\cX|$. 
We embed the tokens into one-hot vectors in $\RR^d$, and thus we can identify $\cX $ as the canonical basis in $\RR^d$, i.e.,  $\cX = \{ e_1, \ldots, e_d\}$. 
A transformer model can be viewed as a mapping from $\RR^{(L+1)\times d}$ to $\Delta(\cX)$. 
In particular, given the input sequence $x_{1:L}$, we denote $X = (x_1, \dots, x_L)^\top \in \RR^{L\times d}$, and we append a zero vector $\zero\in\RR^d$ to the sequence, and define $\tilde X = (x_1, \dots, x_L, \veczero)^\top \in \RR^{(L+1)\times d}$. The transformer takes $\tilde X$ as input and outputs a probability distribution over $\cX$. 

\begin{wrapfigure}{L}{0.4\textwidth}
    \centering
    \vspace{-0.75cm}
    \begin{minipage}[t]{0.4\textwidth}
    \centering
    \includegraphics[width=\textwidth]{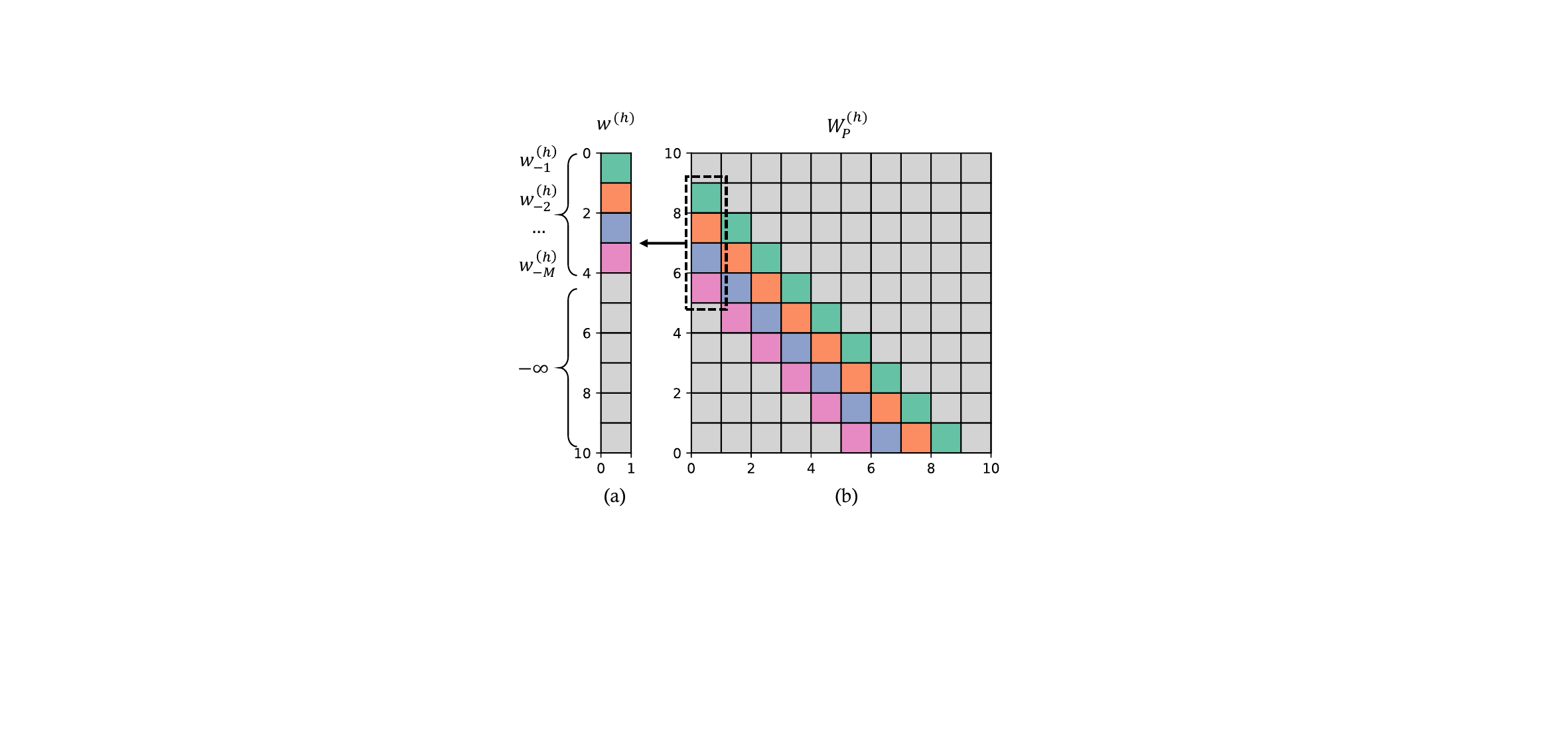}
    \setlength{\abovecaptionskip}{-10pt}
    \setlength{\belowcaptionskip}{-20pt}
    \caption{Illustration of the relationship between RPE vector $w^{(h)}$ and corresponding matrix $W_P^{(h)}$.}
    \label{fig:RPE}
    \end{minipage}
\end{wrapfigure}
\paragraph{Relative Positional Embedding}
In each head of the first attention layer, we adopt RPE to incorporate positional information. 
Specifically, 
RPE is parameterized by a vector $\allowbreak w = (w_{-M}, \dots, w_{-1})^\top \in \RR^M$, and it 
assigns a scalar value  $W_P(i,j) $ to a pair of positions $(i,j)$ satisfying 
\begin{align} \label{eq:rpe_def}
W_P(i,j) &= w_{j-i} ~~\mathrm{if} ~~i-j \in \{1,\ldots, M\}, \\  W_P(i,j)  &= -\infty  ~~\mathrm{if}~~ j \geq i~~\mathrm{or} ~~|j- i | > M.  
\end{align}
In other words, as illustrated in Figure~\ref{fig:RPE}, the $i$-th token only attends to tokens with indices in $\{i-1, \ldots, i-M\}$, referred to as the \emph{length-$M$ window of the $i$-th token},  and the trainable vector $w$ determines the value of positional embedding. 
Here, we use $-k$ to index the last $k$-th position.

\paragraph{The First Attention Layer} 
The input sequence is processed by the first attention layer with $H$ parallel heads. 
In all heads, we discard the token information and only use RPE to compute the attention score. 
Specifically, each attention head $h$  maps $\tilde X$ into a sequence in $\RR^d$ with length $L+1$, denoted by $V^\h = (v_1^\h, \ldots, v^\h_{L+1})^\top \in \RR^{(L+1)\times d}$. 
For any $l \in [L+1]$, $v^\h_{l}$ is computed via
\begin{align}\label{eq:define_V_h}
v^\h_{\color{PineGreen}l}  = \sum_{{\color{OrangeRed}j} = 1}^L \sigma_{\color{OrangeRed}j }\bigl (W_P^\h({\color{PineGreen} l}, \cdot)\bigr)\cdot 
 x_{\color{OrangeRed}j}   =    \sum_{{\color{OrangeRed}j} =1}^L \frac{\exp\bigl (W_P^\h({\color{PineGreen} l} ,{\color{OrangeRed}j}) \bigr ) \cdot x_{\color{OrangeRed}j}}{\sum_{k =1}^L \exp\bigl (W_P^\h({\color{PineGreen} l}, k) \bigl)}  . 
\end{align}
That is, we use the RPE parameter $W_P^\h $ to construct a weighted sum over the input sequence at each position $l \in [L+1]$. Here $W_P^\h$ is the RPE matrix of the $h$-th head. 


\paragraph{Feed-Forward Network with Normalization}
Following the first attention layer, we concatenate the outputs of the $H$ attention heads and define $V = (
         V^\head{1}, \ldots,  V^\head{H}
    )   \in \RR^{(L+1)\times Hd}$. 
    Here we abuse the notation and write $V = (v_1, \ldots, v_{L+1})^\top$, i.e., each $v_l$ is the $l$-th row of $V$. 
For any vector $v\in\RR^{Hd}$, we can split it into $(v^{(1)\top}, \ldots, v^{(H)\top})^\top$ where each block $v^{(h)} \in \RR^d$.
For embedding dimension $d_e$, each vector of $V$ is passed through an FFN $\phi(\cdot):\RR^{Hd}\to\RR^{d_e}$, which specifies a polynomial kernel such that 
for any $v,v'\in\RR^{Hd}$,  we have 
\begin{align}\label{eq:feed-forward}
    \langle \phi(v), \phi(v') \rangle = \sum_{{\color{PineGreen}\cS}\in \HleqD} c_{{\color{PineGreen}\cS}}^2 \cdot \prod_{h\in {\color{PineGreen}\cS}} \langle v^\head{h}, {v'}^\head{h} \rangle. 
\end{align}
Here, the low-degree parent set $\HleqD := \{ \cS \subseteq [H]: |\cS| \le D\}$ contains all    subsets of $[H]$ with cardinality at most $D$, and $\{c_\cS : \cS\in \HleqD\}$ are the corresponding trainable parameters of $\phi (\cdot)$.
Therefore, the FFN $\phi (\cdot) $ specifies a kernel on the output of the multihead attention which induces a special inner product structure.
While  \eqref{eq:feed-forward} characterizes $\phi (\cdot) $ implicitly, we provide an explicit construction of $\phi(\cdot)$ in  \Cref{lemma:poly_kernel} as a vector-valued mapping whose entries are monomials of the input's entries.
Moreover, the complexity of $\phi(\cdot) $ is controlled by the maximum degree $D$, which also influences the embedding dimension $d_e$ as we show in the construction.

Furthermore, to control the magnitude of the FFN outputs, we normalize $\phi(\cdot)$ by letting $u_{l} = \phi(v_l)/\sqrt{C_D}$ for all $l \in [L+1]$, where we define $C_D = \sum_{\cS \in \HleqD} c_{\cS}^2 $. 
Such a normalization scheme is motivated by the standard layer normalization \citep{ba2016layer} in transformer architectures.
To motivate the use of $\sqrt{C_D}$ as the normalization, consider a special case where the positional embeddings, after the softmax function, produce attention weights that are close to one-hot for each head. 
Then $v_l^\h$ in \eqref{eq:define_V_h} is equal to  some token in $x_{1:L}$. 
As a result, each $v_l$ consists of $H$ tokens and 
\begin{align}
    \|\phi (v_l)\|_2 = \sqrt{\sum_{{\cS}\in \HleqD} c_{{\cS}}^2 \cdot \prod_{h\in {\cS}} \langle v_l^\head{h}, v_l^\head{h} \rangle} = \sqrt{C_D}.
\end{align}
Thus, $u_{l}$ is roughly equivalent to the output of the layer normalization $\phi(v_l)/ \|\phi (v_l)\|_2$ (without trainable parameters). 
Although our theoretical analysis and simulations focus on this simplified version of layer normalization, our additional experiments in \cref{sec:exp_supp} demonstrate that it aligns well with the performance of the actual layer normalization.

\paragraph{The Second Attention Layer} The normalized vector sequence $U = (u_1, \ldots, u_{L+1})^\top$ and the original sequence $\tilde X$ are then fed into the second attention layer to generate the final output.
In particular, $u_{L+1} $ is used as the query to compare with the keys $\{ u_{M+1}, \ldots, u_L\} $, and the resulting attention scores are used to aggregate the values $x_{(M+1):L}$. 
This attention layer has a single head and a scalar trainable parameter $a$. 
We let $U_{1:L} = (u_1, \ldots, u_L)^\top\in\RR^{L\times d_e}$ and denote by $\mathtt{Mask}(\cdot)$ the mask that sets every entry of the  first $M$ rows of a matrix to be $-\infty$.
The final output is given by 
\begin{align}\label{eq:attention-output} 
y = 
\sum_{ {\color{OrangeRed}j} =M+1 }^L \sigma_{\color{OrangeRed} j} \bigl (a \cdot u_{L+1}^\top \mathtt{Mask} (U_{1:L}^\top ) \big)  \cdot x_{{\color{OrangeRed} j} }=
  \sum_{ {\color{OrangeRed} j} =M+1}^L \frac{\exp\bigl (a \cdot u_{L+1}^\top u_{\color{OrangeRed} j} )\cdot x_{\color{OrangeRed} j}}{\sum_{k =M+1 }^L \exp\bigl ( a \cdot u_{L+1}^\top u_k  \bigl) }.
\end{align}
Note that the softmax function in \eqref{eq:attention-output} yields a probability distribution over $[L]$ and that $x_{1:L}$ is a sequence of one-hot vectors. Thus $y $ in \eqref{eq:attention-output} is  a probability distribution over $\cX$. 
The mask operator is included here just to simplify our analysis while in the experiments we are not using the mask. 

In summary, given the input 
$\tilde X\in \RR^{(L+1)\times d}$, in the matrix form, our transformer model $\mathtt{TF}(M,H, d, D)$ consecutively applies the following operations: 
\begin{align}\label{eq:transformer}
\begin{aligned}
&\textbf{First Attention:} && \qquad V^\h = \sigma(W_P^\h) \tilde X &&\in \RR^{(L+1)\times d}, ~\forall  h\in[H] ;\\
    &\textbf{Concatenate:} && \qquad V = [
         V^\head{1}, \dots,  V^\head{H}
    ]  &&\in \RR^{(L+1)\times Hd} ;\\
    &\textbf{FFN \& Normalize:} &&\qquad U = \phi(V)/\sqrt{C_D} &&\in \RR^{(L+1)\times d_e}; \\
    &\textbf{Second Attention:} && \qquad y^\top = \sigma\bigl(
        a \cdot u_{L+1}^\top \mathtt{Mask}(U_{1:L}^\top)  
    \bigr) X &&\in \RR^{1\times d}.
\end{aligned}
\end{align}
The trainable parameters of the above transformer model are denoted by 
\begin{align}\label{eq:transformer_parameters}
    \Theta = \big\{a, \{w_{-1}^\h,\ldots,w_{-M}^\h\}_{h \in [H]}, \{c_\cS: \cS\in\HleqD \} \big\}.
\end{align}
We remark that the transformer model in \eqref{eq:transformer} is known as a disentangled transformer \citep{friedman2024learning}, which is a version of the transformer model that is more amenable for theoretical analysis. 
One thing to be noted is that there is a residual connection that directly copies $\tilde X$ to the output of the FFN \& Normalize block, which gives us $[U, \tilde X]$, and the second attention layer will treat the copied $\tilde X$ as the value in the attention mechanism.
We omit the residual connection in the above paradigm for notation simplicity.
As shown in \citet{nichani2024transformers}, any standard transformer model can be expressed as a disentangled transformer by specializing the attention weights to allow feature concatenation.


Our goal is to investigate whether the transformer model $\mathtt{TF}(M, H, d, D)$ can perform ICL over $n$-gram Markov chains and further, whether such capability can be learned from data with common training algorithms like gradient descent.




\section{Theoretical Results}\label{sec:theory}

In this section, we present the theoretical results. 
We first show in \Cref{sec:induction_head} and \Cref{sec:transformer&GIH} that there exists a transformer  in  $\mathtt{TF}(M,H, d, D)$ that implements a generalized ``induction head'' mechanism \citep{olsson2022context} with a learned feature, which serves as a natural algorithm for learning $n$-gram Markov chains.
Then in \Cref{sec:convergence} we prove that the gradient flow in \eqref{eq:grad_flow} finds such a desired model asymptotically. 

\subsection{Generalized Induction Head Mechanism for Learning $n$-Gram Markov Chains} \label{sec:induction_head}

Recall that we define the mixture of $n$-gram Markov chain model $(\cX, \mathtt{pa}, \cP, \mu_0)$ in \cref{sec:icl_mc}, where $\cP$ is a distribution over the Markov transition kernels. 
For regularity, we assume existence of a unique stationary distribution for any $\pi\in\supp(\cP)$, where a rigorous statement is deferred to \cref{asp:Markov_chain}. 
We also assume the window size $M>r_n$.
For any $n$-gram Markov chain with transition kernel $\pi \sim \cP$, we let $\mu^\pi \in \Delta(\cX^{M+1})$  denote the stationary distribution of the Markov chain over a window of size $M+1$. 
Here we use $\{ z_{\ell} \}_{l\geq 1}$ to denote a random sequence of tokens generated by the Markov chain. 
Then  $\mu^\pi $ denotes the joint distribution of a block of $M+1$ tokens $(z_{l-M}, \ldots, z_{l-1}, z_{l})$ under the stationary distribution of $\pi$, where $l > M$ is an integer. 

In the following, we introduce a generalized induction head (GIH) estimator for the task of predicting $x_{L+1}$ given $x_{1:L}$, which is based on the following simple idea: \emph{$x_{L+1}$ should be similar to a previous token  $x_{l}$ if their parents are similar}. 
As the parent set $\mathtt{pa}$ is unknown, GIH adopts an information-theoretic criterion to select a subset of previous tokens as a proxy of the parents. 
Specifically, GIH uses a modified version of $\chi^2$-mutual information, which is defined as follows. 

\begin{definition}[Modified $\chi^2$-Mutual Information]\label{def:modified_chi_square_mi}
We take a length-$(M+1)$ windows $ (z_{l-M}, \ldots, z_{l-1}, z_l)$ for some $l>M$ and suppose the sequence is sampled from stationary distribution $\mu^\pi$ with $\pi\sim \cP$.
Let $Z = (z_{l-M}, \ldots, z_{l-1})$.
For any subset $\cS \subseteq [M]$, we use $Z_{-\cS} $ to denote the subvector of $Z$ containing entries of the form $z_{l-s}$, $\forall s \in \cS$. 
For instance, suppose $\cS = \{2,5\}$, then $Z_{-\cS} = (z_{l -5}, z_{l-2}) $. 
The modified $\chi^2$-mutual information for $\cS$ is defined~as 
\begin{align}\label{eq:chi_squared_MI}
   \tilde I_{\chi^2}(\cS) = \EE_{\pi\sim\cP, (z,Z) \sim \mu^\pi}\bigg[\bigg(\sum_{e\in\cX} \frac{ [\mu^\pi(z = e \given Z_{-\cS})]^2}{\mu^\pi(z=e)} - 1 \bigg) \cdot  \mu^\pi(Z_{-\cS})\bigg], 
\end{align}
where $\mu^\pi(z=\cdot\mid Z_{-\cS})$ is the conditional distribution of $z$ induced by $\mu^\pi$ given the partial history $ Z_{-\cS}$, and $\mu^\pi( Z_{-\cS}), \mu^\pi(z)$ are the marginal distributions of $ Z_{-\cS}$ and $z$ under~$(z,Z) \sim \mu^\pi$.
\end{definition}

Intuitively, $\tilde I_{\chi^2}(\cS)$ is modified from the vanilla $\chi^2$-mutual information ($\chi^2$-MI) between two random variables \citep{Polyanskiy_Wu_2024} and quantifies how much information the partial history $Z_{-\cS}$ contains about $z$.
In particular, we incorporate an additional $\mu^\pi(Z_{-\cS})$ term that decreases with the growing size of $\cS$.
To see the rationality, we first introduce a GIH estimator based on the modified $\chi^2$-mutual information.

\begin{definition}[Generalized Induction Head] \label{def:gih}
A GIH estimator with window size  $M \in \NN$, feature size $D \in \NN$ is denoted by $ \mathtt{GIH}(\cdot; M, D)$, which maps $x_{1:L}$ to a distribution over $\cX$.
We let $\cS^\star$ be the information-optimal subset (referred to as the ``information set'' in the sequel\footnote{With a slight abuse of notation, we also call $X_{l-\cS^\star}:=(x_{l-s}:s\in\cS^\star)$ the information set of the $l$-th token $x_l$.}) of  $[ M]$ with size no more than $D$ that maximizes the modified $\chi^2$-mutual information  $\tilde I_{\chi^2}(\cdot ) $  defined in \eqref{eq:chi_squared_MI}. 
That is, we define the information set $  S^*$ as
\begin{align} \label{eq:define_set_S_star}
    \cS^\star = {\textstyle \argmax_{\cS\in [M]_{\le D} }}  \tilde I_{\chi^2}(\cS).
\end{align}
Then $\mathtt{GIH}(x_{1:L}; M, D)$ outputs
\begin{align}
    y^\star \defeq \begin{cases}
        N^{-1} \cdot {\sum_{l=M+1}^L x_l \cdot \ind(X_{l-\cS^\star} = X_{L+1-\cS^\star}) }, ~\text{if}~ N \ge 1, \\
        (L-M)^{-1} \cdot \sum_{l=M+1}^L x_l, \quad \text{otherwise}.
    \end{cases}
    \label{eq:gih}
\end{align}
Here, we define $X_{l-\cS^\star}$ as the set $\{x_{l-s}: s\in \cS^\star\}$ and $N = \sum_{l=M+1}^L \ind(X_{l-\cS^\star} = X_{L+1-\cS^\star})$.
\end{definition}

Note that $\cS^\star$ defined in \eqref{eq:define_set_S_star} depends on the choices of $M$ and $D$ and serves as a proxy of the unknown parent set $\mathtt{pa}$ based on $\tilde I_{\chi^2}(\cdot)$ defined in \eqref{eq:chi_squared_MI}.
In a nutshell, the 
GIH estimator checks whether the partial histories of $X_{l-\cS^\star} $ and $X_{L+1-\cS^\star}$ match and aggregate all the tokens $x_l$ that have a matching partial history to predict $x_{L+1}$.
As a remark, using the modified $\chi^2$-MI as the information criterion rules out redundancy in the information set $\cS^\star$ in the following sense:
\vspace{2pt}

{\noindent \bf $\bullet$ $\cS^\star$ cannot be a superset of the true parents.} Note that if $\cS $ is a superset of the true parent set,  
    by the Markov property, $z$ and $Z_{-S}$ are conditionally independent given 
    the true parents   $Z_{\pa}$. 
    Thus, maximizing the vanilla $\chi^2$-mutual information yields multiple maximizers, i.e., all the supersets of the true parent set. 
    However, with the modification in \eqref{eq:chi_squared_MI}, 
    any superset yields a strictly smaller $\tilde I_{\chi^2}$ compared to the exact parent set, making them suboptimal.

\vspace{2pt}
{\noindent \bf $\bullet$ The modified $\chi^2$-MI selects informative partial history.} 
Even a true parent may bear relatively little information about the target compared to other parents sometimes. 
    Meanwhile, exact match of a larger set of partial history becomes much harder as it tends to appear less frequently in the context sequence, leading to poor estimation accuracy for the estimator in \eqref{eq:gih}. 
    The modified $\chi^2$-MI reaches a balance by selecting the informative partial history while penalizing the size of the information set.

\vspace{2pt}

The term involving $\mu^{\pi} (z =\cdot \given Z_{-\cS})$ can be viewed as the \emph{signal} part which helps us to find an informative subset $\cS$. 
The term $\mu^{\pi} (Z_{-\cS})$ can be viewed as \emph{penalty on the model complexity} which favors smaller subsets. 
Thus, 
the modified $\chi^2$-MI strikes a balance between these two objectives and enables us to find a good proxy $\cS^\star$ of $\mathtt{pa}$ when $L$ is finite.  
Moreover, when $L$ is sufficiently large, we identify two scenarios in which
maximizing $\tilde I_{\chi^2}(\cdot)$ yields the true parent set (see \cref{sec:discussion} for details).
Moreover, the GIH estimator is a generalization of the induction head mechanism \citep{elhage2021mathematical} to the stochastic setting with multiple parents, where we give the model more flexibility to learn based on a partial history that does not necessarily correspond to the true parent set.

\subsection{How Does Transformer Implement the GIH Mechanism?}
\label{sec:transformer&GIH}
In the following, we briefly illustrate how a two-attention-layer transformer model as introduced in \cref{eq:transformer} implements the GIH mechanism.
As we will show in \cref{sec:convergence}, gradient flow with respect to the cross-entropy loss converges to this transformer in the limit.  
\begin{figure}[h]
    \centering
    \includegraphics[width=0.9\textwidth]{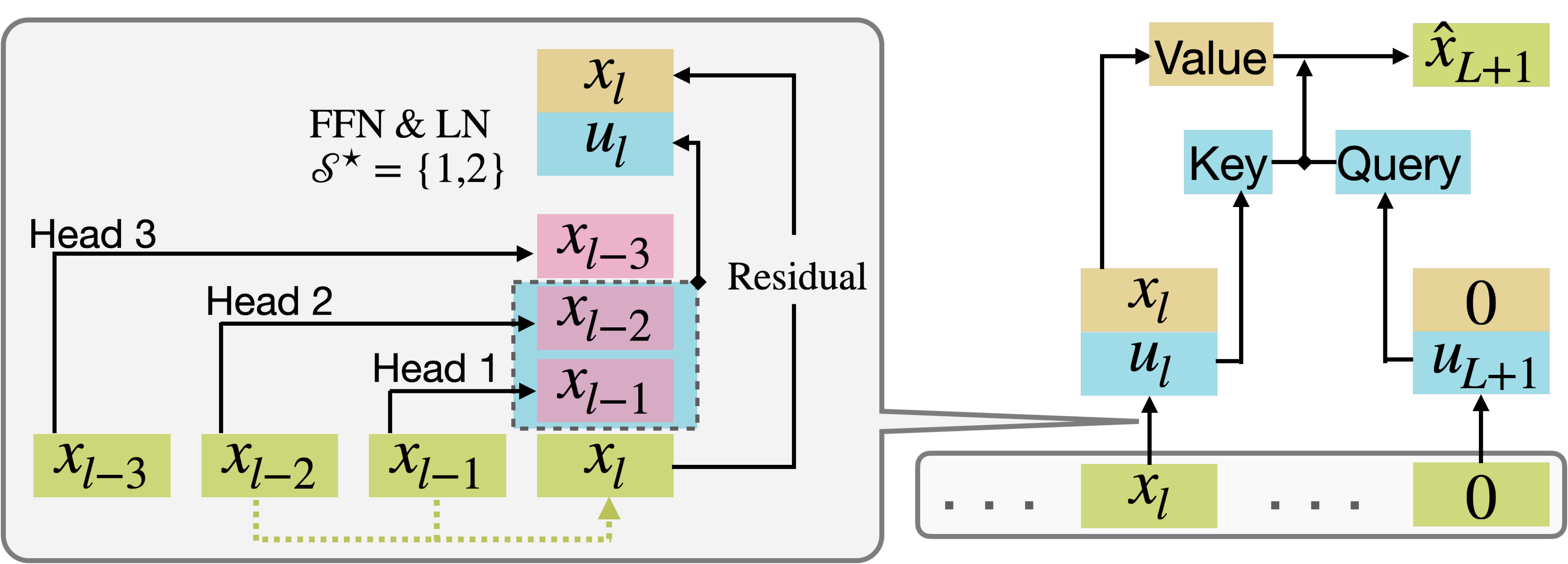}
    \caption{Illustration of the GIH mechanism in a two-attention-layer transformer model. 
    Here, $\pa=\{-1, -2\}$, $M=3$ and $\cS^\star=\{1, 2\}$.
    The first attention layer copies the parents (including the information set $\cS^\star$) to the current position. 
    Then the FFN layer together with layer normalization generates the features $u_l$ using the parent tokens within the information set $\cS^\star$.  
    The second attention layer treats each $x_l$ as the value, and aggregates $x_l$ as the prediction by matching the keys and query that come from the learned features using the attention mechanism. The $L+1$-th token is padded with zeros in the input.
    }
    \label{fig:GIH_illustration}
\end{figure}

\paragraph{Step \RNum{1}: The First Attention Layer Copies the Information Set $\cS^\star$ to the Current Position}
Suppose the number of heads is equal to the window  size for simplicity, i.e.,
$H=M$. 
Then, attention head $h\in\cS^\star$ can attend to the $h$-th parent token by setting the RPE weights in the softmax function to be $w^\h = \rho \cdot e_{-h}$ for a sufficiently large $\rho$, where $e_{-h}\in \RR^{M}$ is the canonical basis vector with the $(M+1 - h)$-th entry being one and all other entries being zero.
As a result, each $v_l^\h$ for $h\in\cS^\star$ satisfies
\(
    v_l^\h \approx x_{l - h}.
\)

\paragraph{Step \RNum{2}: FFN Generates the Polynomial Features of the Information Set $\cS^\star$}
As we have introduced in \eqref{eq:feed-forward}, each learnable $c_{\cS}$ in the FFN layer determines the contribution of the corresponding subset $\cS$ to the output feature.
To let the optimal information set $\cS^\star$ dominate the output, we set $c_{\cS^\star}=1$ whereas $c_{\cS}=0$ for all $\cS\neq \cS^\star$.
The exact form of the output of the FFN layer, $\phi(v_l)$,  is deferred to \cref{sec:ffn_polynomial}. Here  the only property we require is that 
\begin{align}
    s_l\defeq \langle \phi(v_l), \phi(v_{L+1}) \rangle  = \prod_{h\in\cS^\star} \langle v_l^\head{h}, v_{L+1}^\head{h} \rangle \approx \ind(X_{l-\cS^\star} = X_{L+1-\cS^\star}), 
    \label{eq:matching-infoset}
\end{align}
Here $X_{l-\cS^\star}:=(x_{l-s}: s\in \cS^\star)$ and in the last equation we use the orthogonality and normalization of the vocabulary embeddings.

\paragraph{Step \RNum{3}: The Second Attention Layer Aggregates Tokens with Matching History on $\cS^\star$}
We can interpret $s_l$ in \eqref{eq:matching-infoset} as an indicator for whether the information set of a token $x_l$ matches the information set of the token $x_{L+1}$. 
Then for the second attention layer, by setting $a$ to be sufficiently large, the output will become 
\begin{align} \label{eq:stage3_expression}
    y = \sum_{l=M+1}^L \frac{\exp\bigl (a \cdot s_l)\cdot x_l}{\sum_{k=M+1}^L \exp\bigl (a \cdot s_k  \bigl)} \approx \begin{cases}
        N^{-1} \cdot {\sum_{l=M+1}^L x_l \cdot \ind(X_{l-\cS^\star} = X_{L+1-\cS^\star}) }, ~\text{if}~ N \ge 1, \\
        (L-M)^{-1} \cdot \sum_{l=M+1}^L x_l, \quad \text{otherwise}, 
    \end{cases}
\end{align}
where $N = \sum_{l=M+1}^L \ind(X_{l-\cS^\star} = X_{L+1-\cS^\star})$.
That is, if at least one token $x_{l}$ has a matching information set as $x_{L+1}$, i.e., their histories restricted to $\cS^\star$ are the same, 
 the second attention layer outputs the average of such tokens. 
 Otherwise, it outputs the average of previous tokens from $x_{M+1}$ to $x_{L}$. 
In \Cref{lem:gih_approx} in the appendix, we will show that the model learned by gradient flow implements the GIH mechanism up to a diminishing approximation error. 

The weights of the transformer constructed above are illustrated in \Cref{fig:convergence}. 
We consider the transformer model with $M = H = 3$, $d = 3$, and $D = 2$. 
In this case,
in the first attention layer, for each $h \in [3]$,  $W^{(h)}_P$ has three finite parameters $w_{-1}^{(h)}, w_{-2}^{(h)}$, and $w_{-3}^{(h)}$.
By our construction, we have $w_{-h}^{(h)} = \rho$ for all $h \in [3]$ and the rest of the entries of $\{ w^{(h)} \}_{h\in [3]}$ are all equal to zero.  In \Cref{fig:convergence}-(a) we plot the top ten  by ten block of $W_{P}^{(1)}$, where $w^{(1)}_{-1} = \rho $ is shown in yellow and $w^{(1)}_{-2} = w^{(1)}_{-3}$ are shown in purple. 
The gray color stands for $-\infty$ entries. 
In \Cref{fig:convergence}-(b) we  plot $\{ w^{(h)} \}_{h\in [3]}$. 
In \Cref{fig:convergence}-(c) we plot the parameters of the FFN. 
Since $H = 3$ and  $D = 2$, $\HleqD$ contains seven elements: $\emptyset$, $\{1\}, \{2\}, \{3\}$, $\{1,2\}$, $\{1,3\}$, and $\{2,3\}$. 
We use binary strings of length $3$ to index these seven subsets, where the $i$-th bit indicates whether element $i$ is included in the subset. For instance,  ``110'' represents $\{ 1, 2\}$.  We set $\cS ^\star 
 = \{1,2\}$, $c_{\cS ^\star} = 1$, and $c_{\cS} = 0$ for any other $\cS$.

\begin{figure}[h]
    \centering
    \includegraphics[width=0.95\textwidth]{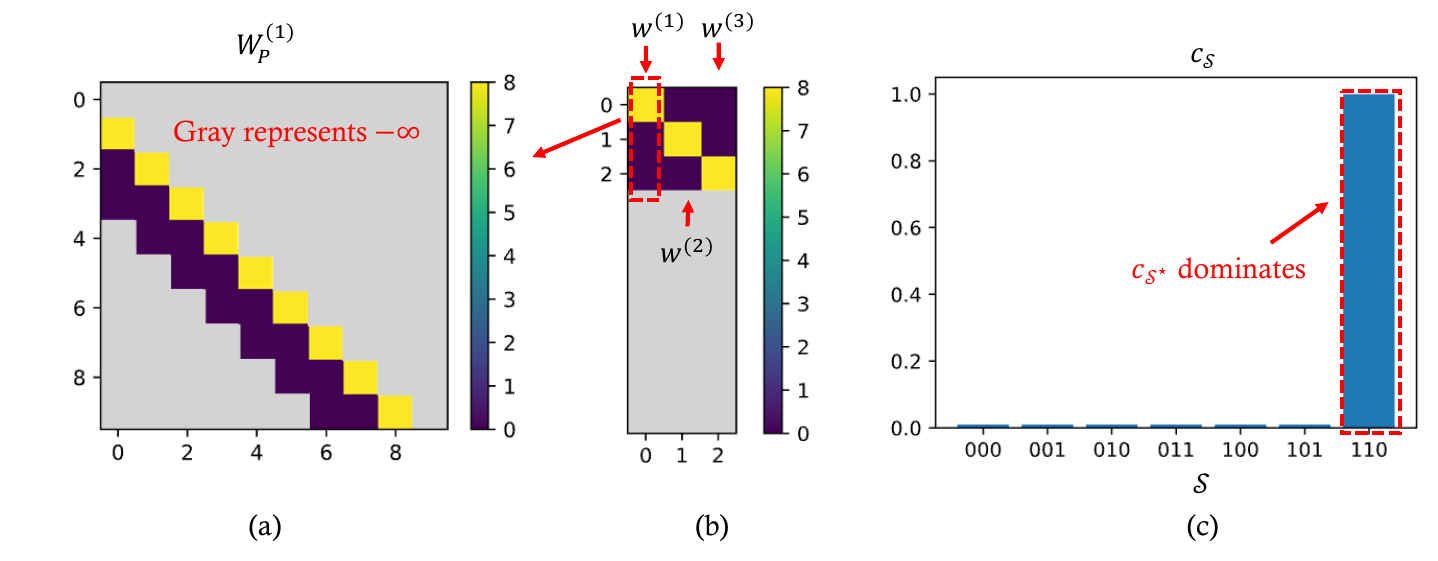}
    \caption{Limiting model of \(\mathtt{TF}(M=3,H=3, d=3, D=2)\) that implements the GIH mechanism with $L=100$, $\pa=\{-1, -2\}$. (a): The top left $10$ by $10$ block of $W_P^{(1)}$ that attends to the $-1$ parent. (b): The RPE weight heatmap for all 3 heads, where the $h$-th column corresponds to the RPE weight vector of head $h$.
     (c): In the GIH mechanism, only one $c_\cS^\star$ for the optimal information set $\cS^\star$ dominates.
    For the label of the $x$-axis, we use a binary coding $\{0, 1\}^3$ to indicate each subset $\cS$.
    Here, $\cS^\star = \{1,2\} $ is the parent set, which is represented by ``110''. 
    }
    \label{fig:convergence}
\end{figure}

\subsection{Convergence Guarantee of Gradient Flow} \label{sec:convergence}



In the following, we present the convergence guarantee for gradient flow. 
To simplify the discussion, we consider the case where $H = M$, meaning there are enough heads to implement the GIH mechanism by having each head copy a unique parent token from a window of size $M$.
Let us first introduce the paradigm of training by gradient flow. 

\paragraph{Training Paradigm}
Consider training a transformer $\mathtt{TF}(M, H, d, D)$ in \eqref{eq:transformer} with $M=H$ to perform ICL on the $n$-gram Markov chain model introduced in \Cref{sec:icl_mc}. 
Specifically, we define $\cL(\Theta) $ as the population cross-entropy loss in \eqref{eq:cross_entropy}, where the transformer model $f_{\mathtt{tf}}$ is given by \eqref{eq:transformer} with a parameter $\Theta$. 
Ideally, when training the parameter $\Theta$ with gradient flow, the dynamics with respect to the loss $\cL(\Theta)$ is given by:
\begin{align}\label{eq:grad_flow}
    \partial_t \Theta(t)  = - \nabla \cL\big(\Theta (t)  \big) .
\end{align}
We consider a three-stage training paradigm where, in each stage, only a specific subset of the weights is trained by gradient flow. The three stages are outlined in \cref{tab:training_stages}.
Specifically, in the first stage, we only train the  FFN layer via gradient flow while keeping other weights fixed. 
We then only train the RPE weights in the first attention layer in the second stage. Finally, we only train the weight $a$ in the second attention layer in the last stage, while fixing the rest of the parameters. 
This training approach is primarily used for analytical convenience; in practice, the entire model can be trained simultaneously, and similar convergence results are reported in \cref{sec:exp_supp}.
From a theoretical standpoint, we will also justify the three-stage paradigm in the discussion following \cref{thm:convergence}.

\begin{table}[!htb]
    \centering
    \begin{tabularx}{0.9\textwidth}{
        >{\centering\arraybackslash\hsize=0.2\hsize}X
        >{\hsize=0.9\hsize}X
        >{\hsize=1.2\hsize}X}
        \hline \hline
        Stage & Weights to Train & Description \\
        \hline 
         \RNum{1} & $\{c_\cS\}_{\cS\in\HleqD}$ in the FFN layer & Ratio $c_{\cS^\star}(t)/c_{\cS}(t)$ grows exponentially, \newline learning the low-degree features with $\cS^\star$, 
         \\
        \hline
         \RNum{2} & $\{w^\h\}_{h\in[H]}$ in the RPE of the first attention layer, & $1 - \prod_{h\in \cS^\star} (\sigma_{-h}^\h(t))^2$ decays polynomially \newline training each head in $\cS^\star$ to be a copier, 
         \\
        \hline
         \RNum{3} & $a$ in the weight of the second attention layer & 
        $a(t)$ experiences a two-stage growth, \newline learning the softmax aggregator for GIH, 
        \\
        \hline\hline \\
    \end{tabularx}
    \caption{Three-stage training paradigm for gradient flow. Here, the ``Weights to Train'' column indicates the weights updated in each stage, and the ``Description'' column summarizes the corresponding results from \cref{thm:convergence}.}
    \label{tab:training_stages}
\end{table}

\paragraph{Initialization Conditions}
Before presenting our main results about how training by gradient flow induces the GIH structure, let us introduce the following assumption on the initialization of the weights. We define the \emph{information gap} within the $D$-degree parent set $\HleqD$ as 
\begin{align}
    \Delta \tilde I_{\chi^2} = \tilde I_{\chi^2}(\cS^\star) -  \max_{S\in\HleqD \backslash \{\cS^\star\}} \tilde I_{\chi^2}(\cS), 
    \label{eq:Delta_I_chi2}
\end{align}
where we recall that $\cS^\star$ defined in \eqref{eq:define_set_S_star} maximizes the modified $\chi^2$ mutual information.
\begin{assumption}[Initialization]
    \label{asp:initialization}
    We assume that the following holds at initialization: 
    \begin{enumerate}
        \item For the first attention layer's RPE weights, $w_{-h}^\h \ge w_{-j}^\h + \Delta w$ for all $h,j\in[H]$ with $j\neq h$, where $\Delta w  > 0$ is a positive scalar satisfying
        \begin{align} \label{eq:def_delta_w}
            {\Delta w \geq \log ({M-1}) - \log \Big[\Big(1 + \Delta \tilde I_{\chi^2}/(14 \tilde I_{\chi^2}(\cS^\star))\Big]^{\frac{1}{2H}} - 1\Big)}. 
        \end{align}
        \item The scalar parameter $a$ in the second attention layer satisfies $0 < a \le O(L^{-3/2})$.
    \end{enumerate}
\end{assumption}
The first assumption on the RPE is used to induce the correspondence between parents and heads during the training by slightly breaking the symmetry between different attention heads. 
The second assumption on the scale of $a$ ensures that the attention probability given by the second attention layer is close to the uniform distribution over $[L]$.
These initialization conditions enable us to derive clean descriptions for the dynamics of the first attention layer and the FFN, shedding light on their respective roles in executing ICL.

We now outline our assumptions on the Markov chain used in the data generation process. Recall that \( r_n \) is the largest absolute integer in the parent set \( \mathtt{pa} \). For any position \( l \), we define the history \( Z = (z_{l-r_n}, \dots, z_{l-1}) \) as the last state and \( Z' = (z_{l-r_n+1}, \dots, z_{l}) \) as the current state. Since the parent of the new token \( z_l \) is already included in \( Z \), \( Z' \) is independent of all prior history given \( Z \), forming a Markov chain.

We define \( P_\pi \) as the \( d^{r_n} \times d^{r_n} \) transition matrix for this Markov chain, where states are successive \( r_n \)-tokens. Each row of \( P_\pi \) is indexed by \( Z' \) and each column by \( Z \). The matrix element \( P_\pi(Z', Z) \) is thus given by 
\[
P_\pi(Z', Z) = \pi(z'_{l} \mid Z_{\mathtt{pa}(l)}) \cdot \ind (Z'_{l-r_n + 1:-1} = Z_{l-r_n+1:-1}).
\]
This means that to transition from \( Z \) to \( Z' \), all elements of \( Z' \) except for \( z'_{-1} \) must match the last \( r_n-1 \) tokens of \( Z \). The token \( z'_{l} \) is then sampled according to the transition kernel \( \pi \) and depends only on the parent \( Z_{\pa(l)} \).
The above definition is in fact independent of the position \( l \) as the transition kernel \( \pi \) is the same across all positions.
Note that $P_\pi$ is also a stochastic matrix but with zero entries due to the indicator.
To proceed, we need the following notion of primitive matrix to state our assumption on $P_\pi$.
\begin{definition}[Primitive Matrix]\label{def:primitive}
    A nonnegative and irreducible square matrix $P$ is called primitive if there exists a positive integer $k$ such that all entries of $P^k$ are positive.
\end{definition}
We defer more details about the above definition to \cref{sec:background_perron_frobenius}.
By the celebrated Perron-Frobenius theorem, if a stochastic matrix $P_\pi$ is also primitive, then (i) there exists a unique stationary distribution for the Markov chain; (ii) $P_\pi$ has a unique leading eigenvalue equal to $1$, and the corresponding eigenvector is the stationary distribution.
Next, we state the assumptions on the mixture of Markov chains for data generation.
\begin{assumption}[Markov Chain]
    \label{asp:Markov_chain}
    For any $\pi\in\supp(\cP)$, we assume that:
    \begin{enumerate}
        \item The transition matrix $P_\pi$ is {primitive}. 
        In particular, we assume that there exists $\lambda < 1$ such that the eigenvalue of $P_\pi$ with the second largest magnitude satisfies $|\lambda_2(P_\pi)| \le \lambda$. Note that $\lambda_2(P_\pi)$ can be complex-valued.
        \item There exists $\gamma>0$ such that the transition kernel satisfies $\pi(x\given X_{\pa})\geq \gamma$ for any $(x, X_{\pa})$. 
    \end{enumerate}
\end{assumption}
In fact, the second condition $\pi(\cdot \given X_{\pa}) >\gamma$ already ensures that $P_\pi$ must be primitive, as is required by the first condition.  See \Cref{cor:primitive} for details.
On the high level, the first assumption guarantees a unique stationary distribution as well as a fast mixing rate of the Markov chain by ensuring a spectral gap for $P_\pi$.
The second assumption implies a lower bound on the probability for any set $\cS\subseteq[M]$ under the stationary distribution, i.e., $\mu^\pi(X_{l-\cS})\ge \gamma^{|\cS|}$ for any $l > M$. See \Cref{cor:station_lb} for details.

Now we are ready to present our main theoretical result on training transformers by gradient flow.

\begin{theorem}[Convergence of Gradient Flow]
    \label{thm:convergence}
    Suppose \cref{asp:initialization} and \cref{asp:Markov_chain} hold. 
    Consider $H\ge M$.
    We set $\varepsilon = L^{-1/2}$ for the cross-entropy loss and assume $L$ is sufficiently large. 
    Then the following holds for the three-stage training of gradient flow:
    \begin{description}[leftmargin=0.3in, labelindent=0in, labelsep=0in]
        \item[{\color{OrangeRed} Stage \RNum{1}: Parent Selection by FFN.} ]
        Let $C_D(t) = \sum_{\cS\in\HleqD} c_{\cS}(t)^2$ and $p_{\cS^\star}(t)  = c_{\cS^\star}^2(t)/ C_D(t)$.
        Then in the first stage with duration $t_1 \asymp {C_D(0)\log L /(a(0)\Delta \tilde I_{\chi^2})}$, the ratio $c_{\cS^\star}/c_{\cS}$ grows exponentially fast for any $\cS\neq \cS^\star$, and $\cS^\star$ dominates exponentially fast in the sense that, 
        \begin{align}
            1 - p_{\cS^\star}(t) \leq (1 - p_{\cS^\star}(0)) \cdot \exp\bigl(-(2C_D)^{-1} \cdot  a(0) \cdot p_{\cS^\star}(0) \cdot  \Delta \tilde I_{\chi^2} \cdot t \bigr),\quad \forall t\in [0, t_1).
        \end{align}
        \item[{\color{OrangeRed}Stage \RNum{2}: Concentration of The First Attention.} ]  
        Define $\sigma^\h(t) = \sigma(w^\h(t))\in\RR^M$, and let $\sigma_{\min}(t) := \min_{h \in \cS^\star} \sigma_{-h}^\h(t)$.
        Then in the second stage with duration $t_2-t_1 \asymp L /(a(0)\Delta \tilde I_{\chi^2})$, the first layer's attention heads have attention probabilities concentrated on the optimal information set $\cS^\star$ in the sense that for any $t\in[t_1, t_1+t_2)$,
        \begin{align}
             1 - \prod_{h\in \cS^\star} (\sigma_{-h}^\h(t))^2 \leq \frac{2|\cS^\star|\cdot (M-1)}{a(0)\cdot \Delta \tilde I_{\chi^2}\cdot \sigma_{\min}(0) \cdot (t - t_1)/2 + \exp(\Delta w)+(M-1)} \land 1.
        \end{align}
        \item[{\color{OrangeRed}Stage \RNum{3}: Growth of The Second Attention.} ] 
        For some constants $c_1,c_2$ depending on $(\cP, \cS^\star)$ with $0<c_1<c_2$,
        there exists a small constant $\delta>0$ such that the growth of $a(t)$ exhibits the following two sub-stages: (i) When $a(t) \leq \log(c_1/\delta)$,
            it holds that $\partial a(t) \asymp e^{a(t)}$;
        (ii) After $a(t)$ has grown such that $a(t)\geq \log(c_2/\delta)$, then $\partial_t a(t) \asymp 1/a(t)$
            until it reaches the value $\log L/8$.
    \end{description}
\end{theorem}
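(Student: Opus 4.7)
The plan is to analyze each of the three stages separately using the stage-specific parameter subset, then compose the results. A common thread throughout Stages I and II is the initialization $a(0) = O(L^{-3/2})$, which (because $a$ is frozen in those stages) guarantees that the second-attention output $y$ equals the uniform distribution over $\cX$ plus an $O(a)$ correction. This linearizes the cross-entropy loss so that gradients in $\{c_{\cS}\}$ and $\{w^{(h)}\}$ admit clean closed-form expressions in terms of statistics of the Markov chain under the stationary distribution $\mu^\pi$. In each stage, the argument proceeds in two steps: (a) derive the gradient expression and show that it concentrates, for large $L$, on a deterministic drift governed by $\tilde I_{\chi^2}$; (b) integrate the resulting scalar ODE (or coupled ODE system) to get the claimed rate.

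For Stage I, I compute $\partial_t c_{\cS}$ via the chain rule and match the leading-order expression against the modified $\chi^2$-MI. Using the spectral gap from \cref{asp:Markov_chain}(i), I show the empirical quantity concentrates on $\tilde I_{\chi^2}(\cS)$. The log-ratio then satisfies $\partial_t \log(c_{\cS^\star}/c_{\cS}) \gtrsim a(0)\, \Delta \tilde I_{\chi^2}/C_D$, yielding the exponential bound on $1 - p_{\cS^\star}(t)$ and fixing the duration $t_1 \asymp C_D(0)\log L/(a(0)\Delta \tilde I_{\chi^2})$. For Stage II, after Stage I the feature $\phi(v_l)$ is (up to small error) the tensor product of $\{v_l^{(h)}\}_{h\in\cS^\star}$. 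I compute $\partial_t w_{-j}^{(h)}$ and show that for $h\in\cS^\star$, the attention probability $\sigma_{-h}^{(h)}$ is pulled toward $1$ at rate proportional to $\prod_{h'\in\cS^\star,\,h'\neq h}(\sigma_{-h'}^{(h')})^2 \cdot (1 - \sigma_{-h}^{(h)}) \cdot a(0)\, \tilde I_{\chi^2}(\cS^\star)$. The initialization gap $\Delta w$ breaks symmetry so that each head in $\cS^\star$ locks onto a distinct parent; the $(1-\sigma_{-h}^{(h)})$ factor turns the ODE into a $\dot y = c\, y(1-y)$-type equation whose solution gives the $1/t$ polynomial rate. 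The off-$\cS^\star$ heads remain asymptotically harmless because their contribution to the feature is $O(1 - p_{\cS^\star})$, already small from Stage I.

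For Stage III we have $u_{L+1}^\top u_l \approx s_l := \ind(X_{l-\cS^\star} = X_{L+1-\cS^\star})$ up to errors inherited from the first two stages. The remaining dynamics reduces to the scalar ODE $\partial_t a = -\partial_a \cL$ for the softmax aggregator with inverse temperature $a$. I split based on the magnitude of $a$. When $a$ is small, a Taylor expansion of the softmax around $a=0$ gives $\partial_t a \asymp e^a$ times the gap between the CE loss at matched versus unmatched positions; while the softmax output is still near uniform, the gradient grows exponentially in $a$. Once $a$ exceeds the threshold where the softmax concentrates on the matched subset $\{l : s_l = 1\}$, the loss behaves like $-\log(\text{mass on matched})$ whose derivative with respect to $a$ decays as $1/a$, so $\partial_t a \asymp 1/a$. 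The stopping value $a = \log L / 8$ is the natural scale at which the regularizer $\varepsilon = L^{-1/2}$ begins to dominate the gradient.

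The main obstacle will be the concentration step that converts empirical gradients (a single length-$L$ sequence averaged against $\pi\sim\cP$) into their stationary-distribution expectations with sharp enough error bounds. This is delicate for rare matching events $\{X_{l-\cS^\star} = X_{L+1-\cS^\star}\}$, whose probability can be as small as $\gamma^{|\cS^\star|}$ by \Cref{asp:Markov_chain}(ii); one must combine the uniform positivity $\gamma$ with the spectral-gap bound $|\lambda_2(P_\pi)| \le \lambda$ to get effective mixing for such events. A secondary difficulty, largely bookkeeping, is tracking how the Stage I residual $1 - p_{\cS^\star}(t_1)$ and the Stage II residual $1 - \prod_{h\in\cS^\star}(\sigma_{-h}^{(h)}(t_2))^2$ propagate into the feature-matching approximation $u_{L+1}^\top u_l \approx s_l$ used in Stage III, and verifying that they are small enough not to corrupt the two-phase growth of $a(t)$ before the target stopping value $\log L / 8$ is reached.
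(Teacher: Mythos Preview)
Your three-stage decomposition and the recognition that small $a(0)$ linearizes the second attention are correct, and you correctly flag concentration and error propagation as the technical load. But two genuine gaps would prevent the argument from going through.

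First, the claim that ``the second-attention output $y$ equals the uniform distribution over $\cX$ plus an $O(a)$ correction'' is wrong and would derail Stage I. Small $a$ makes the attention weights $\sigma_l(as)$ near-uniform over \emph{positions}, so $y \approx (L-M)^{-1}\sum_{l} x_l$, the empirical token distribution, which by mixing converges to the \emph{stationary} distribution $\mu^\pi$, not the uniform one. The denominator $y(k)\approx \mu^\pi(e_k)$ is precisely what makes $\partial_t \log c_\cS^2$ approximate $\frac{4a}{C_D}\tilde I_{\chi^2}(\cS)$ up to an $\cS$-independent term; a uniform-in-$\cX$ denominator would give a different quantity not maximized at $\cS^\star$. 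Second, your Stage II rate ``proportional to $\tilde I_{\chi^2}(\cS^\star)$'' misses the competition structure: the relevant signed quantity is $\partial_t w_{-h}^{(h)} - \partial_t w_{-i}^{(h)}$, which reduces to $g_h^\top(e_{M+1-h}-e_{M+1-i})$. The ``correct'' direction contributes $\approx 2\tilde I_{\chi^2}(\cS^\star)$, but the ``wrong'' direction $g_h^\top e_{M+1-i}$ is of the \emph{same order} and must be upper bounded by $\tilde I_{\chi^2}(\cS^\star)+\tilde I_{\chi^2}(\cS^\star\setminus\{h\}\cup\{i\})\le 2\tilde I_{\chi^2}(\cS^\star)-\Delta\tilde I_{\chi^2}$. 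This bound requires a separate Cauchy--Schwarz argument on the cross-matching events (pairing $\ind(x_{-i_h}=z_{-j_h})$ against $\mu^\pi$), which your plan omits; without it you cannot show head $h$ prefers parent $-h$ over $-i$, and the drift is $\Delta\tilde I_{\chi^2}$, not $\tilde I_{\chi^2}(\cS^\star)$.

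Your Stage III heuristic (``$-\log(\text{mass on matched})$ gives $1/a$'') is also too loose to produce the stated rates. The paper instead rewrites $\partial_t a$ via the exponentially reweighted stationary law $\tilde\mu^\pi(z,Z\mid X_{-\cS^\star})\propto \mu^\pi(z,Z)\,e^{a\,\ind(Z_{-\cS^\star}=X_{-\cS^\star})}$ and shows the gradient factors as $e^a\,(r^\pi)^3\,J$ with $r^\pi=(1+\mu^\pi(X_{-\cS^\star})(e^a-1))^{-1}$ and $J$ a $\chi^2$-type quantity interpolating between $D_{\chi^2}(\mu^\pi(\cdot\mid X_{-\cS^\star})\,\|\,\mu^\pi)$ and its reverse; the two sub-stages come from $r^\pi\approx 1$ versus $r^\pi\approx(\mu^\pi e^a)^{-1}$, and the $\log L/8$ stopping point is where the approximation error $\xi(a)$ (not the regularizer $\varepsilon$ per se) ceases to be dominated by $f_5\asymp e^{-2a}$.
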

See \cref{sec:proof_sketch} for a proof sketch and \cref{sec:dynamics proof} for the detailed proof. We require that $L$ is sufficiently large, and the specific conditions for $L$ are deferred to \Cref{sec:L-condition}.

\vspace{5pt}
{\bf \noindent Interpretation of Training Dynamics.} We empirically verify \cref{thm:convergence} by conducting a simulation experiment. In particular, we train a transformer with $H=M=3$ and $D=2$ based on Markov chain data with $d = 2$, $L=100$ and $\pa = \{-1, -2\}$. 
We sample the transition kernel from a Dirichlet prior such that $\cS^\star = \{ 1, 2\}$ also matches the parent set. 
For more details on this simulation, see \Cref{sec:experiments}.
The results are shown in \Cref{fig:train_C_W_a} and align perfectly with \cref{thm:convergence}. 
From \cref{thm:convergence}, we can interpret the three stages of training dynamics as follows. 
\begin{enumerate}
    \item[$\bullet$] In the first stage, the training of FFN parameters learns a \textbf{\color{OrangeRed} \emph{selector}} that selects an informative set $\cS^\star$ by realizing the corresponding feature embedding through the polynomial kernel. 
    That is, when $t$ is sufficiently large, we have $p_{\cS^\star }(t) \approx 1$ and $p_{\cS} (t) \approx 0$ for all $\cS \neq \cS^\star$. 
    In this case, for any input vectors $v, v' \in \RR^{Hd}$, the inner product in \eqref{eq:feed-forward} reduces to 
    $$\langle \phi( v), \phi(v') \rangle \approx c_{\cS^\star} ^2 \cdot  \prod_{h \in \cS^\star } \langle v^{(h)}, v'^{(h)}\rangle.  $$
    That is, FFN only selects the blocks in $\cS^\star $  as the feature. We observe this phenomenon in the experiment, where we set   $\cS^\star=\{1, 2\}$.
    As shown in  \Cref{fig:train_C_W_a}-(a), it is clear that $c_{\cS^\star}$ immediately dominates the rest of $c_{\cS}$'s within only a few gradient epochs.
    
    \item[$\bullet$] 
    In the second stage, we update the parameters of the RPE. 
    This stage turns the first attention layer into a \textbf{\color{OrangeRed}\emph{copier}} by establishing the correspondence between the attention heads and the parents in the selected $\cS^\star$. 
    That is, each attention head copies a particular parent in $\cS^\star$. 
    Specifically, when $t $ is sufficiently large, 
    for any $h\in \cS^\star$, $\sigma^{(h)} (t) = \sigma (w^{(h)} (t) )   \approx 1$. 
    Recalling the construction of RPE, this implies that $v_l ^{(h)} $ in \eqref{eq:define_V_h} becomes $x_{l - h}$ for all $h\in \cS^\star$. 
As shown in  \cref{fig:train_C_W_a}-(b), in the experiment, the first two heads initialized towards the first two parents will deterministically copy parents $-1$ and $-2$ eventually. The third head stays close to its initial value. 
This head has a negligible effect on the output because  
$3\notin\cS^\star$ and $p_{\cS^\star} \approx 1$. 
    \item[$\bullet$] 
    After the first two stages are completed, 
we know that the features constructed approximately satisfy 
\eqref{eq:matching-infoset}  up to a proportionality factor. 
Then, in the final training stage, the scalar weight $a$ in the second attention layer keeps increasing. 
Thus, this stage learns an exponential kernel \textbf{\color{OrangeRed}\emph{classifier}} as specified in \eqref{eq:stage3_expression}. 
When $a(t)$ is sufficiently large, the learned transformer is close to a classifier that uses covariate-label pairs of the form $(X_{l -\cS^\star}, x_{l})$ to predict $x_{L+1}$. 
In particular, when $a(t)$ goes to infinity, the transformer exactly becomes the GIH mechanism given in 
\cref{def:gih}. 
Moreover, we theoretically prove that the increasing trajectory of $a(t)$ has two stages, where $\mathrm{d} a(t)/\mathrm{d} t$ is initially large and gradually decays, this is also clearly observed in the experiment.  See \cref{fig:train_C_W_a}-s(c) for details. 

\end{enumerate} 

In summary, we theoretically show that the limiting model obtained by three-stage training approximately implements 
 the GIH mechanism.
 We will prove that the difference between these two estimators is at most 
 $O(L^{-1/8})$.  
We defer the formal statement and proof to \cref{sec:proof_of_gih_approx}.
Moreover, as an answer to the Question {\color{OrangeRed}(iii)} raised in \Cref{sec:intro}, the different components of the transformer architecture are all critical for achieving this:
FFN with normalization realizes the \textbf{\color{OrangeRed} \emph{selector}}, the multi-head design of attention supports the \textbf{\color{OrangeRed}\emph{copier}}, and finally, the softmax operation facilitates the exponential kernel \textbf{\color{OrangeRed}\emph{classifier}}.
These components work organically as a whole system, yielding the trained transformer's capability of ICL of $n$-gram Markov chains.

Another takeaway from \cref{thm:convergence} is a strict separation in the growth rate of these three stages. 
In particular, the convergence rates of the corresponding components of the transformer model in these three stages range from exponentially fast (Stage I), polynomially fast (Stage II), to logarithmically slow (Stage III).
With such two exponential separations of convergence rates, we expect that these three stages naturally arise when we simultaneously train the whole model via gradient descent/flow. 
We empirically verify this argument and the details are deferred to \cref{sec:exp_supp}. 


\begin{figure}[tb!]
    \centering
    \includegraphics[width=0.95\textwidth]{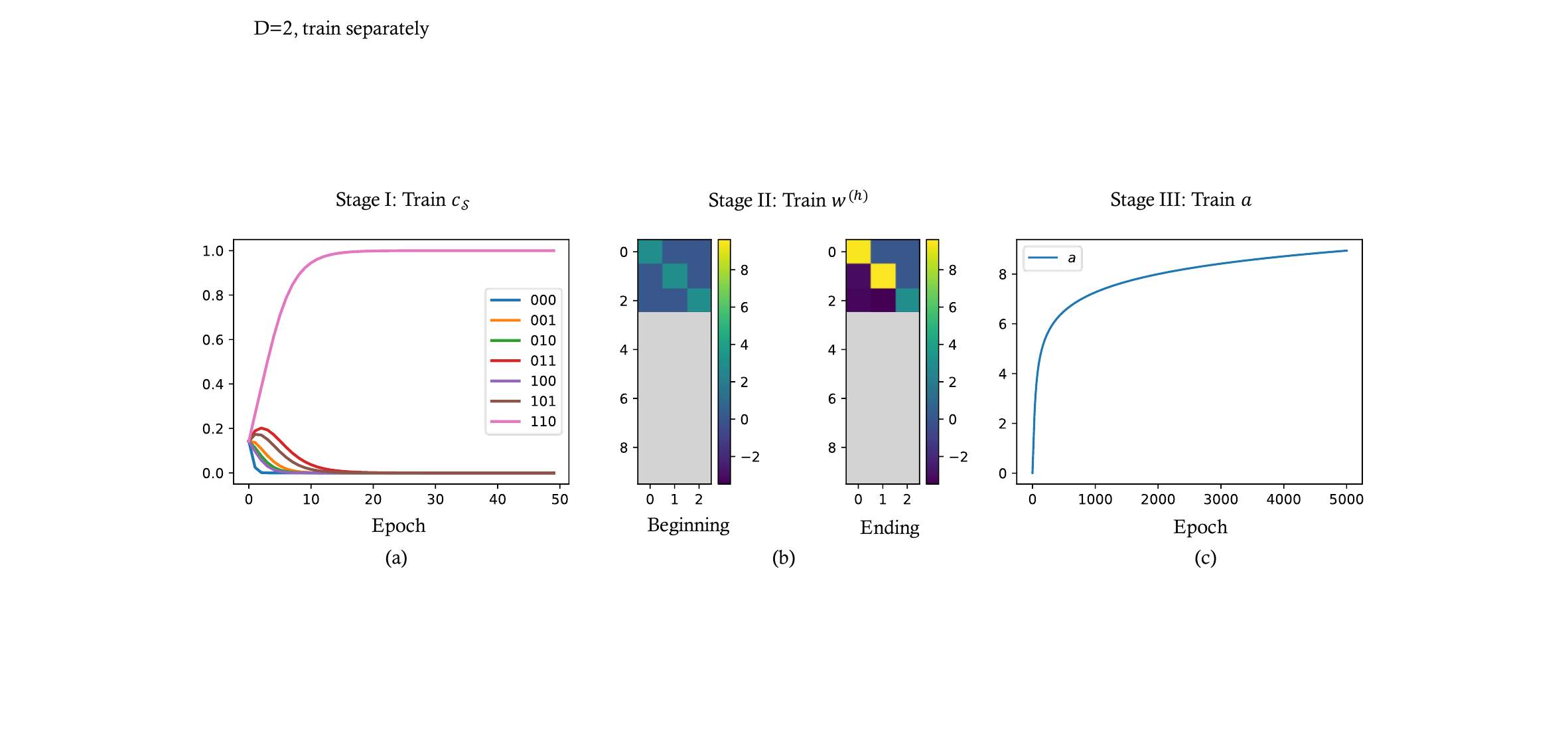}
    \caption{An illustration of the transformer parameters during the three-stage training.  We train a transformer in  \(\mathtt{TF}(M=3,H=3, d=3, D=2)\)  with $L=100$, $\pa=\{-1, -2\}$. See \Cref{sec:experiments} for more details of the simulation.  
    In (a) we show the evolution of $\{p_\cS\}_{\cS \in \HleqD }$ in the first stage of training where $p_{\cS} = c_{\cS}^2/ \sum_{\cS'\in\HleqD} c_{\cS'}^2$.
    We use a binary coding in $\{0, 1\}^3$ to indicate each subset $\cS$. 
    Recall that ``110'' represents $ = \{1,2\}$, which is exactly $\cS^\star$. This figure shows that 
   $p_{\cS^\star}$ gradually increases to one while the any other $p_{\cS}$ decays to zero.   
    In (b) we plot the  RPE weights of the first attention layer before and after the second stage of training. 
    Here the $h$-th column corresponds to the RPE weight vector of head $h$.  
    This figure shows that  $w^{(1)} _{-1} $  and $w^{(2)} _{-2} $  increase to a large number after training, while $w^{(3)} _{-3} $ stays close to its initial value. 
    Thus, we have $\sigma (w^{(1)}) \approx \sigma(w^{(2)}) \approx 1 $.  
    That is, the first two heads are trained to attend to parents $-1$ and $-2$, respectively.
In (c) we plot the 
evolution of $a$ in the last stage of training.   This figure clearly exhibits a two-step growth pattern and $a$ keeps increasing throughout this stage. In summary, the results of the simulation experiments coincide with the theoretical results. }
    \label{fig:train_C_W_a}
\end{figure}


\subsection{Further Discussions on the GIH Mechanism} \label{sec:discussion}
We conclude this section with further discussions on the modified $\chi^2$-mutual information and low-degree polynomial kernel for the FFN within the GIH mechanism.

\paragraph{On the Modified $\chi^2$-Mutual Information}
Now that we have shown how gradient flow approaches the desired GIH model, it is natural to ask the following questions: What is the optimal subset $\cS^\star$ that the model selects? How well does the model perform? 
For the purpose of illustration, let us consider a symmetric case where the stationary distribution $\mu^\pi$ over a length-$r_n$ window is uniform over $\cX^{r_n}$. 
One can verify that in this case, the stationary distribution over a window of any other length is uniform as well, and the modified mutual information can be simplified into
\begin{align}
    \log \tilde I_{\chi^2}(\cS) = \log I_{\chi^2}(\cS) - |\cS| \log {d}, 
        \label{eq:MI-symmetry}
\end{align}
where $I_{\chi^2}(\cS)$ is the standard $\chi^2$ mutual information between $\mu^\pi(z\given Z_{-\cS})$ and $\mu^\pi(z)$, and the second term $|\cS| \log {d}$ serves as a penalty on the \textit{model complexity}.
Thus, the GIH mechanism is \emph{reaching a balance between the model complexity and the information richness}.
Below we characterize two scenarios where the model will select the exact parent set, i.e., $\cS^\star=\texttt{pa}$.
\begin{enumerate}[
    leftmargin=0.3in,nolistsep
]
    \item If $n=1$, i.e., each token only has one parent, then $\cS^\star=\texttt{pa}$. 
    This is because $\cS^\star$ simultaneously maximizes both terms in \eqref{eq:MI-symmetry}, thus reproducing the results in \cite{nichani2024transformers}. 
    \item If $n$ is known a priori and restricting the polynomial kernel to $\cS\in[H]_{=n}=\{\cS\in[H]: |\cS|=n\}$ for the FFN layer, then $\cS^\star=\texttt{pa}$.
    Here, the penalty term does not influence the selection and the exact parent set maximizes the mutual information by the data-processing inequality.
\end{enumerate}
In the general case, however, the model could be much more flexible, and it is possible that the model selects only a subset of the true parent set or even some non-parent tokens that are also informative.
The rationale is that with a more complex model, e.g., selecting a large $\cS$, the model are able to make more accurate predictions for large $L$ but may endure a large estimation error for small $L$, as the exact matching $X_{l-\cS} = X_{L+1-\cS}$ may appear rarely in the sequence. 

\paragraph{On the Low-Degree Polynomial Kernel}
The goal of using a low-degree polynomial kernel in \eqref{eq:feed-forward} is to strike a balance between model complexity (which is also related to computational cost) and the model's accuracy.
In this regard, we have the following corollary. 
\begin{corollary}
    We always have
    $|\cS^\star| \le n$ regardless of the choice of  $D$, where $\cS^\star = \argmax_{\HleqD} \log \tilde I_{\chi^2}(\cS)$ for $\tilde I_{\chi^2}(\cS)$ in \eqref{eq:MI-symmetry}
\end{corollary}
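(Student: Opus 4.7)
The plan is to argue by contradiction: suppose $|\cS^\star|>n$; then the exact parent set $\pa$ itself would yield a strictly larger modified $\chi^2$-mutual information, contradicting the optimality of $\cS^\star$. Two ingredients drive this: the $n$-gram Markov property, which says that conditioning on any set containing $\pa$ yields the same conditional distribution as conditioning on $\pa$ alone, and the data-processing inequality (DPI) for $\chi^2$-mutual information, which says that enlarging the conditioning set can only increase $I_{\chi^2}$.

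The main technical lemma I would establish is the comparison $I_{\chi^2}(\cS)\le I_{\chi^2}(\pa)$ for every $\cS\subseteq[M]$. To prove it, I would first apply DPI in the direction that adds tokens: since $Z_{-\cS}$ is a deterministic coordinate projection of $Z_{-(\cS\cup\pa)}$, the chain $z - Z_{-(\cS\cup\pa)} - Z_{-\cS}$ is Markov, and so $I_{\chi^2}(\cS)\le I_{\chi^2}(\cS\cup\pa)$. Next, by the $n$-gram Markov property, $\mu^\pi(z=e\mid Z_{-(\cS\cup\pa)}) = \pi(z=e\mid Z_{-\pa})$ depends only on the $\pa$-coordinates, so the inner sum in the definition of $I_{\chi^2}$ is a function of $Z_{-\pa}$ alone. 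The tower property of expectation then collapses $I_{\chi^2}(\cS\cup\pa) = I_{\chi^2}(\pa)$. Combining gives $I_{\chi^2}(\cS)\le I_{\chi^2}(\pa)$, with equality whenever $\cS\supseteq\pa$.

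With this lemma in hand, I would finish by a short case analysis on $D$. If $D<n$, then the bound $|\cS^\star|\le D<n$ is immediate. If $D\ge n$, then $\pa\in\HleqD$ is itself a valid candidate; supposing $|\cS^\star|>n$, identity \eqref{eq:MI-symmetry} together with the lemma gives
\begin{align*}
\log \tilde I_{\chi^2}(\cS^\star) &= \log I_{\chi^2}(\cS^\star) - |\cS^\star|\log d \\
&\le \log I_{\chi^2}(\pa) - |\cS^\star|\log d \\
&< \log I_{\chi^2}(\pa) - n\log d \;=\; \log \tilde I_{\chi^2}(\pa),
\end{align*}
where the strict inequality uses $|\cS^\star|>n$ and $\log d>0$. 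This contradicts the optimality of $\cS^\star$, yielding $|\cS^\star|\le n$ in both cases. I do not expect a significant obstacle, as the argument reduces to two standard facts about $\chi^2$-mutual information; the only subtle point is to verify that DPI is being applied in the correct direction (enlarging, not shrinking, the conditioning set), together with the implicit non-degeneracy assumption $I_{\chi^2}(\pa)>0$ so that the logarithm is finite.
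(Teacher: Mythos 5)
Your proof is correct and follows essentially the same route as the paper: the paper's informal argument also hinges on the comparison $I_{\chi^2}(\cS)\le I_{\chi^2}(\pa)$ (which it justifies by saying the true parent set is most informative, i.e. exactly your DPI step) combined with the fact that the penalty $|\cS|\log d$ strictly exceeds $n\log d$ when $|\cS|>n$. Your write-up simply fleshes out the two details the paper leaves implicit — the DPI-plus-tower-property derivation of $I_{\chi^2}(\cS)\le I_{\chi^2}(\cS\cup\pa)=I_{\chi^2}(\pa)$, and the trivial case $D<n$ — so there is nothing substantively different.
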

The reasoning behind this corollary is as follows. 
Consider any set $\cS$ with $|\cS| > n$, 
we have 
 $I_{\chi^2}(\cS) \leq I_{\chi^2}(\mathtt{pa})$ as the true parent set is the most informative.  
Moreover, since $|\mathtt{pa}| = n < |\cS|$, $\cS$ suffers from a larger penalty. 
As a result, we have 
   $\log \tilde I_{\chi^2}(\cS) < \log \tilde I_{\chi^2}(\pa)$ when $\cS $ has more than $n$ elements.
In other words, it is without loss of generality to set $ D \le n$.


\section{Proof Sketch}\label{sec:proof_sketch}
In this section, we discuss the main ingredients of analysis of gradient flow.
First,  we show in \Cref{sec:proof_sketch_simplification} how to simplify the model based on our choice of the initialization and the structure of the disentangled transformer.
We then proceed to present the main proof ideas for the three stages of the gradient flow dynamics, where the training yields the following behaviors:
\begin{enumerate}
\item [$\bullet$] {\color{OrangeRed} Stage I}: A unique $\cS^\star \in \HleqD$ stands out such that the associated parameter $c_{\cS^\star}$ dominates those of the other sets. As a result,  $p_{\cS}^* (t) = c_{\cS^*}^2 (t)/ C_{D} (t) $ approaches to one. 

\item[$\bullet$] {\color{OrangeRed} Stage II}: For each $h \in \cS^\star$, $\sigma(w^\h)$ approaches a one-hot vector $e_{M+1-h}\in\RR^M$, where $w^\h$ contains the parameters of RPE of the $h$-th head.
During this stage, each head concentrates on copying a particular parent. 

\item[$\bullet$] {\color{OrangeRed} Stage III}: Finally, $a$ grows and reaches $\cO(\log L)$. As a result, the trained model approximately implements the GIH mechanism $\mathtt{GIH}(x_{1:L}; M, D)$. 
\end{enumerate}

\subsection{Simplification of the Transformer Model at Initialization}\label{sec:proof_sketch_simplification}

We first simplify the expression of the transformer model at initialization under \Cref{asp:initialization}, by showing that the attention scores of the second attention layer admit a simpler form.

For the second attention layer, we write the output as $y^\top = \sigma(as) X$ where $s := u_{L+1}^\top\mathtt{Mask}(U_{1:L}^\top) \in\RR^{1\times L}$ is the row vector of the similarity scores.
Recall from \eqref{eq:transformer} that the FFN layer with normalization outputs $U = \phi(V)/\sqrt{C_D}\in \RR^{(L+1)\times d_e}$, and we denote the $l$-th row of $U$ by $u_l = \phi(v_l) / \sqrt{C_D}$.
For $l=M+1,\ldots,L$, the $l$-th entry of $s$ is given by $$s_l = \langle u_l, u_{L+1}\rangle = \langle \phi(v_l), \phi(v_{L+1})\rangle / C_D,$$ and the other entries are all $-\infty$.
By the property of the FFN layer in \eqref{eq:feed-forward} and the definition $C_D=\sum_{\cS\in\HleqD} c_\cS^2 $, we can rewrite the above attention score as
\begin{align}\label{eq:simplify_s_l}
    s_l = \frac{\sum_{\cS\in \HleqD} c_{\cS}^2 \cdot \prod_{h\in \cS} \langle v_l^\head{h}, v_{L+1}^\head{h} \rangle}{\sum_{\cS\in\HleqD} c_\cS^2}, \quad \text{for } l=M+1,\ldots,L.
\end{align}
Note that under \Cref{asp:initialization},  by the definition of $\Delta w$ in \eqref{eq:def_delta_w}, we have a sufficiently large gap $w_{-h}^\h - w_{-j}^\h$ for all $j\neq h$ at initialization.
Thus, $\exp(w_{-h}^\h)\gg \exp(w_{-j}^\h)$ for all $j\neq h$, which implies the following approximation:
\begin{align}\label{eq:approx_first_layer}
    v_l^\h = \sum_{k=1}^{M} \frac{\exp(w_{-k}^\h)}{\sum_{j=1}^{M} \exp(w_{-j}^\h)} \cdot x_{l-k} \approx x_{l-h}, \quad \text{for } l=M+1,\ldots,L.
\end{align}
This further implies that for $l=M+1,\ldots,L$, we have 
\begin{align}\label{eq:approx_subset}
    \prod_{h\in \cS} \langle v_l^\head{h}, v_{L+1}^\head{h} \rangle \approx \prod_{h\in \cS} \langle x_{l-h}, x_{L+1-h} \rangle = \ind \{ x_{l-i} = x_{L+1-i} \text{ for } i\in \cS\},
\end{align}
which is a binary value indicating whether the query and the key token's history match on the subset $\cS$.
Combining \eqref{eq:simplify_s_l} and \eqref{eq:approx_subset}, we obtain the following simplified expression for $s_l$:
\begin{align}
    s_l \approx \frac{\sum_{\cS\in\HleqD} c_\cS^2 \cdot \ind \{ x_{l-i} = x_{L+1-i} \text{ for } i\in \cS\} }{\sum_{\cS\in\HleqD} c_\cS^2}
    = \sum_{\cS\in\HleqD} p_\cS \cdot \ind \{ x_{l-i} = x_{L+1-i} \text{ for } i\in \cS\}
\end{align}
where we denote $p_\cS = c_\cS^2 / \sum_{\cS\in\HleqD} c_\cS^2$ for $\cS\in\HleqD$.

In summary, when $\Delta w$ is sufficiently large, $v_{l}^\h$ approximately copies the token $x_{l-h}$. 
As a result, the attention score $s_l$ satisfies 
\begin{align} \label{eq:approx_attn_score}
    s_l \approx \sum_{\cS\in\HleqD} p_\cS \cdot \ind \{ x_{l-i} = x_{L+1-i} \text{ for } i\in \cS\}.
\end{align}

\subsection{Analysis for Training the FFN and the First Attention Layer}
The first two training stages involve the dynamics of the weights of the FFN, 
$\{c_\cS\}_{\cS\in\HleqD}$, 
and the weights of the first attention layer, $\{w^\h\}_{h=1}^H$.
The analyses of these two stages have similar structures and contain the following essential steps:
\begin{enumerate}
    \item Derive the explicit expression of the dynamics of the weights, via direct calculations.
    \item Unveil the key quantities (related to the modified $\chi^2$-MI) that dominantly drive the dynamics, by replacing the empirical average over the context sequence with the expectation over the stationary distribution, along with other approximations.
    \item Then based on the above characterization of the dynamics, we can show the convergence of the weights to the desired values.
\end{enumerate}

\subsubsection{Training the FFN: Identification of the Information Set $\cS^\star$}\label{sec:proof_sketch_stage1}
In the first stage, we track the dynamics of $c_\cS^2(t)$ for each $\cS\in\HleqD$.
For convenience, we drop the dependence on $t$ in the sequel.

Recall the output of the model is $y = (\sigma(a\cdot s) X)^\top$ and the cross-entropy loss function is 
$\cL(\Theta) = \EE_{\pi \sim \cP, x_{1:L}} [\ell(\Theta)]$, where $\ell(\Theta)$ can be written as  $\ell(\Theta) = -\left \langle x_{L+1}, \log (y+\varepsilon \bm{1}) \right \rangle.$
We ignore the small constant $\varepsilon$ in the following proof sketch for simplicity.
We also abbreviate the vector of attention probabilities in the second attention layer as $\sigma\in \RR^L$.

\paragraph{Calculation of the Dynamics of $c_\cS^2$}
By a direct calculation for the loss $\ell$ and $s_l$ in  \eqref{eq:simplify_s_l},
\begin{align}
    \frac{\partial \ell}{\partial s_l} =  -a \cdot \sigma_l(a \cdot s) \cdot \rbr{\frac{x_{L+1}}{y}}^\top \rbr{x_l -y}, \quad 
    \frac{\partial s_l}{\partial c_\cS}& = \frac{2 c_\cS \prod_{h\in \cS} \langle v_l^\head{h}, v_{L+1}^\head{h} \rangle}{C_D} - \frac{2 c_\cS s_l}{C_D}.
\end{align}
Here the vector $x_{L+1}/y$ is obtained by element-wise division and $\sigma_{l}(a\cdot s)$ is the $l$-th entry of $\sigma(a\cdot s)$.
Then applying the chain rule, we obtain the following dynamics for $c_\cS^2$ along the gradient flow:
\begin{align}
    \partial_t \log c_\cS^2
    = -\frac{2}{c_S} \sum_{l=M+1}^L \EE\bigg[\frac{\partial\ell}{\partial s_l} \frac{\partial s_l}{\partial c_S}\bigg]
    &= \frac{4a}{C_D}   \sum_{l=M+1}^{L} \EE \bigg[\sigma_l(a\cdot s) \cdot \prod_{h\in \cS} \langle v_l^\head{h}, v_{L+1}^\head{h} \rangle \cdot \bigg(\frac{x_{L+1}}{y}\bigg)^\top \rbr{x_l -y}\bigg]\\
    &\qqquad - \underbrace{\frac{4a}{C_D}   \sum_{l=M+1}^{L} \EE \bigg[\sigma_l(a \cdot s) \cdot s_l \cdot \bigg(\frac{x_{L+1}}{y}\bigg)^\top \rbr{x_l -y}\bigg]}_{\ds f(t)}.
\end{align}
Note that here the second term $f(t)$ is independent of $\cS$, and it will be canceled out when we consider the difference of the derivatives, $\partial_t\log c_{\cS}^2 - \partial_t \log c_{\cS'}^2$, for two sets $\cS,\cS'\in\HleqD$.
This is why we focus on the time derivative of $\log c_{\cS}^2$.

\paragraph{Relate the Dynamics to the Modified $\chi^2$-MI by Approximations}
Now using the approximation in \eqref{eq:approx_subset} for $\prod_{h\in \cS} \langle v_l^\head{h}, v_{L+1}^\head{h} \rangle$, expanding $(x_{L+1}/y)^\top (x_l-y)$ coordinate-wise, and noting that $\sigma_l(a\cdot s)\approx 1/(L-M)$ as we have small $a$ in the second attention layer, we arrive at
\begin{align}
    \partial_t \log c_\cS^2 &\approx \frac{4a}{(L-M)C_D} \sum_{l=M+1}^L \EE\Bigg[ \ind(X_{l-\cS} = X_{L+1-\cS}) \cdot \biggl(\sum_{k=1}^d\frac{\ind(x_{L+1} = x_l = e_k)}{y(k)} - 1 \biggr)\Bigg] - f(t). \label{eq:log_cS_approx}
\end{align}
where $y(k)$ denotes the $k$-th entry of $y$ and $X_{l-\cS}:=(x_{l-i}:i\in\cS)$ denotes the history of $x_l$ on the set $\cS$, similar for $X_{L+1-\cS}$.
Note that $y(k)\approx (L-M)^{-1}\sum_{l=M+1}^L \ind(x_l=e_k) \approx \mu^\pi(e_k)$, which follows from the mixing assumption of the Markov chain that allows us to replace the average over $l=M+1,\ldots,L$ by the expectation over the stationary distribution.
Also for the same reason, we can replace $(x_{l}, X_{l-\cS}), (x_{L+1}, X_{L+1-\cS})$ with \emph{two independent copies} from the stationary distribution $\mu^\pi$, i.e.,
\begin{align}
    \partial_t \log c_\cS^2 &\approx \frac{4a}{C_D} \EE_{(x, X), (z, Z) \sim \mu^\pi \times \mu^\pi}\Bigg[ \ind(Z_{-\cS} = X_{-\cS}) \cdot \biggl(\sum_{k=1}^d\frac{\ind(x  = z  = e_k)}{\mu^\pi(e_k)} - 1 \biggr)\Bigg] - f(t). \label{eq:log_cS_approx-1}
\end{align}
See the approximation from $g_{2, \cS}$ to $g_{3, \cS}$ in \Cref{sec:proof_stage1}.
Indeed, the first term in \eqref{eq:log_cS_approx-1} becomes the modified $\chi^2$-MI, $\tilde I_{\chi^2}(\cS)$, which is defined in \Cref{def:modified_chi_square_mi}.
This gives rise to the following approximation:
\begin{align}
    \partial_t \log c_\cS^2 
    &\approx \frac{4a}{C_D} \tilde I_{\chi^2}(\cS) - f(t).
    \label{eq:log_c_S_gf}
\end{align} 
Since the value of $f(t)$ is independent of the specific choice of  set $\cS$, it is clear that the set $\cS$ achieving the fastest growth rate is the information-optimal set $\cS^*=\argmax_{\cS\in \HleqD} \tilde I_{\chi^2}(\cS)$ that maximizes the modified $\chi^2$-MI within $\HleqD$.

\paragraph{Convergence of $p_{\cS^\star}$}
Note that $p_\cS = c_\cS^2 / \sum_{\cS'\in\HleqD} c_{\cS'}^2$
quantifies the contribution of the set $\cS$ to the feature produced by the FFN layer.
Thus, it is the \emph{relative growth rate} of $c_\cS^2$ that matters. 
Towards this end, it follows from \eqref{eq:log_c_S_gf} that, for all $\cS\in\HleqD\backslash\{\cS^\star\}$,
\begin{align}
    \partial_t \log \frac{c_{\cS^\star}^2}{c_\cS^2} \approx \frac{4a}{C_D} \cdot \rbr{\tilde I_{\chi^2}(\cS^\star) - \tilde I_{\chi^2}(\cS)} \ge \frac{4a}{C_D} \cdot \Delta \tilde I_{\chi^2}.
    \label{eq:sketch-1}
\end{align}
Here we recall from \eqref{eq:Delta_I_chi2} that $\Delta \tilde I_{\chi^2}$ quantifies the minimal gap between the modified $\chi^2$-MI of $\cS^\star$ and any other set in $\HleqD$.
The lower bound given by \eqref{eq:sketch-1} ensures that for all $\cS\neq\cS^\star$, the ratio $c_{\cS^\star}^2/c_\cS^2$ grows exponentially fast, which further implies that $p_{\cS^\star}$ approaches one exponentially fast.
This concludes the first stage of the training dynamics.

\subsubsection{Training the First Attention Layer: Convergence of $\sigma(w^\h)$ to One-Hot Vector}\label{sec:proof_sketch_stage2}
As we proceed to the second stage after $p_{\cS^\star}\approx1$, it suffices to show how $\sigma(w^\h)$ converges to a one-hot vector $e_{M+1-h}$ for $h\in\cS^\star$ in order to show that the model converges to the GIH mechanism.
Recall that we denote $X = (x_1, \ldots, x_L)\in \RR^{L\times d} $. 
For notational convenience, we denote $\sigma^\h := \sigma(w^\h)$ and let $X_{(l-M):(l-1)} \in \RR^{M \times d}$ denote the submatrix of $X$ with rows $l-M, \ldots, l-1$ for any $l$. 
Following our convention, we let $\sigma_{-i}^\h$ denote the $(M+1-i)$-th entry of $\sigma^\h$ and similarly for $w_{-i}^\h$.

\paragraph{Calculation of the Dynamics of $w^\h$}
The main idea for analyzing $\{w^\h\}_{h=1}^H$ is the same as that in the previous stage: It suffices to analyze the \emph{difference between the growth rates} of different coordinates of $w^\h$ for $h\in\cS^\star$.
In particular, we care about how quickly $w_{-h}^\h$ grows compared to other coordinates if $w_{-h}^\h$ is initialized to be larger than the remaining coordinates:
\begin{align}
    \partial_t w_{-h}^\h - \partial_t w_{-i}^\h &= \sum_{l=M+1}^L \EE\bigg[\frac{\partial\ell}{\partial s_l} \bigg(\frac{\partial s_l}{\partial w_{-h}^\h} - \frac{\partial s_l}{\partial w_{-i}^\h}\bigg)\bigg]\label{eq:main_gradient_diff20}\\
    &= a \sum_{l=M+1}^L \EE\bigg[\sigma_l(as) \left(\sum_{k=1}^d \frac{\ind(x_{L+1}=x_l = e_k)}{y(k)} - 1 \right) \bigg(\frac{\partial s_l}{\partial w^\h_{-h}} 
    - \frac{\partial s_l}{\partial w^\h_{-i}} \bigg) \bigg].    
\end{align}
Now, we invoke the result obtained in the previous stage that $p_{\cS^\star}\approx1$, which gives us $s_l\approx\prod_{h\in\cS^\star}\langle v_l^\h,v_{L+1}^\h\rangle$.
Consequently, for any $h\in\cS^\star$, we have
\begin{align}\label{eq:gradient_s_w}
    \frac{\partial s_l}{\partial w^\h_{-i}} &\approx \frac{\partial}{\partial w_{-i}^\h} \prod_{h'\in\cS^\star} \langle v_l^{(h')},v_{L+1}^{(h')}\rangle
    = \bigg(\prod_{h'\in\cS^\star\setminus\{h\}}  \langle v_l^\head{h'}, v_{L+1}^\head{h'} \rangle\bigg) \cdot b_l^\top
    (e_{M+1-i} - (\sigma^\h)^\top ) \sigma_{-i}^\h
\end{align}
where the equality follows from the fact that $w_{-i}^\h$ only affects $(v_l^\h,v_{L+1}^\h)$ and differentiating through the softmax function.
Here we define $b_l :=  X_{(l-M):(l-1)}v_{L+1}^\head{h} +  X_{(L+1-M):L}v_{l}^\head{h}$ to simplify the notation.
Combining \eqref{eq:main_gradient_diff20} and \eqref{eq:gradient_s_w}, we obtain 
\begin{align}
     \!\!\!\!\! \partial_t w_{-h}^\h - \partial_t w_{-i}^\h 
    \approx a  g_h^\top \bigg(\sigma_{-i}^\h\left(e_{M+1-h} - e_{M+1-i} \right) + (\sigma_{-h}^\h - \sigma_{-i}^\h ) \sum_{j\neq h} \sigma_{-j}^\h (e_{M+1-h}-e_{M+1-j}) \bigg), \label{eq:main_gradient_diff2}
\end{align}
where we introduce the following notation 
\begin{align}
    g_{h} :=  \sum_{l=M+1}^L \EE\bigg[\sigma_l(a\cdot s) \cdot \left(\sum_{k=1}^d \frac{\ind(x_{L+1}=x_l = e_k)}{y(k)} - 1 \right) \cdot
    \prod_{h'\in \cS\setminus\{h\}}  \langle v_l^\head{h'}, v_{L+1}^\head{h'} \rangle b_l\bigg].
\end{align}
A detailed deviation of \eqref{eq:main_gradient_diff2} can be found in \eqref{eq:diff_stage2-0}.
Notice that $\sigma_{-h}^\h - \sigma_{-i}^\h$ is positive at initialization. 
Now suppose $\sigma_{-h}^\h - \sigma_{-i}^\h>0$ holds at current time $t$.
Then, lower bounding $\partial_t w_{-h}^\h - \partial_t w_{-i}^\h $ boils down to lower bounding \(
    g_h^\top \left(e_{M+1-h} - e_{M+1-i} \right)
\)
for $i \neq h$.
Furthermore, if we can show that $\partial_t w_{-h}^\h - \partial_t w_{-i}^\h$ is lower bounded by some positive value, the gap $\sigma_{-h}^\h - \sigma_{-i}^\h$ will further increase.
Since $\sum_{i=1}^M \sigma_{-i}^\h\equiv 1$, this will create a reinforcing loop that makes $\sigma_{-h}^\h$ monotonically increase.

\paragraph{Relate the Dynamics to the Modified $\chi^2$-MI by Approximations}
We demonstrate next that 
\(
    g_h^\top \left(e_{M+1-h} - e_{M+1-i} \right)
\) 
for $i\neq h$ admits a lower bound depending on the information gap $\Delta\tilde I_{\chi^2}$.
Specifically, using the same strategy for \eqref{eq:log_cS_approx}, we have by definition that 
\begin{align}
    &g_h^\top e_{M+1-i} 
    \label{eq:approx_g_h-sketch}\\
    &\quad \approx \frac{1}{L-M}\sum_{l=M+1}^L \EE\left[\left(\sum_{k=1}^d \frac{\ind(x_{L+1}=x_l = e_k)}{y(k)} - 1 \right) \cdot \ind(x_{l-j}=x_{L+1-j}, j\in\cS^\star\setminus\{h\}) \cdot b_l^\top e_{M+1-i} \right]
\end{align}
where for $b_l$ we have by the same approximation $v_l^\h \approx x_{l-h}$ and $v_{L+1}^\h \approx x_{L+1-h}$ as in \eqref{eq:approx_subset} that
\begin{align}
    b_l^\top e_{M+1-i} = {v_{L+1}^\h}^\top x_{l-i} + {v_{l}^\h}^\top x_{L+1-i} \approx \ind(x_{L+1-h} = x_{l-i}) + \ind(x_{l-h}=x_{l-i}). 
    \label{eq:approx_b e}
\end{align}
Now we consider the case $i = h$ and $i\neq h$ separately:
\begin{enumerate}
    \item[(i)] ($i = h$) For \( g_h^\top e_{M+1-h} \), we simply set \( i = h \) in \eqref{eq:approx_b e}, and the indicator \(\ind(x_{L+1-h} = x_{l-h})\) will exactly compensate for the exclusion of \( h \) in the indicator function of \eqref{eq:approx_g_h-sketch}.
    Drawing an analogy to how we go from \eqref{eq:log_cS_approx} to \eqref{eq:log_c_S_gf}, we obtain
    \[
        g_h^\top e_{M+1-h} \approx 2\tilde I_{\chi^2}(\cS^\star).
    \]
    \item[(ii)] ($i\neq h$) For \( g_h^\top e_{M+1-i} \) with \( i \neq h \) in \eqref{eq:approx_b e}, we apply the same reasoning as in the previous case. Additionally, by using the Cauchy-Schwarz inequality, the following inequality holds up to a small error (see \Cref{lem:cross_mutual} for a detailed derivation):
    \begin{align} 
        g_h^\top e_{M+1-i} \leq \tilde I_{\chi^2}(\cS^\star) + \tilde I_{\chi^2}(\cS^\star\backslash\{h\}\cup\{i\}) \leq 2\tilde I_{\chi^2}(\cS^\star) - \Delta \tilde I_{\chi^2}.
    \end{align}
\end{enumerate}
Plugging this back into the dynamics in \eqref{eq:main_gradient_diff2}, we conclude that for all $i\neq h$,
\begin{align}
    \partial_t w_{-h}^\h - \partial_t w_{-i}^\h \geq a \cdot \sigma_{-i}^\h \cdot \Delta  \tilde I_{\chi^2}.
\end{align}

\paragraph{Convergence of $\sigma(w^\h)$} 
Combining the arguments in the previous two steps, we can now say that $\sigma_{-h}^\h$ will monotonically increase. 
It remains to show that $\sigma_{-h}^\h$ converges to one.
Note that $\log (\sigma_{-h}^\h/\sigma_{-i}^\h) = w_{-h}^\h - w_{-i}^\h$ by the definition of the softmax function.
Therefore,
\begin{align}
    \partial_t \log \bigl ( \sigma_{-h}^\h \bigl / \sigma_{-i}^\h \bigr) = \partial_t w_{-h}^\h - \partial_t w_{-i}^\h 
    \ge a\cdot \sigma_{-i}^\h \cdot \Delta  \tilde I_{\chi^2}
    = a\cdot \Delta  \tilde I_{\chi^2} \cdot \sigma_{-h}^\h(0) \cdot \bigl( \sigma_{-i}^\h \big / \sigma_{-h}^\h \bigr ) 
\end{align}
where $\sigma_{-h}^\h(0)$ is the initial value of $\sigma_{-h}^\h$ at time $t=0$.
One can now rearrange the term and pick the ratio $\sigma_{-i}^\h/\sigma_{-h}^\h$ as the variable to track in the dynamics. 
A refined analysis in the convergence analysis in \Cref{sec:proof_stage2} shows that $\sigma^\h$ converges to a one-hot vector with $\sigma_{-h}^\h$ going to one.
In particular, the convergence rate is determined by the information gap $\Delta \tilde I_{\chi^2}$ according to the above formula.

\subsection{Analysis for the Training of the Second Attention Layer}\label{sec:proof_sketch_stage3}
In the last stage, we turn to the training of $a$ given that all $\sigma^\h$'s for $h\in\cS^\star$ are approximately one-hot vectors.
The following approximation of the dynamics of $a(t)$ is performed in the region $a \le O(\log L)$, where the signal term in the dynamics dominates the approximation error.

\paragraph{Calculation of the Dynamics of $a$}
After Stages \RNum{1} and \RNum{2}, the output is approximated as $y(k)\approx y^\star (k) \defeq \sum_{l=1}^L \sigma_l^\star \ind(x_l = e_k)$ for each $k\in[d]$.
Here the weighting coefficients $\sigma_1^\star,\ldots,\sigma_L^\star$ satisfy
\begin{align}
    \sigma_l^\star \propto \exp\left(a \cdot \ind(X_{l -  \cS^\star} = X_{L+1-\cS^\star})\right).
\end{align}
Note that for each $l\in[L]$, $\sigma_l^\star$ indicates the importance assigned to the $l$-th token based on the corresponding history of $x_l$ over the information set $\cS^\star$.
In the population counterpart, when the chain has sufficiently mixed, for given $X_{L+1-\cS^\star}$, we can roughly view each $(x_l, X_{l-\cS^\star})$ as being sampled from a \emph{reweighed version of the stationary distribution}:
\begin{align}
    \tilde{\mu}^\pi(x_l, X_{l-\cS^\star} \given X_{L+1-\cS^\star}) \propto \mu^\pi(x_l, X_{l-\cS^\star}) \cdot \exp\left(a \cdot \ind(X_{l -  \cS^\star} = X_{L+1-\cS^\star})\right).
\end{align}
Following the same argument as those in the previous stages, replacing the sum over $l$ with the expectation over the stationary distribution, we arrive at
\begin{align}
    \partial_t a \approx \EE_{ \pi\sim \cP, (x, X_{-\cS^\star}, z, Z_{-\cS^\star}) \sim q^\pi}\bigg[  \ind(X_{-\cS^\star} = Z_{-\cS^\star}) \cdot \bigg(\sum_{k=1}^d\frac{\ind(x = z = e_k)}{\tilde \mu^\pi(z=e_k\given X_{-\cS^\star})} - 1 \bigg) \bigg].\label{eq:gradient_a}
\end{align}
See detailed derivations of the above approximation in \Cref{sec:proof_stage3}.
Comparing the above expression with \eqref{eq:log_cS_approx-1} in Stage I, one can see that
here $(x, X_{-\cS^\star})$ and $(z, Z_{-\cS^\star})$ are no longer independent because now the model has learned to perform a \emph{information-theoretic feature selection}, i.e., focusing on tokens sharing the same set of features based on the information set $\cS^\star$, which is defined according to the modified $\chi^2$-mutual information. 
In fact, the underlying joint distribution $q^\pi$ is given by 
$
    q^\pi = \mu^\pi(x, X_{-\cS^\star}) \cdot \tilde \mu^\pi(z, Z_{-\cS^\star}\given X_{-\cS^\star}).
$

\paragraph{Divergence of $a$}
As the dynamics of $a$ has no closed-form expression due to the nonlinearity in the reweighed distribution $\tilde\mu^\pi$, we resort to providing characterization for cases where $a$ is either sufficiently small or large.
In both cases, the lower and upper bounds of \eqref{eq:gradient_a} can be derived, respectively.
Using these bounds, we can argue rigorously that for small $a$, it undergoes super-exponential growth until it reaches a critical ``elbow'' value. 
After that, when $a$ becomes even larger, it grows logarithmically until it reaches $\Omega(\log L)$.


\section{Experiments} \label{sec:experiments}
In this section, we first detail the setup for the experiment in \Cref{fig:train_C_W_a}, and then provide additional results for training a model that also incorporates the word embedding matrices $W_Q$, $W_K$, $W_V$ and the output embedding matrix $W_O$ in the first attention layer.
Let us first detail the data setup that is used for all the experiments in this work.

\paragraph{Data generation}
The dataset for the ICL task is generated as $n$-gram Markov chains as described in \cref{sec:icl_mc}. 
We take $\pa = \{-1, -2\}$ as the parent set. Thus, the number of parents is $n=2$ and the token embedding dimension is $d=3$.
Note that for each sequence, the transition matrix $\pi(x\given x_\pa)$ is of shape $d\times d^{n}$. 
We assign a prior distribution $\mathcal{P}$ for the transition matrix, which is defined such that each column of the transition matrix of kernel $\pi$ is independently drawn from a symmetric Dirichlet distribution with parameter $\alpha=0.01$, i.e., $\pi(\cdot | x_{\text{pa}}) \sim \text{Dir}(\alpha \cdot \mathbf{1}_{d})$. 
Note that each chain has different transition kernel $\pi$ but follows the same prior distribution $\cP$.
We randomly sample 10,000 Markov chains with $L=100$ from the prior distribution $\mathcal{P}$; 9,000 are used for training and 1,000 for validation. 

\subsection{Training with Stage Splitting}
we present the simulation results with model $\mathtt{TF}(M, H, d, D)$ in \eqref{eq:transformer} and training in the three-stage  manner.  
We configure the model with window size $M=3$, number of heads $H=3$, vocabulary size $d=3$ and maximal FFN degree $D=2$.

\paragraph{Model initialization}
The RPE weight matrix $W^{(h)}_P$ is initialized such that the $(-i)$-th diagonal of $W^{(h)}_P$ has value $w^{(h)}_{-i}$ for $i=1, 2, \ldots, M$, while all other entries are initialized to $-\infty$. See \Cref{fig:RPE} for an interpretation.
We initialize $w_{-h}^{(h)} = 3$ and set the remaining entries within the size-$M$ window to $0.01$ to ensure symmetrization-breaking and some initial correspondence between heads and parents.
For the FFN layer that learns the polynomial features, all $c_\cS$ for $\cS \in \HleqD$ are initialized to $0.01$. The initial value of $a$ in the second attention layer is set to $0.01$.

\paragraph{Training settings} The models are trained using gradient descent with respect to the cross-entropy loss and a constant learning rate that is set to one for all stages. We train the model in Stage I (update parameters $\{ c_\cS\} $ only) for 2000 epochs, in Stage II (update parameters $\{w^{(h)}\}$ only) for 50,000 epochs, and in Stage III (update parameter $a$ only) for 5000 epochs, respectively. All experiments are conducted using a single Nvidia A100 GPU.
The results are already shown in \cref{fig:train_C_W_a}, which matches our theoretical results.

\subsection{Training without Stage Splitting}

We also tested training the whole model without stage splitting. The data generation is the same as described above.
For the model, we additionally include the word embedding matrices $W_Q$, $W_K$, $W_V$ in the first attention layer.
The training setup is the same as the one described above with additional configurations specified in \cref{sec:exp_supp}. The result is shown in \cref{fig:train_full}.
We observe similar patterns, i.e., the dominating $c_{\cS^\star}$ and the focus of the attention heads on the parents, as well as the growth of the weight $a$ in the second attention layer.
However, the training dynamics of the model are not as ``ideal'' as the one with stage splitting, as the model tends to learn a false parent set at the beginning as shown in \cref{fig:train_full}-(b).
But after a sufficient number of training steps, the correct information set $\cS^\star$ starts to dominate, and the loss experiences a sharp decrease.
We further plot the $W_Q$, $W_K$ and $W_V$ matrices after the training in \Cref{fig:train_full-W}. 
The fact that the model eventually has $W_Q$ and $W_K$ close to zero and $W_V$ close to the identity matrix (up to a scaling factor) justifies our simplified model, where we remove $W_Q$ and $W_K$ and set  $W_V=I$ in the first attention layer.

\begin{figure}[!htb]
    \centering
    \includegraphics[width=0.8\textwidth]{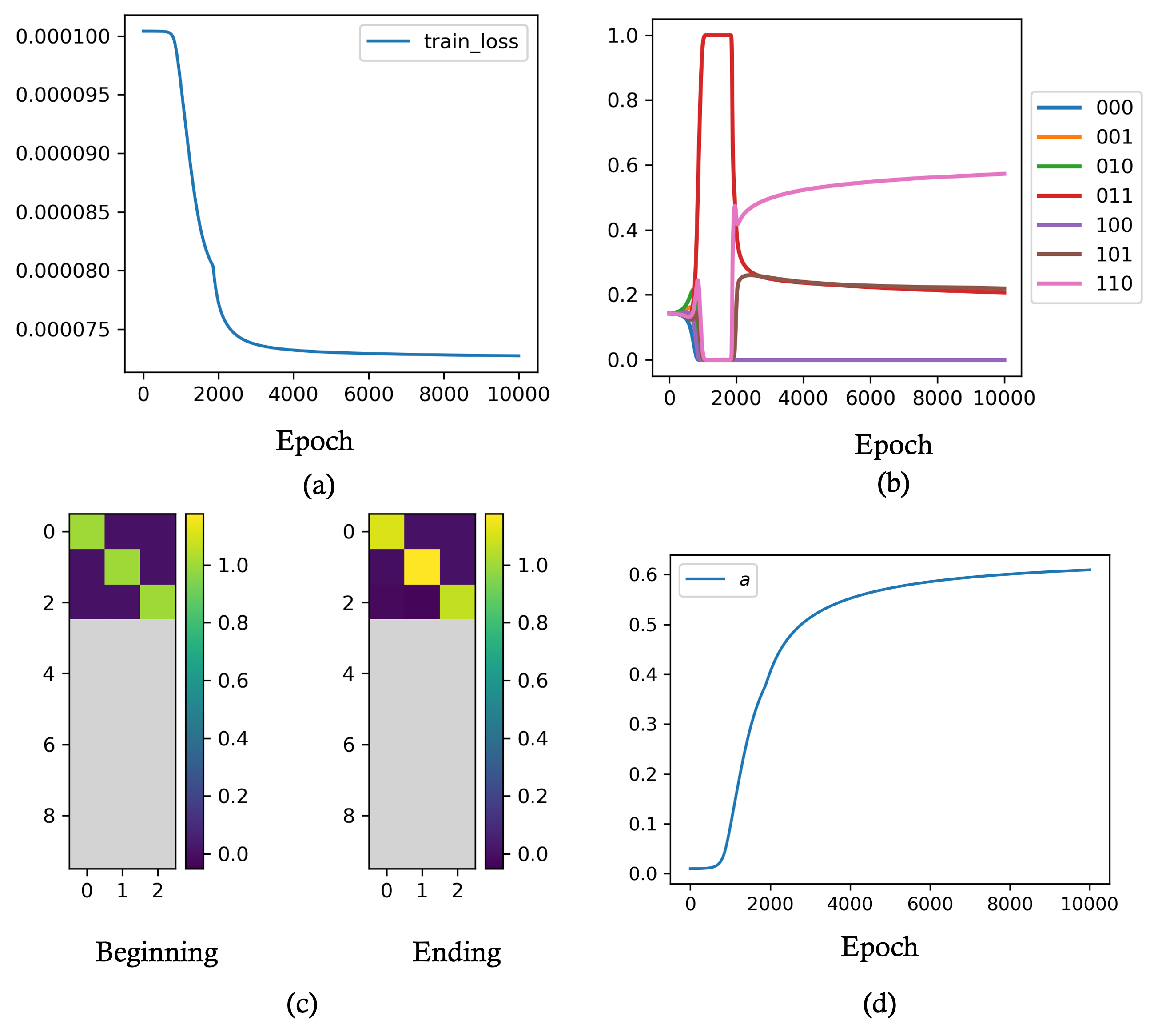}
    \caption{An illustration of the evolution of gradient descent dynamics when training a transformer model specified in \Cref{sec:exp_supp} with word embedding matrices $\{W_Q, W_K, W_V\}$.
    Here the dynamics are not split into three stages and each gradient descent step updates all parameters. 
    We set $M = H = 3$, $d = 3$, and $D = 2$, the number of input token is $L = 100$, and Markov chain has parent set $\pa = \{-1, -2\}$.
    In (a) we show the training loss of the model, which shows that the loss decreases and converges to some value. 
    In (b) we show 
   the evolution of $p_{\cS}$ where we use binary coding $\{0, 1\}^3$ to indicate each subset $\cS$. 
    Here, $p_{\cS^\star}$ has code ``$110$'', which corresponds to the true parent set. 
This figure shows that initially a wrong $p_{\cS}$ dominates at the early stage of training, which corresponds to $\cS = \{2,3\}$ (code ``011''). Then eventually $p_{\cS^\star}$ increases and becomes dominant. However, $p_{\cS^\star} $ does not increase to one and is about $0.6$, and there are two $p_{\cS}$'s  that are about $0.2$. 
    In (c) we show the 
    RPE weights of the first attention layer before and after training.
    The entries corresponding to the true parents, $w^{(1)}_{-1} $ and $w^{(2)}_{-2} $, significantly increase after training, while $w^{(3)}_{-3} $ slightly increases from initialization.  
    This figure shows that each attention head focuses on copying a single previous token. 
    In (d) we show the evolution of the weight $a$ in the second attention layer. We observe a similar ``elbow'' curve as in \Cref{fig:train_C_W_a}-(c).
    }
    \label{fig:train_full}
\end{figure}

\subsection{Prior and Length Generalization}
We further test the model learned by the three-stage training on sequences coming from different priors and of different lengths.
Note that our pre-trained transformer learns to perform GIH.
As introduced in \Cref{sec:induction_head}, the GIH estimator can be applied to a sequence with an arbitrary length and does not concern the prior distribution of the underlying Markov chain.  
Thus, it is natural to see if the pre-trained transformer can also generalize to different lengths and prior distributions. 

Recall that we train the transformer model with sequence length $L=100$ and the concentration parameter of the Dirichlet prior is  $\alpha=0.01$. 
Here, we test the pre-trained transformer on new sequences of different lengths and sampled from different prior distributions. 
That is, with a different concentration parameter $\alpha$, we sample a random Markov chain, and generate a sequence of length $L$, and evaluate of cross-entropy loss for predicting $x_{L+1}$. Here we choose $\alpha \in \{0.05, 0.1, 0.2\}$ and range $L$ from $10$ to $1000$. 
When generating the data, the Markov chains share the same parent set $\pa = \{-1, -2\}$ with the pre-training data. 
The results are shown in \Cref{fig:generalization}.
The results show a decreasing trend in testing loss as the sequence
length increases.
For $\alpha = 0.2$, we observe first a small increase in the test loss when $L$ just exceeds $100$, but then the loss decreases as $L$ increases further.
This experiment shows that the pre-trained transformer indeed generalizes in length and is robust to the change of prior distribution.

\begin{figure}[!htb]
    \centering
    \includegraphics[width=0.45\textwidth]{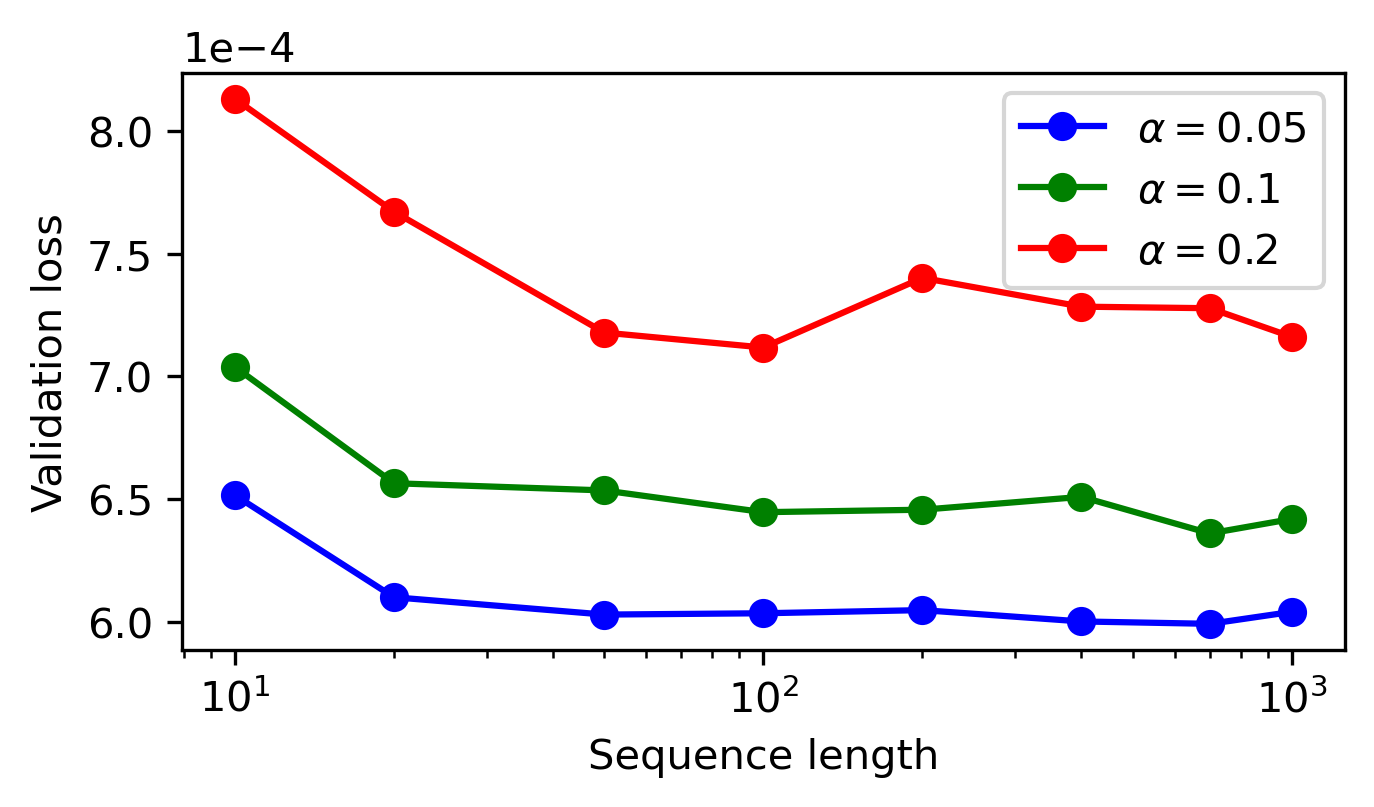}
    \caption{Generalization capability of our model
    to different sequence lengths and prior distributions.
    We plot the cross-entropy loss of the pre-trained transformer model on sequences with different lengths sampled from Markov chains with different prior distributions. 
    The prior is Dirichlet distribution with $\alpha \in \{0.05, 0.1, 0.2\} $ and we vary the length $L$ in $\{10, 20, 50, 100, 200, 400, 700, 1000\}. $  
    The pre-training data contains sequences of length $L = 100$ and $\alpha = 0.01$. 
    For different $\alpha$, we see that the error has a decreasing trend as $L$ increases. This shows that the pre-trained transformer can generalize in length and is robust to the distributional shift due to a change of prior. 
    }
    \label{fig:generalization}
\end{figure}

\section{Conclusion and Future Work}
In this paper, we have studied the training dynamics of a two-attention-layer transformer model for learning $n$-gram Markov chains in an in-context way. 
Our theoretical analysis underscores a congruous interplay between the multihead attention mechanism, the feed-forward network, and layer normalization that yields a generalized version of the induction head mechanism during the training. 
In particular, we prove that the generalized induction head mechanism adopts a modified $\chi^2$-mutual information criterion for parent selection that strikes a balance between information richness and model complexity.
To our best knowledge, our work gives the first theoretical evidence for learning an induction head mechanism with $n$-gram Markov data, which potentially sheds light on the inner workings of large-scale transformer models. 

Our work opens new directions for developing a rigorous understanding of the transformer models. 
A natural direction would be that if one can find such a mechanism with standard FFN layer using multi-layer perceptron and standard layer normalization in the more practical transformer model.
The intuition is that our FFN layer in \cref{eq:feed-forward}, which is further instantiated in \eqref{eq:feed-forward-explicit}, lies in the space of low-degree polynomials and can be well represented by a MLP with sufficient dimensions and proper activation functions. 
Initial attempts to learn nonlinear features have also been made by \citet{kim2024transformers}. 
Another direction is to investigate the training dynamics beyond a single loop of this induction head mechanism, e.g., iteration head with recursively refined predictions \citep{cabannes2024iteration}, and how the induction head mechanism occurs in multi-layer transformer models.

\section{Acknowledgement}
We acknowledge Shaobo Wang for his help with the experiments.
We also thank Jason D. Lee, Alex Damian, and Eshaan Nichani for their helpful discussions.   
Zhuoran Yang acknowledges the support of NSF under the award DMS-2413243. 

\newpage
\bibliographystyle{ims}
\bibliography{reference}

\newpage 
\appendix

\tableofcontents

\section*{Organization of The Appendix}
The appendices are organized as follows: 
\begin{itemize}
    \item In \Cref{sec:exp_supp}, we discuss additional experimental details.
    \item In \Cref{sec:additional_background}, we provide explicit expressions for the FFN realizing a low-degree polynomial kernel, and review basics related to concepts mentioned in the main text.
    \item In \Cref{sec:dynamics proof}, we present the proof for \Cref{thm:convergence}.
    \item In \Cref{sec:auxiliary_results}, we collect auxiliary results used in the proof of \Cref{thm:convergence}.
\end{itemize}


\section{Additional Experiments}
\label{sec:exp_supp}

\subsection{Training without Stage Splitting}
Previously in \Cref{sec:experiments}, we show the simulation results on the simplified model \eqref{eq:transformer}. Now we present the results of additional experiments based on the full model defined as follows.
\begin{equation}\label{eq:transformer_original}
\begin{aligned}
    &\textbf{First Attention:} && \tilde V^\h =\sigma\bigl(\tilde X W_{Q}^\h {W_{K}^\h}^\top \tilde X^\top + W_{P}^\h\bigr) \tilde X {W_{V}^\h}^\top && \in \RR^{(L+1)\times d}; \\
    &\textbf{Concatenate \& Normalize:} && V = \LayerNorm\bigl([
        \tilde V^\head{1}, \dots, \tilde V^\head{H}, \tilde X
    ] \bigr) && \in \RR^{(L+1)\times (H+1)d}; \\
    &\textbf{FFN \& Normalize:} && \tilde U = \phi(V)/\sqrt{C_D} &&\in \RR^{(L+1)\times d_e}; \\
    &\textbf{Concatenate} && \tilde X' = [\tilde U, V] &&\in \RR^{(L+1)\times ((H+1)d + d_e)};\\
    &\textbf{Second Attention:} && Y = 
    \sigma\bigl(
        a \cdot (\tilde x_{L+1}')^\top (\tilde X_{1:L}')^\top  
    \bigr) X
    && \in \RR^{(L+1)\times d}.
\end{aligned}
\end{equation}
In head $h$ of the first attention layer, $W_P^\h$ is the relative positional embedding matrix, and we include $W_{Q}^\h\in\RR^{d\times d}$, $W_{K}^\h\in\RR^{d\times d}$ and $W_{V}^\h\in\RR^{d\times d}$ as the weight matrices for the query, key, and value projections, respectively.
That is, in the full model, we the attention heads has more weight matrices than the simplified model.  
Another difference is that we also explicitly include the residual link that copies $\tilde X$ to the output of the first attention layer.
For the FFN layer, $\phi:\RR^{(H+1)d}\rightarrow \RR^{d_e}$ is the same feed-forward network specified in \eqref{eq:feed-forward}. 
Here, we use a standard $\ell_2$-layer-normalization $\LayerNorm (\cdot) $, defined as 
\begin{align}
    \LayerNorm([x, y]) = \left[
        \frac{x}{\norm{x}_2}, \frac{y}{\norm{y}_2}
    \right].
\end{align}
The second attention layer takes $X$ as the value, which comes from the residual link (i.e., concatenation of $\tilde U$ and $V$ while $\tilde X$ in $V$ remains the same after $\ell_2$-normalization). 
In comparison to the simplified model in \eqref{eq:transformer}, here we incorporate the query, key and value projections for the first layer as in a standard transformer architecture.

Our training setup is similar to that in \Cref{sec:experiments}. We use the same dataset and a similar training settings.
All these weight matrices $W_{Q}^{(h)}$, $W_{K}^{(h)}$ and $W_{V}^{(h)}$ are initialized as identity matrices scaled by 0.001. 
We initialized the RPE vector $w^{(h)}$ as $w_{-h}^{(h)} = 1$ for $h=1, 2, 3$, and leave the remaining entries within the length-$M$ window to 0.01.
We trained the model with all parameters together for 10,000 epochs with the same loss function and learning rate. As illustrated in \Cref{fig:train_full}, the full model converged to a state comparable to our simplified model.
We further plot the $W_Q^{(1)}, W_K^{(1)}, W_V^{(1)}$ for the first head after training in \Cref{fig:train_full-W}.
The results demonstrate that the model converges to a point where the query and key projections are close to zero, which leaves the RPE weights to dominate the attention mechanism.
This fact justifies our simplification in \eqref{eq:transformer} where we remove the query and key projection weights and set $W_{V}^{(h)}$ to be identity matrix. 
\begin{figure}[!htb]
    \centering
    \includegraphics[width=1\textwidth]{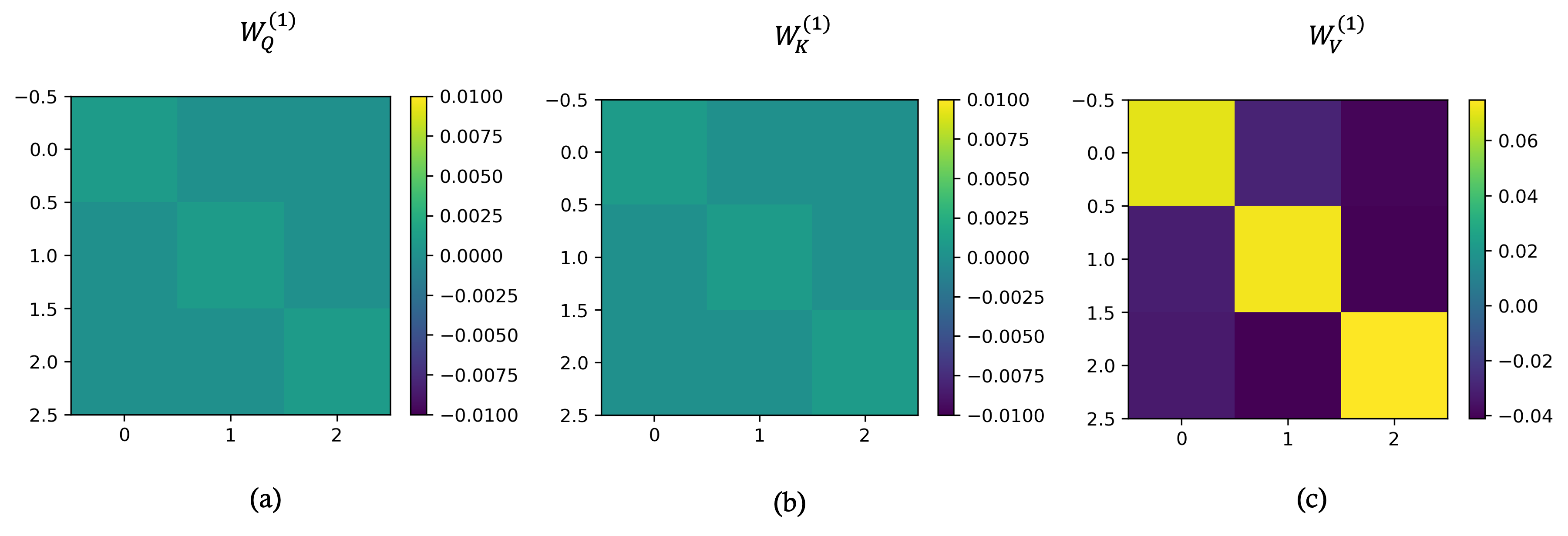}
    \caption{A visualization of the word embedding matrices $W_Q^{(1)}$, $W_K^{(1)}$, $W_V^{(1)}$ of
    a pre-trained transformer with $M = H = 3$, $d = 3$, and $D = 2$. 
    These are the parameters in of 
    the first attention head in the first attention layer. 
    Since $d = 3$,  
    all word embedding matrices are of shape $3\times 3$. As shown in (a) and (b), $W_Q^{(1)}$ and $W_K^{(1)}$  do not change much compared to their initialization value $0.001$. 
    Thus, they are both close to the zero matrix and play a negligible role in the first attention layer. 
    Besides, in (c) we plot   $W_V^{(1)}$, which establishes a clear diagonal structure, with the diagonal entries growing to $0.07$ compared to the initialization value $0.001$. Thus, $W_{V}^{(1)} $ is proportional to the identity matrix. 
    }
    \label{fig:train_full-W}
\end{figure}


\section{Additional Background and Discussions}\label{sec:additional_background}

\subsection{Feed-Forward Network for Polynomial Kernel}\label{sec:ffn_polynomial}

\begin{lemma}\label{lemma:poly_kernel}
Recall the FFN satisfying \eqref{eq:feed-forward}, which maps a vector $z \in \RR^{dH}$ to a vector in $\RR^{d_e}$. 
We write $z$ as $(z^\head{1}, \ldots,  z^\head{H}) $ where $z^\h \in \RR^d$ for all $h\in [H]$. 
Let $z_i^\h$ be the $i$-th entry of $z^\h$.
Then we can explicitly construct $\phi(\cdot)$ by letting 
    \begin{align}\label{eq:feed-forward-explicit}
    \phi\bigl ((z^\head{1}, \ldots,  z^\head{H}) \bigr ) = \bigg(c_{\color{PineGreen}\cS} \cdot \prod_{h\in{\color{PineGreen}\cS} } z^\h_{\color{OrangeRed}i_h}: \{{\color{OrangeRed}i_h}\}_{{h}\in  {\color{PineGreen}\cS}}\subseteq[d] , {\color{PineGreen}\cS}\in\HleqD \Big), 
\end{align}
which is equivalent to 
\begin{align}
    \phi\bigl ((z^\head{1}, \ldots,  z^\head{H}) \bigr ) = \left(c_\cS\cdot \Vec\bigl(\otimes_{h\in\cS} (z^\h)\bigr)\right)_{\cS\in \HleqD}, 
\end{align}
where $\Vec(\cdot)$ is the vectorization operator that transforms a tensor into a vector by stacking all the entries in the tensor.
That is, for any $\cS$, we consider the $|\cS|$ vectors in $\RR^d$, $\{z^{(h)}\}_{h\in \cS} $. In \eqref{eq:feed-forward-explicit} we compute all possible products of the entries of these vectors and multiply them by $c_{\cS}$.
In particular, for each $\cS  \in \HleqD$, we enumerate $i_h \in [d]$ for all $h \in \cS$. 
Therefore, the output dimension of  $\phi$ is given by 
\begin{align}\label{eq:feed-forward-output-dimension}
    d_e = \sum_{\cS \in \HleqD} d^{|\cS|}.  
\end{align}

\end{lemma}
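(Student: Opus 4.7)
The plan is a direct verification: show that the explicit construction in \eqref{eq:feed-forward-explicit} reproduces the kernel identity \eqref{eq:feed-forward}, and then count entries to obtain the output dimension \eqref{eq:feed-forward-output-dimension}. Since the construction is given coordinate-wise, the proof reduces to an elementary calculation with the inner product, and there is no real obstacle beyond bookkeeping with the multi-index $\{i_h\}_{h\in\cS}$.

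First I would set up notation. Fix two inputs $z=(z^{(1)},\ldots,z^{(H)})$ and $z'=((z')^{(1)},\ldots,(z')^{(H)})$ in $\RR^{dH}$. For every $\cS\in\HleqD$ and every tuple of indices $(i_h)_{h\in\cS}\in[d]^{|\cS|}$, the construction assigns a coordinate of $\phi(z)$ equal to $c_\cS\cdot \prod_{h\in\cS} z^{(h)}_{i_h}$, and similarly for $\phi(z')$. Counting the coordinates then gives
\begin{align}
d_e=\sum_{\cS\in\HleqD} \bigl|\{(i_h)_{h\in\cS}:(i_h)_{h\in\cS}\in[d]^{|\cS|}\}\bigr|=\sum_{\cS\in\HleqD} d^{|\cS|},
\end{align}
which establishes \eqref{eq:feed-forward-output-dimension}.

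Next I would compute the inner product $\langle \phi(z),\phi(z')\rangle$ by grouping the coordinates according to $\cS$, then applying the distributivity identity
\begin{align}
\sum_{(i_h)_{h\in\cS}\in[d]^{|\cS|}} \prod_{h\in\cS} z^{(h)}_{i_h}\,(z')^{(h)}_{i_h} = \prod_{h\in\cS}\Bigl(\sum_{i_h\in[d]} z^{(h)}_{i_h}\,(z')^{(h)}_{i_h}\Bigr)=\prod_{h\in\cS}\langle z^{(h)},(z')^{(h)}\rangle.
\end{align}
Combining this with the $c_\cS^2$ factor produced by the outer coefficients gives
\begin{align}
\langle \phi(z),\phi(z')\rangle=\sum_{\cS\in\HleqD} c_\cS^2\prod_{h\in\cS}\langle z^{(h)},(z')^{(h)}\rangle,
\end{align}
which is exactly \eqref{eq:feed-forward}. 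The equivalence with the tensor-vectorization form follows because $\Vec(\otimes_{h\in\cS} z^{(h)})$ lists precisely the monomials $\prod_{h\in\cS} z^{(h)}_{i_h}$ indexed by $(i_h)_{h\in\cS}\in[d]^{|\cS|}$, with inner product equal to $\prod_{h\in\cS}\langle z^{(h)},(z')^{(h)}\rangle$ by the standard identity $\langle u\otimes v,u'\otimes v'\rangle=\langle u,u'\rangle\langle v,v'\rangle$ iterated over $h\in\cS$.

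The main (mild) care point is simply ensuring that the multi-index domain $[d]^{|\cS|}$ is interpreted as an ordered tuple indexed by $h\in\cS$ rather than as a set, so that the distributivity step yields the Kronecker-product structure cleanly; once the indexing convention is fixed, the rest is routine.
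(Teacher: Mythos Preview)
Your proposal is correct and follows essentially the same approach as the paper: count coordinates to obtain the output dimension, then verify the kernel identity \eqref{eq:feed-forward} via the distributivity step $\sum_{(i_h)}\prod_{h} z^{(h)}_{i_h}(z')^{(h)}_{i_h}=\prod_{h}\langle z^{(h)},(z')^{(h)}\rangle$. The only addition beyond the paper's proof is your explicit remark on the tensor-vectorization equivalence, which is a harmless elaboration.
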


\begin{proof}
First, we note that the indices of $\phi(\cdot)$ have a grouped structure --- we first enumerate all subsets in $\HleqD$ and then enumerate all monomials with superscripts in $\cS$.
Since there are $d^{|\cS|}$ monomials, the output dimension is given by \eqref{eq:feed-forward-output-dimension}. 

It remains to verify \eqref{eq:feed-forward} 
with $\phi(\cdot)$ defined in \eqref{eq:feed-forward-explicit}.
To this end, we note that for any $u, v \in \RR^{dH}$ and any  $\cS \in \HleqD$,  we have 
\begin{align}
\sum_{(i_{h})_{h \in \cS } \in [d]^{|\cS|}} \bigg( \prod_{{ h}\in{ \cS} } u_{i_{ h}}^{({  h})} \cdot v_{i_{ h}}^{({  h})} \bigg)  = \prod_{{ h}\in{ \cS} } \biggl( \sum_{i_{h} \in [d] }u_{i_{ h}}^{({  h})} \cdot v_{i_{ h}}^{({  h})}  \bigg) = \prod_{{ h}\in{ \cS} } \langle u^\head{h}, v^\head{h} \rangle, 
\end{align}
which directly implies \eqref{eq:feed-forward}. Therefore, we conclude the proof of this lemma. 
\end{proof}

\subsection{Perron-Frobenius Theorem}\label{sec:background_perron_frobenius}
Next, we review the basics for the celebrated Perron-Frobenius theorem on non-negative matrices \citep[Chapter 7]{meyer2023matrix}. 
We consider the following class of irreducible matrices. 
\begin{definition}[Irreducible Matrix]
A non-negative square matrix $P\in\RR_+^{d\times d}$ is called irreducible if the induced directed graph $\cG$ is strongly connected, i.e., for any pair of nodes in the graph, there always exists a directed path that connects these two nodes. 
Here, the induced graph $\cG$ is defined based on $d$ nodes with adjacent matrix $A$ given by $A_{ij}= \ind(P_{ij}\neq 0)$. 
\end{definition}
In particular, if $P$ is a stochastic matrix that corresponds to a $d$-state Markov chain, then starting from any state, we can reach any other state with positive probability in a finite number of steps.
The irreducibility property also has an equivalent definition in the matrix form. 
That is, for any permutation matrix $T$, $T P T^{-1}$ cannot be written as an upper triangular block matrix with the following form 
\begin{align}
    \begin{bmatrix}
        M_1 & M_2 \\
        0 & M_3
    \end{bmatrix}.
\end{align}
In other words, an irreducible matrix does not have a nontrivial absorbing subspace that aligns with the standard basis. 

In this work, we require more than the irreducibility property from the transition matrix $P_\pi$ defined in \cref{sec:convergence}. 
In fact, we need the existence of a unique stationary distribution (which is not guaranteed by the irreducibility) so that the chain has a sufficiently fast mixing rate. This enables us to learn with a finite sequence length $L$. 
To achieve this, one typically needs the second largest magnitude of the eigenvalues of $P_\pi$, denoted by $\lambda$, to be bounded away from $1$, which is the leading eigenvalue of the transition matrix.
The difference $1- \lambda$ is also referred to as the spectral gap.
It is well-known that if all the entries of $P_\pi$ are positive, then $P_\pi$ is irreducible and there is only one leading eigenvalue on the spectral circle with the corresponding eigenvector given by the chain's stationary distribution $\mu^\pi$, and all the other eigenvalues have magnitude strictly less than $1$.
However, for our case, the transition matrix $P_\pi$ has zero entries by definition. 
Fortunately, the nice property on the existence of spectral gap can be generalized to a class called \emph{primitive} matrix.
\begin{definition}[Primitive Matrix]
    A nonnegative and irreducible square matrix $P$ is called primitive if there exists an integer $k$ such that all the entries of $P^k$ are positive.
\end{definition}
By definition of the primitive matrix, one can immediately see that for any $k'>k$, $P_\pi^{k'}$ is a positive matrix.
The following is the celebrated Perron-Frobenius theorem that characterizes the spectral structure of the primitive matrices.
\begin{theorem}[Perron-Frobenius Theorem for Primitive Matrices]
    Let $P$ be a primitive matrix. Then the following statements hold:
    \begin{enumerate}
        \item The leading eigenvalue of $P$ is real and positive, and it is the unique eigenvalue with the largest magnitude. In particular, if $P$ is a stochastic matrix, then the leading eigenvalue is $1$.
        \item The leading eigenvector of $P$ is positive and unique up to a scaling factor. In particular, if $P$ is a stochastic matrix, then the leading eigenvector is the stationary distribution of the Markov chain with transition kernel $P$.
    \end{enumerate}
\end{theorem}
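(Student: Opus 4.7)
The plan is to reduce the statement to the classical Perron theorem for strictly positive matrices by exploiting the primitivity-defining relation $P^k > 0$ for some $k \ge 1$, and then to transfer the spectral conclusions back to $P$ itself. Because $P$ and $P^k$ commute, every eigenspace of $P^k$ is invariant under $P$, which is the transfer mechanism I will use throughout.

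First I would apply the Perron theorem for strictly positive matrices to $P^k$: it supplies a simple real dominant eigenvalue $\mu = \rho(P^k) > 0$, strictly larger in modulus than every other eigenvalue of $P^k$, together with a positive right eigenvector $v > 0$ unique up to scaling. Since $P$ commutes with $P^k$, the one-dimensional $\mu$-eigenspace of $P^k$ is $P$-invariant, so $P v = r v$ for some scalar $r$; nonnegativity of $P$ and strict positivity of $v$ force $r \ge 0$, and $P^k v = r^k v = \mu v$ identifies $r = \mu^{1/k} > 0$. Spectral mapping then gives $r = \rho(P)$, because $|\lambda|^k = |\lambda^k| \le \mu$ for every $\lambda \in \mathrm{spec}(P)$. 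Uniqueness of $v$ (up to scalar) transfers from $P^k$ to $P$ in the same fashion, since any other $P$-eigenvector with eigenvalue $r$ would also be a $P^k$-eigenvector with eigenvalue $\mu$. For the stochastic specialization, $P\mathbf{1} = \mathbf{1}$ gives $1 \in \mathrm{spec}(P)$, while $\|P\|_\infty = 1$ yields $\rho(P) \le 1$, so $\rho(P) = r = 1$; applying the same argument to $P^\top$ (also primitive, with the same spectrum) and normalizing the resulting positive left eigenvector to a probability vector produces the unique stationary distribution.

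The main obstacle is excluding peripheral eigenvalues of $P$ other than $r$, since merely knowing $\lambda^k = \mu$ for some $\lambda$ with $|\lambda| = r$ leaves open the rotational candidates $\lambda = r \cdot e^{2\pi i j/k}$ with $j = 0,\dots,k-1$. To rule them out, I would exploit that primitivity actually gives $P^{k'} > 0$ for all $k' \ge k_0$ for some threshold $k_0$; the same argument on two coprime exponents forces $\lambda^{k_1} = r^{k_1}$ and $\lambda^{k_2} = r^{k_2}$ simultaneously, which pins $\lambda = r$. A cleaner alternative, which I would likely adopt in the final writeup, is to invoke the Frobenius theorem for nonnegative irreducible matrices that already characterizes the peripheral spectrum as a cyclic group of order equal to the imprimitivity index $h$, and then observe that the existence of a strictly positive power forces $h = 1$, collapsing the peripheral spectrum to the single point $r$.
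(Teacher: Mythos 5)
The paper does not prove this theorem; it is stated verbatim as a textbook result and cited from \cite{meyer2023matrix}. So there is no ``paper's proof'' to compare against, only the classical literature. Your reduction to the Perron theorem for strictly positive matrices via $P^k>0$ and the commutativity of $P$ with $P^k$ is the standard route, and it is correct. The crux, as you identify, is ruling out peripheral eigenvalues $\lambda = r\,e^{2\pi i j/k}$ with $j\neq 0$; both of your remedies work. For the coprime-exponents approach, the missing lemma you implicitly appeal to --- that $P^{k'}>0$ for all $k'\geq k$ once $P^k>0$ --- does hold, but it deserves a sentence: since a primitive matrix is irreducible, no row of $P$ is identically zero, so $(P^{k+1})_{ij} = \sum_m P_{im}(P^k)_{mj} \geq P_{im_0}(P^k)_{m_0 j} > 0$ for any $m_0$ with $P_{im_0}>0$. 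Taking $k_1=k$ and $k_2=k+1$ then gives coprime exponents directly, which is a bit cleaner than invoking an unspecified threshold $k_0$. The alternative route through the Frobenius imprimitivity-index characterization is equally valid and avoids this side argument at the cost of quoting a stronger result. One small point of alignment with the paper: the transition matrix $P_\pi$ is defined so that $P_\pi(Z',Z)$ is the probability of moving from $Z$ to $Z'$, which makes $P_\pi$ \emph{column}-stochastic, so the all-ones vector is a \emph{left} eigenvector and the stationary distribution is the Perron \emph{right} eigenvector of $P_\pi$. Your stochastic specialization uses $P\mathbf{1}=\mathbf{1}$ (row-stochastic) and then passes to $P^\top$; the substance is equivalent, but if this were spliced into the paper it should use the paper's convention directly rather than transpose.
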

The Perron-Frobenius theorem guarantees the existence of a unique stationary distribution $\mu^\pi$ when the transition matrix $P_\pi$ is primitive. 
In particular, when we further assume that the transition matrix $P_\pi$ has a spectral gap, the chain is sufficiently mixed, meaning that we can thus approximate sum over the entire sequence with an average with respect to the stationary distribution. 
In particular, the approximation error will decays with the sequence length $L$.


\subsection{Sequential CE Loss}
\label{sec:sequential_ce_loss}
In this work, we only consider the prediction error on the last token in the sequence as in \eqref{eq:cross_entropy}:
\begin{align} 
    \cL(f_{\mathtt{tf}}) = - \EE_{\pi \sim \cP, x_{1:(L+1)}} \bigl[\log \bigl(f_{\mathtt{tf}}(x_{L+1}  \given x_{1:L})  + \epsilon \bigr)  \bigr].
\end{align}
In practice however, people often train the transformer model by minimizing the cross-entropy (CE) loss over the entire sequence.
We demonstrate that our analysis can be extended to training on the entire sequence. 
In this vein, we define the sequential CE loss as 
\begin{align}
    \cL_{\seq}(f_{\mathtt{tf}}) = \sum_{l=1}^L - \EE_{\pi \sim \cP, X} \bigl[\log \bigl(f_{\mathtt{tf}}(x_{l+1}  \given x_{1:l})  + \epsilon \bigr)  \bigr].
    \label{eq:sequential_ce_loss}
\end{align}
One can equivalently view this sequential CE loss as an aggregation of the CE loss for sequence length ranging from $1$ to $L$.
We argue from the following two perspectives that our analysis can be extended to the sequential cross-entropy (CE) loss:
\begin{enumerate}
    \item Due to the use of relative positional embedding (RPE), the transformer's predictions are invariant to the \emph{absolute} positions of tokens within a sequence. Intuitively, this implies that even if we choose a different sequence length $L'$, the model can still handle the task in the same manner.
    
    \item By \cref{asp:Markov_chain}, the chain is sufficiently mixed for large $L$. In the analysis, we actually use $X_{l-M:l} = (x_l, x_{l-1}, \ldots, x_{l-M}) \sim \mu^\pi$, where $\mu^\pi$ is the stationary distribution over a length-$(M+1)$ window, to approximate the aggregation over $X_{l-M:l}$ for $l = M+1, \ldots, L$ in the sequence. For example, this approximation is reflected in the transition from \eqref{eq:log_cS_approx} to \eqref{eq:log_cS_approx-1} in the proof sketch in \cref{sec:proof_sketch}. Since changing the sequence length does not affect the underlying stationary distribution, the only issue is the approximation error. 
    In particular, for sufficiently large $L$, the CE loss at large $l$ constitutes the majority of the sequential CE loss in \eqref{eq:sequential_ce_loss}, making the CE loss at small $l$ negligible. 
\end{enumerate}



\subsection{Standard $\chi^2$-Divergence and Mutual Information}
The $\chi^2$-divergence (or $\chi^2$-distance) between two probability distributions \( P \) and \( Q \) in the same probability space is defined as:
\[
D_{\chi^2}(P \| Q) = \sum_{x\in\mathrm{supp}(Q)} \frac{(P(x) - Q(x))^2}{Q(x)},
\]
where the summation is taken over all elements \( x \) in the sample space where \( Q(x) > 0 \).
The $\chi^2$-mutual information between two random variables \( X \) and \( Y \) with joint distribution \( P_{XY} \) and marginal distributions \( P_X \) and \( P_Y \) is defined as:
\[
I_{\chi^2}(X; Y) = D_{\chi^2}(P_{XY} \| P_X \otimes P_Y) = \sum_{y} D_{\chi^2}(P_{X\given Y}(\cdot \given y) \| P_X(\cdot )) P_Y(y).
\]
where \( P_X \otimes P_Y \) is the product of the marginals, meaning \( (P_X \otimes P_Y)(x, y) = P_X(x) P_Y(y) \).
For a Markov chain \( X \rightarrow Y \rightarrow Z \), the $\chi^2$-mutual information satisfies the data processing inequality
\[
   I_{\chi^2}(X; Z) \leq I_{\chi^2}(Y; Z), 
\]
which follows from the observation that $\chi^2$-divergence is also an $f$-divergence.

\subsection{More Details on the Generalized Induction Head Mechanism} \label{sec:gih}

Recall that we define the Generalized Induction Head (GIH) estimator in \Cref{eq:gih}. 
Specifically, $\mathtt{GIH}(x_{1:L}; M, D  )$ is constructed in two steps. First, we find the information-optimal subset $\cS^\star$ of $[M]$ by solving \eqref{eq:define_set_S_star}. 
Second, we build a $d$-class kernel classifier to predict $x_{L+1}$, where the ``data'' used by such a classifier are $\{ \psi_{\cS^\star} (l), x_{l}\}_{l\in[M+1, L]}$.
Here $\{ \psi_{\cS^\star} (l) , l \in [M+1, L+1]\}$ are features constructed at each position based on the partial history given $\cS^\star$. 
In particular, similar to \eqref{eq:feed-forward-explicit}, for any subset $\cS$ of $[M]$, any input token sequence $x_{1:L}$, and any position $l \in [M+1, L+1]$, we define $\psi_{\cS} (l) = \psi_{\cS} (l; x_{1:L})$ as   
\begin{align}\label{eq:define_psi_S}
    \psi_{\cS} (l) = \Vec\Bigl(\outerprod_{s\in\cS} x_{l-s}\Bigr) = \bigg( \prod_{{\color{OrangeRed} s}\in{ \cS} } (x_{l - {\color{OrangeRed} s}})_{i_{\color{OrangeRed} s} } : \{i_{\color{OrangeRed} s}\}_{{\color{OrangeRed} s}\in   { \cS}}\subseteq[d]    \Big) \in \RR^{d^{|\cS|}}.  
\end{align}
In other word, $\psi_{\cS}(l)$ is given by expanding the rank-$1$ tensor spanned by $\{x_{l - s}\}_{s\in \cS}$ into a vector.
Here $x_{l-s} \in \cX$ is a vector in $\RR^d$ and we let $(x_{l - {\color{OrangeRed} s}})_{i_{\color{OrangeRed} s} }$ denote its $i_{\color{OrangeRed} s}$-th entry. 
The rationale behind $\psi_{\cS} (l)$  is similar to $\phi$ introduced in \eqref{eq:feed-forward-explicit}. 
We form a long vector containing all the products of the entries of vectors $\{ x_{l -  {\color{OrangeRed} s}} \}_{ {\color{OrangeRed} s} \in \cS} $. 
Here we omit the dependency of $\psi_{\cS}$ on the input sequence $x_{1:L}$ to simplify the notation. 
Furthermore,  $\psi_{\cS}$ induces a polynomial kernel such that for any $l, m \in [M+1, L+1]$, we have 
$$
\langle \psi_{\cS} (l), \psi_{\cS}(m) \rangle = \prod _{{\color{OrangeRed} s} \in \cS  } \langle x_{l - {\color{OrangeRed} s}} , x_{m-{\color{OrangeRed} s}} \rangle  = \ind \{ x_{l-{\color{OrangeRed} s}}  = x_{m-{\color{OrangeRed} s}}, \forall {\color{OrangeRed} s } \in \cS\}.
$$
That is, feature $\psi_{\cS} $ selects the token position pairs $(l, m)$ such that the partial histories induced by $\cS$ at position $l$ and $m$ are exactly the same. 

Based on $\{ \psi_{\cS^\star} (l), x_{l}\}_{l\in[M+1, L]}$, GIH forms a kernel classifier using the indicator kernel. 
Specifically, for any $j \in [d] $, by   \eqref{eq:gih}, $\mathtt{GIH}(x_{1:L}; M, D) $ outputs each $e_j \in \cX$ with probability 
$$
\PP \bigl( \mathtt{GIH}(x_{1:L}; M, D ) = e_j \bigr) =   \frac{\sum_{l = M+1}^L \ind\{ x_{l-s}  = x_{{\color{PineGreen}L+1}-s}, \forall s \in \cS^\star\} \cdot \ind\{ x_{l} = e_j\}  } { \sum_{m = M+1}^L \ind\{ x_{m-{  s}}  = x_{{\color{PineGreen}L+1}-   s}, \forall {s } \in \cS^\star\} } .
$$


\section{Analysis of the Training Dyanamics}\label{sec:dynamics proof}
\paragraph{Masking the Simplified Model} 
Recall that we apply a mask to the first $M$ position in the simplified model. 
Therefore, we only allow index $l$ to run from $M+1$ to $L$ in the following analysis.
In the following, we first specify the conditions on $L$ that are required for the analysis of the training dynamics and then present the proof of \Cref{thm:convergence}.

\subsection{Conditions on the Sequence Length}
\label{sec:L-condition}
We first introduce the following condition on $L$:
\begin{align}
    L\ge \Omega\bigg(\frac{1}{\Delta \tilde I_{\chi^2}^2 (1-\lambda) \gamma^{r_n+2 }}\bigg), \quad 
    L\ge (1-\lambda)^{-1}\gamma^{-D}, \quad \sqrt L \ge M \lor d, \label{eq:L_condition-1}
\end{align}
where $\Omega$ only hides a universal constant that does not depend on the model parameters.
The conditions in \eqref{eq:L_condition-1} will facilitate our analysis for Stage \RNum{1} and Stage \RNum{2}.
For the last stage, we require 
\begin{align}
    L \ge 2 M + r_n \frac{\log \gamma^{-1}}{\lambda^{-1}}, \quad
    \frac{L}{(\log L)^4 } \ge \Omega\bigg(\frac{1}{\kappa^4 \gamma^{8 + 2|\cS^\star|}} \cdot \left(\frac{\sqrt M + d}{(1-\lambda)^{1/2} \gamma^{|\cS^\star|+2+ r_n/4}}\right)^4\bigg),  
    \label{eq:L_condition-2}
\end{align}
where 
\begin{align}
    \kappa \defeq \EE\left[
        D_{\chi^2} (\mu^\pi(\cdot)\,\|\, \mu^\pi(\cdot \given X_{-\cS^\star}))\right] \land \EE\left[D_{\chi^2} (\mu^\pi(\cdot \given X_{-\cS^\star})\,\|\, \mu^\pi(\cdot))
    \right] \land 1, 
\end{align}
and $\Omega$ only hides universal constants that do not depend on the model parameters.
Here, $\mu^\pi(x, X_{-\cS^\star})$ denotes the stationary distribution of the Markov chain over token $x$ and its parents $X_{-\cS^\star}$, with $\cS^\star$ being the information set defined in \eqref{eq:define_set_S_star}.

\subsection{Analysis for Stage \RNum{1}} \label{sec:proof_stage1}
In this section, we analyze the dynamics of the parameters $\{c^2_\cS\}_{\cS\in\HleqD}$ in the first stage of training. 
We will show that there is a unique $\cS_*\in\HleqD$ such that $c^2_{\cS^\star}$ dominates all the other $c^2_{\cS}$'s at the end of the first stage. 
In addition, we will characterize how fast this happens and provide a corresponding convergence rate.

\paragraph{Proof Strategy} 
At a high level, the strategy is to analyze $\partial_t \log c_{\mathcal{S}^\star}^2 - \partial_t \log c_\mathcal{S}^2$ for all $\mathcal{S} \neq \mathcal{S}^\star$ via the following steps:
\begin{enumerate}
\item \textbf{\color{OrangeRed}Dynamics Calculation.} 
First, we calculate the dynamics of $\log c_\mathcal{S}^2$ for each fixed $\mathcal{S}$. 
By selecting sufficiently small values for $a$ and $\varepsilon$, and leveraging the mixing properties of the Markov chain with large $L$, the dynamics of $\log c_\mathcal{S}^2$ is approximately governed by the modified mutual information $\tilde I_{\chi^2}(\cS)$.

\item \textbf{\color{OrangeRed}Lower Bound for The Growth Rate.} 
Consequently, we are able to lower bound the difference between the growth rates, 
$\partial_t \log c_{\cS^\star}^2 - \partial_t \log c_\cS^2$, in terms of $\Delta \tilde I_{\chi^2}$, 
the gap between the modified mutual information of $\mathcal{S}^\star$ and the second-best set.

\item \textbf{\color{OrangeRed}Convergence.} 
Finally, we derive the convergence using the above lower bound.
\end{enumerate}

Before presenting the proof, we first remind the readers of a few definitions and notations.
Recall that our simplified model is given by 
\begin{align}
y = (\sigma(a s)X)^\top = \sum_{l=M+1}^{L} \sigma_l(a s) \cdot x_l, 
\where 
s_l = \frac{\sum_{\cS\in \HleqD} c_{\cS}^2 \cdot \prod_{h\in \cS} \langle v_l^\head{h}, v_{L+1}^\head{h} \rangle}{\sum_{\cS\in \HleqD} c_\cS^2}
\end{align}
Also recall that $C_D(t)=\sum_{\cS\in\HleqD} c_\cS^2(t)$ and $p_\cS(t)=c_\cS^2(t)/C_D(t)$ for each $\cS\in\HleqD$.
The loss function can be rewritten as
\begin{align}
\cL = \EE [\ell], \where \ell = -\langle x_{L+1}, \log(y+\varepsilon \bm{1})\rangle.
\end{align}
Here the expectation $\EE$ is taken over both the sequence $(x_1, \ldots, x_{L+1})$ and the Markov kernel $\pi\sim\cP$. 
We abbreviate $\sigma\equiv \sigma(as)$ for convenience and denote by $\sigma_l$ the $l$-th element of $\sigma$.
By direct calculation, we have
\begin{align}\label{eq:grad_1}
    \frac{\partial \ell}{\partial y} = - \frac{x_{L+1}}{y + \varepsilon \bm{1}}, \quad
    \frac{\partial y}{\partial \sigma} =  X^\top, \quad 
    \frac{\partial \sigma}{\partial s_l} =  a\cdot \sigma_l(a s)\cdot (e_l^\top -\sigma),
\end{align}
Then applying the chain rule,  we have 
\begin{align}\label{eq:grad_s_l}
    \frac{\partial \ell}{\partial s_l} = \frac{\partial\ell}{\partial y} \frac{\partial y}{\partial\sigma} \frac{\partial\sigma}{\partial s_l} 
    = -a \rbr{\frac{x_{L+1}}{y + \varepsilon \bm{1}}}^\top \rbr{x_l -y} \cdot \sigma_l(a s).
\end{align}
In addition, 
\begin{align}
    \frac{\partial s_l}{\partial c_\cS} = \frac{2 c_\cS \prod_{h\in \cS} \langle v_l^\head{h}, v_{L+1}^\head{h} \rangle}{\sum_{\cS'\in \HleqD} c_{\cS'}^2} - \frac{2 c_\cS s_l}{\sum_{\cS'\in \HleqD} c_{\cS'}^2}
    = \frac{2c_\cS}{C_D} \bigg(\prod_{h\in\cS}\langle v_l^\h,v_{L+1}^\h\rangle
    - s_l\bigg).
\end{align}

Now, we are ready to present the proof of \Cref{thm:convergence} for the first stage of training.
We remind readers that here only $\{c_\cS\}_{\cS\HleqD}$ are trained, and we 
omit the dependence on $t$ for convenience.

\begin{proof}[Proof of \Cref{thm:convergence}: Stage \RNum{1}]
As discussed in the proof strategy above, we first derive the dynamics of 
$\log c_\cS^2$ for each fixed $\cS\in\HleqD$.
Then we compare the growth rate of $c_{\cS^\star}^2$ with any other $c_\cS^2$.

\paragraph{Calculation of The Dynamics of $\log c_{\cS}^2$}
We fix a $\cS\in\HleqD$ and  apply the chain rule $\partial\ell/\partial c_\cS
=\sum_{l=M+1}^L \partial\ell/\partial s_l \cdot \partial s_l/\partial c_\cS$ and
the gradient flow formula that $\partial_t c_\cS^2  = - 2 c_\cS \cdot {\partial \cL}/{\partial c_\cS}$. We have   
\begin{align}\label{eq:dynamics_c_S}
\partial_t c_\cS^2 & = \frac{4a c_\cS^2}{C_D} \sum_{l=M+1}^{L} \EE \bigg[\sigma_l(a s) \cdot \rbr{\frac{x_{L+1}}{y + \varepsilon \bm{1}}}^\top \rbr{x_l -y} \cdot  \bigg(\prod_{h\in \cS} \langle v_l^\head{h}, v_{L+1}^\head{h} \rangle- s_l\bigg) \bigg]. 
\end{align}
\emph{In the following, we consider a fixed $\pi$ for error analysis and take expectation over $\pi$ again when plugging in everything back into the dynamics.}
To simplify the expression of $\partial_t c_{\cS}^2$, we define quantities $g_{0, \cS}$ and $f$ as
\begin{align}\label{eq:g0} 
\begin{aligned}
    g_{0, \cS} &:= \sum_{l=M+1}^{L} \EE_{X\mid\pi} \bigg[\sigma_l(a s) \sum_{k=1}^d\biggl(\frac{\ind(x_{L+1} = x_l = e_k)}{y(k) + \varepsilon} - \frac{y(k)\ind(x_{L+1} = e_k)}{y(k)+\varepsilon} \biggr) \prod_{h\in \cS} \langle v_l^\head{h}, v_{L+1}^\head{h} \rangle\bigg],\\
    f &:= \sum_{l=M+1}^{L} \EE_{X\mid\pi} \bigg[ \sigma_l(a s) \sum_{k=1}^d\biggl(\frac{\ind(x_{L+1} = x_l = e_k)}{y(k) + \varepsilon} - \frac{y(k)\ind(x_{L+1} = e_k)}{y(k)+\varepsilon} \biggr) \! \cdot \! s_l \bigg].
\end{aligned}
\end{align}
Note that here $f$ does not depend on $\cS$.
Based on the above definitions, we can rewrite \eqref{eq:dynamics_c_S} as
\begin{align}\label{eq:log_dynamics}
    \partial_t \log c_\cS^2 = \frac{1}{c_\cS^2} \cdot \partial_t c_\cS^2 
    = \frac{4a}{C_D}  \cdot \EE_{\pi\sim\cP} [g_{0,\cS} - f].
\end{align}
Using this, it can be shown that $C_D(t) $ does not change during the training, as described in the following lemma.
\begin{lemma}\label{lem:C-preserve}
The quantity $C_D(t) = \sum_{\cS\in \HleqD} c_\cS^2(t)$ is preserved along the 
gradient flow over $\{c_{\cS}\}_{\cS\in\HleqD}$, i.e., 
$\partial_t C_D(t)\equiv 0$.
\end{lemma}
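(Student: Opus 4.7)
The plan is to directly sum the dynamics \eqref{eq:dynamics_c_S} over $\cS \in \HleqD$ and observe that the weighted sum telescopes by the very definition of $s_l$. First, I would fix an arbitrary realization of $\pi$ and $x_{1:(L+1)}$ (so that all expectations can be pulled outside at the end) and write
\begin{align}
\sum_{\cS \in \HleqD} \partial_t c_\cS^2
= \frac{4a}{C_D} \sum_{l=M+1}^L \EE\!\left[\sigma_l(a s)\cdot \left(\frac{x_{L+1}}{y+\varepsilon \bm{1}}\right)^{\!\top}\!(x_l - y) \cdot \sum_{\cS \in \HleqD} c_\cS^2 \left(\prod_{h\in\cS}\langle v_l^\h, v_{L+1}^\h\rangle - s_l\right)\right].
\end{align}
The crux is then to evaluate the inner sum over $\cS$. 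Since by the definition of $s_l$ in \eqref{eq:simplify_s_l} we have $\sum_{\cS\in\HleqD} c_\cS^2 \prod_{h\in\cS}\langle v_l^\h, v_{L+1}^\h\rangle = C_D \cdot s_l$, and trivially $\sum_{\cS\in\HleqD} c_\cS^2 \cdot s_l = C_D \cdot s_l$ (as $s_l$ does not depend on the summation index), the bracketed factor equals $C_D\cdot s_l - C_D\cdot s_l = 0$ identically.

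Consequently the entire right-hand side vanishes, giving $\partial_t C_D(t) = \sum_{\cS}\partial_t c_\cS^2 = 0$. I do not anticipate any obstacle here: the identity is a tautological consequence of the softmax-like normalization used to define $s_l$, which guarantees that $s_l$ is the $\{c_\cS^2/C_D\}$-convex combination of the quantities $\prod_{h\in\cS}\langle v_l^\h, v_{L+1}^\h\rangle$. The argument does not rely on any Markov-chain assumption, on the size of $a$ or $\varepsilon$, or on any approximation; it is exact for every $t$ along the flow. The only thing to be slightly careful about is that the calculation is carried out pathwise (inside the expectations), so that linearity of expectation lets us exchange the sum and the expectation before invoking the cancellation.
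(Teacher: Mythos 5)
Your proof is correct and is essentially the same argument as the paper's: both rest on the observation that $s_l$ is by definition the $\{c_\cS^2/C_D\}$-weighted average of $\prod_{h\in\cS}\langle v_l^\h, v_{L+1}^\h\rangle$, so the terms cancel exactly when you sum $\partial_t c_\cS^2$ over $\cS\in\HleqD$. The paper phrases this as the identity $\sum_{\cS\in\HleqD} p_\cS g_{0,\cS}=f$ in the $g_{0,\cS}/f$ notation of \eqref{eq:g0}, whereas you carry out the same cancellation directly inside the expectation; the two are term-by-term identical.
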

This lemma will be useful in the following analysis, and we defer its proof to \Cref{sec:proof_stage1_add}.
Next, we proceed to further simplify the dynamics in \eqref{eq:log_dynamics} by approximating $g_{0,\cS}$.

\paragraph{Simplification of $\partial_t\log c_\cS^2$}
To approximiate $g_{0,\cS}$, we introduce the following quantities:
\begin{align}
    g_{1,\cS} &:= \frac{1}{L-M} \sum_{l=M+1}^{L}  \EE_{X\mid\pi} \biggl[ \biggl(\sum_{k=1}^d\frac{\ind(x_{L+1} = x_l = e_k)}{\bar y(k) + \varepsilon} - \frac{\bar y(k)\ind(x_{L+1} = e_k)}{\bar y(k)+\varepsilon} \biggr) \prod_{h\in \cS} \langle v_l^\head{h}, v_{L+1}^\head{h} \rangle \biggr], \label{eq:g1}  \\
    g_{2,\cS} &:= \frac{1}{L-M} \sum_{l=M+1}^{L}  \EE_{X\mid\pi} \biggl[ \biggl(\sum_{k=1}^d\frac{\ind(x_{L+1} = x_l = e_k)}{\mu^\pi(e_k)} - 1 \biggr) \prod_{h\in \cS} \langle v_l^\head{h}, v_{L+1}^\head{h} \rangle \biggr], \label{eq:g2} \\
    g_{3,\cS} &:= \EE_{(x,X), (z,Z) \sim \mu^\pi \otimes \mu^\pi} \biggl[ \biggl(\sum_{k=1}^d\frac{\ind(x = z = e_k)}{\mu^\pi(e_k)} - 1 \biggr) \prod_{h\in \cS} \langle v^\h(Z), v^\h(X) \rangle\biggr] \label{eq:g3}, 
\end{align}
where  $Z= (z_{-M}, \ldots, z_{-1})$ is independent of $X= (x_{-M}, \ldots, x_{-1})$
and we define 
\[v^\head{h}(X) := \sum_{i=1}^M \sigma_{-i_h}^\head{h} x_{-i_h}, \quad 
v^\head{h}(Z) := \sum_{i=1}^M \sigma_{-i_h}^\head{h} z_{-i_h}, \quad
\text{and } \bar y := \frac{1}{L-M}\sum_{l=M+1}^{L}x_{l}.\]
Here $\bar y(k) $ is the $k$-th entry of $\bar y$. 
We remark that each of $g_{1,\cS},g_{2,\cS},g_{3,\cS}$ is a function of $\pi$ 
and $t$, but we omit the dependence for brevity.

From $g_{0,\cS}$ to $g_{1,\cS}$, we replace attention probability $\sigma_l(as)$ by the uniform average with factor $1/L$, which yields $\bar y$. 
From $g_{1,\cS}$ to $g_{2,\cS}$, we replace the empirical distribution $\bar y$ with the stationary distribution $\mu^\pi$ and drop  the small constant $\varepsilon$. 
Finally, from $g_{2,\cS}$ to $g_{3,\cS}$, we replace the average over the sequence by the expectation over 
the stationary distribution $\mu^\pi$ of the underlying Markov chain.
We will show that the approximation error in each step is small, given that 
$a$ and $\varepsilon$ are sufficiently small and the Markov chain mixes well for
a large $L$.
\begin{itemize}
\item For the approximation of $g_{0,\cS}$ by $g_{1,\cS}$, note that when $a$ is small,
the attention probability $\sigma_l(a s) \approx 1/(L-M)$ for all $l\in[L]$.
More specifically, it follows from \Cref{lem:approximation1} that
\begin{align}
    |g_{0,\cS} - g_{1,\cS}| \leq \frac{8ad}{\varepsilon^2}.
\end{align}

\item For the approximation of $g_{1,\cS}$ by $g_{2,\cS}$, we leverage the approximation $\bar y(k) \approx \mu^\pi(e_k)$ due to the mixing of the Markov chain for large $L$.
The result in \Cref{lem:approximation2} implies that
\begin{align}
    |g_{1,\cS} - g_{2,\cS}| &\leq 4 \cdot \frac{(1-\lambda)^{-1/2}(D_{\chi^2}(\mu_0\,\|\, \mu^\pi) + 1)^{1/4} + 2\sqrt{M}}{L^{1/2}\gamma} + \gamma^{-1} \varepsilon
\end{align}
where $\mu_0(\cdot)$ is the initial distribution over the first $r_n$ tokens.
Here we abuse the notation of $\mu^\pi$ in $D_{\chi^2}(\mu_0\,\|\, \mu^\pi)$ to denote the stationary distribution over the last $r_n$ tokens.
Since $\mu_{\min}^\pi\geq\gamma$ by \Cref{asp:Markov_chain}, we have 
\begin{align} 
    D_{\chi^2}(\mu_0\|\mu^\pi ) = \sum_{X} {(\mu(X) - \mu^\pi(X))^2}/{\mu^\pi(X)} \leq \sum_{X} 1/\mu^\pi(X)  \leq ({2}/{\gamma})^{r_n}.
    \label{eq:chi2_init_bound}
\end{align}
Therefore, we can further simplify the above bound as
\begin{align}
    |g_{1,\cS}-g_{2,\cS}| = O\bigg(\frac{1}{\sqrt{L (1-\lambda) \gamma^{r_n+2}}} + \frac{\varepsilon}{\gamma}\bigg).
\end{align}

\item Finally, the approximation of $g_{2,\cS}$ by $g_{3,\cS}$ follows from the mixing property of the Markov chain.
In particular, it follows from \Cref{lem:approximation3} that
\begin{align}
    |g_{2,\cS} - g_{3,\cS}| \leq \frac{8M}{L\gamma} + \frac{16\sqrt{D_{\chi^2}(\mu_0\,\|\, \mu^\pi) + 1}}{L (1-\lambda) \gamma^{|\cS|/2+1}}
    \leq  O\bigg(\frac{1}{L (1-\lambda)\gamma^{|\cS|/2+r_n/2+1}} \bigg). 
\end{align}
\end{itemize}
Combining the above results, and by the assumption that $a=a(0)=O(1/L^{3/2})$ 
and $\varepsilon = 1/\sqrt{L}$, we obtain the following approximation error:
\begin{align}
    |g_{0,\cS} - g_{3,\cS}| 
    &= O\left(\frac{ad}{\varepsilon^2}\right) + O\bigg(\frac{1}{\sqrt{L (1-\lambda) \gamma^{r_n+2}}} + \frac{\varepsilon}{\gamma}\bigg) + O\bigg(\frac{1}{L (1-\lambda)\gamma^{|\cS|+2+r_n/2}} \bigg)\\
    & \le O\bigg(\frac{1}{\sqrt{L (1-\lambda)   \gamma^{r_n+2}}} + \frac{1}{L (1-\lambda)\gamma^{D/2+r_n/2+1}} \bigg) \le O\bigg(\frac{1}{\sqrt{L (1-\lambda)   \gamma^{r_n+2}}}\bigg), 
\end{align}
where we note that $|\cS|\leq D$ for any $\cS\in\HleqD$ and the last inequality holds by also noting our condition on $L$ in \eqref{eq:L_condition-1} that 
\(
    L \ge \Omega((1-\lambda)^{-1}\gamma^{-D}).
\)
As a result, the dynamics of $c_\cS^2$ in \eqref{eq:log_dynamics} can be approximated as follows:
\begin{align} \label{eq:log_dynamics_approx}
    \partial_t\log c_\cS^2 = \frac{4a}{C_D} \cdot \EE_{\pi\sim\cP} [g_{3,\cS}-f] + \cE, \quad \text{where } |\cE| 
    \le O\bigg(\frac{a}{C_D\sqrt{L(1-\lambda)\gamma^{r_n+2}}} \bigg),
\end{align}
where $\cO(\cdot)$ hides universal constants that do not depend on the model parameters.
Here and in the sequel,  we let $\cE$ denote an error term that is of the order $O(a / \sqrt{C_D^2 L(1-\lambda)\gamma^{r_n+2}})$ where the specific constant hidden in $O(\cdot) $ may change from line to line, but does not depend on the model parameters.
In fact, we can show $C_D$ remains constant by \Cref{lem:C-preserve} and $a$ is not updated during this stage. 
Thus, the error term $|\cE|$ is of scale $O(a L^{-1/2})$. 

\paragraph{Lower Bound for The Difference $\partial_t\log c_{\cS^\star}^2 - \partial_t\log c_{\cS}^2$}
The reason for approximating $g_{0,\cS}$ by $g_{3,\cS}$ in the previous step is that the latter is more interpretable, in the sense that we can relate it to the modified $\chi^2$ mutual information $\tilde I_{\chi^2}(\cS)$.
Recall that for each $\cS\in\HleqD$, the modified $\chi^2$-mutual information is
\begin{align}
    \tilde  I_{\chi^2}(\cS)= \EE_{\pi\sim\cP, (z,Z)\sim\mu^\pi}
    \bigg[\bigg(\sum_{e\in\cX} \frac{\mu^\pi(z=e \given Z_{-\cS})^2}{\mu^\pi(z=e)} - 1 \bigg) \cdot \mu^\pi(Z_{-\cS}) \bigg].
\end{align}
Note that $f $ in \eqref{eq:log_dynamics_approx} is independent of $\cS$, and will be canceled when computing 
$\partial_t\log c_{\cS^\star}^2 - \partial_t\log c_{\cS}^2$:
\begin{align}
    \partial_t\log c_{\cS^\star}^2 - \partial_t\log c_{\cS}^2 = \frac{4a}{C_D} \cdot \EE_{\pi\sim\cP}[g_{3,\cS^\star} -g_{3,\cS}] \pm 2|\cE|.
\end{align}
Thus, it suffices to consider $\EEpifromP [g_{3,\cS^\star } -g_{3,\cS}]$.
It follows from \Cref{lem:approximation4} that for each $\cS\in\HleqD$, $\EEpifromP [g_{3,\cS}]$ satisfies
\begin{align}
    \bigg| \EEpifromP [g_{3,\cS}] - \prod_{h\in\cS} (\sigma_{-h}^\h)^2 \cdot 
    \tilde I_{\chi^2}(\cS) \bigg| \leq         
    \bigg(1 - \prod_{h\in\cS} (\sigma_{-h}^\h)^2 \bigg)  \cdot 
    \tilde I_{\chi^2}(\cS^\star). 
\end{align}
This yields a lower bound for $\EE_{\pi\sim\cP}[g_{3,\cS^\star}]$ and an upper bound for $\EE_{\pi\sim\cP}[g_{3,\cS}]$ for each $\cS\neq\cS^\star$, i.e.,
\begin{align}\label{eq:approx_g3}
\begin{aligned}
    \EEpifromP [g_{3,\cS^\star}] &\geq \prod_{h\in\cS^\star} (\sigma_{-h}^\h)^2 \cdot \tilde I_{\chi^2}(\cS^\star) - \bigg(1 - \prod_{h\in\cS^\star} (\sigma_{-h}^\h)^2 \bigg)  \cdot  \tilde I_{\chi^2}(\cS^\star),\\
    \EEpifromP [g_{3,\cS}] &\leq \prod_{h\in\cS} (\sigma_{-h}^\h)^2 \cdot 
    \tilde I_{\chi^2}(\cS)
    + \bigg(1 - \prod_{h\in\cS} (\sigma_{-h}^\h)^2 \bigg)  \cdot 
 \tilde I_{\chi^2}(\cS^\star), \quad\text{for all } \cS\neq\cS^\star.
\end{aligned}
\end{align}
Consequently,
\begin{align}
    \partial_t \log c_{\cS^\star}^2 - \partial_t \log c_\cS^2 
    &= \frac{4a}{C_D} \cdot \EE_{\pi\sim\cP}[g_{3,\cS^*} - g_{3,\cS}] \pm 2|\cE|\\
    &\geq \frac{4a}{C_D} \bigg(\prod_{h\in\cS^\star} (\sigma_{-h}^\h)^2 \cdot \tilde I_{\chi^2}(\cS^\star) - \prod_{h\in\cS} (\sigma_{-h}^\h)^2 \cdot \tilde I_{\chi^2}(\cS)\bigg)\\
    &\qquad - \frac{4a}{C_D} \bigg(2 -\prod_{h\in\cS^\star} (\sigma_{-h}^\h)^2 -  \prod_{h\in\cS} (\sigma_{-h}^\h)^2 \bigg) \tilde I_{\chi^2}(\cS^\star) - 2|\cE| \\
    &\geq \frac{4a}{C_D} \bigg( \bigg(2\prod_{h\in\cS^\star} (\sigma_{-h}^\h)^2 -2\bigg) \tilde I_{\chi^2}(\cS^\star) +  \prod_{h\in\cS} (\sigma_{-h}^\h)^2 \cdot\Delta \tilde I_{\chi^2}\bigg) - 2|\cE|,
\end{align}
where the second inequality follows from the definition $\Delta \tilde I_{\chi^2} = \min_{S\in\HleqD \backslash \{\cS^\star\}}\tilde I_{\chi^2}(\cS^\star) - \tilde I_{\chi^2}(\cS)$.
Moreover, since each $(\sigma_{-h}^\h)^2\in(0,1)$,
we have $\prod_{h\in\cS} (\sigma_{-h}^\h)^2 \geq \prod_{h=1}^H (\sigma_{-h}^\h)^2$
for any $\cS\in\HleqD$.
Appling this to the above inequality, we obtain
\begin{align}\label{eq:diff_stage1}
    \partial_t \log c_{\cS^\star}^2 - \partial_t \log c_\cS^2 
    \geq \frac{4a}{C_D} \bigg(2\prod_{h=1}^H (\sigma_{-h}^\h)^2 \cdot \tilde I_{\chi^2}(\cS^\star) + \prod_{h=1}^H (\sigma_{-h}^\h)^2 \cdot\Delta \tilde I_{\chi^2} - 2\tilde I_{\chi^2}(\cS^\star) \bigg) - 2|\cE|, 
\end{align}


\paragraph{Exponential Growth of $c_{\cS^\star}^2$}
We proceed to show that the first term in \eqref{eq:diff_stage1} dominates the error term $\cE$ and thus leads to the exponential growth of $c_{\cS^\star}^2$.

Note that by \Cref{asp:initialization}, $w_{-h}^\h \ge w_{-j}^\h + \Delta w$ for all $j \neq h$ and $h \in[H]$, where the quantity $\Delta w$ satisfies
\begin{align} \label{eq:assump_deltaw}
    \Delta w \geq \log \left({M-1} \right) - \log \bigg(\biggl(1 + \frac{\Delta \tilde I_{\chi^2}}{14 \tilde I_{\chi^2}(\cS^\star)}\biggr)^{\frac{1}{2H}} - 1\bigg).
\end{align}
Recall that we are not updating the RPE parameters during this stage, so $\sigma^\h$ is fixed for all $h\in[H]$.
\textbf{\emph{So the gap condition \eqref{eq:assump_deltaw} holds throughout Stage \RNum{1}.}}
This conditions ensures that $w_{-h}^\h \gg w_{-j}^\h$, so $\prod_{h\in [H]} (\sigma_{-h}^\h)^2$ is sufficiently large.
More precisely, given that head $h$ is more focused on the $(-h)$-th position by having a gap $\Delta w$ in the initialization, 
we can further show by definition of the softmax function that
\begin{align}
    \sigma_{-h}^{(h)} \ge \frac{1}{1 + (M-1)\exp(-\Delta w)},\forall h\in[H] \Rightarrow
    \prod_{h=1}^H (\sigma_{-h}^\h)^2 \geq \frac{1}{\big(1+(M-1)\exp(-\Delta w)\big)^{2H}}.
    \label{eq:prod_sigma-1}
\end{align}
Plugging \eqref{eq:assump_deltaw} into \eqref{eq:prod_sigma-1}, we have by additionally noting that $\tilde I_{\chi^2}(\cS^\star) \geq \Delta \tilde I_{\chi^2} > 0$ that
\begin{align}
    \prod_{h=1}^H (\sigma_{-h}^\h)^2 
    \geq \biggl(1 + \frac{\Delta \tilde I_{\chi^2}}{14 \tilde I_{\chi^2}(\cS^\star)}\biggr)^{-1}
    > \frac{2\tilde I_{\chi^2}(\cS^\star) + 2/ 3 \cdot \Delta \tilde I_{\chi^2}}{2\tilde I_{\chi^2}(\cS^\star) + \Delta \tilde I_{\chi^2}},
\end{align}
which implies that 
\begin{align}
    2\prod_{h=1}^H (\sigma_{-h}^\h)^2 \cdot \tilde I_{\chi^2}(\cS^\star) 
    + \prod_{h=1}^H (\sigma_{-h}^\h)^2 \cdot\Delta \tilde I_{\chi^2}
    - 2\tilde I_{\chi^2}(\cS^\star) \geq \frac{2}{3}\Delta \tilde I_{\chi^2}. \label{eq:signal_delta1}
\end{align}
Moreover, when  $L$ is sufficiently large such that $L \geq 
 \Omega((\Delta \tilde I_{\chi^2}^2 (1-\lambda) \gamma^{r_n+2 })^{-1})$, $\cE $ in  \eqref{eq:diff_stage1} satisfy $|\cE|\leq 13a \Delta \tilde I_{\chi^2}/6C_D$, where $\Omega$ hides a universal constant that does not depend on the model parameters.
 Therefore, combining 
\eqref{eq:diff_stage1} and \eqref{eq:signal_delta1}, 
we conclude that
\begin{align}\label{eq:growth_c_S}
    \partial_t \log c_{\cS^\star}^2 - \partial_t \log  c_\cS^2 \geq \frac{8a \Delta \tilde I_{\chi^2} }{3C_D} - 2|\cE | \geq \frac{a \Delta \tilde I_{\chi^2} }{2C_D}.
\end{align}
This implies that $c_{\cS^\star}^2$ grows exponentially fast and becomes dominant.

\paragraph{Convergence of $p_{\cS^\star}$}
In this part, we treat all the model parameters as a function of time $t$. 
For simplicity, we omit the dependence on $t$ when it is clear from the context.
It remains to derive the convergence of $p_{\cS^\star}=c_{\cS^\star}^2/C_D$.
Expanding $C_D=\sum_{\cS\in\HleqD} c_\cS^2$, we can directly calculate the derivative of $p_{\cS^\star}$ as follows:
\begin{align}
    \partial_t\log(1-p_{\cS^\star}) &= \partial_t\log\bigg(1-\frac{c_{\cS^\star}^2}{\sum_{\cS\in\HleqD} c_{\cS}^2}\bigg) = \frac{C_D}{C_D-c_{\cS^\star}^2} \cdot \partial_t\bigg(1-\frac{c_{\cS^\star}^2}{\sum_{\cS\in\HleqD} c_{\cS}^2}\bigg)\\
    &= \frac{C_D}{C_D-c_{\cS^\star}^2} \cdot \frac{-(\sum_{\cS\in\HleqD}c_{\cS}^2) \cdot \partial_t c_{\cS^\star}^2 + c_{\cS^\star}^2  \cdot \sum_{\cS\in\HleqD}\partial_t c_{\cS}^2}{(\sum_{\cS\in\HleqD} c_\cS^2)^2}\\
    &= \frac{1}{C_D(C_D-c_{\cS^\star}^2)} \sum_{\cS\in\HleqD} (-c_\cS^2  \cdot \partial_t c_{\cS^\star}^2 + c_{\cS^\star}^2 \cdot \partial_t c_\cS^2)\\
    &= \frac{1}{C_D(C_D-c_{\cS^\star}^2)} \sum_{\cS\in\HleqD\setminus\{\cS^\star\}} c_{\cS^\star}^2 \cdot c_\cS^2\cdot  (-\partial_t\log c_{\cS^\star}^2 + \partial_t\log c_\cS^2)
\end{align}
where in the last equality we use the fact that $\partial_t \log c_\cS^2 = (\partial_t c_\cS^2)/c_\cS^2$.
Applying \eqref{eq:growth_c_S} to each $\cS\neq\cS^\star$, we further have
\begin{align}
    \partial_t\log(1-p_{\cS^\star}) &\leq \frac{1}{C_D(C_D-c_{\cS^\star}^2)} \sum_{\cS\in\HleqD\setminus\{\cS^\star\}} c_{\cS^\star}^2 \cdot c_\cS^2 \cdot \bigg(-\frac{a\Delta\tilde I_{\chi^2}}{2C_D}\bigg)\\
    &= \frac{1}{C_D(C_D-c_{\cS^\star}^2)} \cdot c_{\cS^\star}^2\cdot  (C_D-c_{\cS^\star}^2) \cdot \bigg(-\frac{a\Delta\tilde I_{\chi^2}}{2C_D}\bigg) = -\frac{c_{\cS^\star}^2\cdot  a \Delta\tilde I_{\chi^2}}{2C_D^2} <0.
\end{align}
This implies that $p_{\cS^\star}=c_{\cS^\star}^2/C_D$ monotonically increases, and thus $c_{\cS^\star}^2(t)\geq c_{\cS^\star}^2(0)$ for any $t\geq 0$ because $C_D$ is constant by \Cref{lem:C-preserve} and $c_{\cS^\star}^2(0)$ is the initial value for $c_{\cS^\star}^2$ at time $t=0$.
Therefore, we can further replace $c_{\cS^\star}^2$ by its initial value in the above inequality, which yields
\begin{align}
    \partial_t\log(1-p_{\cS^\star}) \leq - \frac{c_{\cS^\star}^2(0) a \Delta \tilde I_{\chi^2}}{2C_D^2} = - \frac{p_{\cS^\star}(0) a \Delta\tilde I_{\chi^2}}{2C_D}
\end{align}
We remark that the above upper bound is independent of $t$.
Finally, applying the Gr\"onwall's inequality to $\log(1-p_{\cS^\star})$, we obtain
\begin{align}
    1 - p_{\cS^\star}(t) \leq (1 - p_{\cS^\star}(0)) \cdot \exp\bigg(-\frac{p_{\cS^\star}(0) a \Delta \tilde I_{\chi^2} }{2C_D} \cdot t \bigg). \label{eq:P_converge_rate}
\end{align}
With training time $t_1 \ge  (2C_D(0)\log L)/(a \cdot p_{\cS^\star}(0)\Delta \tilde I_{\chi^2})$, we can guarantee that 
\begin{align}\label{eq:result_stage_1}
    1 - p_{\cS^\star}(t_1) \le L^{-1}.
\end{align}
This concludes the proof for the first stage of the training.
\end{proof}


\subsubsection{Additional Proofs for the Stage \RNum{1}}
\label{sec:proof_stage1_add}
We conclude this subsection with the proof of \Cref{lem:C-preserve}.
\begin{proof}[Proof of \Cref{lem:C-preserve}]
By \eqref{eq:log_dynamics}, we have 
\begin{align}
    \partial_t c_\cS^2 & = \EEpifromP [{4a \cdot p_\cS} (g_{0,\cS} - f)].
\end{align}
Moreover, by the definition of $g_{0,\cS}$ and $f$, it holds that 
$\sum_{\cS\in \HleqD} p_\cS g_{0,\cS} = f$.
Then,
\begin{align}
    \partial_t C_D = \sum_{\cS\in \HleqD} \partial_t c_\cS^2 = 4 a \cdot 
    \EEpifromP\bigg[ \sum_{\cS\in \HleqD} p_\cS g_{0,\cS} - f\bigg ] \equiv 0.
\end{align}
Thus, the quantity $C_D$ is preserved under the dynamics.
\end{proof}


\subsection{Analysis for Stage \RNum{2}}\label{sec:proof_stage2}

In this section, we provide the analysis of the dynamics of $\sigma^\h  \equiv \sigma(w^\h)$ for head $h \in \cS^\star$.
For head $h \notin \cS^\star$, the results from Stage \RNum{1} imply that $p_\cS \rightarrow 0$ for any $\cS\neq \cS^\star$. 
Consequently, any head $h\notin\cS^\star$ will be ignored when producing the output features of FFN. 
Conversely, for $h \in \cS^\star$, we establish the dominance of $w_{-h}^\h$ over $w_{-i}^\h$  for all $i \neq h$, yielding $\sigma_{-h}^\h \to 1$ as $t \to \infty$. In this limiting case,   head $h$ exactly copies the $(-h)$-th parent. 
We also provide the corresponding convergence rate.

\paragraph{Proof Strategy}
Similar to the proof for Stage \RNum{1}, our analysis for Stage~\RNum{2} characterizes the dynamics of the difference between the positional embedding weights, $\partial_t w_{-h}^\h - \partial_t w_{-i}^\h$ for all $i\neq h$, via the following steps:

\begin{enumerate}
    \item \textbf{\color{OrangeRed}Dynamics Calculation.} We initiate the analysis by deriving the dynamics of $w_{-i}^\h$ for any fixed $i$ and $h$.
    \item \textbf{\color{OrangeRed}Dynamics Approximation} Then we approximate the dynamics by identifying the dominant term controlled by the modified $\chi^2$ mutual information $\tilde I_{\chi^2}(\cS^\star)$.
    \item \textbf{\color{OrangeRed}Lower Bound for The Growth Rate} By comparing the corresponding modified $\chi^2$ mutual information, we establish a lower bound on $\partial_t w_{-h}^{(h)} - \partial_t w_{-i}^{(h)}$ for all $i\neq h$.
    \item \textbf{\color{OrangeRed}Convergence.} Finally, we derive the convergence rate of $\sigma_{-h}^\h$ using the above lower bound.
\end{enumerate}

Again, before proceeding with the detailed proof, we review the notations related to the dynamics of the positional embedding weights $\{w^\h\}_{h=1}^H$.
For the $h$-th head of the first attention layer, the positional embedding vector $w^\h$ induces the attention probability over a window of size $M$, i.e.,
\begin{align}
    \sigma(w^\h) =: \sigma^\h = (\sigma_{-M}^\h, \ldots, \sigma_{-1}^\h) \in\RR^{1\times M}.
\end{align}
Further recall the attention scores for the second attention layer, $as$, where $s=u_{L+1}^\top U_{M+1:L}^\top$.
Then for each $l\in[L]$, the $l$-th coordinate of $s$ is given by
\begin{align}
    s_l &= \sum_{\cS\in [H]_{\le D}} p_\cS \cdot \prod_{h\in \cS} \langle v_l^\head{h}, v_{L+1}^\head{h} \rangle, \quad \text{where each } v_l^\h = \sum_{i=1}^M \sigma^\h_{-i} x_{l-i} = \sigma^\h X_{(l-M):(l-1)}.
\end{align}
Here   $p_{\cS}$ is defined as in the analysis of Stage 1, and $X_{(l-M):(l-1)} \in \RR^{M \times d}$ is the submatrix of $X$ with rows $l-M, \ldots, l-1$.

By direct calculation, we have 
\begin{align}
    \frac{\partial \sigma^\h}{\partial w^\h} &= \diag(\sigma^\h) - (\sigma^\h)^\top \sigma^\h \in\RR^{M\times M}, \quad \frac{\partial v_l^\h}{\partial \sigma^\h} =  X_{(l-M):(l-1)}^\top \in\RR^{d\times M},
\end{align}
Then by chain rule, 
\begin{align}\label{eq:grad_v_l_w_h}
    \frac{\partial v_l^\h}{\partial w^\h} = \frac{\partial v_l^\h}{\partial \sigma^\h} \frac{\partial \sigma^\h}{\partial w^\h} =  X_{l-M:l-1}^\top \left(\diag(\sigma^\h) - (\sigma^\h)^\top \sigma^\h \right) \in\RR^{d\times M}.
\end{align}
Moreover, we can view each $s_{l} $ as a function of $\{ v_1^\h, \ldots, v_{L+1} ^\h \} _{h\in [H]} $.  Differentiating $s_l$ with respect to $v_l^\h$ and $v_{L+1}^\h$, 
we have
\begin{align}\label{eq:grad_s_l_v_l}
\begin{aligned}
    \frac{\partial s_l}{\partial v_l^\h} &= \sum_{\cS\in\HleqD\text{ s.t }h\in \cS} p_\cS \prod_{h'\in \cS\backslash \{h\}}  \langle v_l^\head{h'}, v_{L+1}^\head{h'} \rangle  v_{L+1}^\head{h} \in\RR^{d},\\
    \frac{\partial s_l}{\partial v_{L+1}^\h} &= \sum_{\cS\in\HleqD\text{ s.t }h \in \cS} p_{\cS} \prod_{h'\in \cS\backslash\{h\}} \!\! \langle v_l^\head{h'}, v_{L+1}^\head{h'} \rangle  v_{l}^\head{h} \in\RR^{d\times 1}.
\end{aligned}
\end{align}
  In the summation, we only add those $\cS$'s in $\HleqD$ containing $h$.
Also, recall from \eqref{eq:grad_s_l} that 
\begin{align}
    \frac{\partial \ell}{\partial s_l} &= - a \rbr{\frac{x_{L+1}}{y + \varepsilon \bm{1}}}^\top \rbr{x_l -y} \cdot \sigma_l\left(as\right).
    \label{eq:grad_s_l-1}
\end{align}
Now we are ready to proceed with the analysis for Stage \RNum{2}.

\begin{proof}[Proof of \Cref{thm:convergence}: Stage \RNum{2}]
We start by calculating the explicit expression of the dynamics of $\partial_t w_{-i}^\h$, and then derive approximation of the dynamics, which allows us to further show the convergence of $\sigma^\h$.

\paragraph{Calculation of The Dynamics of $\partial_t w^\h$}
First fix an $h\in[H]$. 
To simplify the notation, for each $l\in[L]$ we define 
\begin{align}
b_l :=  X_{(l-M):(l-1)} \cdot v_{L+1}^\head{h} +  X_{(L+1-M):L} \cdot v_{l}^\head{h}\in \RR^{M}.
\label{eq:def-b_l}
\end{align}
Note that $w^\h $ is the parameters of the $h$-th head and only enters each $v^{(h)} _{l} $, $l = 1, \ldots, L+1$. 
Recall that $s_l = \sum_{\cS\in\HleqD} p_\cS \prod_{h\in \cS }  \langle v_l^\head{h}, v_{L+1}^\head{h} \rangle$, and the RPE weight $w^\h$ for attention head $h$ only influences its outputs $v_l^\h$ and $v_{L+1}^\h$ in the sum. 
It thus follows from the chain rule that for each $i\in[M]$, we have 
\begin{align}
    &\frac{\partial s_l}{\partial w^\h_{-i}}  =   \left(\frac{\partial s_l}{\partial v_{L+1}^\h}\right)^\top \frac{\partial v_{L+1}^\h}{\partial w^\h_{-i}} + \left(\frac{\partial s_l}{\partial v_l^\h}\right)^\top \frac{\partial v_l^\h}{\partial w^\h_{-i}} \\
    &= \sum_{\cS\in\HleqD\text{ s.t }h\in \cS} p_\cS \prod_{h'\in \cS\backslash \{h\}}  \langle v_l^\head{h'}, v_{L+1}^\head{h'} \rangle\cdot  v_{L+1}^{(h)\top} X_{(l-M):(l-1)}^\top \left(\diag(\sigma^\h) - (\sigma^\h)^\top \sigma^\h \right) e_{M+1-i}\\
    &\quad + \sum_{\cS\in\HleqD\text{ s.t }h \in \cS} p_{\cS} \prod_{h'\in \cS\backslash\{h\}} \!\! \langle v_l^\head{h'}, v_{L+1}^\head{h'} \rangle\cdot v_{l}^{(h)\top} X_{(L+1-M):L}^\top \left(\diag(\sigma^\h) - (\sigma^\h)^\top \sigma^\h \right) e_{M+1-i}\\
       & =  \sum_{\cS\in [H]_{\le D}\text{ s.t } h \in \cS} p_\cS \prod_{h'\in \cS\backslash \{h\}}  \langle v_l^\head{h'}, v_{L+1}^\head{h'} \rangle \cdot b_l^\top
        \left(e_{M+1-i} - (\sigma^\h)^\top \right)\cdot \sigma_{-i}^\h,
\end{align}
where we remind readers that $e_{i} \in \RR^{M\times 1}$ is the $i$-th standard basis vector. 

Furthermore, along the gradient flow $\partial_t w_{-i}^\h = - \partial\cL/\partial w_{-i}^\h$, it follows from \eqref{eq:grad_s_l-1} that
\begin{align}
        \partial_t w_{-i}^\h &= -\EE_{\pi,X}\bigg[\sum_{l=M+1}^{L} \frac{\partial \ell}{\partial s_l} \frac{\partial s_l}{\partial w^\h_{-i}}\bigg]
        = a \sum_{l=M+1}^{L} \EE_{\pi,X}\bigg[\sigma_l\left(as\right) \rbr{\frac{x_{L+1}}{y + \varepsilon \bm{1}}}^\top \rbr{x_l -y} \frac{\partial s_l}{\partial w^\h_{-i}} \bigg] \\
        &= a \sum_{l=M+1}^{L} \EE_{\pi,X}\bigg[\sigma_l\left(as\right) \sum_{k=1}^d \biggl(\frac{\ind(x_{L+1} = x_l = e_k)}{y(k) + \varepsilon} - \frac{y(k)\ind(x_{L+1} = e_k)}{y(k)+\varepsilon} \biggr) \frac{\partial s_l}{\partial w^\h_{-i}} \bigg] \\
        &= a \cdot \EEpifromP \left[ g_{h,0}^\top \left(e_{M+1-i} - (\sigma^\h)^\top \right) \sigma_{-i}^\h \right],
\end{align}
where we plug in the expression of $\partial s_{l} / \partial w_{-i}^\h$ above  in the last equality. Here
the vector $g_{h,0}$ is defined as
\begin{align}
    g_{h,0} &:= \!\!\!\! \sum_{l=M+1}^{L} \! \sum_{\cS\in [H]_{\le D} \atop  \text{ s.t } h \in \cS} \!\!\! \EE_{X\mid\pi}\biggl[p_\cS \sigma_l \cdot \sum_{k=1}^d   \biggl(\frac{\ind(x_{L+1} = x_l = e_k)}{y(k) + \varepsilon} - \frac{y(k)\ind(x_{L+1} = e_k)}{y(k)+\varepsilon} \biggr) \cdot \!\!\!\!\!\! \prod_{h'\in \cS\backslash \{h\}}  \langle v_l^\head{h'}, v_{L+1}^\head{h'} \rangle b_l \biggr], 
\end{align}
where $\sigma_l$ is the softmax probability for the $l$-th token in the second attention layer.
Comparing $\partial_t w_{-i}^\h$ and $\partial_t w_{-h}^\h$, we have
\begin{align}
    \partial_t w_{-h}^\h - \partial_t w_{-i}^\h 
    &= a \cdot \EEpifromP \left[ g_{h,0}^\top \left(e_{M+1-h} - (\sigma^\h)^\top \right) \sigma_{-h}^\h - g_{h,0}^\top \left(e_{M+1-i} - (\sigma^\h)^\top \right) \sigma_{-i}^\h \right].\label{eq:diff_stage2-0.0}
\end{align}
Using the fact that $\sum_{j=1}^M\sigma_{-j}^\h=1$, we can rewrite
\begin{align}
    &\left(e_{M+1-h} - (\sigma^\h)^\top \right) \sigma_{-h}^\h - \left(e_{M+1-i} - (\sigma^\h)^\top \right) \sigma_{-i}^\h\\
    &\qquad = \sigma_{-i}^\h (e_{M+1-h}-e_{M+1-i}) + (\sigma_{-h}^\h - \sigma_{-i}^\h) (e_{M+1-h} - (\sigma^\h)^\top).\\
    &\qquad = \sigma_{-i}^\h (e_{M+1-h}-e_{M+1-i}) + (\sigma_{-h}^\h - \sigma_{-i}^\h) \sum_{j=1}^M \sigma_{-j}^\h (e_{M+1-h}-e_{M+1-j}), 
    \label{eq:diff_stage2-0}
\end{align}
where in the first identity, we add and then subtract term $\sigma_{-i}^\h e_{M+1-h}$. 
Combining \eqref{eq:diff_stage2-0.0} and \eqref{eq:diff_stage2-0} yields for each $i\in[M]$ that 
\begin{align}
    &\partial_t w_{-h}^\h - \partial_t w_{-i}^\h  \label{eq:diff_stage2}\\
    &\quad = a \cdot \EEpifromP 
    \bigg[g_{h,0}^\top \bigg(\sigma_{-i}^\h\left(e_{M+1-h} - e_{M+1-i} \right) 
    + \left(\sigma_{-h}^\h - \sigma_{-i}^\h \right) \sum_{j=1}^M \sigma_{-j}^\h (e_{M+1-h}-e_{M+1-j}) \bigg) \bigg].
\end{align}

\paragraph{Simplification of $\partial_t w_{-i}^\h$}
We proceed by deriving approximations to the vector $g_{h,0}$, which will help us identify the dominant term in the dynamics $\partial_t w_{-h}^\h - \partial_t w_{-i}^\h$.
Specifically, we define
\begin{align}
    g_{h,1} &:= \sum_{l=M+1}^{L} \EE_{X\mid\pi} \bigg[\sigma_l(as) \sum_{k=1}^d\biggl(\frac{\ind(x_{L+1} = x_l = e_k)}{y(k) + \varepsilon} - \frac{y(k)\ind(x_{L+1} = e_k)}{y(k)+\varepsilon} \biggr)
    \prod_{h'\in \cS^\star \backslash \{h\}} \langle v_l^\head{h'}, v_{L+1}^\head{h'} \rangle b_l \bigg], \\
    g_{h,2} &:= \frac{1}{L-M}\sum_{l=M+1}^{L}  \EE_{X\mid\pi}\bigg[ \sum_{k=1}^d\biggl(\frac{\ind(x_{L+1} = x_l = e_k)}{\bar y(k) + \varepsilon} - \frac{\bar y(k)\ind(x_{L+1} = e_k)}{\bar y(k)+\varepsilon} \biggr)
    \prod_{h'\in \cS^\star \backslash \{h\}} \!\! \langle v_l^\head{h'}, v_{L+1}^\head{h'} \rangle b_l \bigg], \\    
    g_{h,3} &:=  \frac{1}{L-M}\sum_{l=M+1}^{L}  \EE_{X\mid\pi}\bigg[ \biggl(\sum_{k=1}^d\frac{\ind(x_{L+1} = x_l = e_k)}{\mu^\pi(e_k)} - 1 \biggr)
    \prod_{h'\in \cS^\star \backslash \{h\}} \langle v_l^\head{h'}, v_{L+1}^\head{h'} \rangle b_l \bigg], \\    
    g_{h,4} &:= \EE_{(x,X), (z,Z) \sim \mu^\pi \otimes \mu^\pi} \bigg[ \biggl(\sum_{k=1}^d\frac{\ind(x = z = e_k)}{\mu^\pi(e_k)} - 1 \biggr)
    \prod_{h'\in \cS^\star \backslash \{h\}} \langle v^\hprime(Z), v^\hprime(X) \rangle b(X,Z) \bigg],
\end{align}
where $Z= [z_{-M}, \ldots, z_{-1}]^\top \in \R^{M\times d}$ is an independent copy of $X=[x_{-M},\ldots,x_{-1}]^\top\in\RR^{M\times d}$, and
\begin{gather}
    v^\head{h}(X) := \sum_{i=1}^M \sigma_{-i}^\h x_{-i}, \quad 
    v^\head{h}(Z) := \sum_{i=1}^M \sigma_{-i}^\h z_{-i}, \\
    b(X,Z) :=  Z(v^\head{h}(X)) +  X(v^\head{h}(Z)) , \quad 
    \bar y := \frac{1}{L-M}\sum_{l=M+1}^{L}x_{l}.
\end{gather}
The strategy of gradually approximating $g_{h,0}$ by $g_{h,1},g_{h,2},g_{h,3}$ and $g_{h,4}$ is similar to the analysis in Stage \RNum{1}.
To see the intuition, from $g_{h,0}$ to $g_{h,1}$, we use the fact that $p_{\cS^\star } \approx 1$ and $p_{\cS} \approx 0$ for any other $\cS$, which is a result of Stage 1. 
From $g_{h,1} $ to $g_{h,2}$, we replace $y$ by the empirical mean $\bar y$, thanks to the fact that $\sigma_{l}(a ) \approx 1/ L$ when $a $ is small. 
Then, from $g_{h,2} $ to $g_{h,3}$, we replace the empirical distribution $\bar y$ with the stationary distribution of the Markov chain. These two steps also appear in the analysis of Stage 1. Finally, to go from $g_{h,3}$ to $g_{h,4}$, we leverage the rapid mixing of the Markov chain. 

Note that the common structures in \eqref{eq:diff_stage2-0.0} are $g_{h,0}^\top ( e_{M+1-h} - e_{M+1-i} )$ for $i\neq h$. 
Hence, we only need to understand the approximation error in each step for $g_{h,0}^\top ( e_{M+1-h} - e_{M+1-i} )$.
Recall that we are focusing on $h\in \cS^\star$ in this stage.
\begin{itemize}
\item From $g_{h,0}$ to $g_{h,1}$, we remove the terms in the summation that are weighted down by $p_\cS$ for any $\cS\neq \cS^\star$ due to the rapid dominance of $p_{\mathcal{S}^\star}$ from Stage \RNum{1}. Recall that  $p_{\cS^*} $ converges to one at an exponential rate while all other $p_{\cS}$'s converge to zero. 
For simplicity, let us define 
\begin{align} 
    \rho(\cS) 
    &\defeq \sum_{l=M+1}^{L} \EE_{X\mid\pi} \biggl[\sigma_l(as) \sum_{k=1}^d\biggl(\frac{\ind(x_{L+1} = x_l = e_k)}{y(k) + \varepsilon} - \frac{y(k)\ind(x_{L+1} = e_k)}{y(k)+\varepsilon} \biggr)
    \\
    &\hspace{5cm} \cdot \prod_{h'\in \cS \backslash \{h\}} \langle v_l^\head{h'}, v_{L+1}^\head{h'} \rangle b_l \biggr]
    (e_{M+1-h} - e_{M+1-i}). 
\end{align} 
By the triangular inequality,  we have 
\begin{align}
&\big| (g_{h,0} - g_{h,1})^\top \left( e_{M+1-h} - e_{M+1-i} \right)\big|
= \bigg| \sum_{\cS\in [H]_{\le D}\setminus\{\cS^\star\} \atop \text{ s.t } h \in \cS} p_\cS \cdot \rho(\cS) - \rho(\cS^\star) \bigg| \\
&\hspace{3cm} \le (1 - p_{\cS^\star}) \cdot \big| \rho(\cS^\star)\big| + \sum_{\cS\in [H]_{\le D}\setminus\{\cS^\star\} } p_\cS \cdot \left| \rho(\cS)\right| \le 16 (1-p_{\cS^\star}), 
\label{eq:gh0_approx1}
\end{align}
where in the last line we use the claim that $|\rho(\cS)| \le 8$ for all $\cS$. 
To see this point, note that by definition of $b_l$ in \eqref{eq:def-b_l}, we have
\begin{gather}
    |b_l^\top \left( e_{M+1-h} - e_{M+1-i} \right)| = \bigl|\langle v_{L+1}^\head{h}, x_{l-h} -x_{l-i} \rangle - \langle v_{l}^\head{h},x_{L+1-h} -x_{L+1-i} \rangle\bigr| \le 4, \\
    \bigg|\prod_{h'\in \cS \backslash \{h\}} \langle v_l^\head{h'}, v_{L+1}^\head{h'} \rangle\bigg| \le 1, 
\end{gather}
since $v_l^\h$ and $x_l$ have norm at most $1$. Then, by \cref{lem:boundedness} where we plug in the upper bound $4$ for the function $f(\cdot)$ in the lemma, we conclude that
\(
    \rho(\cS) \leq 8, \quad \forall \cS \in \HleqD \setminus \{\cS^\star\}.
\)
Define $\Delta_1 := 1-p_{\cS^\star}(t_1)$, and $\Delta_1\leq 1/L$ by the results
from Stage \RNum{1}.
Thus, we obtain
\begin{align}\label{eq:gh0_approx_result}
    \big| (g_{h,0} - g_{h,1})^\top (e_{M+1-h} - e_{M+1-i})\big| \leq 16\Delta_1
    \leq 16/L.
\end{align}

\item For the approximation of $g_{h,1}$ by $g_{h,2}$, we use the fact that $\sigma_l(as)\approx 1/L$ when $a$ is sufficiently small. 
Specifically, we also take the absolute bound for $f(\cdot)$ as $4$ in \Cref{lem:approximation1} and obtain 
\begin{align}\label{eq:gh1_gh2_approx_result}
    \left|(g_{h,1} - g_{h,2})^\top \left( e_{M+1-h} - e_{M+1-i} \right) \right| \leq \frac{32ad}{\varepsilon^2}.
\end{align}

\item For the approximation of $g_{h,2}$ by $g_{h,3}$, we use the fact that $\bar y(k) \approx \mu^\pi(e_k)$ for large $L$.
More precisely, it follows from \Cref{lem:approximation2} with the upper bound $4$ for $f(\cdot)$ in the lemma that
\begin{align}
    & \left|(g_{h,2} - g_{h,3})^\top \left( e_{M+1-h} - e_{M+1-i} \right) \right| \leq 16 \cdot \frac{(1-\lambda)^{-1/2}(D_{\chi^2}(\mu_0\,\|\, \mu^\pi) + 1)^{1/4} + 2\sqrt{M}}{L^{1/2}\gamma} + 4\gamma^{-1} \varepsilon.
\end{align}

\item Finally, to go from $g_{h,3}$ to $g_{h,4}$, we leverage the rapid mixing of the Markov chain. Intuitively, when $l$ and $L+1$ are far apart, $x_{l} $ and its parents in $\cS^\star$ are independent of $x_{L+1}$ and its parents in $\cS^\star$. 
This observation yields the approximation of $g_{h,3}^\top \left( e_{M+1-h} - e_{M+1-i} \right)$ by $g_{h,4}^\top \left( e_{M+1-h} - e_{M+1-i} \right)$.
To simplify the notation, define two scalars
\begin{align}
    \tilde g_{h,l} &:= \biggl(\sum_{k=1}^d\frac{\ind(x_{L+1} = x_l = e_k)}{\mu^\pi(e_k)} - 1 \biggr)
    \prod_{h'\in \cS^\star \backslash \{h\}} \langle v_l^\head{h'}, v_{L+1}^\head{h'} \rangle , \\  
    \tilde g_{h,4} &:=  \biggl(\sum_{k=1}^d \frac{\ind(x = z = e_k)}{\mu^\pi(e_k)} - 1 \biggr)
    \prod_{h'\in \cS^\star \backslash \{h\}} \langle v^\hprime(Z), v^\hprime(X) \rangle .
\end{align}
Using the notation above, we have 
\begin{align}
    &\left|(g_{h,3} - g_{h,4})^\top \big( e_{M+1-h} - e_{M+1-i} \big) \right| \\
    &\quad  = \bigg| \bigg(\sum_{l=M+1}^{L} \frac{\EE_{X\mid\pi} \bigl[ \tilde g_{h,l} b_l^\top\bigr] }{L-M}  
    - \EE_{(x,X), (z,Z) \sim \mu^\pi \otimes \mu^\pi} \big[ \tilde g_{h,4} b(X,Z)^\top\big]\bigg) \big( e_{M+1-h} - e_{M+1-i} \big)  \bigg|.
\end{align}
Recall that 
\begin{align} 
    b_l^\top \left( e_{M+1-h} - e_{M+1-i} \right) &= \langle v_{L+1}^\head{h}, x_{l-h} -x_{l-i} \rangle - \langle v_{l}^\head{h},x_{L+1-h} -x_{L+1-i} \rangle, \\
    b(X, Z)^\top \left( e_{M+1-h} - e_{M+1-i} \right) &= \langle v^\head{h}(X), z_{-h} - z_{-i} \rangle - \langle v^\head{h}(Z), x_{-h} - x_{-i} \rangle.
\end{align}
We apply the triangular inequality to obtain that 
\begin{align}
    &\left|(g_{h,3} - g_{h,4})^\top \left( e_{M+1-h} - e_{M+1-i} \right) \right|  \\
    &\qquad \le \left| \frac{1}{L-M}\sum_{l=M+1}^{L} \EE_{X\mid\pi} \big[ \tilde g_{h,l} \langle v_{L+1}^\head{h}, x_{l-h} \rangle  \big] 
    - \EE_{(x,X), (z,Z) \sim \mu^\pi \otimes \mu^\pi} \big[ \tilde g_{h,4} \langle v^\h(Z), x_{-h} \rangle \big] \right| \\
    &\qquad\qquad + \left| \frac{1}{L-M}\sum_{l=M+1}^{L} \EE_{X\mid\pi} \big[ \tilde g_{h,l} \langle v_{L+1}^\head{h}, x_{l-i} \rangle  \big] 
    - \EE_{(x,X), (z,Z) \sim \mu^\pi \otimes \mu^\pi} \big[ \tilde g_{h,4} \langle v^\h(Z), x_{-i} \rangle \big] \right| \\
    &\qquad\qquad + \left| \frac{1}{L-M}\sum_{l=M+1}^{L} \EE_{X\mid\pi} \big[ \tilde g_{h,l} \langle v_{l}^\head{h}, x_{L+1-h} \rangle  \big] 
    - \EE_{(x,X), (z,Z) \sim \mu^\pi \otimes \mu^\pi} \big[ \tilde g_{h,4}\langle v^\h(X), z_{-h} \rangle \big] \right| \\
    &\qquad\qquad + \left| \frac{1}{L-M}\sum_{l=M+1}^{L} \EE_{X\mid\pi} \big[ \tilde g_{h,l} \langle v_{l}^\head{h}, x_{L+1-i} \rangle  \big] 
    - \EE_{(x,X), (z,Z) \sim \mu^\pi \otimes \mu^\pi} \big[ \tilde g_{h,4} \langle v^\h(X), z_{-h} \rangle \big] \right|.
\end{align}
Each term on the right-hand side can be bounded by \Cref{lem:approximation3}, where in the lemma we take $(\sigma^\h)_{h'\in\cS^\star} \in \RR^{M\times|\cS^\star|}$ and $((\sigma^\head{h'})_{h'\in\cS^\star\setminus\{h\}}, e_h)\in \RR^{M\times|\cS^\star|}$ as the two lists of vectors on the $M$-dimensional probability simplex for $\tilde\sigma$ and $\sigma$ respectively.
Consequently, we have
\begin{align}
    \left|(g_{h,3} - g_{h,4})^\top \left( e_{M+1-h} - e_{M+1-i} \right) \right| \le  \frac{8M}{L\gamma} + \frac{16}{L (1-\lambda) \gamma^{|\cS|/2+r_n/2+1}}.
\end{align}
\end{itemize}
Combining the above results and setting $\varepsilon = 1/\sqrt{L}$, $a = a(0) \leq O(1/L^{3/2})$ and together with the conditions in \Cref{eq:L_condition-1}, we have
\begin{align}
    \left|(g_{h,0} - g_{h,4})^\top \left( e_{M+1-h} - e_{M+1-i} \right) \right|  =  |\cE| = O\bigg(\frac{1}{\sqrt{L (1-\lambda)   \gamma^{r_n+2}}} \bigg), 
\end{align}
where $O(\cdot)$ hides universal constants independent of the parameters of the model.
We remark that while the left hand side is a function of $t$, the upper bound is independent of $t$.
Then, we can rewrite \eqref{eq:diff_stage2} as
\begin{align}
    &\partial_t w_{-h}^\h - \partial_t w_{-i}^\h \\
    &\quad = a \cdot \EEpifromP\bigg[  \sigma_{-i}^\h \cdot g_{h,4}^\top \left(e_{M+1-h} - e_{M+1-i} \right) + (\sigma_{-h}^\h - \sigma_{-i}^\h) \cdot \sum_{j=1}^M \sigma_{-j}^\h \cdot g_{h,4}^\top  (e_{M+1-h}-e_{M+1-j})   \bigg]\\
    &\qqquad \qquad \pm a \bigg(\sigma_{-i}^\h + (\sigma_{-h}^\h-\sigma_{-i}^\h)
        \sum_{j=1,j\neq h}^M \sigma_{-j}^\h\bigg) \cdot |\cE|.
    \label{eq:diff_stage2_2}
\end{align}

\paragraph{Lower Bound for The Difference $\partial_t w_{-h}^\h - \partial_t w_{-i}^\h$}
To show $\partial_t w_{-h}^\h - \partial_t w_{-i}^\h >0$, we first derive the lower bound of $\EEpifromP [ g_{h,4}^\top \left( e_{M+1-h} - e_{M+1-i}  \right)]$
for any $i \neq h$. 
Since $(x,X)$ and $(z,Z)$ are independent and identically distributed, by the definition of $b(X,Z)$,
\begin{align}
    &\EE_{\pi\sim\cP}\left[g_{h,4}^\top \left( e_{M+1-h} - e_{M+1-i}  \right)\right]\\
    &\quad= 2\EE_{\pi, (x,X), (z,Z) \sim \mu^\pi \otimes \mu^\pi} \biggl[ \sum_{k=1}^d\biggl(\frac{\ind(x = z = e_k)}{\mu^\pi(e_k)} - 1 \biggr) \prod_{h'\in \cS^\star \backslash \{h\}}\langle v^\hprime(Z), v^\hprime(X) \rangle \cdot  \langle v^\h(X), z_{-h} \rangle \biggr]\\
    &\qquad - 2\EE_{\pi, (x,X), (z,Z) \sim \mu^\pi \otimes \mu^\pi}\biggl[ \sum_{k=1}^d\biggl(\frac{\ind(x = z = e_k)}{\mu^\pi(e_k)} - 1 \biggr) \prod_{h'\in \cS^\star \backslash \{h\}} \langle v^\hprime(Z), v^\hprime(X) \rangle \cdot  \langle v^\h(X), z_{-i} \rangle \biggr]\\
    &\quad= 2\tau_{h,1} - 2\tau_{h,2},
\end{align}
where we introduce the following quantities for convenience: 
\begin{align}
    \tau_{h,1} &:= \EE_{\pi, (x,X), (z,Z) \sim \mu^\pi \otimes \mu^\pi} \bigg[ \sum_{k=1}^d\biggl(\frac{\ind(x = z = e_k)}{\mu^\pi(e_k)} - 1 \biggr)
    \prod_{h'\in \cS^\star \backslash \{h\}}  \langle v^\hprime(Z), v^\hprime(X) \rangle \cdot  \langle v^\h(X), z_{-h} \rangle \bigg], \\
    \tau_{h,2} &:= \EE_{\pi, (x,X), (z,Z) \sim \mu^\pi \otimes \mu^\pi}\bigg[ \sum_{k=1}^d\biggl(\frac{\ind(x = z = e_k)}{\mu^\pi(e_k)} - 1 \biggr)
    \prod_{h'\in \cS^\star \backslash \{h\}}  \langle v^\hprime(Z), v^\hprime(X) \rangle \cdot  \langle v^\h(X), z_{-i} \rangle \bigg].
\end{align}
The quantities  $\tau_{h,1}$ and $\tau_{h,2}$ can be further approximated. Specifically, by applying \Cref{lem:approximation4} to $\tau_{h,1}$, , where in the lemma we take $(\sigma^\h)_{h'\in\cS^\star} \in \RR^{M\times|\cS^\star|}$ and $((\sigma^\head{h'})_{h'\in\cS^\star\setminus\{h\}}, e_h)\in \RR^{M\times|\cS^\star|}$ as the two lists of vectors on the $M$-dimensional probability simplex for $\sigma$ and $\tilde\sigma$ respectively, and we obtain
\begin{align}
  \bigg| \tau_{h,1} - \prod_{h'\in \cS^\star \backslash \{h\}} (\sigma_{-h'}^\hprime)^2 \cdot \sigma_{-h}^\h \cdot \tilde I_{\chi^2}(\cS^\star) \bigg| \leq         
  \bigg(1 - \prod_{h'\in \cS^\star \backslash \{h\}} (\sigma_{-h'}^\hprime)^2 \cdot \sigma_{-h}^\h  \bigg) \tilde I_{\chi^2}(\cS^\star). \label{eq:tau_1_bound}
\end{align}
Drawing on the analagous reasoning as in the proof of \Cref{lem:approximation4}, we can approximate $\tau_{h,2}$ as follows: 
\begin{align}
\bigg| \tau_{h,2} - \prod_{h'\in \cS^\star \backslash \{h\}} (\sigma_{-h'}^\hprime)^2 \cdot \sigma_{-h}^\h \cdot \psi \bigg| \leq         
   \bigg(1 - \prod_{h'\in \cS^\star \backslash \{h\}} (\sigma_{-h'}^\hprime)^2 \cdot \sigma_{-h}^\h  \bigg) \tilde I_{\chi^2}(\cS^\star),\label{eq:tau_2_bound}
\end{align}
where 
\begin{align}
    \psi := \EE_{\pi, (x,X), (z,Z) \sim \mu^\pi \otimes \mu^\pi} \biggl[  \prod_{h'\in \cS^\star \backslash \{h\}} \ind(x_{-h'} = z_{-h'}) \cdot \ind(x_{-h}=z_{-i}) \cdot \biggl(\sum_{k=1}^d\frac{\ind(x = z = e_k)}{\mu^\pi(e_k)} - 1 \biggr) \biggr].
\end{align}

To establish the lower bound for $\tau_{h,1} - \tau_{h,2}$, let us begin by establishing an upper bound for $\psi$, which is approximately equal to $\tau_{h,2}$.   
We invoke \Cref{lem:cross_mutual} with $\cS = \cS^\star$ and $\cS' = \cS^\star \backslash \{h\} \cup \{i\}$ in the lemma to obtain  
\begin{align}
    \psi \le \frac{1}{2}\tilde I_{\chi^2}(\cS^\star) 
    + \frac{1}{2} \tilde I_{\chi^2}(\cS^\star\backslash \{h\}\cup \{i\})
    \le \tilde I_{\chi^2}(\cS^\star) - \frac{1}{2} \cdot \Delta \tilde I_{\chi^2}, \quad \forall i\neq h
\end{align}
Leveraging this for \eqref{eq:tau_1_bound} and \eqref{eq:tau_2_bound},
\begin{align}
    2\tau_{h,1} - 2\tau_{h,2} &\geq \prod_{h'\in \cS^\star \backslash \{h\}} (\sigma_{-h'}^\hprime)^2 \cdot \sigma_{-h}^\h \cdot \Delta \tilde I_{\chi^2} - 4 \bigg(1 - \prod_{h'\in \cS^\star \backslash \{h\}} (\sigma_{-h'}^\hprime)^2 \cdot \sigma_{-h}^\h  \bigg) \tilde I_{\chi^2}(\cS^\star) \\
    & \geq \prod_{h\in \cS^\star} (\sigma_{-h}^\h)^2  \cdot \Delta \tilde I_{\chi^2} - 4 \bigg(1 - \prod_{h\in \cS^\star} (\sigma_{-h}^\h)^2 \bigg ) \tilde I_{\chi^2}(\cS^\star), \label{eq:diff_tau}
\end{align}
where in the second line we multiply an additional $\sigma_{-h}^\h$ to the product as $\sigma_{-h}^\h\in [0,1]$.

Next, we provide a lemma showing that $\partial_t \sigma_{-h}^\h$ is growing for all time $t\geq t_1$, where $t_1$ is the starting time of the second stage.
\begin{lemma}[Reinforced Growth of $\sigma_{-h}^\h$]\label{lem:reinforced_growth}
    For all $h\in \cS^\star$, we have for all $i\neq h$ at any $t\geq t_1$:
    \begin{align} 
        \partial_t \sigma_{-h}^\h > 0, \quad \text{and}\quad \partial_t \log\sigma_{-h}^\h - \partial_t \log\sigma_{-i}^\h = \partial_t w_{-h}^\h - \partial_t w_{-i}^\h >0.
        \label{eq:sigma_growth-stage-2}
    \end{align} 
\end{lemma}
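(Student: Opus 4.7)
}
The plan is a continuation (bootstrap) argument: I will first verify the positivity claim $\partial_t w_{-h}^\h - \partial_t w_{-i}^\h >0$ at $t=t_1$, then show this positivity is self-reinforcing so it persists for all $t\ge t_1$. The two identities in the lemma will fall out from the standard softmax algebra once positivity of the $w$-differences is in hand.

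I would start with the purely algebraic observation that, from $\sigma_{-j}^\h=\exp(w_{-j}^\h)/\sum_{k}\exp(w_{-k}^\h)$, one has $\partial_t\log\sigma_{-j}^\h = \partial_t w_{-j}^\h - \sum_{k}\sigma_{-k}^\h \partial_t w_{-k}^\h$ for every $j\in[M]$. Subtracting the two expressions for $j=h$ and $j=i$ gives $\partial_t\log\sigma_{-h}^\h - \partial_t\log\sigma_{-i}^\h = \partial_t w_{-h}^\h - \partial_t w_{-i}^\h$, establishing the equality in the second assertion. Moreover, the same identity yields
\begin{align}
    \partial_t\sigma_{-h}^\h = \sigma_{-h}^\h \sum_{j\neq h}\sigma_{-j}^\h\bigl(\partial_t w_{-h}^\h - \partial_t w_{-j}^\h\bigr),
\end{align}
so both assertions of the lemma will follow once I prove $\partial_t w_{-h}^\h - \partial_t w_{-j}^\h>0$ for every $j\neq h$ and every $t\ge t_1$.

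For positivity I would combine \eqref{eq:diff_stage2_2} with the lower bound $2\tau_{h,1}-2\tau_{h,2} \ge \prod_{h'\in\cS^\star}(\sigma_{-h'}^\hprime)^2\cdot \Delta\tilde I_{\chi^2} - 4\bigl(1-\prod_{h'\in\cS^\star}(\sigma_{-h'}^\hprime)^2\bigr)\tilde I_{\chi^2}(\cS^\star)$ from \eqref{eq:diff_tau}, applied not only to the $g_{h,4}^\top(e_{M+1-h}-e_{M+1-i})$ summand but also to every $g_{h,4}^\top(e_{M+1-h}-e_{M+1-j})$ appearing in the $(\sigma_{-h}^\h-\sigma_{-i}^\h)\sum_j\sigma_{-j}^\h(\cdots)$ piece (the $j=h$ contribution is exactly zero). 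Under the initialization gap $\Delta w$ of \Cref{asp:initialization} together with the consequence $\prod_{h'\in\cS^\star}(\sigma_{-h'}^\hprime)^2\ge (1+\Delta\tilde I_{\chi^2}/(14\tilde I_{\chi^2}(\cS^\star)))^{-1}$ derived in the same way as \eqref{eq:prod_sigma-1}, the signal term is at least a constant multiple of $\Delta\tilde I_{\chi^2}$ while $|\cE|=O((L(1-\lambda)\gamma^{r_n+2})^{-1/2})$ is dominated under the sequence-length conditions in \eqref{eq:L_condition-1}. Assuming inductively that $\sigma_{-h}^\h\ge\sigma_{-i}^\h$ for all $i$ (which holds at $t=t_1$ by \Cref{asp:initialization} and the fact that Stage I leaves the $w$'s untouched), the second bracketed term in \eqref{eq:diff_stage2_2} contributes non-negatively, and the first contributes $\sigma_{-i}^\h$ times a positive quantity, delivering $\partial_t w_{-h}^\h - \partial_t w_{-i}^\h>0$.

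The main obstacle is closing the bootstrap: I need to make sure the lower bound on $\prod_{h'\in\cS^\star}(\sigma_{-h'}^\hprime)^2$ is maintained at every $t\ge t_1$, not only at the starting point. The clean way is to argue by continuity on the interval $[t_1,T)$ on which $\sigma_{-h}^\h\ge \sigma_{-i}^\h$ for every $h\in\cS^\star$ and every $i\neq h$: on this interval the positivity derivation above goes through, hence $w_{-h}^\h-w_{-i}^\h$ is strictly increasing, which monotonically improves the softmax gap and therefore only increases $\prod_{h'\in\cS^\star}(\sigma_{-h'}^\hprime)^2$. This rules out $T<\infty$ by a standard openness/closedness argument, so positivity and the stated identities hold on all of $[t_1,\infty)$. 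A subtle point I would verify carefully is that for each $j\neq h$ the auxiliary quantity $g_{h,4}^\top(e_{M+1-h}-e_{M+1-j})$ admits the same lower bound as the $i$-case via the Cauchy–Schwarz estimate of \Cref{lem:cross_mutual} applied to the swap $\cS^\star\mapsto \cS^\star\setminus\{h\}\cup\{j\}$, since these cross terms are what ultimately feed the monotone growth of $\sigma_{-h}^\h$ against every competitor coordinate simultaneously.
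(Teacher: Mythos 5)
Your proposal is correct and takes essentially the same approach as the paper's proof: the paper also establishes the positivity claim as a stand-alone statement (its Proposition~\ref{prop:dw-diff>0}, which you re-derive inline from \eqref{eq:diff_stage2_2}, \eqref{eq:diff_tau}, and \Cref{lem:cross_mutual}), and then closes the loop with a minimal-counterexample contradiction that is logically equivalent to your openness/closedness continuation argument — both hinge on the observation that, while the sign conditions persist, the gaps $w_{-h}^\h - w_{-i}^\h$ strictly increase, so the product $\prod_{h'\in\cS^\star}(\sigma_{-h'}^\hprime)^2$ can only improve and the hypotheses cannot first fail at a finite time.
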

\begin{proof}
    See \Cref{sec:proof_stage2_add} for the proof.
\end{proof}
In the proof of \Cref{lem:reinforced_growth}, we will use the following useful proposition. 
\begin{proposition}\label{prop:dw-diff>0}
    Suppose $\prod_{h\in\cS^\star} (\sigma_{-h}^\h)^2 \geq 1/(1+(M-1)\exp(-\Delta w))^{2|\cS^\star|}$ with $\Delta w$ satisfying \eqref{eq:def_delta_w}, and $\sigma_{-h}^\h > \sigma_{-i}^\h$ for any $i\neq h, h\in\cS^\star$ at a given time $t$. 
    Suppose \Cref{asp:Markov_chain} holds and $L$ satisfies \eqref{eq:L_condition-1}. It holds that
    \begin{equation}
    \begin{gathered}\label{eq:diff_stage2_bound_2}
        \partial_t \log\sigma_{-h}^\h - \partial_t \log\sigma_{-i}^\h = \partial_t w_{-h}^\h - \partial_t w_{-i}^\h \ge \frac{a \Delta\tilde I_{\chi^2}}{2}
        \bigg(\sigma_{-i}^\h + (\sigma_{-h}^\h - \sigma_{-i}^\h) 
        \sum_{j=1,j\neq h}^M \sigma_{-j}^\h\bigg)>0, \\
        \partial_t \sigma_{-h}^\h >0, \quad \quad \forall i\neq h, \quad \forall h\in \cS^\star.
    \end{gathered}
\end{equation}
\end{proposition}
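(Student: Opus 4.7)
The plan is to start from the simplified dynamics in \eqref{eq:diff_stage2_2}, which expresses $\partial_t w_{-h}^\h - \partial_t w_{-i}^\h$ as a weighted combination of $\EEpifromP[g_{h,4}^\top (e_{M+1-h} - e_{M+1-j})]$ for $j\neq h$, plus an error term bounded by $a|\cE| \cdot (\sigma_{-i}^\h + (\sigma_{-h}^\h - \sigma_{-i}^\h)\sum_{j\neq h}\sigma_{-j}^\h)$. Observe that in the inner sum $\sum_{j=1}^M \sigma_{-j}^\h \cdot g_{h,4}^\top(e_{M+1-h}-e_{M+1-j})$ the $j=h$ term vanishes, so only indices $j\neq h$ contribute. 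The key input is the uniform lower bound $\EEpifromP[g_{h,4}^\top(e_{M+1-h}-e_{M+1-j})] \geq P \cdot \Delta\tilde I_{\chi^2} - 4(1-P)\,\tilde I_{\chi^2}(\cS^\star)$, where $P := \prod_{h'\in\cS^\star}(\sigma_{-h'}^\hprime)^2$; this was established in \eqref{eq:diff_tau} and, crucially, it is index-free in $j$, so it applies to every summand.

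Next, I will translate the assumption on $\Delta w$ in \eqref{eq:def_delta_w} into a quantitative lower bound on $P$. The softmax identity $\sigma_{-h'}^\hprime \ge 1/(1+(M-1)e^{-\Delta w})$ combined with the given hypothesis $P \geq 1/(1+(M-1)e^{-\Delta w})^{2|\cS^\star|}$ yields, after a short algebraic rearrangement that unpacks the logarithm in \eqref{eq:def_delta_w} and uses $|\cS^\star|\leq H$, the bound $P \geq 1/(1+\Delta\tilde I_{\chi^2}/(14\,\tilde I_{\chi^2}(\cS^\star)))$. Substituting this into the signal bound and simplifying (noting that $\Delta\tilde I_{\chi^2}\leq \tilde I_{\chi^2}(\cS^\star)$ so the denominator is at most $15/14$), I obtain a clean bound of the form $\EEpifromP[g_{h,4}^\top(e_{M+1-h}-e_{M+1-j})] \geq (2/3)\,\Delta\tilde I_{\chi^2}$ for every $j\neq h$.

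I will then combine this signal bound with the error term. Under the condition \eqref{eq:L_condition-1}, the error magnitude $|\cE|$ is of order $O(1/\sqrt{L(1-\lambda)\gamma^{r_n+2}})$, which is smaller than $\Delta\tilde I_{\chi^2}/12$. Substituting the uniform signal bound into both occurrences of $g_{h,4}^\top(\cdot)$ in \eqref{eq:diff_stage2_2} and factoring out the common prefactor $\sigma_{-i}^\h + (\sigma_{-h}^\h-\sigma_{-i}^\h)\sum_{j\neq h}\sigma_{-j}^\h$ yields the desired lower bound
\begin{align}
    \partial_t w_{-h}^\h - \partial_t w_{-i}^\h \;\geq\; \frac{a\,\Delta\tilde I_{\chi^2}}{2}\bigg(\sigma_{-i}^\h + (\sigma_{-h}^\h - \sigma_{-i}^\h)\sum_{j\neq h}\sigma_{-j}^\h\bigg).
\end{align}
Strict positivity of the bracketed prefactor follows from $\sigma_{-h}^\h > \sigma_{-i}^\h$ and $\sigma_{-i}^\h>0$, together with the fact that the second sum contains at least the term $\sigma_{-i}^\h>0$.

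Finally, for $\partial_t \sigma_{-h}^\h > 0$, I will use the standard softmax-derivative identity
\begin{align}
    \partial_t \sigma_{-h}^\h = \sigma_{-h}^\h \sum_{j\neq h}\sigma_{-j}^\h\,\big(\partial_t w_{-h}^\h - \partial_t w_{-j}^\h\big),
\end{align}
each term of which is positive by the first conclusion applied with $i=j$. The main subtlety in the argument is tracking how the constants from \eqref{eq:def_delta_w} propagate through the bound on $P$ so that the signal $P\,\Delta\tilde I_{\chi^2} - 4(1-P)\,\tilde I_{\chi^2}(\cS^\star)$ remains a constant multiple of $\Delta\tilde I_{\chi^2}$ and comfortably dominates $|\cE|$; once that calculation is pinned down, the remaining steps are bookkeeping.
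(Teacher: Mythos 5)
Your proposal is correct and follows essentially the same route as the paper's proof: start from \eqref{eq:diff_stage2_2}, apply the index-free lower bound from \eqref{eq:diff_tau} together with the translation of the $\Delta w$ hypothesis into a lower bound on $P=\prod_{h'\in\cS^\star}(\sigma_{-h'}^{(h')})^2$ (yielding the $\tfrac{2}{3}\Delta\tilde I_{\chi^2}$ signal bound), and then absorb $|\cE|$. The only variation is in the final claim $\partial_t\sigma_{-h}^{(h)}>0$: the paper argues by observing that $\sum_i\partial_t\sigma_{-i}^{(h)}=0$ forces the coordinate with the largest log-derivative to increase, whereas you invoke the explicit identity $\partial_t\sigma_{-h}^{(h)}=\sigma_{-h}^{(h)}\sum_{j\neq h}\sigma_{-j}^{(h)}(\partial_t w_{-h}^{(h)}-\partial_t w_{-j}^{(h)})$ and note each summand is positive — a slightly more direct but equivalent argument.
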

\begin{proof}
    See \Cref{sec:proof_stage2_add} for the proof.
\end{proof}

\Cref{lem:reinforced_growth} implies that during Stage \RNum{2}, for all $i\neq h$ and $ h\in\cS^\star$, we have $w_{-h}^\h > w_{-i}^\h$ and $\sigma_{-h}^\h >\sigma_{-i}^\h$ for all $t\geq t_1$.
In addition, as $\sigma_{-h}^\h$ is growing, all the conditions in \Cref{prop:dw-diff>0} are satisfied for any $t\geq t_1$, and hence all the conclusions in \eqref{eq:diff_stage2_bound_2}.

\paragraph{Convergence of $\sigma^\h$}
Finally, we characterize the convergence rate of \( \sigma^\h \).
For the convergence analysis, we adhere to the convention used in the previous stage, treating all model parameters as functions of the training time \( t \), where \( t = t_1 \) marks the start of the second stage.
With a slight abuse of notation, we denote by \( \sigma_{-i}^\h(t) \) the value of \( \sigma_{-i}(w^\h(t)) \) at time \( t \), where \( w^\h(t) \) is the input to the softmax function, and \( \sigma_{-i}(\cdot) \) refers to the \((M+1-i)\)-th element of the softmax probability.
For simplicity, we sometimes omit the time index \( t \) when the context makes it clear.

Note that $\partial_t \sigma_{-h}^\h > 0$ for all $h \in \cS^\star$.
Hence by the definition of the softmax operation, we have
\begin{align}
    \sigma_{-i}^\h 
    &= \sigma_{-h}^\h \cdot \exp(- (w_{-h}^\h - w_{-i}^\h)) \ge \sigma_{-h}^\h(t_1) \cdot \exp(- (w_{-h}^\h - w_{-i}^\h) \\
    &=\sigma_{-h}^\h(0) \cdot \exp(- (w_{-h}^\h - w_{-i}^\h)), \label{eq:sigmah_lb}
\end{align}
where the first inequality follows from the monotone growth of $\sigma_h^\h$, and 
the second line follows from the fact that the first attention layer is untouched during the first stage.
Note that here in \eqref{eq:sigmah_lb}, $\sigma^\h $ and $w^\h $ are functions of $t$. 
Now, putting together \eqref{eq:diff_stage2_bound_2} and \eqref{eq:sigmah_lb}, and also noting that $\sigma_{-h}^\h > \sigma_{-i}^\h$ for all $i\neq h$ and $h\in\cS^\star$,
it follows that 
\begin{align}
    \partial_t w_{-h}^\h - \partial_t w_{-i}^\h 
    \geq \frac{a\Delta\tilde I_{\chi^2}}{2} \sigma_{-i}^\h
    \geq \frac{a \Delta \tilde I_{\chi^2}}{2} \cdot  \sigma_{-h}^\h(0) \cdot \exp(- (w_{-h}^\h - w_{-i}^\h)) . 
\end{align}
Rearranging the terms, and using the fact that $w_{-h}^\h(t_1)-w_{-i}^\h(t_1) \geq \Delta w$ by \Cref{asp:initialization}, we get
\begin{align}
    \exp\Big(w_{-h}^\h(t) - w_{-i}^\h(t)\Big) &\geq \frac{a \Delta \tilde I_{\chi^2}\cdot \sigma_{-h}^\h(0)}{2} \cdot (t-t_1)+ \exp(\Delta w).
\end{align}
This yields a lower bound for $\sigma_{-h}^\h(t)$ as follows: 
\begin{align}
\sigma_{-h}^\h(t) = \frac{1}{1+ \sum_{i \neq h} \exp( w_{-i}^\h(t) - w_{-h}^\h(t) )} \geq \frac{1}{1+(M-1)\cdot (a\Delta \tilde I_{\chi^2}\cdot \sigma_{\min}(0)\cdot (t-t_1)/2 + \exp(\Delta w))^{-1}},
\end{align}
where we define $\sigma_{\min}(0) := \min_{h \in \cS^\star} \sigma_{-h}^\h(0)$. Consequently, we have
\begin{align}
    1 - \prod_{h\in \cS^\star} (\sigma_{-h}^\h(t))^2 
    &\leq 1- \bigg(\frac{1}{1+(M-1)\cdot (a\Delta \tilde I_{\chi^2}\cdot \sigma_{\min}(0)\cdot (t-t_1)/2 + \exp(\Delta w))^{-1}} \bigg)^{2|\cS^\star|} \\
    &= 1- \bigg( 1 - \frac{(M-1)}{(a\Delta \tilde I_{\chi^2}\cdot \sigma_{\min}(0)\cdot (t-t_1)/2 + \exp(\Delta w)) +(M-1) }\bigg)^{2|\cS^\star|}. 
\end{align}
Now, we consider large $t $ such that 
\begin{align}
    \frac{(M-1)}{(a\Delta \tilde I_{\chi^2}\cdot \sigma_{\min}(0)\cdot (t-t_1)/2 + \exp(\Delta w)) +(M-1) } < \frac{1}{2|\cS^\star|}.
\end{align}
Then, we can apply the inequality $(1 - x)^n \ge 1 - nx$ for $x \in [0,1/n]$ and $n \ge 1$ to obtain
\begin{align}
    1 - \prod_{h\in \cS^\star} (\sigma_{-h}^\h(t))^2 
    &\leq \frac{2|\cS^\star|\cdot (M-1)}{a\Delta \tilde I_{\chi^2}\cdot \sigma_{\min}(0)\cdot (t-t_1)/2 + \exp(\Delta w)+(M-1)}.
\end{align} 
Therefore, with training time $t_2 = 4L|\cS^\star|\cdot (M-1)/a\Delta \tilde I_{\chi^2}\cdot \sigma_{\min}(0) + t_1$, we can ensure that
\begin{align}
    1 - \prod_{h\in \cS^\star} (\sigma_{-h}^\h(t_2))^2 \leq  L^{-1}.
\end{align}
This completes the proof for Stage \RNum{2}.
\end{proof}

\subsubsection{Additional Proofs for Stage \RNum{2}}
\label{sec:proof_stage2_add}
We conclude this subsection with the proof of \Cref{lem:reinforced_growth} and \Cref{prop:dw-diff>0}.
\begin{proof}[Proof of \Cref{prop:dw-diff>0}]
    The condition $\prod_{h\in\cS^\star} (\sigma_{-h}^\h)^2 \geq 1/(1+(M-1)\exp(-\Delta w))^{2|\cS^\star|}$ with $\Delta w$ in \eqref{eq:def_delta_w} implies that
    \begin{align}
        \prod_{h\in\cS^\star} (\sigma_{-h}^\h)^2 \geq \biggl(1 + \frac{\Delta \tilde I_{\chi^2}}{14 \tilde I_{\chi^2}(\cS^\star)}\biggr)^{-1}\geq \frac{4\tilde I_{\chi^2}(\cS^\star) + \frac{2}{3}\Delta \tilde I_{\chi^2}}{ 4\tilde I_{\chi^2}(\cS^\star) + \Delta \tilde I_{\chi^2}}. \label{eq:sigma_lb2}
    \end{align}
    Combining \eqref{eq:diff_tau} and \eqref{eq:sigma_lb2} yields
    \begin{align}
        \EE_{\pi\sim\cP}\left[g_{h,4}^\top \left( e_{M+1-h} - e_{M+1-i}  \right)\right] =  2\tau_{h,1} - 2\tau_{h,2} \geq \frac{2}{3} \Delta \tilde I_{\chi^2} \label{eq:diff_tau_lower}
    \end{align}
    for any $i \neq h$. 
    Applying \eqref{eq:diff_tau_lower} to \eqref{eq:diff_stage2_2}, since each 
    $\sigma_{-i}^\h>0$ and $\sigma_{-h}^\h > \sigma_{-i}^\h$ at time $t$ for
    all $i\neq h, h\in\cS^\star$, it holds that
    \begin{align}\label{eq:diff_stage2_bound_1}
        \partial_t w_{-h}^\h - \partial_t w_{-i}^\h \geq 
        a\bigg(\sigma_{-i}^\h + (\sigma_{-h}^\h - \sigma_{-i}^\h) \cdot  
        \sum_{j=1,j\neq h}^M \sigma_{-j}^\h\bigg)\cdot 
        \bigg(\frac{2}{3}\Delta\tilde I_{\chi^2} - |\cE|\bigg).
    \end{align}
    Then since we assume a sufficiently large $L \geq 
    \Omega((\Delta \tilde I_{\chi^2}^2 (1-\lambda) \gamma^{r_n+2 })^{-1})$, it holds that
    $|\cE|\leq \Delta\tilde I_{\chi^2}/6$, we further have 
    \begin{align}
        \partial_t w_{-h}^\h - \partial_t w_{-i}^\h \geq 
        \frac{a \Delta\tilde I_{\chi^2}}{2}
        \bigg(\sigma_{-i}^\h + (\sigma_{-h}^\h - \sigma_{-i}^\h) 
        \sum_{j=1,j\neq h}^M \sigma_{-j}^\h\bigg)>0, \quad \forall i\neq h, \quad \forall h\in \cS^\star.
    \end{align}
    As $\partial_t \log\sigma_{-h}^\h - \partial_t \log\sigma_{-i}^\h = \partial_t w_{-h}^\h - \partial_t w_{-i}^\h>0$ by property of the softmax function, and $\sum_{i=1}^M \partial_t \sigma_{-i}^\h = 0$, we have $\partial_t \sigma_{-h}^\h > 0$ for all $h\in \cS^\star$.
    This completes the proof of \Cref{prop:dw-diff>0}.
\end{proof}

\begin{proof}[Proof of \Cref{lem:reinforced_growth}]
    We give a proof to \Cref{lem:reinforced_growth} by contradiction.
    Note that at the beginning of the second stage $t=t_1$, we have all the conditions for \Cref{prop:dw-diff>0} satisfied by the initialization conditions in \Cref{asp:initialization}.
    Then, by \Cref{eq:diff_stage2_bound_2} in \Cref{prop:dw-diff>0}, we have $\partial_t \log\sigma_{-h}^\h - \partial_t \log\sigma_{-i}^\h > 0$ and $\partial_t \sigma_{-h}^\h > 0$ for all $i\neq h$ and $h\in \cS^\star$ at $t=t_1$.

    Next, assume that $\tau>t_1$ is the smallest time such that at least $\partial_t \sigma_{-h}^\h \le 0$ or $\partial_t \log\sigma_{-h}^\h - \partial_t \log\sigma_{-i}^\h \le 0$ for some $i\neq h$ and $h\in \cS^\star$. 
    By definition of $\tau$, we have \eqref{eq:sigma_growth-stage-2} holds for any moment $t\in [t_1, \tau)$.
    As $\sigma_{-h}^\h$ and the gap $\sigma_{-h}^\h - \sigma_{-i}^\h$ are monotonically increasing, we have by the initialization condition and the boundedness of the gradient that at time $\tau$: 
    \begin{align}
        \prod_{h\in\cS^\star} (\sigma_{-h}^\h)^2 \geq 1/(1+(M-1)\exp(-\Delta w))^{2|\cS^\star|}, \quad\text{and}\quad \sigma_{-h}^\h > \sigma_{-i}^\h, \quad \forall i\neq h, \quad \forall h\in \cS^\star.
    \end{align}
    Hence, by \Cref{prop:dw-diff>0}, we have $\partial_t \log\sigma_{-h}^\h - \partial_t \log\sigma_{-i}^\h > 0$ and $\partial_t \sigma_{-h}^\h > 0$ for all $i\neq h$ and $h\in \cS^\star$ at time $\tau$, which contradicts the definition of $\tau$.
    This completes the proof of \Cref{lem:reinforced_growth}.
\end{proof}


\subsection{Analysis for Stage \RNum{3}}
\label{sec:proof_stage3}
In this section, we derive the dynamics of the second attention layer's weights $a$ in Stage \RNum{3}. 
We characterize the dynamics of $a$ when $a < O(\log L)$, where the signal term of the dynamics dominates the approximation error. 
We provide the growth rate of the weights for two regimes: when $a$ is either sufficiently small or large. 

\paragraph{Proof Strategy} 
We analyze the dynamics of $a$ via the following steps:
\begin{enumerate}
    \item \textbf{\color{OrangeRed}Dynamics Calculation.} First, we derive the explicit expression for the dynamics of $a$.
    \item \textbf{\color{OrangeRed}Dynamics Approximation.} We approximate the dynamics by exploiting the mixing properties of the Markov chain and the convergence of the weights from Stage \RNum{1} and \RNum{2}.
    \item \textbf{\color{OrangeRed}Lower and Upper Bound for The Growth Rate.} Finally, we establish the upper and lower bounds for the growth rate of $a$ when $a$ is either sufficiently small or large.
\end{enumerate}

For a set $\cS \subseteq [M]$, we denote $X_{l-\cS}:= (x_{l-s}: s\in\cS)$.
If $l=0$, we will ignore $l$ in the subscript and simply use $X_{-\cS}$. 
In this section, we abbreviate $p_\cS(t_1)$ after the first stage's training as $p_\cS$, and $\sigma_{-i}^\h(t_2)$ after the second stage's training as $\sigma_{-i}^\h$.

\begin{proof}[Proof of \Cref{thm:convergence}: Stage \RNum{3}]
We start with the explicit expression of the dynamics of $a$.
\paragraph{Calculation of The Dynamics of $a$}
First by the chain rule,
\begin{align}
    \frac{\partial\ell}{\partial a} = \sum_{l=M+1}^{L} \frac{\partial \ell}{\partial (a s_l)} \frac{\partial (a s_l)}{\partial a}  
    = -\sum_{l=M+1}^{L} \rbr{\frac{x_{L+1}}{y + \varepsilon \bm{1}}}^\top \rbr{x_l -y} \cdot \sigma_l(a s) \cdot s_l. 
\end{align}
where in the last equality we remind readers of the same procedure as we have used in the derivation of \eqref{eq:grad_s_l} in Stage \RNum{1}.
Then, taking expectation with respect to $X$ and $\pi$ and expanding 
$s_l= a \sum_{\cS\in\HleqD} p_{\cS} \prod_{h\in \cS} \langle v_l^\head{h}, v_{L+1}^\head{h} \rangle$, we have
\begin{align}
    \partial_t a &= -\frac{\partial\cL}{\partial a}
    =\EE\bigg[\sum_{l=M+1}^{L} \rbr{\frac{x_{L+1}}{y + \varepsilon \bm{1}}}^\top \rbr{x_l -y} \cdot \sigma_l\left(as\right) \cdot s_l \bigg] \\
    &= \EE\bigg[\sum_{\cS\in \HleqD}p_\cS \sum_{l=M+1}^{L} \sigma_l \sum_{k=1}^d \biggl(\frac{\ind(x_{L+1} = x_l = e_k)}{y(k) + \varepsilon} - \frac{y(k)\ind(x_{L+1} = e_k)}{y(k)+\varepsilon} \biggr) \prod_{h\in \cS} \langle v_l^\head{h}, v_{L+1}^\head{h} \rangle \bigg] =: f_0
\end{align}
We remind readers the shorthand $\sigma\equiv\sigma(as)$.
We denote the above quantity by $f_0$. 

\paragraph{Approximation of $\partial_t a$}
Similar to the analysis for the previous two stages, we develop a sequence of approximation steps that transforms 
$\partial_t a$ into a tractable quantity. We aim to decouple $x_{L+1} $ and $x_{l}$, approximate $s_{l}$ by a population version, and transform the expectation to one under the stationary distribution of the Markov chain. 
Specifically, the approximation involves the following steps:

\begin{itemize}
\item Our first step is to remove the summation over $\HleqD\backslash\{\cS^\star\}$ where $\cS^\star$ is the optimal set that maximizes the modified mutual information defined in \eqref{def:modified_chi_square_mi}.
This is because $c_{\cS^\star}$ dominates by the analysis of Stage \RNum{1}.
Specifically, we define
\begin{align}
    f_1 \defeq \EE\left[\sum_{l=M+1}^{L} \sigma_l \sum_{k=1}^d \biggl(\frac{\ind(x_{L+1} = x_l = e_k)}{y(k) + \varepsilon} - \frac{y(k)\ind(x_{L+1} = e_k)}{y(k)+\varepsilon} \biggr) \prod_{h\in \cS^\star} \langle v_l^\head{h}, v_{L+1}^\head{h} \rangle \right]. 
\end{align}
To bound $|f_0-f_1|$, note that for any $\cS\in \HleqD$, since each $v_l^\h$ has norm at most $1$, 
we can invoke \cref{lem:boundedness} with $C=1$ and obtain 
\begin{align}
    \left|\sum_{l=M+1}^{L} \sigma_l \sum_{k=1}^d \biggl(\frac{\ind(x_{L+1} = x_l = e_k)}{y(k) + \varepsilon} - \frac{y(k)\ind(x_{L+1} = e_k)}{y(k)+\varepsilon} \biggr) \prod_{h\in \cS^\star} \langle v_l^\head{h}, v_{L+1}^\head{h} \rangle \right| \le 2. 
\end{align} 

It follows that 
\begin{align}
    \!\!\! \!\!\! \!\!\! \left|f_0 - f_1\right| &= \EE\biggl[\sum_{\cS\in \HleqD\setminus\{\cS^\star\}}\!\!\!\!\!\!\! p_\cS \sum_{l=M+1}^{L} \sigma_l \sum_{k=1}^d \ind(x_{L+1} = e_k)\biggl(\frac{\ind(x_l = e_k)}{y(k) + \varepsilon} - \frac{y(k)}{y(k)+\varepsilon} \biggr) \prod_{h\in \cS} \langle v_l^\head{h}, v_{L+1}^\head{h} \rangle \biggr]\\
    &\quad + (1-p_{\cS^\star}) \left|\EE\biggl[\sum_{l=M+1}^{L} \sigma_l \sum_{k=1}^d \ind(x_{L+1} = e_k)\biggl(\frac{\ind(x_l = e_k)}{y(k) + \varepsilon} - \frac{y(k)}{y(k)+\varepsilon} \biggr) \prod_{h\in \cS^\star} \langle v_l^\head{h}, v_{L+1}^\head{h} \rangle \biggr]\right| \\
    &\leq 4(1 - p_{\cS^\star}(t_1)) 
    =2 \Delta_1, \quad \where \Delta_1\defeq (1 - p_{\cS^\star}(t_1)).
\end{align}
In summary, the difference between $f_0$ and $f_1$ is controlled by the convergence results from Stage \RNum{1}.

\item 
Our second step is to characterize the approximation error incurred by the difference between the ideal attention scores and the actual attention scores in the second attention layer.
Let us define $s_l^\star = \prod_{h\in\cS^\star} \ind(x_{l-h} = x_{L+1-h})$ as the ideal attention score for the second attention layer. We invoke \cref{lem:misspecification} to have for all $l\in[L]$, 
\begin{align}
    |s_l -  s_l^\star| \leq \Delta_1 + \Delta_2, \where \Delta_2 \defeq 1 - \prod_{h\in\cS^\star} (\sigma_{-h}^\h(t_2))^2.
    \label{eq:stage3_approx_1}
\end{align}
Corresponding to $\{s_l^\star\}_{l=M+1}^L$, we define
\begin{align}
    \sigma_l^\star := \frac{\exp\left(a \prod_{h\in\cS^\star}\ind(x_{l -  h} = x_{L+1-h})\right) }{\sum_{l'=M+1}^L \exp\left(a \prod_{h\in\cS^\star}\ind(x_{l' -  h} = x_{L+1-h})\right)}, \quad y^\star (k)\defeq \sum_{l=M+1}^{L} \sigma_l^\star \ind(x_l = e_k), \quad \forall k\in[d].
\end{align}
In the vector form, we have $y^\star = \sum_{l=M+1}^{L} \sigma_l^\star x_l$.  
Leveraging the above approximations, we define an approximation of $f_1$ as 
\begin{align}
    f_2  := \EE\bigg[\sum_{l=M+1}^{L} \sigma_l^\star \sum_{k=1}^d \biggl(\frac{\ind(x_{L+1} = x_l = e_k)}{y^\star(k) + \varepsilon} - \frac{y^\star(k)\ind(x_{L+1} = e_k)}{y^\star(k)+\varepsilon} \biggr) \prod_{h\in \cS^\star} \ind(x_{l-h}=x_{L+1-h}) \bigg]. 
\end{align}
Applying \cref{lem:approximation_error-3-1}, it holds that
\begin{align}
    |f_1 - f_2|\leq 12 \cdot (1  + a(t)\cdot \varepsilon^{-1}) \cdot (\Delta_1 + \Delta_2)
\end{align}
In summary, this error terms captures the difference between the ideal weights and the actual weights obtained by gradient flow at the end of Stage \RNum{2}.

\item Note that $y^\star(k)$ is also random due to the randomness in $\sigma_l^\star$, and as $L$ is sufficiently large, we want to replace $y^\star(k)$ with its population counterpart. 
Let $z\in \cX$ and $Z=(z_{-M}, \dots, z_{-1}) \in \cX^M$ be two random variables and we define similarly for $x\in \cX$ and $X=(x_{-M}, \dots, x_{-1}) \in \cX^M$.
To this end, we define a reweighed distribution
\begin{align}\label{eq:tilde_mu}
    \tilde \mu^\pi(z, Z\given X_{-\cS^\star}) = \frac{\mu^\pi(z, Z) \exp\left(
        a \prod_{h\in\cS^\star}\ind(z_{-  h} = x_{-h})
    \right)}{\sum_{z', Z'} \mu^\pi(z', Z') \exp\left(
        a \prod_{h\in\cS^\star}\ind(z_{-h}' = x_{-h})
    \right)}, 
\end{align}
where $\mu^\pi$ is the stationary distribution of the Markov chain over a window of size $M+1$. 
This can be viewed as a reweighting of the stationary distribution over $(z, Z)$ by an exponential term that depends on the sequence $X_{-\cS^\star}$.
We use  $\tilde \mu^\pi(z=e_k\given X_{L+1-\cS^\star}) $  to replace $y^\star (k)$ and 
define $f_3$ as
\begin{align}
    f_3 \defeq \EE\bigg[\sum_{l=M+1}^{L} \sigma_l^\star \sum_{k=1}^d \biggl(\frac{\ind(x_{L+1} = x_l = e_k)}{\tilde \mu^\pi(z = e_k\given X_{L+1-\cS^\star})} - {\ind(x_{L+1} = e_k)} \biggr) \prod_{h\in \cS^\star} \ind(x_{l-h}=x_{L+1-h}) \bigg].
\end{align}
One can immediately draw a connection to \cref{lem:approximation2} as both targets characterize the gap between the empirical and population distributions.
The only difference is that this time we have the distribution reweighed by some exponential term. 
For completeness, we provide the approximation result in \cref{lem:approximation5}, which bounds the difference between $f_2$ and $f_3$ as
\begin{align}
    |f_2 - f_3|\leq \frac{8 (1-\lambda)^{-1/2} (D_{\chi^2}(\mu_0\,\|\, \mu^\pi) + 1)^{1/4} + 8\sqrt{M}}{L^{1/2}\cdot \gamma^{|\cS^\star|+1}} + \frac{2d\varepsilon}{\gamma} \lesssim \frac{\sqrt M + d}{L^{1/2}(1-\lambda)^{1/2} \gamma^{|\cS^\star|+1 + r_n/4}}. 
\end{align}
where $\mu_0(\cdot)$ is the initial distribution for the first $r_n$ tokens in the Markov chain.
Here and in the sequel, we simply use $D_{\chi^2}(\mu_0 \,\|\, \mu^\pi)$ to denote $D_{\chi^2}(\mu_0(X_{1:r_n}=\cdot ) \,\|\, \mu^\pi(X_{1:r_n}=\cdot ))$ when it is clear from the context. 
In the last inequality, we use the fact that $D_{\chi^2}(\mu_0\,\|\, \mu^\pi) \le \gamma^{-r_n}$ by \eqref{eq:chi2_init_bound} and the condition $\varepsilon = L^{-1/2}$. 

\item 
Note that in the expression of $f_3$, each $\sigma_l^\star$ still implicitly depends on the actual value of the sequence $X$. 
Since $L$ is large and the Markov chain is well-mixed, we can approximate $\sum_{l=M+1}^{L} \sigma_l^\star \ind((x_l, X_{l-\cS^\star}) = (\cdot, \cdot))$ by $\tilde\mu^\pi(\cdot, \cdot\given X_{L+1-\cS^\star})$.
This gives rise to the following approximation of $f_3$:
\begin{align}
    \!\!\! f_4 
    &\defeq \EE_{\pi, X, Z\sim \tilde \mu^\pi(\cdot \given X_{L+1-\cS^\star})}\Biggl[\sum_{k=1}^d \biggl(\frac{\ind(x_{L+1} = z = e_k)}{\tilde \mu^\pi(z = e_k\given X_{L+1-\cS^\star})} - {\ind(x_{L+1} = e_k)}  \biggr) \cdot \ind(Z_{l-\cS^\star}=x_{L+1-\cS^\star})  \Biggr] \\
    &=\EE_{\pi, X, Z\sim \tilde \mu^\pi(\cdot \given X_{L+1-\cS^\star})}\Biggl[\sum_{k=1}^d \frac{\mu^\pi(x=e_k\given X_{-\cS^\star})\tilde\mu^\pi(z=e_k, Z_{-\cS^\star}=X_{-\cS^\star}\given X_{-\cS^\star})}{\tilde \mu^\pi(z=e_k\given X_{-\cS^\star})} \\
    & \hspace{8cm} -  \tilde\mu^\pi(Z_{-\cS^\star}=X_{-\cS^\star}\given X_{-\cS^\star}) \Biggr]
\end{align}
Applying \cref{lem:sigma_approx} yields
\begin{align}
    |f_3 - f_4| \le \!\!\!\!\! \sup_{\pi\in\supp(\cP)} \!\! \frac{8\gamma^{-1} (1-\lambda)^{-1/2} (D_{\chi^2}(\mu_0\,\|\, \mu^\pi) + 1)^{1/4} + 16\gamma^{-1} \sqrt{M}}{L^{1/2}\cdot \gamma^{|\cS^\star|+1}}\lesssim \frac{\sqrt M + d}{L^{1/2}(1-\lambda)^{1/2} \gamma^{|\cS^\star|+2+ r_n/4}}, 
\end{align}
where we use the fact that $D_{\chi^2}(\mu_0\,\|\, \mu^\pi) \le \gamma^{-r_n}$ by \eqref{eq:chi2_init_bound}.

\item 
Let $(z,Z)\sim \tilde\mu^\pi(\cdot \given X_{L+1-\cS^\star})$.
Since $L$ is large, the distribution of $(x_{L+1},X_{L+1-\cS^\star})$ is close to the stationary distribution $\mu^\pi$.
Thus, we introduce the following approximation of $f_4$:
\begin{align}
    f_5 &:= \EE_{\pi, (x, X_{-\cS^\star})\sim\mu^\pi,(z, Z)\sim \tilde\mu^\pi(\cdot\mid X_{-\cS^\star})} \left[\sum_{k=1}^d \biggl(\frac{\ind(x = z = e_k)}{\tilde \mu^\pi(e_k\given X_{-\cS^\star})} - {\ind(x = e_k)}  \biggr) \prod_{h\in \cS^\star} \ind(z_{-h}=x_{-h})  \right]\\
    &= \EE_{\pi, (x, X_{-\cS^\star})\sim \mu^\pi} \Biggl[\sum_{k=1}^d \frac{\mu^\pi(x=e_k\given X_{-\cS^\star})\tilde\mu^\pi(z=e_k, Z_{-\cS^\star}=X_{-\cS^\star}\given X_{-\cS^\star})}{\tilde \mu^\pi(z=e_k\given X_{-\cS^\star})} \\
    & \hspace{8cm} -  \tilde\mu^\pi(Z_{-\cS^\star}=X_{-\cS^\star}\given X_{-\cS^\star}) \Biggr].\label{eq:f_5_stage_3}
\end{align}
Note that 
\begin{align}
    &\left|\sum_{k=1}^d \frac{\mu^\pi(x=e_k\given X_{-\cS^\star})\tilde\mu^\pi(z=e_k, Z_{-\cS^\star}=X_{-\cS^\star}\given X_{-\cS^\star})}{\tilde \mu^\pi(z=e_k\given X_{-\cS^\star})}\right| \\
    &\quad =\left|\sum_{k=1}^d \mu^\pi(x=e_k\given X_{-\cS^\star})
    \cdot \tilde\mu^\pi(Z_{-\cS^\star}=X_{-\cS^\star}\given X_{-\cS^\star}, z=e_k)\right| \le \left|\sum_{k=1}^d \mu^\pi(x=e_k\given X_{-\cS^\star})\right| = 1,
\end{align}
and so is $|\tilde\mu^\pi(Z_{-\cS^\star}=X_{-\cS^\star}\given X_{-\cS^\star})|\le 1$.
The difference between $f_4$ and $f_5$ is thus bounded by $2\norm{p^\pi (x_{L+1}, X_{L+1-\cS^\star}=\cdot, \cdot ) - \mu^\pi (x_{L+1}, X_{L+1-\cS^\star}==\cdot)}_{\TV}$ and by 
the results in \eqref{eq:p-2-mu-4} of \cref{lem:p-2-mu}:
\begin{align}
    |f_4 -f_5| 
    \le 2\cdot \sup_{\pi\in\supp(\cP)} \lambda^{L - M} \sqrt{D_{\chi^2}(\mu_0\,\|\, \mu^\pi) + 1} \lesssim \frac{\lambda^{L-M}}{\gamma^{r_n/2}} \le L^{-1}, 
\end{align}
where we use $D_{\chi^2}(\mu_0\,\|\, \mu^\pi) \le \gamma^{-r_n}$ and the condition on $L$ in \eqref{eq:L_condition-2}.
\end{itemize}
Collecting all the above approximation steps, we obtain (where we use $\lesssim$ to hide absolute constants)
\begin{align}
    |f_0 - f_5| 
    &\lesssim \Delta_1 + (1  + a\cdot \varepsilon^{-1}) \cdot (\Delta_1 + \Delta_2) + L^{-1} + \frac{\sqrt M + d}{L^{1/2}(1-\lambda)^{1/2} \gamma^{|\cS^\star|+2+ r_n/4}} \\
    &\lesssim a\cdot L^{-1/2} + \frac{\sqrt M + d}{L^{1/2}(1-\lambda)^{1/2} \gamma^{|\cS^\star|+2+ r_n/4}}.
\end{align}
where the last line holds by moting that with sufficiently large $t_1$ and $t_2$  we have $\Delta_1 + \Delta_2 \le L^{-1}$, and $\varepsilon = L^{-1/2}$.
Here, express the error in terms of the trainable parameter $a$ and define 
\begin{align}
    \xi(a) \asymp \frac{\sqrt M + d}{L^{1/2}(1-\lambda)^{1/2} \gamma^{|\cS^\star|+2+ r_n/4}} + a\cdot L^{-1/2}.
\end{align}
In particular, we have for $a = O(\log L)$ that 
\begin{align}
    \xi(a) &
   =  O\bigg(\frac{\sqrt M + d}{L^{1/2}(1-\lambda)^{1/2} \gamma^{|\cS^\star|+2+ r_n/4}} +  \frac{\log L}{L^{1/2}} \bigg).
   \label{eq:xi_psi_ub}
\end{align}

In a nutshell, we conclude that when the weight $a$ satisfies $a < O(\log L)$, the dynamics of $a$ can be approximated by
\begin{align}
    \partial_t a = f_5 \pm \xi(a).
    \label{eq:dt_a-1}
\end{align}
The following proposition helps us reformulate $f_5$ in a form that facilitates the analysis of the dynamics of $a$.
\begin{proposition}\label{prop:f_5}
The term $f_5$ can be reformulated as
\begin{align}
    f_5 = \EE_{\pi, X_{-\cS^\star}\sim \mu^\pi}\left[ J(X_{-\cS^\star}; a, \pi) \cdot e^a \cdot \bigl( r^\pi(X_{-\cS^\star}) \bigr) ^3 \cdot \mu^\pi(X_{-\cS^\star})\right],
\end{align}
where $r^\pi(X_{-\cS^\star};a) = (1 + \mu^\pi(X_{-\cS^\star}) \cdot (e^a -1))^{-1}$ is the inverse of the normalization factor of $\tilde\mu^\pi$ in \eqref{eq:tilde_mu} and
\begin{align}
    J(X_{-\cS^\star}; a, \pi) = \sum_{k\in[d]} \frac{(\mu^\pi(x=e_k\given X_{-\cS^\star}) - \mu^\pi(x=e_k))^2 }{(1-r^\pi(X_{-\cS^\star};a)) \cdot \mu^\pi(x=e_k \given X_{-\cS^\star}) +  r^\pi(X_{-\cS^\star};a) \cdot \mu^\pi(x=e_k)}.
\end{align} 
\end{proposition}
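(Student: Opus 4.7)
}
The strategy is purely algebraic: unfold the definition of $\tilde\mu^\pi$ in \eqref{eq:tilde_mu}, express its marginals and the joint on the event $\{Z_{-\cS^\star}=X_{-\cS^\star}\}$ in closed form, substitute into the second line of the definition of $f_5$ in \eqref{eq:f_5_stage_3}, and then carry out one identity to convert the resulting expression into the $J$-form. No probabilistic estimate or dynamics argument is needed beyond this.

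First, I would evaluate the normalization in \eqref{eq:tilde_mu}. The indicator $\prod_{h\in\cS^\star}\ind(z'_{-h}=x_{-h})$ is either $1$ (when $Z'_{-\cS^\star}=X_{-\cS^\star}$) or $0$, so the denominator equals $1+(e^a-1)\mu^\pi(X_{-\cS^\star})$. This is exactly $1/r^\pi(X_{-\cS^\star};a)$. Marginalizing $\tilde\mu^\pi$ over $Z$ and splitting according to the value of the indicator gives, with the shorthand $r=r^\pi(X_{-\cS^\star};a)$, $\mu_X=\mu^\pi(X_{-\cS^\star})$, $\mu_k=\mu^\pi(x=e_k)$, $\mu_{k\mid X}=\mu^\pi(x=e_k\given X_{-\cS^\star})$, the two key identities
\begin{align}
\tilde\mu^\pi(z=e_k\given X_{-\cS^\star}) &= r\bigl[\mu_k+(e^a-1)\mu_{k\mid X}\mu_X\bigr] = r\mu_k+(1-r)\mu_{k\mid X},\\
\tilde\mu^\pi(z=e_k,\,Z_{-\cS^\star}=X_{-\cS^\star}\given X_{-\cS^\star}) &= r\,e^a\,\mu_{k\mid X}\,\mu_X,\\
\tilde\mu^\pi(Z_{-\cS^\star}=X_{-\cS^\star}\given X_{-\cS^\star}) &= r\,e^a\,\mu_X,
\end{align}
using that $r(e^a-1)\mu_X = 1-r$ by the definition of $r$.

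Substituting these identities into \eqref{eq:f_5_stage_3} yields
\begin{align}
f_5 = \EE_{\pi,X_{-\cS^\star}\sim\mu^\pi}\!\left[ r\,e^a\,\mu_X \left(\sum_{k\in[d]} \frac{\mu_{k\mid X}^{\,2}}{r\mu_k+(1-r)\mu_{k\mid X}} \;-\;1\right)\right].
\end{align}
It then remains to show that the inner bracket equals $r^2\,J(X_{-\cS^\star};a,\pi)$, which is the only nontrivial (but still elementary) computation. Writing $d_k:=r\mu_k+(1-r)\mu_{k\mid X}$, one uses the two algebraic identities $\mu_{k\mid X}-d_k = r(\mu_{k\mid X}-\mu_k)$ and $d_k-\mu_k = (1-r)(\mu_{k\mid X}-\mu_k)$ together with $\sum_k(\mu_{k\mid X}-\mu_k)=0$. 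The first identity gives $\sum_k \mu_{k\mid X}^{\,2}/d_k - 1 = r\sum_k \mu_{k\mid X}(\mu_{k\mid X}-\mu_k)/d_k$; writing $\mu_{k\mid X}=(\mu_{k\mid X}-\mu_k)+\mu_k$ and applying the second identity plus the zero-sum condition collapses the residual $\sum_k \mu_k(\mu_{k\mid X}-\mu_k)/d_k$ to $-(1-r)\sum_k(\mu_{k\mid X}-\mu_k)^2/d_k$. Combining these two steps produces $\sum_k \mu_{k\mid X}^{\,2}/d_k-1 = r^2\sum_k(\mu_{k\mid X}-\mu_k)^2/d_k = r^2 J$, and the proposition follows.

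The main (mild) obstacle is the last algebraic identity: it is easy to get a factor of $r$ rather than $r^2$ if one stops after a single substitution, so one must carefully execute the two-step manipulation using both the $\mu_{k\mid X}-d_k$ and $d_k-\mu_k$ relations together with $\sum_k(\mu_{k\mid X}-\mu_k)=0$. Everything else is bookkeeping on the reweighted distribution $\tilde\mu^\pi$.
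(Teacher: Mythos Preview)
Your proposal is correct and follows essentially the same route as the paper: compute the normalization constant of $\tilde\mu^\pi$, derive the closed forms for $\tilde\mu^\pi(z=e_k\mid X_{-\cS^\star})$, $\tilde\mu^\pi(z=e_k,Z_{-\cS^\star}=X_{-\cS^\star}\mid X_{-\cS^\star})$, and $\tilde\mu^\pi(Z_{-\cS^\star}=X_{-\cS^\star}\mid X_{-\cS^\star})$, and substitute into \eqref{eq:f_5_stage_3}. The only cosmetic difference is in the final algebraic step: the paper applies the $\chi^2$-type identity $\sum_k p_k^2/q_k - 1 = \sum_k (p_k-q_k)^2/q_k$ first, after which $\mu_{k\mid X}-d_k = r(\mu_{k\mid X}-\mu_k)$ yields the $r^2$ factor in one line, whereas you extract the same $r^2$ via your two-step substitution; both are equivalent and valid.
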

\begin{proof}
    See \Cref{sec:add_proof_stage3} for the proof.
\end{proof}

Inspired by this form, we define an alternative function $\tilde J(\cdot; r, \pi)$ as
\begin{align}
    \tilde J(X_{-\cS^\star}; r, \pi) \defeq \sum_{k\in[d]} \frac{(\mu^\pi(x=e_k\given X_{-\cS^\star}) - \mu^\pi(x=e_k))^2 }{(1-r) \cdot \mu^\pi(x=e_k \given X_{-\cS^\star}) +  r \cdot \mu^\pi(x=e_k)}, \quad  r\in[0, 1]
    \label{eq:tilde_J}
\end{align}
where we replace $r^\pi(X_{-\cS^\star};a)$ by a parameter $r\in[0, 1]$.
As exactly calculating the inverse normalization factor $r^\pi(X_{-\cS^\star};a)$ is intractable, we instead seek to find an upper and lower bound for $r^\pi(X_{-\cS^\star};a)$ and plug them into $\tilde J(\cdot; r, \pi)$ to bound $f_5$
Suppose that $r^\pi(X_{-\cS^\star};a)$ enjoys the following parameter-dependent upper and lower bounds:
\begin{align}
    r_-(a) \le r^\pi(X_{-\cS^\star};a) \le r_+(a), \quad \forall X_{-\cS^\star}\in \cX^{|\cS^\star|}, \quad \forall \pi\in\supp(\cP).
\end{align}
Thus, an upper and lower bound to $J(X_{-\cS^\star}; a, \pi)$ can be given by
\begin{align}
    \inf_{r\in [r_-(a), r_+(a)]} \tilde J(X_{-\cS^\star}; r, \pi) \le J(X_{-\cS^\star}; a, \pi) \le \sup_{r\in [r_-(a), r_+(a)]} \tilde J(X_{-\cS^\star}; r, \pi).
\end{align}
In order to effectively tackle these bounds, we then study the properties of $\tilde J(\cdot; r, \pi)$ next.

\begin{proposition}\label{prop:J_properties}
    Define 
    \begin{align}
        D_+(X_{-\cS^\star}, \pi) &= \max\left\{
            D_{\chi^2} (\mu^\pi(\cdot)\,\|\, \mu^\pi(\cdot \given X_{-\cS^\star})),  D_{\chi^2} (\mu^\pi(\cdot \given X_{-\cS^\star})\,\|\, \mu^\pi(\cdot))
        \right\}.
    \end{align}
    The function $\tilde J(X_{-\cS^\star}; r, \pi)$ with $r\in[0, 1]$ defined in \eqref{eq:tilde_J} satisfies the following properties:
    \begin{enumerate}
        \item $\tilde J(X_{-\cS^\star}; r, \pi)$ is convex in $r$.
        \item $\tilde J(X_{-\cS^\star}; r, \pi) \le D_+(X_{-\cS^\star}, \pi)$.
        \item $\tilde J(X_{-\cS^\star}; r, \pi)$ is Lipschitz continuous in $r$ with Lipschitz constant $\gamma^{-1} D_+(X_{-\cS^\star}, \pi)$.
    \end{enumerate}
\end{proposition}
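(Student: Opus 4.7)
\medskip

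\noindent\textbf{Proof proposal for Proposition \ref{prop:J_properties}.}
The plan is to verify the three properties by direct manipulation of the closed form of $\tilde J(X_{-\cS^\star}; r, \pi)$, writing each summand as $A_k / D_k(r)$ where
\begin{align}
A_k \defeq (\mu^\pi(e_k \mid X_{-\cS^\star}) - \mu^\pi(e_k))^2 \geq 0,\quad
D_k(r) \defeq (1-r)\,\mu^\pi(e_k \mid X_{-\cS^\star}) + r\,\mu^\pi(e_k).
\end{align}
A preliminary observation I will record is that both $\mu^\pi(e_k)$ and $\mu^\pi(e_k \mid X_{-\cS^\star})$ are bounded below by $\gamma$: the first follows by marginalising the transition kernel and using $\pi(\cdot \mid X_{\pa}) \geq \gamma$, and the second follows from writing $\mu^\pi(e_k \mid X_{-\cS^\star}) = \EE[\pi(e_k \mid X_{-\pa}) \mid X_{-\cS^\star}] \geq \gamma$ via the Markov property. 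Consequently $D_k(r) \geq \gamma$ for all $r \in [0,1]$, which keeps all denominators uniformly away from zero throughout the argument.

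For property (1), each $D_k(r)$ is affine in $r$ and strictly positive on $[0,1]$, so $r \mapsto A_k/D_k(r)$ is the composition of the convex function $x \mapsto 1/x$ on $(0,\infty)$ with an affine map, scaled by a nonnegative constant. Summing over $k$ preserves convexity. For property (2), convexity implies that the maximum over $[0,1]$ is attained at an endpoint. Evaluating gives
\begin{align}
\tilde J(X_{-\cS^\star}; 0, \pi) &= D_{\chi^2}\bigl(\mu^\pi(\cdot) \,\big\|\, \mu^\pi(\cdot \mid X_{-\cS^\star})\bigr),\\
\tilde J(X_{-\cS^\star}; 1, \pi) &= D_{\chi^2}\bigl(\mu^\pi(\cdot \mid X_{-\cS^\star}) \,\big\|\, \mu^\pi(\cdot)\bigr),
\end{align}
so $\tilde J(X_{-\cS^\star}; r, \pi) \leq \max$ of these two quantities, which is exactly $D_+(X_{-\cS^\star}, \pi)$.

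For property (3), I will differentiate term by term. Since $D_k'(r) = \mu^\pi(e_k) - \mu^\pi(e_k \mid X_{-\cS^\star})$, one obtains
\begin{align}
\partial_r \tilde J(X_{-\cS^\star}; r, \pi) = -\sum_{k\in[d]} \frac{(\mu^\pi(e_k \mid X_{-\cS^\star}) - \mu^\pi(e_k))^3}{D_k(r)^2}.
\end{align}
Splitting one factor of the cubed numerator off and bounding $|\mu^\pi(e_k \mid X_{-\cS^\star}) - \mu^\pi(e_k)|/D_k(r) \leq 1/\gamma$ (using $D_k(r) \geq \gamma$ and that both probabilities lie in $[0,1]$), I get
\begin{align}
|\partial_r \tilde J(X_{-\cS^\star}; r, \pi)| \leq \gamma^{-1} \sum_{k \in [d]} \frac{A_k}{D_k(r)} = \gamma^{-1} \tilde J(X_{-\cS^\star}; r, \pi) \leq \gamma^{-1} D_+(X_{-\cS^\star}, \pi),
\end{align}
where the last step uses property (2). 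Integrating this derivative bound gives the claimed Lipschitz constant.

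None of these steps requires a delicate argument; the only point that needs mild care is verifying the lower bound $D_k(r) \geq \gamma$, which I expect to be the main pitfall since it relies on the Markov property rather than a direct application of \cref{asp:Markov_chain}. Once that is in place, convexity, the endpoint identification with two $\chi^2$-divergences, and the derivative-chain computation all fall out by routine algebra.
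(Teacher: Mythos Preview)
Your proposal is correct and follows essentially the same route as the paper's proof: convexity via the structure of each summand, the endpoint evaluation to identify the two $\chi^2$-divergences, and the derivative bound obtained by peeling off one factor of $(\mu^\pi(e_k\mid X_{-\cS^\star}) - \mu^\pi(e_k))/D_k(r)$ and controlling it by $\gamma^{-1}$. One small slip: your derivative should carry a $+$ sign (since $\partial_r(A_k/D_k) = -A_k D_k'/D_k^2 = A_k(\mu^\pi(e_k\mid X_{-\cS^\star}) - \mu^\pi(e_k))/D_k^2$), but this is immaterial once you take absolute values.
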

\begin{proof}
    See \Cref{sec:add_proof_stage3} for the proof.
\end{proof}


\paragraph{Upper and Lower Bounding $J(X_{-\cS^\star}; a, \pi)$}
Previously, we show via a reformulation of $f_5$ that it suffices to bound $J(X_{-\cS^\star}; a, \pi)$.
In the sequel, we let 
\begin{align}
    D_+(X_{-\cS^\star}, \pi) &= \max\left\{
        D_{\chi^2} (\mu^\pi(\cdot)\,\|\, \mu^\pi(\cdot \given X_{-\cS^\star})),  D_{\chi^2} (\mu^\pi(\cdot \given X_{-\cS^\star})\,\|\, \mu^\pi(\cdot))
    \right\},\\
    \rho &= \max\left\{\max_{X_{-\cS^\star}, \pi}\frac{D_+(X_{-\cS^\star}, \pi)}{D_{\chi^2} (\mu^\pi(\cdot)\,\|\, \mu^\pi(\cdot \given X_{-\cS^\star}))}, \max_{X_{-\cS^\star}, \pi}\frac{D_+(X_{-\cS^\star}, \pi)}{D_{\chi^2} (\mu^\pi(\cdot \given X_{-\cS^\star})\,\|\, \mu^\pi(\cdot))}\right\}.
\end{align}
It can be noticed that 
\begin{align}
    \rho &\le \max\left\{\max_{X_{-\cS^\star}, \pi}\frac{D_{\chi^2} (\mu^\pi(\cdot)\,\|\, \mu^\pi(\cdot \given X_{-\cS^\star}) )}{D_{\chi^2} (\mu^\pi(\cdot \given X_{-\cS^\star})\,\|\, \mu^\pi(\cdot))}, \max_{X_{-\cS^\star}, \pi}\frac{D_{\chi^2} (\mu^\pi(\cdot \given X_{-\cS^\star})\,\|\, \mu^\pi(\cdot))}{D_{\chi^2} (\mu^\pi(\cdot)\,\|\, \mu^\pi(\cdot \given X_{-\cS^\star}) )}\right\} \\
    &\le \max\left\{ \max_{X_{-\cS^\star}, \pi} \frac{\mu^\pi(\cdot)}{\mu^\pi(\cdot \given X_{-\cS^\star})}, \max_{X_{-\cS^\star}, \pi} \frac{\mu^\pi(\cdot \given X_{-\cS^\star})}{\mu^\pi(\cdot)}\right\} \le \gamma^{-1},
\end{align}
where the second inequality follows from noting that the $\chi^2$-divergence defined as $D_{\chi^2}(\mu\,\|\, \nu) = \sum_{x}  {(\mu(x) - \nu(x))^2}/{\nu(x)}$, and $D_{\chi^2}(\mu\,\|\, \nu)/D_{\chi^2}(\nu\,\|\, \mu) \le \sup_x \mu(x)/\nu(x)$.

Apparently, $r^\pi(X_{-\cS^\star};a)$ is a function of $a$ and enjoys the following parameter-dependent upper and lower bounds:
\begin{align}
    r_+(a) = (1 + \min_{X_{-\cS^\star}, \pi} \mu^\pi(X_{-\cS^\star}) (e^a - 1))^{-1}, \\
    r_-(a) = (1 + \max_{X_{-\cS^\star}, \pi} \mu^\pi(X_{-\cS^\star}) (e^a - 1))^{-1}.
\end{align}
If $a$ is small, we see that both $r_+(a)$ and $r_-(a)$ are close to $1$, and we directly have  
\begin{align}
    r_-(a) \le r^\pi(X_{-\cS^\star};a) \le 1, \where 1 - \max_{X_{-\cS^\star}, \pi} \mu^\pi(X_{-\cS^\star}) (e^a - 1) \le r_-(a) < 1.
\end{align}
This suggests an upper bound of $ J(X_{-\cS^\star}; a, \pi)$ as 
\begin{align}
    J(X_{-\cS^\star}; a, \pi) &\le \sup_{r\in [r_-(a), 1]} \tilde J(X_{-\cS^\star}; r, \pi) \le \tilde J(X_{-\cS^\star}; 1, \pi) + \gamma^{-1} \cdot D_+(X_{-\cS^\star}, \pi) \cdot (1 - r_-(a)) \\
    &\le D_{\chi^2}(\mu^\pi(\cdot \given X_{-\cS^\star})\,\|\, \mu^\pi(\cdot)) + \gamma^{-1} \cdot D_+(X_{-\cS^\star}, \pi) \cdot \max_{X_{-\cS^\star, \pi}} \mu^\pi(X_{-\cS^\star}) \cdot (e^a - 1) \\
    &\le D_{\chi^2}(\mu^\pi(\cdot \given X_{-\cS^\star})\,\|\, \mu^\pi(\cdot)) \cdot \left(1 + \gamma^{-2} \cdot \max_{X_{-\cS^\star, \pi}} \mu^\pi(X_{-\cS^\star}) \cdot (e^a - 1)\right), 
\end{align}
where the second line follows from the Lipschitz continuity property, and the last line holds because the ratio $D_+(X_{-\cS^\star}, \pi) / D_{\chi^2}(\mu^\pi(\cdot \given X_{-\cS^\star})\,\|\, \mu^\pi(\cdot))$ is upper bounded by $\rho$, and further by $\gamma^{-1}$.
A similar lower bound can be obtained by changing the sign of $\gamma^{-2} \cdot \max_{X_{-\cS^\star, \pi}} \mu^\pi(X_{-\cS^\star}) \cdot (e^a - 1)$. Hence, we h
\begin{align}
    J (X_{-\cS^\star}; a, \pi) = D_{\chi^2}(\mu^\pi(\cdot \given X_{-\cS^\star})\,\|\, \mu^\pi(\cdot)) \cdot \left(1 \pm \gamma^{-2} \cdot \max_{X_{-\cS^\star, \pi}} \mu^\pi(X_{-\cS^\star}) \cdot (e^a - 1)\right).\label{eq:bound_J_small a} 
\end{align}
On the other hand, when $a$ becomes large, we have both $r_+(a)$ and $r_-(a)$ close to $0$, and we have
\begin{align}
    0\le r^\pi(X_{-\cS^\star};a) \le r_+(a), \where 0 < r_+(a) \le \frac{1}{\min_{X_{-\cS^\star}, \pi} \mu^\pi(X_{-\cS^\star}) (e^a - 1)}.
\end{align}
In a similar fashion, we have the following  upper bound: 
\begin{align}
    J(X_{-\cS^\star}; a, \pi) &\le \sup_{r\in [0, r_+(a)]} \tilde J(X_{-\cS^\star}; r, \pi) \le \tilde J(X_{-\cS^\star}; 0, \pi) + \gamma^{-1} \cdot D_+(X_{-\cS^\star}, \pi) \cdot r_+(a)  \\
    &= D_{\chi^2}(\mu^\pi(\cdot)\,\|\, \mu^\pi(\cdot \given X_{-\cS^\star})) + \gamma^{-1} \cdot \frac{D_+(X_{-\cS^\star}, \pi)}{\min_{X_{-\cS^\star}, \pi} \mu^\pi(X_{-\cS^\star}) (e^a - 1)} \\
    &\le D_{\chi^2}(\mu^\pi(\cdot)\,\|\, \mu^\pi(\cdot \given X_{-\cS^\star})) \cdot \left(1 + \frac{\gamma^{-2}}{\min_{X_{-\cS^\star}, \pi} \mu^\pi(X_{-\cS^\star}) (e^a - 1)}\right).
\end{align}
We can similarly obtain a lower bound by changing the sign of the second term inside the bracket.
Hence, we have 
\begin{align}
    J (X_{-\cS^\star}; a, \pi) = D_{\chi^2}(\mu^\pi(\cdot)\,\|\, \mu^\pi(\cdot \given X_{-\cS^\star})) \cdot \left(1 \pm \frac{\gamma^{-2}}{\min_{X_{-\cS^\star}, \pi} \mu^\pi(X_{-\cS^\star}) (e^a - 1)}\right).
    \label{eq:bound_J_large a}
\end{align}

\paragraph{Divergence of $a$}

Recall that we have shown the dynamics of $a$ in  \eqref{eq:dt_a-1}, 
where $\xi(a) $ is negligible when $L$ goes to infinity. 
Thus, when $L$ is sufficiently large, 
we see by the nonnegativity of $f_5$ that $a(t)$ continues to increase as $t$ increases until it reaches a point where $f_5$ no longer dominates the approximation error.
To characterize the regime where $f_5 \ge \xi(a)$, we first note that for $a\le \log L$ it holds by \eqref{eq:xi_psi_ub} that 
\begin{align}
    \xi(a) = O(L^{-1/2} \log L) \approx L^{-1/2}, 
\end{align}
where $\approx$ hides logarithmic factors.
For $f_5$, we recall from \Cref{prop:f_5} that 
\begin{align}
    f_5 = \EE_{\pi, X_{-\cS^\star}\sim \mu^\pi}\bigg[ \frac{J(X_{-\cS^\star}) \cdot e^a}{\bigl( 1 + \mu^\pi(X_{-\cS^\star}) \cdot (e^a -1)\bigr)^3} \cdot \mu^\pi(X_{-\cS^\star})\bigg], 
\end{align}
where for small $a$ we have $f_5=\Omega(1)$ and for large $a$ we have $f_5 = \Omega(e^{-2a})$. 
Thus, $e^{-2a} \ge L^{-1/2}$ gives the condition for $f_5$ to dominate the approximation error, which gives $a = O(\log L)$.
In the sequel, we consider the dynamics for $a \le  (\log L)/8$ and give a more rigorous analysis. 


We use the notation $x=o(1) $ to denote that a term is much smaller than $1$, for example, $ (\log\log L)^{-1} = o(1)$.
For any $x_0 $ and $\delta$, we write $x = x_0 \pm \delta$ to indicate that $x$ is bounded within $[ x_0 - \delta , x_0 + \delta] $. 
In the following, we assume there exists $\delta$ satisfying $\delta \le \gamma^2/4 \land 1/8$ and 
\begin{align}
    \delta \cdot \EEpifromP \bigg[\sum_{X_{-\cS^\star}} D_{\chi^2} \big(\mu^\pi(\cdot \given X_{-\cS^\star})\,\|\, \mu^\pi(\cdot) \big) \cdot \bigl( \mu^\pi(X_{-\cS^\star})\bigr)^2\bigg] & \ge \xi(\log L), \\
    \delta \cdot \EEpifromP\bigg[\sum_{X_{-\cS^\star}} \frac{D_{\chi^2} \big (\mu^\pi(\cdot)\,\|\, \mu^\pi(\cdot \given X_{-\cS^\star}) \big) \cdot L^{-1/4}}{\mu^\pi(X_{-\cS^\star})}\bigg] & \ge \xi(\log L). 
\end{align}

Note that 
\begin{align}
    \xi(\log L) &\le O\bigg(\frac{\sqrt M + d}{L^{1/2}(1-\lambda)^{1/2} \gamma^{|\cS^\star|+2+ r_n/4}} +  \frac{\log L}{L^{1/2}} \bigg).
\end{align}
By additionally noting that $\mu^\pi(X_{-\cS^\star})\ge \gamma^{|\cS^\star|}$ thanks to the lower bound of the transition probability, 
we are able to find such a $\delta$ if we have
\begin{align}
    \frac{L}{(\log L)^4 } \ge \Omega\bigg(\frac{1}{\kappa^4 \gamma^{8 + 2|\cS^\star|}} \cdot \Big(\frac{\sqrt M + d}{(1-\lambda)^{1/2} \gamma^{|\cS^\star|+2+ r_n/4}}\Big)^4\bigg), 
\end{align}
where $\kappa$ is defined as 
\begin{align}
    \kappa \defeq \EE\left[
        D_{\chi^2} (\mu^\pi(\cdot)\,\|\, \mu^\pi(\cdot \given X_{-\cS^\star}))\right] \land \EE\left[D_{\chi^2} (\mu^\pi(\cdot \given X_{-\cS^\star})\,\|\, \mu^\pi(\cdot))
    \right] \land 1, 
\end{align}
and $\Omega (\cdot) $ only hides universal constants. 
Note that this is already guaranteed by the condition on $L$ in \eqref{eq:L_condition-2}.
In particular, we can just take $\delta = \gamma^2/4 \land 1/8$ in the following analysis.

\paragraph{Small $a$}
Consider the case where $a$ is small in the sense that $\mu^\pi(X_{-\cS^\star}) \cdot (e^a - 1) \le \delta$ for any $X_{-\cS^\star} $ and $ \pi\in\supp(\cP)$.
In fact, one can directly deduce from our previous results that $1- \delta \le r_-(a) \le r^\pi(X_{-\cS^\star};a) < 1$ and 
\begin{align}
    1-3\delta \le (r^\pi(X_{-\cS^\star};a) ) ^3 \le 1.
\end{align}
For $J(X_{-\cS^\star}; a, \pi)$, we combine the condition that  $\mu^\pi(X_{-\cS^\star}) \cdot (e^a - 1) \le \delta$ with \eqref{eq:bound_J_small a} to obtain that 
\begin{align}
    J(X_{-\cS^\star}; a, \pi) 
    &= \left(1\pm \gamma^{-2} \delta \right) \cdot D_{\chi^2} \big(\mu^\pi(\cdot \given X_{-\cS^\star})\,\|\, \mu^\pi(\cdot) \big), \where \gamma^{-2}\delta \le 1/4 .
\end{align}
Combining the above two results with \Cref{prop:f_5}, we have
\begin{align}
    f_5 &= \EE_{\pi, X_{-\cS^\star}\sim \mu^\pi}\left[ J(X_{-\cS^\star}; a, \pi) \cdot e^a \cdot \bigl( r^\pi(X_{-\cS^\star}) \bigr) ^3 \cdot \mu^\pi(X_{-\cS^\star})\right]\\
    &= 
    \left(1\pm (\gamma^{-2} + 3) \delta\right) \cdot \EEpifromP \bigg[\sum_{X_{-\cS^\star}} D_{\chi^2} \bigl (\mu^\pi(\cdot \given X_{-\cS^\star})\,\|\, \mu^\pi(\cdot) \big) \cdot \mu^\pi(X_{-\cS^\star})^2\bigg] \cdot e^a.
\end{align}
Also, the noise term $\xi + \psi(a)$ is upper bounded by
\begin{align}
    \xi + \psi(\log L)
    &\le \delta \cdot \EEpifromP \bigg[\sum_{X_{-\cS^\star}} D_{\chi^2} \bigl (\mu^\pi(\cdot \given X_{-\cS^\star})\,\|\, \mu^\pi(\cdot) \big) \cdot \mu^\pi(X_{-\cS^\star})^2\bigg] \\
    &\le \delta \cdot \EEpifromP \bigg[\sum_{X_{-\cS^\star}} D_{\chi^2} \bigl (\mu^\pi(\cdot \given X_{-\cS^\star})\,\|\, \mu^\pi(\cdot) \big) \cdot \mu^\pi(X_{-\cS^\star})^2\bigg] \cdot e^a
\end{align}
by the construction of $\delta$. 
Combining all the above results, we have the dynamics of $a$ as
\begin{align}
    \partial_t a =\left(1 \pm (\gamma^{-2} +4) \delta \right)  \cdot \EEpifromP \bigg[\sum_{X_{-\cS^\star}} D_{\chi^2} (\mu^\pi(\cdot \given X_{-\cS^\star})\,\|\, \mu^\pi(\cdot)) \cdot \mu^\pi(X_{-\cS^\star})^2\bigg] \cdot e^a .
\end{align}
A simple reformulation gives
\begin{align}
    -\partial_t {e^{-a}} =  \left(1 \pm (\gamma^{-2} +4)\delta \right) \cdot \EEpifromP \bigg[\sum_{X_{-\cS^\star}} D_{\chi^2} (\mu^\pi(\cdot \given X_{-\cS^\star})\,\|\, \mu^\pi(\cdot)) \cdot \mu^\pi(X_{-\cS^\star})^2\bigg], 
\end{align}
which implies that for small $a$, the growth follows 
\begin{align}
    a(t) \le  -\log\bigg(e^{-a(0)} - (1 + (\gamma^{-2} +4)\delta) \cdot \EEpifromP \bigg[\sum_{X_{-\cS^\star}} D_{\chi^2} (\mu^\pi(\cdot \given X_{-\cS^\star})\,\|\, \mu^\pi(\cdot)) \cdot \mu^\pi(X_{-\cS^\star})^2\bigg] \cdot t \bigg), \\
    a(t) \ge  -\log\bigg(e^{-a(0)} - (1 - (\gamma^{-2} +4)\delta) \cdot \EEpifromP \bigg[\sum_{X_{-\cS^\star}} D_{\chi^2} (\mu^\pi(\cdot \given X_{-\cS^\star})\,\|\, \mu^\pi(\cdot)) \mu^\pi(X_{-\cS^\star})^2\bigg] \cdot t\bigg). 
\end{align}
Therefore, in the beginning, $a(t)$ grows super exponentially fast. 

\paragraph{Large $a$}
As $a$ grows large such that $\mu^\pi(X_{-\cS^\star}) (e^a - 1) \ge \delta^{-1}$ for all $X_{-\cS^\star}$ and $\pi\in\supp(\cP)$, 
we conclude that $0 < r^\pi(X_{-\cS^\star};a) \le r_+(a) \le \delta$ and
\begin{align}
    \frac{r^\pi(X_{-\cS^\star};a)^3}{\left(\mu^\pi(X_{-\cS^\star}) e^a\right)^{-3}} = \frac{\left(\mu^\pi(X_{-\cS^\star}) e^a\right)^{3}}{(1 + \mu^\pi(X_{-\cS^\star}) (e^a - 1))^3} = \left(1 - \frac{1 - \mu^\pi(X_{-\cS^\star})}{{1 + \mu^\pi(X_{-\cS^\star}) (e^a - 1)}}\right)^{3}, 
\end{align}
which imples that
\begin{align}
    1 - 3 \delta \le \frac{r^\pi(X_{-\cS^\star};a)^3}{\left(\mu^\pi(X_{-\cS^\star}) e^a\right)^{-3}} \le  1.
\end{align}
For $J(X_{-\cS^\star}; a, \pi)$, we combine the condition that $\mu^\pi(X_{-\cS^\star}) \cdot (e^a - 1) \ge \delta^{-1}$ with \eqref{eq:bound_J_large a} to obtain that 
\begin{align}
    J(X_{-\cS^\star};a, \pi) = (1 \pm \gamma^{-2}\delta) \cdot D_{\chi^2} (\mu^\pi(\cdot)\,\|\, \mu^\pi(\cdot \given X_{-\cS^\star})), \where \gamma^{-2}\delta \le 1/4.
\end{align}
Combining the above two results with \Cref{prop:f_5}, we have
\begin{align}
    f_5 &= \EE_{\pi, X_{-\cS^\star}\sim \mu^\pi}\left[ J(X_{-\cS^\star}; a, \pi) \cdot e^a \cdot \bigl( r^\pi(X_{-\cS^\star}) \bigr) ^3 \cdot \mu^\pi(X_{-\cS^\star})\right]\\
    &= \left(1 \pm (\gamma^{-2} + 3) \delta\right) \cdot  \EEpifromP\bigg[\sum_{X_{-\cS^\star}} D_{\chi^2} (\mu^\pi(\cdot)\,\|\, \mu^\pi(\cdot \given X_{-\cS^\star})) \cdot \frac{e^{-2a} }{\mu^\pi(X_{-\cS^\star})}\bigg].
\end{align}
For the noise term $\xi + \psi(a)$, we have
\begin{align}
    \delta \cdot \EEpifromP\bigg[\sum_{X_{-\cS^\star}} D_{\chi^2} (\mu^\pi(\cdot)\,\|\, \mu^\pi(\cdot \given X_{-\cS^\star})) \cdot \frac{e^{-2a} }{\mu^\pi(X_{-\cS^\star})}\bigg] \ge \xi + \psi(a), 
\end{align}
which can be verified by the condition on $\delta$ as well as the fact that we are only considering $a \le (\log L)/8$. 
We thus have for the gradient that 
\begin{align}
    \partial_t a = (1 \pm (\gamma^{-2} + 4)\delta) \cdot \EEpifromP \bigg[\sum_{X_{-\cS^\star}} D_{\chi^2} (\mu^\pi(\cdot)\,\|\, \mu^\pi(\cdot \given X_{-\cS^\star})) \cdot \frac{e^{-2a} }{\mu^\pi(X_{-\cS^\star})}\bigg]. 
\end{align}
By rearranging the terms, we further have
\begin{align}
    \partial_t e^{2a} =  (1 \pm (\gamma^{-2} + 4)\delta) \cdot \EEpifromP \bigg[\sum_{X_{-\cS^\star}}  \frac{D_{\chi^2} (\mu^\pi(\cdot)\,\|\, \mu^\pi(\cdot \given X_{-\cS^\star})) \cdot}{2 \mu^\pi(X_{-\cS^\star})}\bigg]. 
\end{align}
Suppose this large $a$ regime starts at $t_0$ with value $a(t_0)$.
Thus, for large $a$, the growth rate is characterized by
\begin{align}
    a(t) = \frac 1 2 \log\bigg(
        (1 \pm (\gamma^{-2} + 4)\delta) \cdot \EEpifromP \bigg[\sum_{X_{-\cS^\star}}  \frac{D_{\chi^2} (\mu^\pi(\cdot)\,\|\, \mu^\pi(\cdot \given X_{-\cS^\star}))}{2 \mu^\pi(X_{-\cS^\star})}\bigg] \cdot (t-t_0) + e^{2 a(t_0)}
    \bigg), 
\end{align}
which is logarithmically fast.
This step ends until $a$ reaches the value $(\log L) /8$.
This concludes the proof.
\end{proof}

\subsubsection{Additional Proofs for Stage \RNum{3}}\label{sec:add_proof_stage3}
We conclude the proof of Stage \RNum{3} by providing the proof of \cref{prop:f_5} and \cref{prop:J_properties}.
\begin{proof}[Proof of \cref{prop:f_5}]
    In this paragraph, we aim to gain more insight in $f_5$.
    By the definition of $f_5$ in \eqref{eq:f_5_stage_3}, we can rewrite $f_5$ as
    \begin{align}
        f_5 &= \EE_{\pi, X_{-\cS^\star}\sim \mu^\pi}\bigg[\bigg(\sum_{k=1}^d\frac{\mu^\pi(x=e_k\given X_{-\cS^\star})^2 }{\tilde \mu^\pi(z=e_k\given X_{-\cS^\star})} - 1\bigg)\cdot \tilde\mu^\pi(Z_{-\cS^\star}=X_{-\cS^\star}\given X_{-\cS^\star}) \bigg] \\
        & = \EE_{\pi, X_{-\cS^\star}\sim \mu^\pi}\bigg[\sum_{k=1}^d \bigg(\frac{\mu^\pi(x=e_k\given X_{-\cS^\star}) }{\tilde \mu^\pi(z=e_k\given X_{-\cS^\star})} - 1\bigg)^2 \cdot \tilde \mu^\pi(z=e_k\given X_{-\cS^\star}) \cdot \tilde\mu^\pi(Z_{-\cS^\star}=X_{-\cS^\star}\given X_{-\cS^\star})\bigg], 
    \end{align}
    where in the last step, we use the simple fact 
    \begin{align}
       \sum_{x } \frac{p(X=x\given Y)^2}{q(X=x\given Y)} - 1 =  \sum_x \left(\frac{p(X=x\given Y)}{q(X=x\given Y)} - 1\right)^2 \cdot q(X=x\given Y).
    \end{align}
    In the definition of $f_5$, the key quantity we aim to understand is the reweighted distribution $\tilde\mu^\pi(z, Z\given X_{-\cS^\star})$.
    For the readers' convenience, we copy the definition of the reweighted distribution here:
    \begin{align}\label{eq:tilde_mu-copy}
        \tilde \mu^\pi(z, Z\given X_{-\cS^\star}) = \frac{\mu^\pi(z, Z) \exp\left(
            a \prod_{h\in\cS^\star}\ind(z_{-  h} = x_{-h})
        \right)}{\sum_{z', Z'} \mu^\pi(z', Z') \exp\left(
            a \prod_{h\in\cS^\star}\ind(z_{-h}' = x_{-h})
        \right)}, 
    \end{align}
    A key observation is that the reweighting only depends on the value of $Z_{-\cS^\star}$.
    Let $\bar \cS^\star = [M]\backslash \cS^\star$ and denote by $Z_{-\bar\cS^\star} = (z_{-h})_{h\in\bar\cS^\star}$.
    Following the above observation, we can additionally condition on $Z_{-\cS^\star}$ and conclude that 
    \begin{align} 
        \tilde\mu^\pi(z, Z_{-\bar\cS^\star}\given Z_{-\cS^\star}, X_{-\cS^\star}) 
        &= \frac{\tilde\mu^\pi(z, Z_{-\bar\cS^\star}, Z_{-\cS^\star}\given X_{-\cS^\star})}{\sum_{z', Z_{-\bar\cS^\star}'}\tilde\mu^\pi(z', Z_{-\bar\cS^\star}', Z_{-\cS^\star}\given X_{-\cS^\star})} \\
        &= \frac{\mu^\pi(z, Z_{-\bar\cS^\star}, Z_{-\cS^\star}) \exp\left(
            a \prod_{h\in\cS^\star}\ind(z_{-  h} = x_{-h})
        \right)}{\sum_{z', Z_{-\bar\cS^\star}'} \mu^\pi(z', Z_{-\bar\cS^\star}', Z_{-\cS^\star}) \exp\left(
            a \prod_{h\in\cS^\star}\ind(z_{-h} = x_{-h})
        \right)} \\
        &= \frac{\mu^\pi(z, Z_{-\bar\cS^\star}, Z_{-\cS^\star}) }{\mu^\pi(Z_{-\cS^\star})} = \mu^\pi(z, Z_{-\bar\cS^\star}\given Z_{-\cS^\star}) ,
    \end{align}
    as when fixing $Z_{-\cS^\star}$, the exponential reweighting terms cancel out in the numerator and denominator in the definition \eqref{eq:tilde_mu-copy}.
    Using the above identity, we are able to expand $\tilde\mu^\pi(z\given X_{-\cS^\star})$ as
    \begin{align}
        \tilde\mu^\pi(z\given X_{-\cS^\star}) &= \sum_{Z_{-\cS^\star}}\mu^\pi(z\given Z_{-\cS^\star}) \cdot \tilde\mu^\pi(Z_{-\cS^\star}\given X_{-\cS^\star}) \\
        & = \sum_{Z_{-\cS^\star}}\mu^\pi(z\given Z_{-\cS^\star}) \cdot \frac{
            \mu^\pi(Z_{-\cS^\star}) + \mu^\pi(X_{-\cS^\star})(e^a -1) \cdot \ind(Z_{-\cS^\star}= X_{-\cS^\star})
        }{1 + \mu^\pi(X_{-\cS^\star})(e^a -1)} \\
        & = \frac{
            \mu^\pi(z) + \mu^\pi(x=z\given X_{-\cS^\star}) \cdot \mu^\pi(X_{-\cS^\star}) \cdot (e^a -1)
        }{1 + \mu^\pi(X_{-\cS^\star}) \cdot (e^a -1)}.
    \end{align}
    where the second equality follows from the fact that the reweighing term in $\tilde\mu^\pi$ lifts the likelihood of $Z_{-\cS^\star} = X_{-\cS^\star}$ by a factor of $e^a$ relative to the base distribution $\mu^\pi(Z_{-\cS^\star})$, and the denominator is just the normalization constant.
    In the sequel, we let $r^\pi(X_{-\cS^\star};a) = (1 + \mu^\pi(X_{-\cS^\star}) \cdot (e^a -1))^{-1}$ be the inverse of the normalization constant.
    We then have 
    \begin{gather}
        \tilde\mu^\pi(z\given X_{-\cS^\star}) = r^\pi(X_{-\cS^\star};a) \cdot \mu^\pi(z) + (1 - r^\pi(X_{-\cS^\star};a) ) \cdot \mu^\pi(x=z\given X_{-\cS^\star}). \label{eq:tilde_mu_rewritten}
    \end{gather}
    On the other hand, by definition of $\tilde\mu^\pi$ in \eqref{eq:tilde_mu-copy}, we directly have
    \begin{align}
        \tilde\mu^\pi(Z_{-\cS^\star}= X_{-\cS^\star}\given X_{-\cS^\star}) 
        &= \frac{\mu^\pi( X_{-\cS^\star}) e^a}{\sum_{Z_{-\cS^\star}'} \mu^\pi(Z_{-\cS^\star}') \exp\left(
            a \prod_{h\in\cS^\star}\ind(z_{-h}' = x_{-h})
        \right)} \\
        &= e^a r^\pi(X_{-\cS^\star};a) \cdot \mu^\pi(X_{-\cS^\star}). \label{eq:tilde_mu_rewritten-2}
    \end{align}
    Combining both \eqref{eq:tilde_mu_rewritten} and \eqref{eq:tilde_mu_rewritten-2} we have for $f_5$ that 
    \begin{align}
        f_5 & = \EE_{\pi, X_{-\cS^\star}\sim\mu^\pi}\biggl[\sum_{k\in[d]}\left(\frac{\mu^\pi(x=e_k\given X_{-\cS^\star}) }{r^\pi(X_{-\cS^\star};a) \cdot \mu^\pi(x=e_k) + (1 - r^\pi(X_{-\cS^\star};a) ) \cdot \mu^\pi(x=e_k\given X_{-\cS^\star})} - 1\right)^2  \\
        &\hspace{6cm} \cdot \tilde \mu^\pi(z=e_k\given X_{-\cS^\star}) \cdot \tilde\mu^\pi(Z_{-\cS^\star}=X_{-\cS^\star}\given X_{-\cS^\star}) \biggr] \\
        & = \EE_{\pi, X_{-\cS^\star}\sim\mu^\pi}\biggl[ \sum_{k\in[d]}\left(\frac{\mu^\pi(x=e_k\given X_{-\cS^\star}) - \mu^\pi(x=e_k) }{r^\pi(X_{-\cS^\star};a) \cdot \mu^\pi(x=e_k) + (1 - r^\pi(X_{-\cS^\star};a) ) \cdot \mu^\pi(x=e_k\given X_{-\cS^\star})} \right)^2  \\
        &\hspace{6cm} \cdot \tilde \mu^\pi(z=e_k\given X_{-\cS^\star})\cdot  e^a r^\pi(X_{-\cS^\star};a)^3 \cdot \mu^\pi(X_{-\cS^\star})\biggr] \\
        &= \EE_{\pi, X_{-\cS^\star}\sim\mu^\pi}\biggl[\underbrace{\sum_{k\in[d]} \frac{(\mu^\pi(x=e_k\given X_{-\cS^\star}) - \mu^\pi(x=e_k))^2 }{\tilde \mu^\pi(z=e_k\given X_{-\cS^\star})}}_{\ds J(X_{-\cS^\star}; a, \pi)}\cdot  e^a r^\pi(X_{-\cS^\star};a)^3 \cdot \mu^\pi(X_{-\cS^\star})\biggr]. 
        \label{eq:f_5_stage_3-1}
    \end{align}
    Here, we note that $J(\cdot; a, \pi)$ is a function depending on both $a$ and $\pi$, and can be expanded as 
    \begin{align}
        J(X_{-\cS^\star}; a, \pi) = \sum_{k\in[d]} \frac{(\mu^\pi(x=e_k\given X_{-\cS^\star}) - \mu^\pi(x=e_k))^2 }{(1-r^\pi(X_{-\cS^\star};a)) \mu^\pi(x=e_k \given X_{-\cS^\star}) +  r^\pi(X_{-\cS^\star};a) \mu^\pi(x=e_k)}.
    \end{align}
    Hence, we complete the proof of \Cref{prop:f_5}.
    \end{proof}

    \begin{proof}[Proof of \Cref{prop:J_properties}]
        Also, note that $\tilde J(X_{-\cS^\star}; r, \pi)$ 
        is convex in $r$, as by taking the derivative of $\tilde J(X_{-\cS^\star}; r, \pi)$ with respect to $r$, we have
        \begin{align}
            \frac{\partial \tilde J(X_{-\cS^\star}; r, \pi)}{\partial r} &= \sum_{k\in[d]} \frac{(\mu^\pi(x=e_k\given X_{-\cS^\star}) - \mu^\pi(e_k))^3 }{\left((1-r) \mu^\pi(x=e_k \given X_{-\cS^\star}) +  r \mu^\pi(e_k)\right)^2} , \\
            \frac{\partial^2 \tilde J(X_{-\cS^\star}; r, \pi)}{\partial r^2} &= 2 \cdot \sum_{k\in[d]} \frac{(\mu^\pi(x=e_k\given X_{-\cS^\star}) - \mu^\pi(e_k))^4 }{\left((1-r) \mu^\pi(x=e_k \given X_{-\cS^\star}) +  r \mu^\pi(e_k)\right)^3} \ge 0.
        \end{align}
        Hence, a naive upper bound for $\tilde J(X_{-\cS^\star}; r, \pi)$ is 
        \begin{align}
            \tilde J(X_{-\cS^\star}; r, \pi)
            &\le \max\{\tilde J(X_{-\cS^\star}; 0, \pi), \tilde J(X_{-\cS^\star}; 1, \pi)\} \\
            &\le 
            \max\left\{
                D_{\chi^2} (\mu^\pi(\cdot)\,\|\, \mu^\pi(\cdot \given X_{-\cS^\star})),  D_{\chi^2} (\mu^\pi(\cdot \given X_{-\cS^\star})\,\|\, \mu^\pi(\cdot))
            \right\}, 
        \end{align}
        where we remind the readers that $D_{\chi^2}(\mu\,\|\, \nu) = \sum_{x}  {(\mu(x) - \nu(x))^2}/{\nu(x)}$.
        Next, we show that $\tilde J(X_{-\cS^\star}; r, \pi)$ is Lipschitz continuous in $r$:
        \begin{align}
            \bigg|\frac{\partial \tilde J(X_{-\cS^\star}; r, \pi)}{\partial r}\bigg| &= \bigg|\sum_{k\in[d]} \frac{(\mu^\pi(x=e_k\given X_{-\cS^\star}) - \mu^\pi(e_k))^3 }{\left((1-r) \mu^\pi(x=e_k \given X_{-\cS^\star}) +  r \mu^\pi(e_k)\right)^2}\bigg| \\
            &\le \sum_{k\in[d]} \frac{(\mu^\pi(x=e_k\given X_{-\cS^\star}) - \mu^\pi(e_k))^2 }{(1-r) \mu^\pi(x=e_k \given X_{-\cS^\star}) +  r \mu^\pi(e_k)} \cdot \left| \frac{\mu^\pi(x=e_k\given X_{-\cS^\star}) - \mu^\pi(e_k) }{(1-r) \mu^\pi(x=e_k \given X_{-\cS^\star}) +  r \mu^\pi(e_k)} \right| \\
            &\le \tilde J(X_{-\cS^\star};r,\pi) \cdot \max\bigg\{ 
                \frac{\mu^\pi(x=e_k \given X_{-\cS^\star})}{\mu^\pi(e_k)}, 
                \frac{\mu^\pi(e_k)}{\mu^\pi(x=e_k \given X_{-\cS^\star})}
                \bigg\} \\
            &\le \gamma^{-1} \cdot \max\left\{
                D_{\chi^2} (\mu^\pi(\cdot)\,\|\, \mu^\pi(\cdot \given X_{-\cS^\star})),  D_{\chi^2} (\mu^\pi(\cdot \given X_{-\cS^\star})\,\|\, \mu^\pi(\cdot))
            \right\}, 
        \end{align}
        where we use both the upper bound for $\tilde J(X_{-\cS^\star}; r, \pi)$ and the lower bound for the transition kernel that both $\mu^\pi(\cdot \given X_{-\cS^\star})$ and $\mu^\pi(\cdot)$ are bounded between $\gamma$ and $1$.
\end{proof}


\subsection{Lemma on GIH Approximation Error}
\label{sec:proof_of_gih_approx}
Now given the convergence result for the training dynamics, 
the natural question to ask is how well the learned model implements the GIH mechanism.
In the following part of this section, we state the lemma on the approximation error and also present a formal proof of the lemma. 
\begin{lemma}\label{lem:gih_approx}
    Suppose \cref{asp:Markov_chain}  holds and consider training a transformer model $\mathtt{TF}(M,H, d, D)$ with $H = M$. 
    Let 
    \begin{align}
        \Delta_1 \defeq 1  - p_{\cS^\star}(t_1), \quad 
        \Delta_2 \defeq 1 - \prod_{h\in\cS^\star} (\sigma_{-h}^\h (t_2))^2,   
    \end{align}
    where $t_1$ and $t_2$ are the ending time for the first two stages of the training, respectively.
    Suppose the error $\Delta_1, \Delta_2 = O(L^{-1})$ after the first two stages' training, and $a = \Theta(\log L)$ after the last stage's training.
    Let $y$ be the output of the model in \eqref{eq:transformer} after the training and $y^\star$ be the output of the GIH mechanism $\mathtt{GIH}(x_{1:L}; M, D)$ defined in \Cref{def:gih}.
    Then for any $\pi\in\supp(\cP)$ and with high probability $1 - O(L^{-1})$ , it holds that
    \begin{align}
        \left\| y^\star - y \right\|_1 = O(L^{-a/\log L}).
    \end{align}
\end{lemma}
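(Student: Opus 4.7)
The plan is to decompose the gap $\|y^\star - y\|_1$ into three controllable pieces: a perturbation of the second-layer attention inputs coming from the residual errors $\Delta_1,\Delta_2$, a softmax-at-temperature-$a$ approximation of the ideal $\{0,1\}$-valued matching kernel, and a high-probability lower bound on the matching count $N$. I will rely on the attention-score approximation $|s_l - s_l^\star| \le \Delta_1 + \Delta_2 = O(L^{-1})$ with $s_l^\star = \ind(X_{l-\cS^\star} = X_{L+1-\cS^\star}) \in \{0,1\}$, which is already established via \eqref{eq:stage3_approx_1} in the analysis of Stage \RNum{3}.

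First, I will compare $y = \sum_l \sigma_l(a s) x_l$ to the idealized intermediate prediction $\tilde y = \sum_l \sigma_l(a s^\star) x_l$. Because the softmax satisfies $\|\sigma(u) - \sigma(u')\|_1 \le e^{2\|u-u'\|_\infty} - 1$, one obtains $\|y - \tilde y\|_1 \le \|\sigma(as) - \sigma(as^\star)\|_1 = O(a\|s - s^\star\|_\infty) = O(a/L)$. With $a = \Theta(\log L)$ this contributes $O(\log L /L)$, which is dominated by the target $L^{-a/\log L} = e^{-a}$ throughout the regime $a \le \log L/8$ reached by Stage \RNum{3}.

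Second, I will compare $\tilde y$ with the GIH output $y^\star$. Because $s_l^\star \in \{0,1\}$, the softmax denominator is exactly $N e^a + (L - M - N)$ on the event $\{N\ge 1\}$, and a direct calculation gives
\begin{align}
    \left|\frac{1}{N} - \frac{e^a}{Ne^a + L-M-N}\right| = \frac{L-M-N}{N(Ne^a + L - M - N)} \le \frac{L}{N^2 e^a}, \qquad \frac{1}{Ne^a + L - M - N} \le \frac{1}{N e^a}.
\end{align}
Summing over the $N$ matching and $L-M-N$ non-matching positions yields $\|\tilde y - y^\star\|_1 \le 2L/(Ne^a)$. On the complement event $\{N = 0\}$ both $\tilde y$ and $y^\star$ collapse to the uniform average $(L-M)^{-1}\sum_l x_l$, so this case contributes nothing. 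I then need the high-probability bound $N \ge c_0 L$ for a constant $c_0 = c_0(\gamma, |\cS^\star|) > 0$. Using stationarity and the lower bound $\mu^\pi(X_{-\cS^\star}) \ge \gamma^{|\cS^\star|}$ inherited from \cref{asp:Markov_chain}, one has $\EE[\ind(X_{l-\cS^\star} = X_{L+1-\cS^\star})] = \sum_{X^*} \mu^\pi(X^*)^2 \ge \gamma^{|\cS^\star|}$, so $\EE[N] = \Omega(L)$. A Bernstein-type inequality for geometrically ergodic Markov chains, available under the spectral-gap hypothesis $|\lambda_2(P_\pi)|\le\lambda$ of \cref{asp:Markov_chain}, then yields $|N - \EE[N]| \le \EE[N]/2$ with probability at least $1 - 2\exp(-c L \gamma^{2|\cS^\star|}(1-\lambda))$, which is $1 - O(L^{-1})$ under the length condition \eqref{eq:L_condition-2}. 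Substituting $N = \Omega(L)$ back gives $\|\tilde y - y^\star\|_1 = O(e^{-a}) = O(L^{-a/\log L})$, and combining with the first step completes the proof.

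The hard part will be the concentration step, because $N$ couples the query history $X_{L+1-\cS^\star}$, which itself specifies the target pattern to be matched, with the rest of the chain through a common underlying Markov structure. The cleanest route I foresee is to condition on the value of $X_{L+1-\cS^\star}$, apply a one-dimensional Bernstein-type inequality to the empirical average of the bounded additive functional $l \mapsto \ind(X_{l-\cS^\star} = X^*)$ of the $d^{r_n}$-state chain driven by $P_\pi$, and then take a union bound over the at most $d^{|\cS^\star|}$ possible targets; the union bound is cheap because the lower bound $\mu^\pi(X^*) \ge \gamma^{|\cS^\star|}$ is uniform in $X^*$, so the same concentration rate applies to every target pattern.
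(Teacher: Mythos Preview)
Your proposal is correct and the overall architecture matches the paper's: bound $|s_l - s_l^\star|$ by $\Delta_1+\Delta_2$, quantify the softmax-versus-indicator gap, and show $N=\Omega(L)$ with high probability. Two execution choices differ. First, you route through the intermediate $\tilde y = \sum_l \sigma_l(as^\star)x_l$ (softmax Lipschitz, then softmax-to-hardmax), whereas the paper compares $\sigma(as)$ directly to the GIH weights $\sigma^\star$ by sandwiching the partition function $\mathcal{Z}=\sum_l e^{as_l}$ between $\mathcal{Z}_- = Ne^{a(1-\Delta)}+(L-M-N)$ and $\mathcal{Z}_+ = Ne^a + (L-M-N)e^{a\Delta}$; this yields the same $O(a\Delta + Le^{-a}/N)$ bound in a single pass. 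Second, for the lower bound on $N$ the paper does not invoke a Bernstein inequality for Markov chains; it uses Chebyshev with the second-moment bound already established in \Cref{lem:convergence chi-square} (applied to the empirical distribution of $X_{l-\cS^\star}$), followed by a union bound over all targets $E\in\cX^{|\cS^\star|}$. This only gives polynomial tails, but that is precisely what the $1-O(L^{-1})$ target requires, and it keeps the argument self-contained. Your Bernstein route is stronger and equally valid, but imports a tool the paper never develops.
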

\begin{proof}[Proof of \Cref{lem:gih_approx}]
    Let $s^\star_l = \prod_{h\in\cS^\star} \ind(x_{l-h} = x_{L+1-h})$ and $s_l = \langle u_{L+1}, u_{l} \rangle$.
    Invoking \cref{lem:misspecification}, the model misspecification error is bounded by 
    \begin{align}
        \max_{M < l\le L} \left|s^\star_l - s_l\right| \le (\Delta_1 +\Delta_2) \defeq \Delta. 
        \label{eq:gih_approx-1}
    \end{align}
    We note that the second layer's attention weight $a$ can be as large as $(\log L) / 8$. 
    We are comparing the output of the model with the GIH mechanism $\mathtt{GIH}(x_{1:L}; M, D)$. 
    Let $N = \sum_{l> M} \prod_{h\in\cS^\star} \ind(x_{l-h} = x_{L+1-h})$. 
    The output of this GIH mechanism is given by 
    \begin{align}
        y^\star \defeq \begin{cases}
            N^{-1} \cdot {\sum_{l=M+1}^L x_l \cdot \prod_{h\in\cS^\star} \ind(x_{l-h} = x_{L+1-h}) }, \quad \text{if}\quad  N \ge 1, \\
            (L-M)^{-1} \cdot  \sum_{l=M+1}^L x_l, \quad \text{otherwise}.
        \end{cases}
    \end{align}
    We define 
    \begin{align}
        \sigma_l^\star = \begin{cases}
            N^{-1} \cdot \prod_{h\in\cS^\star} \ind(x_{l-h} = x_{L+1-h}) , \quad \text{if}\quad  N \ge 1, \\
            (L-M)^{-1} , \quad \text{otherwise}, 
        \end{cases}
    \end{align}
    with $\sigma^\star = (\sigma_l^\star)_{l>M}$.
    Since $\norm{x_l}_1=1$, the $\ell$-1 norm of the difference between $y^\star$ and the model's actual output is given by 
    \begin{align}
        \left\| y^\star - y \right\|_1 \le \left\| \sigma^\star - \sigma \right\|_1.
    \end{align}
    Let us define the set $\Gamma = \{L\ge l > M: \prod_{h\in\cS^\star} \ind(x_{l-h} = x_{L+1-h}) = 1\}$ and $\bar\Gamma = \{L\ge l > M: \prod_{h\in\cS^\star} \ind(x_{l-h} = x_{L+1-h}) = 0\}$.
    Using \eqref{eq:gih_approx-1}, 
    for $l\in\Gamma$, we have $1 \ge s_l \ge s_l^\star - \Delta = 1 -\Delta$ and for $l\in\bar\Gamma$, we have $0\le s_l \le s_l^\star + \Delta = \Delta$.
    Consider the normalization factor in the softmax function. 
    \begin{align}
        \cZ \defeq \sum_{l=M+1}^L \exp(a \cdot s_l). 
    \end{align}
    By the split of the set $\Gamma$ and $\bar\Gamma$ and noting that $|\Gamma|=N$, the normalization factor is lower and upper bounded by 
    \begin{align}
        \cZ &\ge N \exp(a \cdot (1-\Delta)) + (L-M - N)\cdot \eqdef \cZ_-, 
        \\ 
        \cZ &\le N \exp(a) + (L-M - N)\cdot \exp(a \cdot \Delta) \eqdef \cZ_+.
    \end{align}
    Let us consider the event $N\ge 1$ in the following.
    We then have for $l\in\Gamma$ that 
    \begin{align}
        |\sigma_l^\star - \sigma_l| 
        &= \bigg| \frac{\exp(a \cdot s_l)}{\cZ} - \frac 1 N\bigg| \le \bigg| \frac{\exp(a)}{\cZ_-} - \frac 1 N\bigg| \bigvee \left| \frac{\exp(a\cdot (1-\Delta))}{\cZ_+} - \frac 1 N\right| \\
        &\le \left| \frac{1}{N \exp(-a \Delta) + (L-M - N) \cdot \exp(-a)} - \frac 1 N\right| \\
        &\qquad \bigvee \left| \frac{\exp(-2 a \Delta)}{N \exp(- a \Delta)  + (L-M - N)\exp(-a)} - \frac 1 N\right|\\
        &\le \frac{N\cdot (1 - \exp(-a\Delta)) + (L-M - N) \cdot \exp(-a)}{(N \exp(- a \Delta) + (L-M - N) \cdot \exp(-a)) \cdot N } \le \frac{1 - \exp(-a\Delta)}{N \exp(-a\Delta)} + \frac{L \cdot \exp(-a)}{N^2 \exp(-a\Delta)}. 
    \end{align}
    Note that $ a \Delta = o(1)$ due to the assumption that $\Delta = O(L^{-1})$ and $a = o(L)$.
    The right hand side is upper bounded by 
    $O(a \Delta/N) + O(L\exp(-a)/N^2)$.
    For $l\in\bar\Gamma$, we have
    \begin{align}
        |\sigma_l^\star - \sigma_l| = \sigma_l \le \frac{\exp(a\Delta)}{\cZ_-} \le \frac{\exp(a\cdot (2\Delta - 1))}{N} = O\left(\frac{\exp(-a)}{N}\right).
    \end{align}
    In summary, 
    \begin{align}
        \left\| y^\star - y \right\|_1 
        &\le \left\| \sigma^\star - \sigma \right\|_1 \le \sum_{l\in\Gamma} \left| \sigma_l^\star - \sigma_l \right| + \sum_{l\in\bar\Gamma} \sigma_l \\
        &\le N \cdot O\left(\frac{a \Delta N +  L\exp(-a)}{N^2} \right) + L \cdot O\left(\frac{\exp(-a)}{N}\right) 
        \le O\left(a \Delta   + \frac{L\exp(-a)}{N}\right).\label{eq:gif_approx-1}
    \end{align}
    The above inequality holds whenever $N\ge 1$. 
    Now we aim to upper bound the probability that $N = 0$.
    Note that $N = \sum_{l=M+1}^L \ind(X_{l-\cS^\star} = X_{L+1-\cS^\star})$. 
    We consider the following second moment:
    \begin{align}
        \EE\bigg[
            \bigg((L-M)^{-1} \sum_{l=M+1}^L \ind(X_{l-\cS^\star} = E) - \mu^\pi(E)\bigg)^2
        \bigg] 
        &\le D_{\chi^2} \biggl(
            (L-M)^{-1} \sum_{l=M+1}^L  \ind(X_{l-\cS^\star} = \cdot) \,\Big\|\, \mu^\pi(\cdot)
         \biggr) \\
        &\lesssim \frac{M}{L (1-\lambda) \cdot \gamma^{|\cS^\star|/2}}, \quad \forall E\in\cX^{|\cS^\star|},  
    \end{align}
    where the first inequality holds by noting that $D_{\chi^2}(\mu \,\|\, \nu) = \sum_x (\mu(x) - \nu(x))^2/\nu(x)$ and the 
    last inequality holds by \cref{lem:convergence chi-square}.
    Therefore, by the Chebyshev's inequality, we have 
    \begin{align}
        \PP\bigg(
            \left|L^{-1} \sum_{l=1}^L \ind(X_{l-\cS^\star} = E) - \mu^\pi(E) \right|\ge t
        \bigg) \le \frac{1}{L (1-\lambda) \cdot \gamma^{|\cS^\star|} \cdot t^2}. 
    \end{align}
    We can take $t = \min_{E\in\cX^{|\cS^\star|}} \mu^\pi(E)/2$ and by also taking a union bound over $E\in \cX^{|\cS^\star|}$  (which gives a $d^{|\cS^\star|}$ factor), we conclude that with high probability $\tilde O(1-L^{-1})$ it holds that 
    \(
        N \ge t L = L\cdot \min_{E\in\cX^{|\cS^\star|}} \mu^\pi(E)/2.
    \)
    Thus, it follows from \eqref{eq:gif_approx-1} that with high probability
    \begin{align}
        \left\| y^\star - y \right\|_1  \le  O\left(a \Delta + \frac{\exp(-a)}{\min_{E\in\cX^{|\cS^\star|}} \mu^\pi(E)/2} \right) = O\left(L^{-1} \log L + L^{-a/\log L}\right).
    \end{align}
    Hence, we complete the proof of \Cref{lem:gih_approx}.
\end{proof}

\newpage

\section{Auxiliary Lemmas and Their Proofs}\label{sec:auxiliary_results}

In this appendix, we present the auxiliary lemmas used to derive the approximation of the gradient flow dynamics in the proof of \cref{thm:convergence}, which is presented in the previous appendix. The proofs of these lemmas are presented right below their statements. 

\subsection{Useful Inequalities}
The following lemma provides a bound on the model misspecification error, which is the difference between the model's output and the ideal output.
\begin{lemma}[Model Misspecification Error]
    \label{lem:misspecification}
    Let $u_{L+1}$ be the output feature after the FFN \& Normalization layer. Then, the model misspecification error defined as 
    \begin{align}
        \max_{l\in[L]} \bigg|\langle u_{L+1}, u_{l} \rangle - \prod_{h\in\cS^\star} \ind(x_{l-h} = x_{L+1-h})\bigg|
    \end{align}
    is bounded by $\Delta_1 + \Delta_2$, where $\Delta_1$ and $\Delta_2$ are the errors after the training of the  first and second stages, respectively, and are defined respectively as 
    \begin{align}
        \Delta_1 \defeq 1 - p_{\cS^\star}, \qquad
        \Delta_2 \defeq 1 - \prod_{h\in\cS^\star} (\sigma_{-h}^\h)^2.
    \end{align}
\end{lemma}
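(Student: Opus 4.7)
The plan is to expand $\langle u_{L+1}, u_l\rangle$ using the polynomial kernel structure from \eqref{eq:simplify_s_l}, namely
\begin{align}
    \langle u_{L+1}, u_l\rangle = \sum_{\cS\in\HleqD} p_\cS \prod_{h\in\cS}\langle v_l^\head{h}, v_{L+1}^\head{h}\rangle,
\end{align}
and then split the error according to whether $\cS=\cS^\star$ or not. Let $T_\star := \prod_{h\in\cS^\star} \ind(x_{l-h}=x_{L+1-h})\in\{0,1\}$ and $Q := \prod_{h\in\cS^\star}\langle v_l^\head{h},v_{L+1}^\head{h}\rangle\in[0,1]$. Writing $1=\sum_\cS p_\cS$ and applying the triangle inequality, one obtains
\begin{align}
    \bigl|\langle u_{L+1},u_l\rangle - T_\star\bigr| \le p_{\cS^\star}\,|Q-T_\star| + \sum_{\cS\neq\cS^\star} p_\cS \bigl|\textstyle\prod_{h\in\cS}\langle v_l^\head{h},v_{L+1}^\head{h}\rangle - T_\star\bigr|.
\end{align}
Since both $\prod_{h\in\cS}\langle v_l^\head{h},v_{L+1}^\head{h}\rangle$ and $T_\star$ lie in $[0,1]$, each summand in the second term is bounded by $1$, so this part is controlled by $1-p_{\cS^\star}=\Delta_1$.

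The main technical step is showing $|Q-T_\star|\le\Delta_2$, which I would handle by a two-case analysis leveraging the explicit form
\begin{align}
    \langle v_l^\head{h},v_{L+1}^\head{h}\rangle = \sum_{i,j\in[M]} \sigma_{-i}^\head{h}\sigma_{-j}^\head{h}\,\ind(x_{l-i}=x_{L+1-j}).
\end{align}
The key observation is that the diagonal $(i,j)=(h,h)$ contributes exactly $(\sigma_{-h}^\head{h})^2\ind(x_{l-h}=x_{L+1-h})$, while all other terms are nonnegative and their total weight is $1-(\sigma_{-h}^\head{h})^2$. In the case $T_\star=1$, this gives the lower bound $\langle v_l^\head{h},v_{L+1}^\head{h}\rangle\ge(\sigma_{-h}^\head{h})^2$ for every $h\in\cS^\star$, hence $Q\ge\prod_{h\in\cS^\star}(\sigma_{-h}^\head{h})^2 = 1-\Delta_2$, and combined with $Q\le 1$ yields $|Q-T_\star|\le\Delta_2$. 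In the case $T_\star=0$, there exists some $h\in\cS^\star$ with $x_{l-h}\neq x_{L+1-h}$, so the $(h,h)$-diagonal contribution vanishes and $\langle v_l^\head{h},v_{L+1}^\head{h}\rangle\le 1-(\sigma_{-h}^\head{h})^2\le \Delta_2$; since the other factors are at most $1$, we conclude $Q\le\Delta_2$, and hence $|Q-T_\star|\le \Delta_2$.

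Combining the two bounds gives
\begin{align}
    \bigl|\langle u_{L+1},u_l\rangle-T_\star\bigr| \le p_{\cS^\star}\Delta_2 + (1-p_{\cS^\star}) \le \Delta_1 + \Delta_2,
\end{align}
uniformly in $l\in[L]$. The only real obstacle is the Case~2 bound $Q\le\Delta_2$, which rests on the combinatorial identity $\sum_{(i,j)\neq(h,h)}\sigma_{-i}^\head{h}\sigma_{-j}^\head{h}=1-(\sigma_{-h}^\head{h})^2$; everything else is straightforward triangle-inequality bookkeeping.
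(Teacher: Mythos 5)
Your proof is correct, and it follows the same basic strategy as the paper: expand $\langle u_{L+1},u_l\rangle$ via the polynomial-kernel identity, isolate the $\cS=\cS^\star$ term, and then expand $\langle v_l^\h,v_{L+1}^\h\rangle = \sum_{i,j}\sigma_{-i}^\h\sigma_{-j}^\h\ind(x_{l-i}=x_{L+1-j})$ to control $Q := \prod_{h\in\cS^\star}\langle v_l^\h,v_{L+1}^\h\rangle$. The one place you diverge from the paper, and in my view improve on it, is the final step. The paper proves $\bigl|Q - \prod_{h\in\cS^\star}(\sigma_{-h}^\h)^2\,T_\star\bigr|\le\Delta_2$ and then closes with ``by triangle inequality''; but a literal triangle inequality from that bound gives $|Q-T_\star|\le\Delta_2 + \Delta_2 T_\star$, i.e.\ possibly $2\Delta_2$ when $T_\star=1$. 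What actually rescues the paper's argument is the implicit sign information: the off-diagonal sum is nonnegative, so $\prod_{h\in\cS^\star}(\sigma_{-h}^\h)^2\,T_\star \le Q \le 1$, and a short case split then yields $|Q-T_\star|\le\Delta_2$. Your two-case analysis (using $Q\ge\prod(\sigma_{-h}^\h)^2$ when $T_\star=1$, and the single-head bound $\langle v_l^\h,v_{L+1}^\h\rangle\le 1-(\sigma_{-h}^\h)^2$ for the mismatched $h$ when $T_\star=0$) surfaces exactly this point and is therefore the cleaner write-up of the same estimate. One cosmetic note: the student's combinatorial identity appears inside $\prod_{h\in\cS^\star}$, so the expansion you need is over tuples $\{(i_h,j_h)\}_{h\in\cS^\star}$ rather than over single pairs $(i,j)$; this matches what you invoke, but is worth stating precisely since the ``diagonal tuple'' $(h,h)_{h\in\cS^\star}$ is what carries the factor $\prod(\sigma_{-h}^\h)^2$.
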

\begin{proof} [Proof of \cref{lem:misspecification}]
    By definition of the output feature $u_l$ after the FFN \& Normalization layer:
    \begin{align}
        &\langle u_{L+1}, u_{l} \rangle   = \sum_{\cS\in\HleqD} p_\cS \cdot \prod_{h\in\cS} \langle v_l^\h, v_{L+1}^\h \rangle .
    \end{align}
    As each $v_l^\h$ is a convex combination of $X_{\cM(l)}$ where $\cM(l) = \{ l - M, \ldots, l-1\}$, $\norm{v_l^\h}_2\le 1$.
    Thus,
    \begin{align}
        \bigg|\langle u_{L+1}, u_{l} \rangle - \prod_{h\in\cS^\star} \langle v_l^\h, v_{L+1}^\h \rangle \bigg|
        & = \bigg| \sum_{\cS\in\HleqD} p_{\cS} \cdot \prod_{h\in \cS} \langle v_l^\head{h}, v_{L+1}^\head{h} \rangle - \prod_{h\in \cS^\star} \langle v_l^\head{h}, v_{L+1}^\head{h} \rangle \bigg| \\
        & \le \bigg|- (1 - p_{\cS^\star}) \prod_{h\in \cS^\star} \langle v_l^\head{h}, v_{L+1}^\head{h} \rangle + \sum_{\cS\in\HleqD \backslash\{\cS^\star\}} p_{\cS} \cdot \prod_{h\in \cS} \langle v_l^\head{h}, v_{L+1}^\head{h} \rangle \bigg| \\
        &\le \max\bigg\{ 1 - p_{\cS^\star}, \sum_{\cS\in\HleqD \backslash\{\cS^\star\}} p_{\cS} \bigg\} = 1 - p_{\cS^\star} \eqdef \Delta_1, 
    \end{align}
    where $\Delta_1$ is the error after the training of the first stage.
    Since $v_l^\h = \sum_{j\in M} \sigma_{-j}^\h x_{l-j}$, we have
    \begin{align}
        \prod_{h\in\cS^\star} \langle v_l^\h, v_{L+1}^\h \rangle 
        &= \prod_{h\in\cS^\star}  \bigg( \sum_{i, j\in [M]^2} \sigma_{-i}^\h \sigma_{-j}^\h \langle x_{l-i}, x_{L+1-j} \rangle \bigg) \\
        &= \sum_{  \{ (i_h, j_h) \}  _{h\in\cS^\star} \in [M]^{2|\cS^\star|}} \prod_{h\in\cS^\star} \sigma_{-i_h}^\h \sigma_{-j_h}^\h \ind(x_{l-i_h} = x_{L+1-j_h}). 
    \end{align}
Here in the second equality, we exchange the order of summation and product. 
The last term of the second equality can be understood as follows. 
We first pick $|\cS^\star |$  index pairs $\{ (i_h, j_h)\}_{h\in \cS^\star }$ arbitrarily, with each $i_h, j_h \in [H]$. 
Then we evaluate the product $\prod_{h\in\cS^\star} \sigma_{-i_h}^\h \sigma_{-j_h}^\h \ind(x_{l-i_h} = x_{L+1-j_h})$ given these indices. Then we sum over all possible values that $\{ (i_h, j_h)\}_{h\in \cS^\star }$ can take.

    The above equation  implies that
    \begin{align}
        &\left|\prod_{h\in\cS^\star} \langle v_l^\h, v_{L+1}^\h \rangle - \prod_{h\in\cS^\star} (\sigma_{-h}^\h)^2 \ind(x_{l-h} = x_{L+1-h})\right|\\
        &\quad = \bigg|\sum_{\{(i_h,j_h)\}_{h \in \cS^\star} \neq\{(h, h)\}_{h\in\cS^\star}}\prod_{h\in\cS^\star}\sigma_{-i_h}^\h \sigma_{-j_h}^\head{h} \ind(x_{l-i_h} = x_{L+1-j_h})\bigg|\\
        &\quad \le \sum_{\{ (i_h,j_h)\} _{h \in \cS^\star} \neq \{ (h, h)\} _{h\in\cS^\star}}\prod_{h\in\cS^\star} \sigma_{-i_h}^\h \sigma_{-j_h}^\head{h} \le 1 - \prod_{h\in\cS^\star} (\sigma_{-h}^\h)^2 \eqdef \Delta_2,
        \label{eq:misspecification-1} 
    \end{align}
    where the last inequality follows from the fact that 
    \begin{align}
        \sum_{(i_h,j_h)_{h \in \cS^\star}}\prod_{h\in\cS^\star} \sigma_{-i_h}^\h \sigma_{-j_h}^\head{h} = \prod_{h\in\cS^\star}  \bigg( \sum_{i,j\in [M]^2} \sigma_{-i}^\h \sigma_{-j}^\head{h} \bigg) = \prod_{h\in\cS^\star}  \bigg( \sum_{i\in [M]} \sigma_{-i}^\h  \bigg) ^2  = 1.
    \end{align}
    Here the summation sign in the right-hand side of the second equality indicates that in the last line of \eqref{eq:misspecification-1} we sum over all possible values that $\{ (i_h, j_h)\}_{ h\in \cS^*}$ can take, except for the only case where $(i_h, j_h) = (h,h)$ for all $h\in [H]$.

    In summary, by triangle inequality,  we have shown that 
    \begin{align}
        &\bigg|\langle u_{L+1}, u_{l} \rangle - \prod_{h\in\cS^\star} \ind(x_{l-h} = x_{L+1-h})\bigg| \le \Delta_1 + \Delta_2.
    \end{align}
    The proof is completed.
\end{proof}
Next, in \Cref{lem:boundedness}, we establish a  uniform bound for the quantity involved in the gradient.

\begin{lemma}\label{lem:boundedness}
    Let $y(k) = \sum_{l=M+1}^{L} \sigma_l \ind(x_l = e_k)$ for each $k\in[d]$ where $\sum_{l=M+1}^{L} \sigma_l = 1$ and $\sigma_l \ge 0$ for all $l\in[L]$. 
    Let $\varepsilon$ and $C$ be two positive numbers. 
    For any $C$-bounded function $f: \cX^{L+1} \to [-C, C] $, we have 
    \begin{align}
        \left|\sum_{l=M+1}^{L} \sigma_l \cdot \sum_{k=1}^d \biggl(\frac{\ind(x_{L+1} = x_l = e_k)}{y(k) + \varepsilon} - \frac{y(k)\ind(x_{L+1} = e_k)}{y(k)+\varepsilon} \biggr) \cdot f(X)\right| \le 2C. 
    \end{align} 
\end{lemma}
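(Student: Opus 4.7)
The plan is a direct chain of inequalities, exploiting $|f|\le C$ together with a swap of the order of summation to reduce everything to the easy identity $\sum_{l=M+1}^{L}\sigma_l\ind(x_l=e_k)=y(k)$.

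First, I would use the bound $|f(X)|\le C$ to pull $f$ out of the sums in absolute value (this works even if $f$ is allowed to depend on $l$, as it does in the applications of the lemma), and then apply the triangle inequality $|a-b|\le|a|+|b|$ to split the difference inside the bracket. The two resulting terms are both nonnegative, so we obtain the upper bound
\begin{equation*}
|A|\;\le\; C\cdot \sum_{l=M+1}^{L}\sigma_l\sum_{k=1}^d\Bigl(\tfrac{\ind(x_{L+1}=x_l=e_k)}{y(k)+\varepsilon}+\tfrac{y(k)\ind(x_{L+1}=e_k)}{y(k)+\varepsilon}\Bigr).
\end{equation*}

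Next, I would swap the order of the $l$- and $k$-summations in each of the two terms. In the first, since $\ind(x_{L+1}=x_l=e_k)=\ind(x_{L+1}=e_k)\ind(x_l=e_k)$, we can extract $\ind(x_{L+1}=e_k)/(y(k)+\varepsilon)$ and use $\sum_{l}\sigma_l\ind(x_l=e_k)=y(k)$ to collapse the inner $l$-sum into $y(k)$. In the second, the summand has no $l$-dependence apart from $\sigma_l$, and $\sum_{l}\sigma_l=1$. Both terms therefore equal $\sum_{k}\frac{y(k)\ind(x_{L+1}=e_k)}{y(k)+\varepsilon}$.

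Finally, I would close the argument by noting that $y(k)/(y(k)+\varepsilon)\le 1$ for $\varepsilon,y(k)\ge 0$, and that $\sum_{k}\ind(x_{L+1}=e_k)=1$ since $x_{L+1}$ is a one-hot vector in $\cX$. Hence $\sum_{k}\frac{y(k)\ind(x_{L+1}=e_k)}{y(k)+\varepsilon}\le 1$, and combining the two copies gives $|A|\le 2C$. There is no real obstacle here: the proof is a bookkeeping exercise, and the only subtlety is recognizing that after the triangle-inequality split the ``signed cancellation'' between the two terms is lost but the resulting bound is still tight enough, because each term individually bounds by $C$.
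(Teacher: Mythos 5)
Your proof is correct and follows essentially the same route as the paper's: factor out $|f|\le C$, split by the triangle inequality, swap summation order, and collapse both resulting terms to $\sum_k y(k)\ind(x_{L+1}=e_k)/(y(k)+\varepsilon)\le 1$ using $\sum_l\sigma_l\ind(x_l=e_k)=y(k)$ and $\sum_l\sigma_l=1$. The only (harmless) cosmetic difference is that you push the absolute value all the way inside to $\sum_l\sigma_l\sum_k(a_l+b_l)$ before collapsing, whereas the paper keeps it outside the double sum; since every summand is nonnegative the two are identical, and your side remark that the argument is insensitive to $l$-dependence in $f$ correctly explains why the lemma applies to the $l$-dependent integrands (e.g.\ through $b_l$) that arise in Stage~\RNum{2}.
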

\begin{proof}[Proof of \cref{lem:boundedness}]
By the triangular inequality, we have
\begin{align}
    &\left|\sum_{l=M+1}^{L} \sigma_l  \cdot \sum_{k=1}^d \biggl(\frac{\ind(x_{L+1} = x_l = e_k)}{y(k) + \varepsilon} - \frac{y(k)\ind(x_{L+1} = e_k)}{y(k)+\varepsilon} \biggr) \cdot f(X)\right| \\
    &\quad \le C\cdot \left|\sum_{k=1}^d \sum_{l=M+1}^{L} \sigma_l \cdot \ind(x_l = e_k) \cdot \frac{\ind(x_{L+1} = e_k) }{y(k) + \varepsilon} \right|  + C\cdot \left|\sum_{k=1}^d \sum_{l=M+1}^{L} \sigma_l \cdot  \frac{y(k)\cdot \ind(x_{L+1} = e_k)}{y(k)+\varepsilon} \right| \\
    &\quad = 2C \cdot \left|\sum_{k=1}^d \frac{y(k) \cdot \ind(x_{L+1} = e_k)}{y(k)+\varepsilon} \right| \le 2C \label{eq:indicator_upp}, 
\end{align}
where in the equality, we use the definition $y(k) = \sum_{l=M+1}^{L} \sigma_l \ind(x_l = e_k)$ and $\sum_{l=M+1}^{L} \sigma_l =1$ . Now we conclude the proof of this lemma. 
\end{proof}

\subsection{Approximation Errors for Dynamics Analysis}
\label{sec:approx_error_dynamics}
Next, \cref{lem:approximation1} addresses the approximation error induced by $\sigma_l \approx 1/L$ in the transformer model.
The approximation error will be for $g_{0, \cS}$ to $g_{1, \cS}$ for Stage \RNum{1} and $g_{h, 1}$ to $g_{h, 2}$ for Stage \RNum{2}. 
\begin{lemma}\label{lem:approximation1}
For the transformer model defined in \eqref{eq:transformer} and any bounded function $f: \cX^{L+1}\to\RR$ such that $\sup_{x\in\cX^L} |f(x)| \leq C$ for a constant $C>0$, define two quantities $A$ and $B$ as
\begin{align}
    A &:= \sum_{l=M+1}^{L} \EE_{X\mid\pi} \bigg[ \sigma_l(a s) \cdot \sum_{k\in[d]}\biggl(\frac{\ind(x_{L+1} = x_l = e_k)}{y(k) + \varepsilon} - \frac{y(k)\ind(x_{L+1} = e_k)}{y(k)+\varepsilon} \biggr) \cdot f(X)\bigg],\\
    B &:= \frac{1}{L-M} \sum_{l=M+1}^{L}  \EE_{X\mid\pi} \biggl[ \biggl(\sum_{k\in[d]}\frac{\ind(x_{L+1} = x_l = e_k)}{\bar y(k) + \varepsilon} - \frac{\bar y(k)\ind(x_{L+1} = e_k)}{\bar y(k)+\varepsilon} \biggr) \cdot f(X) \biggr],
\end{align}
where $s=u_{L+1}^\top U_{1:L}^\top$ and $\bar y = (L-M)^{-1} \sum_{l=M+1}^{L} x_l$.
Then, for all $a \in [0,1]$ and $\varepsilon \in (0,1]$, it holds that 
\begin{align}
    |A - B| \leq \frac{8Cad}{\varepsilon^2 }.
\end{align}
\end{lemma}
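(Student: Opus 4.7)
The plan is to decompose $|A-B|$ by inserting an intermediate quantity that isolates the two sources of discrepancy: (i) replacing the attention weights $\sigma_l(as)$ by the uniform weights $1/(L-M)$, and (ii) replacing the random denominator $y(k)$ by its uniform analog $\bar y(k)$. Concretely, I would define
\begin{align}
A' := \frac{1}{L-M}\sum_{l=M+1}^{L}\EE_{X\mid\pi}\biggl[\biggl(\sum_{k\in[d]}\frac{\ind(x_{L+1}=x_l=e_k)}{y(k)+\varepsilon}-\frac{y(k)\ind(x_{L+1}=e_k)}{y(k)+\varepsilon}\biggr)\cdot f(X)\biggr],
\end{align}
and use the triangle inequality $|A-B|\leq |A-A'|+|A'-B|$.

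For the step $|A-A'|$, the main tool is the estimate $|\sigma_l(as)-1/(L-M)|\lesssim a/(L-M)$. This is obtained from the fact that $|s_l|=|\langle u_{L+1},u_l\rangle|\leq 1$ (since $\|u_l\|_2\leq 1$ by construction of the normalization), so both numerator $\exp(as_l)$ and the partition function $Z=\sum_k\exp(as_k)/(L-M)$ lie within $1\pm O(a)$ of one another when $a$ is bounded; elementary manipulation then yields the pointwise bound. Combined with the crude envelope $|\sum_k(\cdots)|\leq 2d/\varepsilon$ (each of the $d$ summands is at most $1/\varepsilon$ plus $1$) and $|f(X)|\leq C$, summation over $l$ delivers $|A-A'|\lesssim Cad/\varepsilon$.

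For the step $|A'-B|$, I first observe that $|y(k)-\bar y(k)|=|\sum_l(\sigma_l(as)-1/(L-M))\ind(x_l=e_k)|\leq\sum_l|\sigma_l(as)-1/(L-M)|\lesssim a$, reusing the softmax bound from the previous step. The integrand difference can then be controlled via two Lipschitz estimates: $|1/(y(k)+\varepsilon)-1/(\bar y(k)+\varepsilon)|\leq |y(k)-\bar y(k)|/\varepsilon^2$, and analogously $|y(k)/(y(k)+\varepsilon)-\bar y(k)/(\bar y(k)+\varepsilon)|=\varepsilon|y(k)-\bar y(k)|/((y(k)+\varepsilon)(\bar y(k)+\varepsilon))\leq|y(k)-\bar y(k)|/\varepsilon$. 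Summing over the $d$ coordinates and over $l$ (with the $1/(L-M)$ normalization) gives $|A'-B|\lesssim Cad/\varepsilon^2$. Since $\varepsilon\leq 1$, the first contribution is dominated by the second, yielding $|A-B|\lesssim Cad/\varepsilon^2$ with the claimed absolute constant $8$ achievable by tightening the envelopes.

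The main technical subtlety lies not in any single estimate but in extracting the correct scaling $\sum_l|\sigma_l(as)-1/(L-M)|=O(a)$ rather than $O(aL)$. This requires exploiting the correlation among the softmax coordinates through the shared normalization constant $Z$, so that the total-variation distance between $\sigma(as)$ and the uniform distribution on $[M+1,L]$ scales with $a$ alone, independently of $L$. Once this sharp estimate is established, the remainder of the argument is routine bookkeeping through triangle inequalities.
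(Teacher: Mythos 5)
Your decomposition via the intermediate quantity $A'$ (change $\sigma_l(as)\to 1/(L-M)$ first, then $y(k)\to\bar y(k)$) is exactly the paper's argument written as a two-step triangle inequality instead of a single four-piece one, and all the key estimates match: the pointwise softmax bound $|\sigma_l(as)-1/(L-M)|\le (\exp(a)-1)/(L-M-1)$, the consequence $|y(k)-\bar y(k)|\le \exp(a)-1$, and the two Lipschitz estimates for $1/(y+\varepsilon)$ and $y/(y+\varepsilon)$. The proposal is correct and takes essentially the same route as the paper; the only stylistic remark is that the ``exploiting correlation through $Z$'' framing is unnecessary, since the pointwise bound already carries the $1/(L-M-1)$ factor and the desired $O(a)$ total follows by summing it over $l$.
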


\begin{proof}[Proof of \Cref{lem:approximation1}]
    By triangular inequality, we have 
    \begin{align}
        |A - B| &\leq \sum_{l=M+1}^{L} \EE \biggl[ \sum_{k\in[d]} \biggl\{ \biggl| \sigma_l \big(a \cdot s\big) -\frac{1}{L-M} \biggr|\cdot   \biggl|\frac{\ind(x_{L+1} = x_l = e_k)}{y(k) + \varepsilon}  \biggr|  \\
        &\qqquad + \frac{1}{L-M}  \biggl| \frac{\ind(x_{L+1} = x_l = e_k)}{y(k) + \varepsilon} - \frac{\ind(x_{L+1} = x_l = e_k)}{\bar y(k) + \varepsilon}  \biggr| \\
        &\qqquad + \biggl| \sigma_l \big(a \cdot s\big) -\frac{1}{L-M} \biggr| 
         \cdot \biggl|\frac{y(k) \ind(x_{L+1} = e_k)}{y(k) + \varepsilon}  \biggr|, \\
        &\qqquad  + \frac{1}{L-M}  \biggl| \frac{y(k) \ind(x_{L+1} = e_k)}{y(k) + \varepsilon} - \frac{\bar y(k) \ind(x_{L+1} = e_k)}{\bar y(k) + \varepsilon}  \biggr|
        \biggr\} \cdot f(X) \biggr] .
    \end{align}
    Note that  $ 0 \leq s_l \leq 1$ for all $l =M+1, \ldots, L$ thanks to the layer normalization. Then, for the softmax operation, we have
    \begin{align}
        \frac{1}{ 1 + (L-M-1)\exp(a)}  \leq \sigma_l \big(a \cdot s \big) \leq \frac{\exp(a)}{L-M-1+ \exp(a)}, 
    \end{align}
    which implies that 
    \begin{align} 
        \biggl| \sigma_l \big(a \cdot s\big) -\frac{1}{L-M} \biggr| 
        &\leq \max \cbr{\frac{1}{L-M} -\frac{1}{ 1 + (L-M-1)\exp(a)}, \ \frac{\exp(a)}{L-M-1+ \exp(a)}- \frac{1}{L-M}}\\
        & \leq \frac{\exp(a)-1}{L-M-1}.\label{eq:softmax_error}
    \end{align}
    Since indicator functions are bounded above by $1$, we have  
    \begin{align} \label{eq:sm_error1}
        \biggl|\frac{\ind(x_{L+1} = x_l = e_k)}{y(k) + \varepsilon}  \biggr| \leq \frac{1}{\varepsilon},  \quad \biggl|\frac{y(k) \ind(x_{L+1} = e_k)}{y(k) + \varepsilon}  \biggr| \leq \frac{1}{\varepsilon}, \\
    \end{align}
    For the second term, we have
    \begin{align} 
        \biggl| \frac{\ind(x_{L+1} = x_l = e_k)}{y(k) + \varepsilon} - \frac{\ind(x_{L+1} = x_l = e_k)}{\bar y(k) + \varepsilon}  \biggr| &\leq \frac{|\bar y(k)- y(k) |}{\varepsilon^2} \leq \frac{\sum_{l=M+1}^{L} |\sigma_l \big(a \cdot s^\top \big) -(L-M)^{-1}|}{\varepsilon^2} \notag \\ 
        &\leq \frac{\exp(a)-1}{\varepsilon^2},\label{eq:sm_error2}
    \end{align}
    where the last inequality follows from \eqref{eq:softmax_error}.
    Similarly, the following bound can be derived: 
    \begin{align} \label{eq:sm_error3}
        \biggl| \frac{y(k) \ind(x_{L+1} = e_k)}{y(k) + \varepsilon} - \frac{\bar y(k) \ind(x_{L+1} = e_k)}{\bar y(k) + \varepsilon}  \biggr|  \leq \frac{\exp(a)-1}{\varepsilon}.
    \end{align}
    Combining \eqref{eq:softmax_error}, \eqref{eq:sm_error1}, \eqref{eq:sm_error2} and \eqref{eq:sm_error3}, it holds that 
    \begin{align}
       |A - B| \leq \sum_{l=M+1}^L \EE \biggl[ 4\sum_{k \in [d]} \frac{\exp(a)-1}{\varepsilon^2 (L-M)} \cdot f(X) \biggr] \leq \frac{4Cd(\exp(a)-1)}{\varepsilon^2}  \leq \frac{8Cad}{\varepsilon^2},
    \end{align} 
    where the last inequality follows from $\exp(x)-1 \leq 2x$ for $0 \leq x \leq 1$. This concludes the proof of the lemma. 
\end{proof} 
    
\cref{lem:approximation2} provides the approximation error introduced by $\mu^\pi(e_k) \approx \bar y(k)$ in the transformer model.
    \begin{lemma}\label{lem:approximation2}
        For the transformer model defined in \eqref{eq:transformer} and any bounded function $f: \cX^L\to\RR$ such that $\sup_{x\in\cX^L} |f(x)| \leq C$ for a constant $C>0$, define two quantities $A$ and $B$ as
        \begin{align}
            A &:= \frac{1}{L-M} \sum_{l=M+1}^{L}  \EE_{X\mid\pi} \biggl[ \biggl(\sum_{k\in[d]}\frac{\ind(x_{L+1} = x_l = e_k)}{\bar y(k) + \varepsilon} - \frac{\bar y(k)\ind(x_{L+1} = e_k)}{\bar y(k)+\varepsilon} \biggr) \cdot f(X) \biggr],   \\
            B &:= \frac{1}{L-M} \sum_{l=M+1}^{L}  \EE_{X\mid\pi} \biggl[ \biggl(\sum_{k\in[d]}\frac{\ind(x_{L+1} = x_l = e_k)}{\mu^\pi(e_k)} - 1 \biggr) \cdot f(X) \biggr],  
        \end{align}
        where $\bar y = (L-M)^{-1} \sum_{l=M+1}^L x_l$.
        Under \Cref{asp:Markov_chain}, it holds that
        \begin{align}
            |A - B| &\leq  4C \cdot \frac{(1-\lambda)^{-1/2}(D_{\chi^2}(\mu_0\,\|\, \mu^\pi) + 1)^{1/4} + 2\sqrt{M}}{L^{1/2}\gamma} + C\gamma^{-1} \varepsilon. 
        \end{align}
        where $\mu_0(\cdot)$ is the initial distribution over the first $r_n$ tokens $X_{1:r_n}$. Here we let $D_{\chi^2}(\mu_0 \,\|\, \mu^\pi)$ to denote $D_{\chi^2}(\mu_0(X_{1:r_n}=\cdot ) \,\|\, \mu^\pi(X_{1:r_n}=\cdot ))$, i.e., the $\chi^2$-divergence between $\mu_n$ and the distribution over the first $r_n$ tokens under the stationary distribution $\mu^{\pi}$.
        \end{lemma}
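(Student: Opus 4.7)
The plan is to decompose $A - B$ into (i) an $\varepsilon$-regularization bias and (ii) an ergodic-average concentration error, and bound each term separately. Using the algebraic identities
\begin{align*}
\frac{1}{\bar y(k)+\varepsilon} - \frac{1}{\mu^\pi(e_k)} = \frac{\mu^\pi(e_k) - \bar y(k) - \varepsilon}{(\bar y(k)+\varepsilon)\mu^\pi(e_k)}, \qquad \frac{\bar y(k)}{\bar y(k)+\varepsilon} - 1 = -\frac{\varepsilon}{\bar y(k)+\varepsilon},
\end{align*}
the pointwise difference of the integrands of $A$ and $B$ separates into a term proportional to $\mu^\pi(e_k) - \bar y(k)$ (the concentration piece) and a term proportional to $\varepsilon$ (the bias piece). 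The bias piece is controlled using the uniform lower bound $\mu^\pi(e_k) \ge \gamma$, a direct consequence of \Cref{asp:Markov_chain} (as noted around \Cref{cor:station_lb}), yielding an $O(C\varepsilon/\gamma)$ contribution to $|A-B|$.

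For the concentration piece, the key algebraic observation is that the potentially singular factor $1/(\bar y(k)+\varepsilon)$ is tamed after summing in $l$ via the identity $\sum_{l=M+1}^L \ind(x_{L+1}=x_l=e_k) = (L-M)\ind(x_{L+1}=e_k)\bar y(k)$, which collapses the denominator to $\bar y(k)/(\bar y(k)+\varepsilon) \le 1$; the residual error then depends only on $|\bar y(k) - \mu^\pi(e_k)|/\mu^\pi(e_k) \le \gamma^{-1}|\bar y(k) - \mu^\pi(e_k)|$. I would next bound $\EE|\bar y(k) - \mu^\pi(e_k)|$ by transporting the expectation from the possibly non-stationary initial law to the stationary $\mu^\pi$ via Cauchy--Schwarz, $\EE[|Z|] \le (D_{\chi^2}(\mu_0\,\|\,\mu^\pi)+1)^{1/2}\cdot \EE_{\mu^\pi}[Z^2]^{1/2}$ with $Z = \bar y(k) - \mu^\pi(e_k)$, then applying Cauchy--Schwarz once more inside to yield the quartic rate $(D_{\chi^2}+1)^{1/4}\cdot \EE_{\mu^\pi}[Z^4]^{1/4}$. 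Under $\mu^\pi$, the fourth moment of this ergodic average is $O((L(1-\lambda))^{-2})$ by a standard spectral-gap calculation (Perron--Frobenius applied to $P_\pi$, cf. \Cref{sec:background_perron_frobenius}), giving the $(1-\lambda)^{-1/2}L^{-1/2}$ rate. The $\sqrt{M}/L^{1/2}$ correction accounts for the boundary effect of summing from $l=M+1$ to $L$ rather than over all indices: the excluded $M$ tokens introduce a deterministic $O(M/L)$ perturbation to $\bar y(k)$, which becomes the claimed $\sqrt{M}$ factor after another Cauchy--Schwarz adjustment.

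The main obstacle will be handling the random denominator $1/(\bar y(k)+\varepsilon)$ without encountering blow-ups on realizations where $\bar y(k)$ is atypically small, while simultaneously transporting the concentration estimate from $\mu^\pi$ back to the initial distribution $\mu_0$, for which $D_{\chi^2}(\mu_0\,\|\,\mu^\pi)$ can be as large as $\gamma^{-r_n}$ (cf. the bound used in the Stage I analysis). The telescoping identity above sidesteps the first issue by trading $1/(\bar y(k)+\varepsilon)$ for the bounded quantity $\bar y(k)/(\bar y(k)+\varepsilon)$, and iterating Cauchy--Schwarz twice (rather than once) reduces the exponent on $D_{\chi^2}+1$ from $1/2$ to $1/4$, which is precisely what keeps the final bound non-vacuous when the initial distribution is far from stationarity. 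Once both ingredients are in place, collecting the $\varepsilon/\gamma$, concentration, and boundary contributions immediately yields the claimed estimate.
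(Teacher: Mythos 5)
There is a genuine gap in the concentration estimate: the claim that ``iterating Cauchy--Schwarz twice reduces the exponent on $D_{\chi^2}+1$ from $1/2$ to $1/4$'' is incorrect, and this is precisely the point your sketch identifies as critical for the bound to be non-vacuous. The change-of-measure Cauchy--Schwarz gives
\begin{align}
\EE_{\mu_0}\bigl[|Z|\bigr] = \EE_{\mu^\pi}\bigl[|Z|\cdot \tfrac{d\mu_0}{d\mu^\pi}(X_{1:r_n})\bigr] \le \bigl(D_{\chi^2}(\mu_0\,\|\,\mu^\pi)+1\bigr)^{1/2}\cdot \EE_{\mu^\pi}\bigl[Z^2\bigr]^{1/2},
\end{align}
and the exponent $1/2$ on the $\chi^2$-divergence is attached to the likelihood ratio; your proposed second Cauchy--Schwarz ``inside'' only replaces $\EE_{\mu^\pi}[Z^2]^{1/2}$ by $\EE_{\mu^\pi}[Z^4]^{1/4}$ and leaves the $D_{\chi^2}$ factor untouched at power $1/2$. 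Since $\EE_{\mu^\pi}[Z^2]^{1/2}$ and $\EE_{\mu^\pi}[Z^4]^{1/4}$ are both of order $(L(1-\lambda))^{-1/2}$ under stationarity, your route delivers $(D_{\chi^2}+1)^{1/2}/\sqrt{L(1-\lambda)}$, which is strictly worse than the claimed $(D_{\chi^2}+1)^{1/4}/\sqrt{L(1-\lambda)}$ and (as you yourself note) the difference matters because $D_{\chi^2}(\mu_0\,\|\,\mu^\pi)$ can be as large as $\gamma^{-r_n}$.

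The paper obtains the $1/4$ power by a structurally different argument. Rather than bounding $\EE_{\mu_0}|\bar y(k)-\mu^\pi(e_k)|$ coordinatewise, the $\err_1$ and $\err_3$ terms are controlled via Cauchy--Schwarz against the chi-square divergence in the \emph{state} variable $k$, giving $|A-B|\lesssim C\gamma^{-1/2}\sqrt{\EE_{X\mid\pi}\, D_{\chi^2}(\bar y_X\,\|\,\mu^\pi)}$, and then Lemma~\ref{lem:convergence chi-square} is invoked to show $\EE_{X\mid\pi}\,D_{\chi^2}(\bar y_X\,\|\,\mu^\pi)\lesssim \bigl((1-\lambda)^{-1}\sqrt{D_{\chi^2}(\mu_0\,\|\,\mu^\pi)+1}+M\bigr)/(L\gamma)$. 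The crucial point is that the non-stationarity enters the expected empirical chi-square only with a $\sqrt{D_{\chi^2}(\mu_0\,\|\,\mu^\pi)+1}$ factor: in the spectral/trace Cauchy--Schwarz inside that lemma, the initial law appears through $\sum_b p^\pi(B_{l'}=b)^2/\mu^\pi(b)\le D_{\chi^2}(\mu_0\,\|\,\mu^\pi)+1$, which then sits \emph{under} a square root coming from $\trace(W^\top V)\le\sqrt{\trace(W^\top W)\,\trace(V^\top V)}$. It is the composition of this inner square root with the outer $\sqrt{\EE D_{\chi^2}}$ that produces the quarter power; a plain change of measure on the first moment cannot do so. Your algebraic collapse $\sum_l\ind(x_{L+1}=x_l=e_k)=(L-M)\,\ind(x_{L+1}=e_k)\bar y(k)$ and the $O(C\varepsilon/\gamma)$ regularization estimate are both sound (the paper applies the same collapse, after first replacing $f$ by $C$), but the second-moment/fourth-moment step needs to be replaced by the trace argument, or some equivalent second-moment estimate of the \emph{empirical chi-square} that carries a $\sqrt{D_{\chi^2}+1}$ prefactor, for the bound to close.
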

    
        \begin{proof}[Proof of \cref{lem:approximation2}]
            Let us use $\bar y_X(\cdot)$ to remind the readers that $\bar y(\cdot)$ is also a function of $X$. 
            We simplify the expectation $\EE_{X\mid\pi}$ by $\EE$ in this proof. 
            By rearranging the terms, we have
            \begin{align}
                |A - B|  &= \biggl|\frac{1}{L-M} \sum_{l=M+1}^{L}  \EE \biggl[ \biggl(\sum_{k\in[d]}  \frac{\ind(x_{L+1} = x_l = e_k)}{\bar y_X(k) + \varepsilon}  - \sum_{k\in[d]}\frac{\ind(x_{L+1} = x_l = e_k)}{\mu^\pi(e_k)}\\
                & \hspace{5cm}- \sum_{k\in[d]}\frac{\bar y_X(k) \cdot \ind(x_{L+1} = e_k)}{\bar y_X(k)+\varepsilon} + 1 \biggr) \cdot  f(X)  \biggr] \biggr| \\
                &= \biggl|\frac{1}{L-M} \sum_{l=M+1}^{L}  \EE \biggl[ \biggl(\sum_{k\in[d]} \Bigl(  \frac{\mu^\pi(e_k) -\bar y_X(k) }{(\bar y_X(k)+\varepsilon) \cdot \mu^\pi(e_k)}  - \frac{\varepsilon}{(\bar y_X(k)+\varepsilon)\cdot \mu^\pi(e_k)} \Bigr)\cdot \ind(x_{L+1} = x_l = e_k) \\
                &\hspace{7cm} - \sum_{k\in[d]}\frac{\varepsilon\ind(x_{L+1} = e_k) }{\bar y_X(k)+\varepsilon}\biggr) \cdot  f(X)  \biggr] \biggr|. 
            \end{align}
            Here, we have three terms to control. 
            For the  first error term, we define 
            \begin{align}
                \err_1 &\defeq \biggl|\frac{1}{L-M} \sum_{l=M+1}^{L}  \EE \biggl[  \sum_{k\in[d]} \frac{\mu^\pi(e_k) -\bar y_X(k) }{(\bar y_X(k)+\varepsilon) \cdot \mu^\pi(e_k)}  \cdot \ind(x_{L+1} = x_l = e_k) \cdot f(X) \biggr]\biggr|\\
                &\le \frac{C}{L-M} \sum_{l=M+1}^{L}  \EE \biggl[  \sum_{k\in[d]} \frac{|\mu^\pi(e_k) -\bar y_X(k)|}{(\bar y_X(k)+\varepsilon) \cdot \mu^\pi(e_k)}  \cdot \ind(x_{L+1} = x_l = e_k)\biggr] \\
                & \le C \cdot \EE \biggl[  \sum_{k\in[d]} \frac{|\mu^\pi(e_k) -\bar y_X(k)|}{\mu^\pi(e_k)}\cdot \ind(x_{L+1}=e_k)\biggr]. 
            \end{align}
            The first inequality above holds by noting that $\sup_X |f(X)| \le C$ and the last inequality holds by noting that $\bar y_X(e_k) = (L-M)^{-1} \sum_{l=M+1}^L \ind(x_l = e_k)$.
            Using Cauchy-Schwarz inequality, we arrive at 
            \begin{align}
                \err_1 
                &\le C\cdot \bigg( \EE\bigg[\sum_{k\in[d]} \Big(\frac{\mu^\pi(e_k) - \bar y_X(k)}{\sqrt{\mu^\pi(e_k)}}\Big)^2 \bigg] \cdot \EE\bigg[\sum_{k\in[d]} \frac{\ind(x_{L+1}=e_k)}{\mu^\pi(e_k)}\bigg] \bigg)^{1/2} \\
                &\le C \gamma^{-1/2}\cdot \sqrt{\EE\left[D_{\chi^2}\left(\bar y_X(\cdot) \,\|\, \mu^\pi(x_{L+1}=\cdot)\right)\right]}.
            \end{align}
            For the second term, we similarly have
            \begin{align}
                \err_2  &= \bigg|\frac{1}{L-M} \sum_{l=M+1}^{L} \sum_{k\in [d]} \EE\left[
                \frac{\varepsilon}{(\bar y_X(k)+\varepsilon)\mu^\pi(e_k)} \cdot \ind(x_{L+1} = x_l = e_k) \cdot f(X)
                \right] \bigg| \\
                &\le C\bigg|\sum_{k\in [d]} \EE\bigg[
                \frac{\varepsilon \cdot \ind(x_{L+1} = e_k)}{\mu^\pi(e_k)} 
                \bigg] \bigg| \le C\gamma^{-1} \varepsilon. 
            \end{align} 
            Lastly, we have the error term 
            \begin{align}
                \err_3 &\defeq \frac{1}{L-M} \sum_{l=M+1}^{L} \EE\bigg[\sum_{k\in[d]}\frac{\varepsilon\ind(x_{L+1} = e_k) }{\bar y_X(k)+\varepsilon} \cdot f(X) \bigg] \le C \cdot \EE\bigg[\sum_{k\in[d]}\frac{\varepsilon\ind(x_{L+1} = e_k) }{\bar y_X(k)+\varepsilon} \bigg] \\
                &\le C\cdot \bigg|\EE\bigg[\sum_{k\in[d]}\frac{\varepsilon\ind(x_{L+1} = e_k) }{\mu^\pi(e_k) + \varepsilon}\bigg]\bigg| + C \cdot 
                \bigg|\sum_{k\in[d]}\EE\bigg[\frac{\varepsilon(\bar y_X(k) - \mu^\pi(e_k)) \cdot \ind(x_{L+1}=e_k)}{(\mu^\pi(e_k) + \varepsilon)(\bar y_X(k) + \varepsilon)} \bigg]\bigg|.
            \end{align}
            Here, the first term is upper bounded by $C\gamma^{-1}\varepsilon$, and for the second term we have by Cauchy-Schwartz that 
            \begin{align}
            & C\cdot \biggl|\sum_{k\in[d]}\EE\bigg[\frac{\varepsilon(\bar y_X(k) - \mu^\pi(e_k))\cdot \ind(x_{L+1}=e_k)}{(\mu^\pi(e_k) + \varepsilon)(\bar y_X(k) + \varepsilon)} \bigg]\biggr| \\
            &\quad \le C\cdot \sqrt{\EE\bigg[\sum_{k\in[d]}
                \frac{(\bar y_X(k) - \mu^\pi(e_k))^2 }{\mu^\pi(e_k)}
            \bigg] \cdot \EE\bigg[ \sum_{k\in[d]}\frac{ \varepsilon^2 \ind(x_{L+1}=e_k)}{(\bar y_X(k) + \varepsilon)^2 \mu^\pi(e_k)}\bigg]} \\
            &\quad \le C\gamma^{-1/2} \cdot \sqrt{\EE_X D_{\chi^2}(\bar y_X(\cdot )\,\|\, \mu^\pi(x_{L+1}=\cdot))},
            \end{align}
            which shares a similar upper bound as $\err_1$. 
            Now we invoke \Cref{lem:convergence chi-square} to conclude that 
            \begin{align}
                |A-B| &\le 
                \err_1 + \err_2 + \err_3 
                \le 2C \gamma^{-1/2} \cdot \sqrt{\EE_X D_{\chi^2}(\bar y_X(\cdot )\,\|\, \mu^\pi(x_{L+1}=\cdot))} + C\gamma^{-1}\varepsilon \\
                &\le 2C \gamma^{-1/2} \bigg( \frac{4(1-\lambda)^{-1}\sqrt{D_{\chi^2}(\mu_0\,\|\, \mu^\pi) + 1} + 16 M}{L \cdot \min_{x_{L+1}} \mu^\pi(x_{L+1})} \bigg)^{1/2} + C\gamma^{-1} \cdot \varepsilon \\
                &\le 2C \gamma^{-1} \cdot \frac{2(1-\lambda)^{-1/2}(D_{\chi^2}(\mu_0\,\|\, \mu^\pi) + 1)^{1/4} + 4 \sqrt{M}}{L^{1/2}} + C\gamma^{-1} \varepsilon.
            \end{align}
            Hence, we complete our proof of \Cref{lem:approximation2}.
        \end{proof}

        \cref{lem:approximation3} covers the approximation error due to the mixing property of the Markov chain.

\begin{lemma}\label{lem:approximation3}
Let $\cS \in \HleqD $ be a fixed set. 
For any $h \in \cS$, let $\tilde \sigma^\h$ and $\sigma^\h$ be two fixed probability distributions over $[M]$. That is, for any $i, j \in [M]$, we have  $\tilde \sigma_{-i}^\h , \sigma_{-j}^\h \in [0,1]$, and    $\sum_{i=1}^M \tilde \sigma_{-i}^\h =  \sum_{j=1}^M \sigma_{-j}^\h = 1$. 
Given these distributions over $[M]$, we define 
$$
\tilde v_{L+1}^\head{h} := \sum_{i  \in [M]} \tilde \sigma_{-i }^\head{h} \cdot x_{L+1-i}, \qquad \text{and} \qquad  v_l^\head{h} := \sum_{j  \in [M]} \sigma_{- j}^\head{h} \cdot x_{l-j},
$$
where we let $x_{l}\in\cX$ denote the $l$-th token in the Markov chain for all $l \in [L+1] $. 
Moreover, with slight abuse of notation, 
we let $(z, Z)=(z, z_{-1}, \ldots, z_{-M})\in\cX^{M+1}$  and $(x, X)=(x, x_{-1}, \ldots, x_{-M})\in\cX^{M+1}$ be two independent  random variables sampled from the stationary distribution $\mu^{\pi}$. 
We define random variables $\tilde v^\h (Z)$ and $v^\h (X) $ as 
\begin{align}
   \tilde v^\head{h}(Z) := \sum_{i  \in [M]} \tilde \sigma_{-i }^\head{h} \cdot  z_{-i} , \qquad \textrm{and} \qquad  v^\head{h}(X) := \sum_{j  \in [M]} \sigma_{-j }^\head{h} \cdot  x_{-j }.
\end{align}
Using $\tilde v_{L+1}^\h$, $v_{l}^\h$, $\tilde v^\h (Z)$, and $v^\h (X)$, we define two quantities $A$ and $B$ as
\begin{align}
A &:= \frac{1}{L-M} \sum_{l=M+1}^{L}  \EE_{X\mid\pi} \biggl[ \biggl(\sum_{k\in[d]}\frac{\ind(x_{L+1} = x_l = e_k)}{\mu^\pi(e_k)} - 1 \biggr) \cdot \prod_{h\in \cS} \langle v_l^\head{h}, \tilde v_{L+1}^\head{h} \rangle \biggr],  \\
B &:= \EE_{(x, X), (z, Z) \sim \mu^\pi \times \mu^\pi} \biggl[ \biggl(\sum_{k\in[d]}\frac{\ind(x = z = e_k)}{\mu^\pi(e_k)} - 1 \biggr) \cdot \prod_{h\in \cS} \langle \tilde v^\h(Z), v^\h(X) \rangle \biggr],
\end{align}
where $\EE_{X \given \pi}$ means that the expectation is taken with respect to the randomness of the Markov chain with transition $\pi$. 
Then, under \cref{asp:Markov_chain}, we have 
\begin{align}
|A - B| \leq \frac{8M}{L\gamma} + \frac{16\sqrt{D_{\chi^2}(\mu_0\,\|\, \mu^\pi) + 1}}{L (1-\lambda) \gamma^{|\cS|/2+1}},
\end{align}
where $\mu_0(\cdot)$ is the initial distribution over the first $r_n$ tokens $X_{1:r_n}$ and $D_{\chi^2}(\mu_0 \,\|\, \mu^\pi)$  is a short-hand notation of  $D_{\chi^2}(\mu_0(X_{1:r_n}=\cdot ) \,\|\, \mu^\pi(X_{1:r_n}=\cdot ))$. 
\end{lemma}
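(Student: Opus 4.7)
The plan is to decompose the discrepancy $A-B$ according to the overlap between the two length-$(M{+}1)$ windows $[l-M,l]$ and $[L+1-M,L+1]$ whose tokens determine each summand. Writing
\begin{align}
A - B = \frac{1}{L-M}\sum_{l=M+1}^{L}\bigl(\EE_{X\mid\pi}[F_l] - B\bigr),\quad F_l := \Bigl(\sum_{k}\tfrac{\ind(x_{L+1}=x_l=e_k)}{\mu^\pi(e_k)}-1\Bigr)\prod_{h\in\cS}\langle v_l^\h,\tilde v_{L+1}^\h\rangle,
\end{align}
I would handle the ``overlapping'' range $l\in(L-M,L]$ by a direct bound: since $\mu^\pi(e_k)\geq\gamma$ and $|\langle v_l^\h,\tilde v_{L+1}^\h\rangle|\leq 1$, each $|F_l|$ and $|B|$ is at most $2/\gamma$, so these at most $M$ terms contribute at most $4M/((L-M)\gamma)\leq 8M/(L\gamma)$ to $|A-B|$, matching the first term in the claimed bound.

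For the ``non-overlapping'' range $l\leq L-M$, the windows are disjoint and I would apply Cauchy--Schwarz in the Radon--Nikodym form
\begin{align}
\bigl|\EE_{p^\pi_l}[F_l]-\EE_{\mu^\pi\otimes\mu^\pi}[F_l]\bigr|\leq\bigl\|F_l\bigr\|_{L^2(\mu^\pi\otimes\mu^\pi)}\cdot\sqrt{D_{\chi^2}\!\bigl(p^\pi_l\,\big\|\,\mu^\pi\otimes\mu^\pi\bigr)},
\end{align}
where $p^\pi_l$ is the joint law of $(X_{(l-M):l},X_{(L+1-M):(L+1)})$. The $L^2$-norm of $F_l$ against the product measure is bounded by a constant times $\gamma^{-|\cS|/2-1}$, obtained by using $\mu^\pi(e_k)\geq\gamma$ for the single-token ratio and the nonnegativity/probabilistic structure of $\prod_h\langle\tilde v^\h,v^\h\rangle$ under $\mu^\pi\otimes\mu^\pi$ (expanding the product yields indicator events whose mean is controlled by $\gamma^{|\cS|}$-size atoms of the stationary distribution).

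The remaining ingredient is the mixing bound
\begin{align}
D_{\chi^2}\!\bigl(p^\pi_l\,\big\|\,\mu^\pi\otimes\mu^\pi\bigr)\leq\lambda^{2(L-l-M)}\cdot\bigl(D_{\chi^2}(\mu_0\,\|\,\mu^\pi)+1\bigr),
\end{align}
which comes from iterating the $r_n$-tuple transition matrix $P_\pi$ and invoking Perron--Frobenius together with the assumed spectral gap $|\lambda_2(P_\pi)|\leq\lambda$: after $L-l-M$ transitions, the conditional law of the second window given the first contracts to $\mu^\pi$ at geometric rate $\lambda$, and the $\chi^2$-distance inherits this rate from the initial chi-square gap $D_{\chi^2}(\mu_0\,\|\,\mu^\pi)$. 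Combining the two factors and summing the geometric series $\sum_{l\leq L-M}\lambda^{L-l-M}\leq 1/(1-\lambda)$, then dividing by $L-M\geq L/2$, produces the second term $16\sqrt{D_{\chi^2}(\mu_0\|\mu^\pi)+1}/(L(1-\lambda)\gamma^{|\cS|/2+1})$ claimed in the lemma.

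The main obstacle I anticipate is the careful accounting of the $\gamma$-powers: the exponent $|\cS|/2+1$ must emerge cleanly from the Cauchy--Schwarz step, which requires identifying the right reference measure so that $\|F_l\|_{L^2}$ and the $\chi^2$-divergence balance correctly. A second technical wrinkle is that $p^\pi_l$ is a joint distribution over two disjoint blocks of a single chain rather than a simple $l$-step marginal, so the spectral decomposition of $P_\pi$ must be applied to the "gap'' between the two blocks while correctly folding the non-stationary initial distribution $\mu_0$ into the prefactor; this is where the $\sqrt{D_{\chi^2}(\mu_0\,\|\,\mu^\pi)+1}$ term arises via standard Perron--Frobenius-type estimates (cf.\ \Cref{lem:convergence chi-square}).
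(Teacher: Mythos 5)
Your overall decomposition — handling the overlapping range $l>L-M$ by a crude bound and the non-overlapping range by a concentration argument — matches the spirit of the paper's proof (which, via Lemma~\ref{lem:hat p-approx-TV}, splits the error into $\mathrm{TV}_0$ for the overlapping indices and $\mathrm{TV}_1+\mathrm{TV}_2$ for the rest). Your Cauchy--Schwarz/$\chi^2$ variant of the non-overlapping step is reasonable in principle, and is a genuinely different route from the paper, which instead expands $\prod_{h\in\cS}\langle v_l^{(h)},\tilde v_{L+1}^{(h)}\rangle$ into indicators, introduces the $l$-averaged joint law $\hat p^\pi$, and works entirely with total variation.

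However, there is a concrete gap in the stated mixing inequality
\begin{align}
D_{\chi^2}\bigl(p_l^\pi\,\big\|\,\mu^\pi\otimes\mu^\pi\bigr)\leq\lambda^{2(L-l-M)}\bigl(D_{\chi^2}(\mu_0\,\|\,\mu^\pi)+1\bigr),
\end{align}
which is false. Writing $p_l^\pi(Y,Y')=p^\pi(Y_l=Y)\,p^\pi(Y_{L+1}=Y'\mid Y_l=Y)$ and factoring the $\chi^2$-sum, one gets
\begin{align}
D_{\chi^2}(p_l^\pi\,\|\,\mu^\pi\otimes\mu^\pi)+1=\sum_{Y}\frac{p^\pi(Y_l=Y)^2}{\mu^\pi(Y)}\Bigl(D_{\chi^2}\bigl(p^\pi(Y_{L+1}\in\cdot\mid Y_l=Y)\,\|\,\mu^\pi\bigr)+1\Bigr).
\end{align}
Even when $L-l-M$ is huge so that the conditional factor is essentially $1$, the remaining sum equals $D_{\chi^2}(p^\pi(Y_l=\cdot)\,\|\,\mu^\pi)+1$, which for $l$ near $M+1$ is of order $D_{\chi^2}(\mu_0\,\|\,\mu^\pi)+1$ and does not shrink with $L-l-M$. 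Moreover, the conditional factor at a fixed state $Y$ starts from a Dirac mass, so its decay prefactor is roughly $1/\min\mu^\pi$, not $D_{\chi^2}(\mu_0\,\|\,\mu^\pi)+1$. A corrected version of your inequality would look like
\begin{align}
\sqrt{D_{\chi^2}(p_l^\pi\,\|\,\mu^\pi\otimes\mu^\pi)}\lesssim\lambda^{l-M-r_n}\sqrt{D_{\chi^2}(\mu_0\,\|\,\mu^\pi)+1}+\lambda^{L-l-M+r_n}\big/\sqrt{\min_E\mu^\pi(E)},
\end{align}
with separate contributions from the non-stationarity of $Y_l$ and from the decoupling of the two windows; summing both geometric series and dividing by $L-M$ then recovers the right scaling. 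You also need to be a little more careful with the $\gamma$-exponent: your pointwise bound already gives $\|F_l\|_{L^\infty}\leq 2/\gamma$, which is tighter than the $\gamma^{-|\cS|/2-1}$ you invoke, so the $\gamma^{|\cS|/2}$ part of the final exponent must in fact come out of the $1/\sqrt{\min_E\mu^\pi(E)}$ factor in the corrected $\chi^2$-bound (using \Cref{cor:station_lb}), not out of $\|F_l\|_{L^2}$.
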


\begin{proof}[Proof of \cref{lem:approximation3}]
By triangular inequality, we have
\begin{align}
    |A - B|  & \leq \biggl| \frac{1}{L-M} \sum_{l=M+1}^{L}  \EE _{X\mid\pi} \biggl[ \biggl(\sum_{k\in[d]}\frac{\ind(x_{L+1} = x_l = e_k)}{\mu^\pi(e_k)} \biggr) \cdot \prod_{h\in \cS} \langle v_l^\head{h}, \tilde v_{L+1}^\head{h} \rangle \biggr] \\
    & \qquad -\EE_{(x,X), (z,Z)\sim \mu^\pi \times \mu^\pi} \biggl[ \biggl(\sum_{k\in[d]}\frac{\ind(x = z = e_k)}{\mu^\pi(e_k)}\biggr) \cdot \Bigl( \prod_{h\in \cS} \langle \tilde v^\head{h}(Z), v^\head{h}(X)  \rangle \Bigr)   \biggr] \biggr| \\
    & \qquad + \biggl| \frac{1}{L-M} \sum_{l=M+1}^{L}  \EE _{X\mid\pi} \biggl[ \prod_{h\in \cS} \langle v_l^\head{h}, v_{L+1}^\head{h} \rangle \biggr]  - \EE_{ (x,X), (z,Z)\sim \mu^\pi \times \mu^\pi}  \biggl[  \biggl( \prod_{h\in \cS} \langle \tilde v^\head{h}(Z), v^\head{h}(X)  \rangle   \biggr) \biggr] \biggr|. 
\end{align}
We will establish the upper bounds for each of the absolute value terms. We first focus on the first absolute value term.

\paragraph{Bounding the First Absolute Value Term}
Let $p^\pi(X)$ denote the joint distribution of the whole sequence $X$ under kernel $\pi$.
By the definitions of $\tilde v_{L+1}^\head{h}$ and  $ v_l^\head{h} $, we have 
\begin{align}
\langle v_l^\head{h}, v_{L+1}^\head{h} \rangle & =   \sum_{i_h, j_h \in [M]} \sigma_{-i_h}^\head{h} \cdot  \sigma_{-j_h}^\head{h} \cdot \langle  x_{L+1 - i_h} , x_{l-j_h} \rangle \\
& =  \sum_{i_h, j_h \in [M]}  \sum_{ k \in [d]}  \sigma_{-i_h}^\head{h} \cdot  \sigma_{-j_h}^\head{h} \cdot \ind(x_{L+1 - i_h} = x_{l-j_h} = e_k),
\end{align}
where we use $(i_h, j_h)$ as the indices to highlight that they are associated with head $h$. And we use $k_h\in[d]$ to index all the possible common values for $x_{l-i_h}$ and $x_{L+1-j_h}$. 
Then plugging this equality into $\prod_{h\in \cS} \langle v_l^\head{h}, v_{L+1}^\head{h} \rangle $ and exchanging the order of product and summation, we have 
\begin{align} \label{eq:expand_prod}
    &  \biggl(\sum_{k\in[d]}\frac{\ind(x_{L+1} = x_l = e_k)}{\mu^\pi(e_k)} \biggr) \cdot \prod_{h\in \cS} \langle v_l^\head{h}, v_{L+1}^\head{h} \rangle   \\
    & \qquad =   \sum_{\{ (i_h, j_h) \} _{h\in \cS }}  \sum_{ \{ k_h \}_{h\in \cS }, k \in [d] }   \frac{\ind(x_{L+1} = x_l = e_k)}{\mu^\pi(z = e_k)}  
    \cdot  \bigg( \prod _{h \in \cS} \sigma_{-i_h}^\head{h} \cdot  \sigma_{-j_h}^\head{h} \cdot \ind(x_{L+1 - i_h} = x_{l-j_h} = e_{k_h} ) \bigg) ,
\end{align}
where the summation means that we sum over all possible values that $\{i_h, j_h, k_h\}_{ h\in \cS} $ and $k$ can take. Specifically, 
each $i_h$ and $j_h$ take values in $[M]$, and  each $k_h$ and $k$ takes values in $[d]$. 
Moreover, using the property of indicator functions, we can further simplify \eqref{eq:expand_prod} by gathering all indicators: 
\begin{align}
    \label{eq:expand_prod2}
    &  \biggl(\sum_{k\in[d]}\frac{\ind(x_{L+1} = x_l = e_k)}{\mu^\pi(e_k)} \biggr) \cdot \prod_{h\in \cS} \langle v_l^\head{h}, v_{L+1}^\head{h} \rangle   \\
   &  \qquad =   \sum_{\{ (i_h, j_h )\} _{h\in \cS }}   \bigg( \prod _{h \in \cS} \sigma_{-i_h}^\head{h} \cdot  \sigma_{-j_h}^\head{h} \bigg) \cdot \bigg( \sum_{ \{ k_h \}_{h\in \cS }, k \in [d] }   \frac{\ind(x_{L+1} = x_l = e_k, x_{L+1 - i_h} = x_{l-j_h} = e_{k_h}, \forall h \in \cS )}{\mu^\pi(z = e_k)}  
     \bigg) . 
\end{align}
Now we take expectations with respect to the randomness of $X$ on both ends of \eqref{eq:expand_prod2}
and get  
\begin{align}
    & \frac{1}{L-M} \sum_{l=M+1}^{L}  \EE  \biggl[ \biggl(\sum_{k\in[d]}\frac{\ind(x_{L+1} = x_l = e_k)}{\mu^\pi(e_k)} \biggr) \cdot \prod_{h\in \cS} \langle v_l^\head{h}, v_{L+1}^\head{h} \rangle \biggr]\\
    & =   \!\!\! \sum_{\{ (i_h, j_h)\} _{h\in\cS}}  \!\!\!\left(\prod_{h\in\cS}\tilde \sigma_{-i_h}^\h \sigma_{-j_h}^\head{h} \right) \cdot \sum_{\{k_h\} _{h\in\cS}, k \in [d]} \frac{\sum_{l=M+1}^{L} p^\pi(x_{L+1} = x_l = e_k, x_{L+1-i_h} = x_{l-j_h} =e_{k_h}, \forall h \in \cS) }{(L-M) \cdot \mu^\pi(z = e_k)}.
\end{align}

To further simplify the above equality, 
we define a new probability distribution over $X_{L+1-M:L+1}$ and another subsequence of length $M+1$. 
Note that $X_{L+1-M:L+1}   $ contains is a subsequence with $M+1$ tokens. 
We let $(z, Z) =  (z, z_{-1}, \ldots, z_{-1} , z_{-M})$ denote a random token sequence of size $M+1$ in reverse order. We define a joint distribution $\hat p^{\pi}$ over $X_{L+1-M:L+1} $ and $(z, Z)$ as follows. 
Let  $E=(E_{0}, E_{-1}, \ldots, E_{-M} )$ and $E'= (E'_{0}, E'_{-1}, \ldots,   E'_{-M})$ be two elements in  $\cX^{M+1}$. That is, each component of $E$ and $E'$ are in $\cX$. The probability mass function of $\hat p^{\pi}$ 
is defined as 
\begin{align}\label{def:hat_p_pi}
&  \hat p^\pi\big ((x_{L+1},  x_{L},\ldots, x_{L+1-M} )=E, (z, Z)=E' \big )  \\
& \qquad  = \frac{1}{L-M} \sum_{l=M+1}^{L} p^\pi\bigl ((x_{L+1},  x_{L},\ldots, x_{L+1-M} )=E , (x_l, x_{l-1}, \ldots, x_{l = M })=E' \bigr ).
\end{align} 
That is, $\hat p^{\pi} $ can be viewed as the joint distribution of $X_{L+1-M: L+1} $ with an averaged distribution of the history. 
When $L$ is sufficiently large, by the mixing property of the Markov chain, we expect that, under $\hat p^{\pi}$, $(z,Z)$ is approximately independent of $X_{L+1-M: L+1} $, and the marginal distributions of $(z, Z) $ and $X_{L+1-M: L+1} $ are both  close to the stationary distribution $\mu^{\pi}$. We will translate this intuition into a rigorous argument in \Cref{lem:hat p-approx-TV}, which bounds the total-variation distance between $\hat p^{\pi}$ and the product distribution  $\mu^{\pi} \times \mu^{\pi} $. 

With $\hat p^{\pi} $ defined in \eqref{def:hat_p_pi}, we can rewrite the expectation above as 
\begin{align} \label{eq:g2_first_term}
& \frac{1}{L-M} \sum_{l=M+1}^{L}  \EE  \biggl[ \biggl(\sum_{k\in[d]}\frac{\ind(x_{L+1} = x_l = e_k)}{\mu^\pi(e_k)} \biggr) \cdot \prod_{h\in \cS} \langle v_l^\head{h}, v_{L+1}^\head{h} \rangle \biggr]\\
    & \qquad =   \!\!\! \sum_{\{ (i_h, j_h)\} _{h\in\cS}}  \!\!\!\left(\prod_{h\in\cS}\tilde \sigma_{-i_h}^\h \sigma_{-j_h}^\head{h} \right) \cdot \sum_{\{  k_h \} _{h\in\cS}, k\in[d]  } \frac{\hat p^\pi(x_{L+1} = z = e_k, x_{L+1-i_h} =z_{-j_h} = e_{k_h}, \forall h \in \cS) }{\mu^\pi(z = e_k)}.  \notag 
\end{align}

Similarly, by the definitions of $\tilde v^\h (Z) $ and $v^\h (Z)$, we can write $\langle \tilde v^\h (Z) , v^\h (X) \rangle  $ as 
$$
\langle\tilde v^\h (Z) , v^\h (X) \rangle =  \sum_{i_h, j_h \in [M]}  \sum_{ k_h \in [d]}  \sigma_{-i_h}^\head{h} \cdot  \sigma_{-j_h}^\head{h} \cdot \ind(z_{ - i_h} = x_{ j_h} = e_k).
$$
Then, multiplying these terms with $h \in \cS$, we can write 
\begin{align}
 & 
\biggl(\sum_{k\in[d]}\frac{\ind(x = z = e_k)}{\mu^\pi(e_k)}\biggr) \cdot 
 \prod_{h\in \cS} \langle\tilde v^\h (Z) , v^\h (X) \rangle \notag \\
 &  \qquad =   \sum_{\{ (i_h, j_h) \} _{h\in \cS }}   \bigg( \prod _{h \in \cS} \sigma_{-i_h}^\head{h} \cdot  \sigma_{-j_h}^\head{h} \bigg) \cdot \bigg( \sum_{ \{ k_h \}_{h\in \cS }, k \in [d] }   \frac{\ind(z  = x  = e_k, z_{ - i_h} = x_{-j_h} = e_{k_h}, \forall h \in \cS )}{\mu^\pi(z = e_k)}  
     \bigg) . \label{eq:second_term_prod}
\end{align}
Recall that here $(z, Z) = (z, z_{-1}, \ldots, z_{-M})$ and $(x, X) = (x, x_{-1}, \ldots, x_{-M})$ are independently sampled from the stationary distribution $\mu^{\pi}$. Taking the expectation under $\mu^{\pi} $, we have  


\begin{align}
    &\EE_{(x,X), (z,Z)\sim \mu^\pi \times \mu^\pi} \biggl[ \biggl(\sum_{k\in[d]}\frac{\ind(x = z = e_k)}{\mu^\pi(e_k)}\biggr)\cdot \biggl( \prod_{h\in \cS} \langle \tilde v^\head{h}(Z), v^\head{h}(X)  \rangle \biggr)    \biggr] \label{eq:g3_first_term} \\
    &\qquad = \!\!\!\sum_{\{(i_h, j_h)\} _{h\in\cS}} \!\!\!\left(\prod_{h\in\cS}\tilde \sigma_{-i_h}^\h \sigma_{-j_h}^\head{h}\right) \cdot \!\!\!\sum_{\{  k_h \} _{h\in\cS}, k\in[d]  } \!\!\! \frac{\mu^\pi(x = e_k, x_{-i_h} = e_{k_h}, \forall h \in \cS)\cdot \mu^\pi(z = e_k, z_{-j_h} = e_{k_h}, \forall h \in \cS) }{\mu^\pi(z = e_k)} . 
\end{align}

To bound the first absolute value term in the upper bound on $|A - B|$, we aim to compare \eqref{eq:g2_first_term} and \eqref{eq:g3_first_term}.
To this end,  let us fix collections of index pairs $ (i_h, j_h) _{h\in\cS}$.
Let $\cS_1 = \{ i_h: h\in\cS\}$ and $\cS_2 = \{ j_h:h\in\cS \}$ be the unique values in  $ ( i_h)_{h\in \cS} $ and $( j_h )_{h\in \cS} $. 
Since there might exists two elements $h$ and $h'$ in $\cS$ such that $i_h = i_{h'}$ or $j_h = j_{h'}$,  $|\cS_1|$ and  $|\cS_{2}|$ might be strictly less than $|\cS|$.  
As a result, $\hat p^\pi(x_{L+1} = z = e_k, x_{L+1-i_h} =z_{-j_h} = e_{k_h}, \forall h \in \cS) $ only involves random variables $x_{L+1}$, $X_{L+1 - \cS_1} = \{ x_{L+1 - i}\}_{i\in \cS_1}$,  $z$, $Z_{-\cS_2} = \{ z_{-j}\}_{j   \in \cS_{2}} $, which are a subset of the random variables defined in \eqref{def:hat_p_pi}.  
Similarly, 
$$
\mu^\pi(x = e_k, x_{-i_h} = e_{k_h}, \forall h \in \cS)\cdot \mu^\pi(z = e_k, z_{-j_h} = e_{k_h}, \forall h \in \cS) 
$$ 
only involves a subset of random variables  $x$, $X_{-\cS_1} = \{ x_{-i}\}_{i\in \cS_1}$, $z$, and $Z_{-\cS_2}$. 
Let us define  $\bar E =(E_0, (E_{-i})_{i\in\cS_1})\in \cX^{|\cS_1|+1}$ and $  \bar E'=(E_0', (E_{-j}')_{j\in\cS_2}) \in \cX^{|\cS_2|+1}$. 
By enumerating $\bar E $  in $\cX^{|\cS_1|+1}$ and $\bar E'$ in $\cX^{|\cS_2|+1}$, we equivalently enumerate all possible values the above random variables can take. 
Therefore, by comparing \eqref{eq:g2_first_term} with \eqref{eq:g3_first_term}, we have 
\begin{align}
    &\sum_{ \{ k_h\}  _{h\in\cS} , k\in[d]}
    \bigl|\hat p^\pi(x_{L+1} = z = e_k, x_{L+1-i_h} =z_{-j_h} = e_{k_h}, \forall h \in \cS) \\ 
    &\hspace{3cm} - \mu^\pi(x = e_k, x_{-i_h} = e_{k_h}, \forall h \in \cS)\cdot \mu^\pi(z = e_k, z_{-j_h} = e_{k_h}, \forall h \in \cS)\bigr| \\ 
    &\qquad = \sum_{\bar E, \bar E'} \bigl|\hat p^\pi \big ((x_{L+1}, X_{L+1 - \cS_1})=\bar E, (z, Z_{-\cS_2})= \bar E' \big ) - \mu^\pi \bigl ((x_{L+1}, X_{L+1-\cS_1})=\bar E\bigr )\cdot \mu^\pi \big ((z, Z_{-\cS_2})=E' \bigr )\bigr| \\
    &\hspace{3cm} \cdot \ind(E_{0}=E'_{0}, E_{-i_h}=E'_{-j_h}, \forall h\in \cS) \\
    &\qquad \le 2 \norm{\hat p^\pi(Y=\cdot, Y'=\cdot) - \mu^\pi(Y=\cdot) \times \mu^\pi(Y'=\cdot)}_{\TV}, 
    \label{eq:approximation3-1}
\end{align}
where in the last line, we use $Y$ and $Y'$ as placeholders for the random variables $(x_{L+1}, X_{L+1 - \cS_1})$ and $(z, Z_{-\cS_2})$ respectively. 
In the first equality, we sum over $\bar E \in  \cX^{|\cS_1|+1}$ and $\bar E' \in \cX^{|\cS_2|+1}$, and the last inequality follows from the definition of total variation  distance and dropping the indicator. 
By \Cref{lem:hat p-approx-TV}, this total variation distance is bounded by
\begin{align}
    &2 \norm{\hat p^\pi(Y=\cdot, Y'=\cdot) - \mu^\pi(Y=\cdot) \times \mu^\pi(Y'=\cdot)}_{\TV} \\
    &\quad \leq \frac{4M}{L} + \frac{8\sqrt{D_{\chi^2}(\mu_0\,\|\, \mu^\pi) + 1}}{L (1-\lambda)\cdot \sqrt{\min_{x_{L+1}, X_{L+1-\cS_1}} \mu^\pi(x_{L+1}, X_{L+1-\cS_1})}} \\
    &\quad \leq \frac{4M}{L} + \frac{8\sqrt{D_{\chi^2}(\mu_0\,\|\, \mu^\pi) + 1}}{L (1-\lambda) \cdot  \gamma^{(|\cS|+1)/2}},
    \label{eq:approximation3-2}
\end{align}
where the last inequality holds by \Cref{cor:station_lb} and the fact that $|\cS_1| \le |\cS|$. Specifically,  \Cref{cor:station_lb} implies that the density function of the joint distribution of $x_{L+1}$ and $X_{L+1 - \cS_1}$ is lower bounded by $\gamma ^{|\cS_1 | +1} \geq \gamma ^{|\cS| + 1} $. 
Thus, combining \eqref{eq:approximation3-1} and \eqref{eq:approximation3-2}, we have
\begin{align} 
    & \biggl| \sum_{ \{ k_h\}  _{h\in\cS} , k\in[d]} \frac{\hat p^\pi(x_{L+1} = z = e_k, x_{L+1-i_h} =z_{-j_h} = e_{k_h}, \forall h \in \cS) }{\mu^\pi(z = e_k)} \\
    & \qquad \qquad - \sum_{ \{ k_h\}  _{h\in\cS} , k\in[d]}\frac{\mu^\pi(x = e_k, x_{-i_h} = e_{k_h}, \forall h \in \cS)\cdot \mu^\pi(z = e_k, z_{-j_h} = e_{k_h}, \forall h \in \cS) }{\mu^\pi(z = e_k)}  \biggr| \\
    & \qquad \leq \frac{1}{\gamma } \cdot \left(\frac{4M}{L} + \frac{8\sqrt{D_{\chi^2}(\mu_0\,\|\, \mu^\pi) + 1}}{L (1-\lambda) \cdot  \gamma^{(|\cS|+1)/2}}\right).
    \label{eq:g2_g3error}
\end{align}

Therefore, to bound the first absolute value term, we combine \eqref{eq:g2_first_term},  \eqref{eq:g3_first_term}, and \eqref{eq:g2_g3error} and use triangle inequality to get 
\begin{align}
     &   \biggl| \frac{1}{L-M} \sum_{l=M+1}^{L}  \EE _{X\mid\pi} \biggl[ \biggl(\sum_{k\in[d]}\frac{\ind(x_{L+1} = x_l = e_k)}{\mu^\pi(e_k)} \biggr) \cdot \prod_{h\in \cS} \langle v_l^\head{h}, \tilde v_{L+1}^\head{h} \rangle \biggr] \notag \\
    & \qquad\qquad -\EE_{(x,X), (z,Z)\sim \mu^\pi \times \mu^\pi} \biggl[ \biggl(\sum_{k\in[d]}\frac{\ind(x = z = e_k)}{\mu^\pi(e_k)}\biggr) \cdot \Bigl( \prod_{h\in \cS} \langle \tilde v^\head{h}(Z), v^\head{h}(X)  \rangle \Bigr)   \biggr] \biggr| \notag \\
    & \qquad \leq \frac{1}{\gamma } \cdot \left(\frac{4M}{L} + \frac{8\sqrt{D_{\chi^2}(\mu_0\,\|\, \mu^\pi) + 1}}{L (1-\lambda) \cdot  \gamma^{(|\cS|+1)/2}}\right) \cdot \sum_{\{ (i_h, j_h)\} _{h\in\cS}} \bigg(\prod_{h\in\cS}\tilde \sigma_{-i_h}^\h \sigma_{-j_h}^\head{h} \bigg) .\label{eq:bound_first_abs}
\end{align}
Furthermore,   recall  that $\tilde \sigma^\h  $ and $\sigma^\h $ are probability distributions over $[M]$ for all $h \in \cS$. 
By going over all possible values that $\{ (i_h, j_h)\}_{h\in\cS}$ can take, we have 
\begin{align}
    \sum_{\{ (i_h, j_h)\} _{h\in\cS}} \bigg(\prod_{h\in\cS}\tilde \sigma_{-i_h}^\h \sigma_{-j_h}^\head{h} \bigg)  = \prod_{h\in\cS} \bigg(\sum_{k\in [M]} \tilde \sigma_{-k}^\h \bigg) \cdot \bigg(\sum_{k\in [M]} \sigma_{-k}^\head{h} \bigg) = 1. \label{eq:prod_sum_1}
\end{align}
Plugging this equality into \eqref{eq:bound_first_abs}, we show that 
the upper bound in \eqref{eq:bound_first_abs}  can be reduced to the right-hand side of \eqref{eq:g2_g3error}.

\paragraph{Bounding the Second Absolute Value Term}
For the second absolute value term, an analogous argument can be applied. In fact, the proof is simpler because we only need to handle $\langle v^\h _{l}, v_{L+1}^\h $ and $\langle \tilde v^\h (Z) , v^\h (Z) \rangle $ and do not have indicators $\ind (x_{L+1} = x_{l}) $ and $\ind (x = z)$. 

Similar to the derivation in \eqref{eq:g2_first_term} and \eqref{eq:g3_first_term},\begin{align}
    & \biggl| \frac{1}{L-M} \sum_{l=M+1}^{L}  \EE  \biggl[ \prod_{h\in \cS} \langle v_l^\head{h}, \tilde v_{L+1}^\head{h} \rangle \biggr]  - \EE_{ (x,X), (z,Z)\sim \mu^\pi \times \mu^\pi}  \biggl[  \biggl( \prod_{h\in \cS} \langle \tilde v^\head{h}(Z), v^\head{h}(X)  \rangle   \biggr) \biggr] \biggr| \\
    &\qquad   = \biggl | \sum_{\{(i_h, j_h)\} _{h\in\cS}} \!\!\!\bigg(\prod_{h\in\cS}\tilde \sigma_{-i_h}^\h \sigma_{-j_h}^\head{h}\bigg) \cdot \!\!\!\sum_{\{  k_h \} _{h\in\cS}   } \!\!\! \Bigl ( \hat p^\pi(  x_{L+1-i_h} =z_{-j_h} = e_{k_h}, \forall h \in \cS) \label{eq:abs_term2_1} \\
    & \qquad \qquad \qquad \qquad\qquad \qquad \qquad \qquad  -  
    \mu^\pi(  x_{-i_h} = e_{k_h}, \forall h \in \cS)\cdot \mu^\pi(  z_{-j_h} = e_{k_h}, \forall h \in \cS)  \Big) \bigg|   .\notag  
\end{align}
Similar to  \eqref{eq:approximation3-1}, for any fixed collection of index pairs $  (i_h, j_h) _{h \in \cS}$, we let   $\cS_1 = \{ i_h: h\in\cS\}$ and $\cS_2 = \{ j_h:h\in\cS \}$ denote the unique values in  $(i_h)_{h\in \cS} $ and $ ( j_h )_{h\in \cS} $. 
By \Cref{lem:hat p-approx-TV}, we have 
\begin{align} 
    & \biggl| \sum_{\{  k_h \} _{h\in\cS}   } \!\!\! \Bigl ( \hat p^\pi(  x_{L+1-i_h} =z_{-j_h} = e_{k_h}, \forall h \in \cS)   -  
    \mu^\pi(  x_{-i_h} = e_{k_h}, \forall h \in \cS)\cdot \mu^\pi(  z_{-j_h} = e_{k_h}, \forall h \in \cS)  \Big) \bigg|    \notag \\
    & \qquad \leq 2 \big \|\hat p^\pi(   \tilde Y=\cdot, \tilde Y'=\cdot) - \mu^\pi(\tilde Y=\cdot) \times \mu^\pi(\tilde Y'=\cdot)\big \|_{\TV} 
 \leq  \frac{4M}{L} + \frac{8\sqrt{D_{\chi^2}(\mu_0\,\|\, \mu^\pi) + 1}}{L (1-\lambda) \cdot  \gamma^{|\cS|/2} }. \label{eq:abs_term2_2}
\end{align}
Here we use $\tilde Y$ and $\tilde Y'$ as placeholders for random variables $X_{L+1 - \cS_1} $ and $Z_{-\cS_2}$. We note that \Cref{lem:hat p-approx-TV} can be applied to any subsets of $X_{L+1-M:L+1}$ and  $(z, Z)$. 
Therefore, combining \eqref{eq:prod_sum_1}, \eqref{eq:abs_term2_1}, and \eqref{eq:abs_term2_2}, we conclude that 
\begin{align}
    & \biggl| \frac{1}{L-M} \sum_{l=M+1}^{L}  \EE  \biggl[ \prod_{h\in \cS} \langle v_l^\head{h}, \tilde v_{L+1}^\head{h} \rangle \biggr]  - \EE_{ (x,X), (z,Z)\sim \mu^\pi \times \mu^\pi}  \biggl[  \biggl( \prod_{h\in \cS} \langle \tilde v^\head{h}(Z), v^\head{h}(X)  \rangle   \biggr) \biggr] \biggr| \\
    &\quad \leq  \frac{4M}{L} + \frac{8\sqrt{D_{\chi^2}(\mu_0\,\|\, \mu^\pi) + 1}}{L (1-\lambda) \gamma^{|\cS|/2}}.
\end{align}
Note that the second upper bound is dominated by the previous one. 
This completes the proof of \Cref{lem:approximation3}.
\end{proof}

\cref{lem:approximation4} provides an approximation result using the definition of the modified $\chi^2$-mutual information. 
\begin{lemma}
    \label{lem:approximation4}
    Consider a fixed set $\cS \in \HleqD $.
    For any $h \in \cS$, let $\tilde \sigma^\h$ and $\sigma^\h$ be two probability distributions over $[M]$. That is, for any $i, j \in [M]$, we have  $\tilde \sigma_{-i}^\h , \sigma_{-j}^\h \in [0,1]$, and    $\sum_{i=1}^M \tilde \sigma_{-i}^\h =  \sum_{j=1}^M \sigma_{-j}^\h = 1$. 
    Moreover, 
    we let $(z, Z)=(z, z_{-M}, \ldots, z_{-1})\in\cX^{M+1}$  and $(x, X)=(x, x_{-M}, \ldots, x_{-1})\in\cX^{M+1}$ be two independent  random variables sampled from the stationary distribution $\mu^{\pi}$. 
    We define random variables $\tilde v^\h (Z)$ and $v^\h (X) $ as 
    \begin{align}
    \tilde v^\head{h}(Z) := \sum_{i  \in [M]} \tilde \sigma_{-i }^\head{h} \cdot  z_{-i} , \qquad \textrm{and} \qquad  v^\head{h}(X) := \sum_{j  \in [M]} \sigma_{-j }^\head{h} \cdot  x_{-j }.
    \end{align}
   Let  $  (i_h^\star,j_h^\star) _{h\in\cS} $ be any fixed collection of index pairs, where    $i_h^\star\in[M]$ and $j_h^\star\in[M]$ for all $h \in \cS$.  We define quantities $A$ and $B$ as
    \begin{align}
        A &:= \EE_{\pi, (x,X), (z,Z) \sim \mu^\pi \times \mu^\pi} \biggl[ \biggl(\sum_{k\in[d]}\frac{\ind(x = z = e_k)}{\mu^\pi(e_k)} - 1 \biggr) \cdot \prod_{h\in \cS} \langle \tilde v^\h(Z), v^\h(X) \rangle, \biggr], \\
        B &:= \EE_{\pi, (x,X), (z,Z) \sim \mu^\pi \times \mu^\pi} \biggl[  \prod_{h\in \cS} \ind(x_{-i_h^\star} = z_{-j_h^\star}) \cdot \biggl(\sum_{k=1}^d\frac{\ind(x = z = e_k)}{\mu^\pi(e_k)} - 1 \biggr) \biggr].
    \end{align} 
    Under \Cref{asp:Markov_chain}, it holds that 
    \begin{align}
        \Bigl| \EE_\pi \left[ A \right] - \prod_{h\in\cS} \sigma_{-i_h^\star}^\h \tilde\sigma_{-j_h^\star}^\head{h} \cdot B \Bigr| \leq         
            \Bigl(1 - \prod_{h\in\cS}\sigma_{-i_h^\star}^\h \tilde\sigma_{-j_h^\star}^\head{h} \Bigr) \cdot I_{\chi^2}(\cS^\star).
    \end{align}
\end{lemma}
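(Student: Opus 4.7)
}
My plan is to expand the product of inner products combinatorially, isolate the ``main'' monomial corresponding to the indices $(i_h^\star, j_h^\star)_{h\in\cS}$, and bound the contribution of every other monomial uniformly by $\tilde I_{\chi^2}(\cS^\star)$ using the optimality of $\cS^\star$.

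First, expanding each inner product yields
\begin{align}
\prod_{h\in\cS} \langle \tilde v^\h(Z), v^\h(X)\rangle
= \sum_{(i_h,j_h)_{h\in\cS}\in [M]^{2|\cS|}} \Bigl(\prod_{h\in\cS} \tilde\sigma_{-i_h}^\h\sigma_{-j_h}^\h\Bigr)\cdot \prod_{h\in\cS}\ind\bigl(z_{-i_h}=x_{-j_h}\bigr),
\end{align}
since $\cX$ is identified with a standard basis. Writing $w(i,j)=\prod_h \tilde\sigma_{-i_h}^\h\sigma_{-j_h}^\h$ and $B(i,j)=\EE_{\pi,(x,X),(z,Z)\sim\mu^\pi\times\mu^\pi}[(\sum_k \ind(x=z=e_k)/\mu^\pi(e_k)-1)\cdot\prod_h \ind(z_{-i_h}=x_{-j_h})]$, this decomposes $\EE_\pi[A]=\sum_{(i,j)} w(i,j)B(i,j)$. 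The quantity $B$ in the lemma statement coincides with $B(j^\star,i^\star)$ by matching the indicator patterns, and the associated weight is exactly $w(j^\star,i^\star)=\prod_h \sigma_{-i_h^\star}^\h \tilde\sigma_{-j_h^\star}^\h$. Since each $\tilde\sigma^\h,\sigma^\h$ is a probability distribution on $[M]$, the weights sum to one: $\sum_{(i,j)} w(i,j)=1$.

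Next I will separate off the main term and apply the triangle inequality:
\begin{align}
\bigl|\EE_\pi[A] - w(j^\star,i^\star)\,B\bigr| = \Bigl|\sum_{(i,j)\neq(j^\star,i^\star)} w(i,j)\,B(i,j)\Bigr| \le \sum_{(i,j)\neq(j^\star,i^\star)} w(i,j)\cdot |B(i,j)|.
\end{align}
It then suffices to show that $|B(i,j)|\le \tilde I_{\chi^2}(\cS^\star)$ uniformly in the index collection $(i,j)$. For a given $(i,j)$, let $\cS_1=\{i_h:h\in\cS\}$ and $\cS_2=\{j_h:h\in\cS\}$ (both of cardinality at most $|\cS|\le D$). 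When the map $h\mapsto(i_h,j_h)$ is injective and the indicator constraints are mutually consistent, the computation in the spirit of the proof of \Cref{prop:f_5} (conditioning on the common values at the matched positions and using independence of $(x,X)$ and $(z,Z)$ together with stationarity) reveals that $B(i,j)$ equals a ``cross'' version of the modified $\chi^2$-MI between $Z_{-\cS_1}$ and $X_{-\cS_2}$; if the indices collide, the constraints collapse to a smaller consistent subset, and the same interpretation holds on a reduced position set. Either by a direct computation matching the formula in \Cref{def:modified_chi_square_mi} with $\cS$ replaced by some $\cS'\in\HleqD$, or by a Cauchy--Schwarz argument in the vein of \Cref{lem:cross_mutual}, we obtain $|B(i,j)|\le \tilde I_{\chi^2}(\cS^\star)$ by maximality of $\cS^\star$ over $\HleqD$. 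Combined with $\sum_{(i,j)\neq(j^\star,i^\star)} w(i,j)=1-w(j^\star,i^\star)$, this yields the claimed bound.

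The main obstacle is step three: correctly identifying $B(i,j)$ as (or bounding it by) a modified $\chi^2$-MI of some $\cS'\in\HleqD$ in full generality, especially when $\cS_1\neq \cS_2$ (so the two conditioning sets live on different position indices in the two independent copies) and when the indices have collisions. I expect the cleanest route is to first handle the ``aligned'' case $\cS_1=\cS_2$, where $B(i,j)$ reduces directly to $\tilde I_{\chi^2}(\cS_1)$ after summing the collision-consistent common values, and then to reduce the general misaligned case to the aligned case via the Cauchy--Schwarz inequality
$B(i,j)^2 \le \bigl(\text{analogous quantity on }\cS_1\bigr)\cdot\bigl(\text{analogous quantity on }\cS_2\bigr),$
each factor being a genuine modified $\chi^2$-MI at some $\cS'\in\HleqD$, hence bounded by $\tilde I_{\chi^2}(\cS^\star)$.
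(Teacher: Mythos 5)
Your proposal matches the paper's proof: you use the same combinatorial expansion of $\prod_{h}\langle\tilde v^\h(Z),v^\h(X)\rangle$ into weighted monomials, the same isolation of the single ``signal'' tuple $(i_h^\star,j_h^\star)_{h\in\cS}$ (whose weight is exactly $\prod_h\sigma_{-i_h^\star}^\h\tilde\sigma_{-j_h^\star}^\h$), and the same observation that the remaining weights sum to $1-\prod_h\sigma_{-i_h^\star}^\h\tilde\sigma_{-j_h^\star}^\h$ so it suffices to bound each cross term by $\tilde I_{\chi^2}(\cS^\star)$. The ``main obstacle'' you flag — bounding $|B(i,j)|$ uniformly over possibly misaligned or colliding index tuples by reducing via Cauchy--Schwarz to aligned modified $\chi^2$-MIs on reduced position sets — is precisely what \cref{lem:cross_mutual} does (its compatible-value-set bookkeeping handles the collisions, and the AM--GM step $|ab|\le(a^2+b^2)/2$ performs the alignment reduction), so your plan is the paper's argument.
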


\begin{proof}[Proof of \cref{lem:approximation4}]
To simplify the notation, we define a signal set $\Gamma(\cS)$ and an error set $\bar\Gamma(\cS)$ as  $$\Gamma(\cS) := \cbr{ (i_h^\star,j_h^\star)_{h\in\cS} }, \qquad \bar\Gamma(\cS) := \cbr{ (i_h,j_h)_{h\in\cS} \in ([M]\times[M])^{|\cS|}} \backslash \Gamma(\cS). $$
Similar to \eqref{eq:second_term_prod}, we can write 
$\EE_\pi [A]$  as 
\begin{align}
    \EE_\pi [A]  =\EE_{\pi, (x,X), (z,Z)\sim \mu^\pi \times \mu^\pi} \biggl[  \sum_{\{ (i_h,j_h)\} _{h \in \cS}}\prod_{h\in\cS}\sigma_{-i_h}^\h \tilde \sigma_{-j_h}^\head{h} \cdot \ind(x_{-i_h} = z_{-j_h}) \cdot \biggl(\sum_{k\in[d]}\frac{\ind(x = z = e_k)}{\mu^\pi(e_k)} - 1 \biggr) \biggr] ,
\end{align}
where we exchange the order of product and summation. 
Using the notation $\bar\Gamma(\cS)$, we can split the summation into two parts: 
    \begin{align}
        \EE_\pi [A] 
        & =  \EE_{\pi, (x,X), (z,Z)\sim \mu^\pi \times \mu^\pi}  \biggl[  \prod_{h\in\cS} \sigma_{-i_h^\star}^\h \tilde \sigma_{-j_h^\star}^\h \cdot  \ind(x_{-i_h^\star} = z_{-j_h^\star}) \cdot \biggl(\sum_{k\in[d]}\frac{\ind(x = z = e_k)}{\mu^\pi(e_k)} - 1 \biggr) \biggr] \\
        & \qquad  +  \EE_{\pi, (x,X), (z,Z)\sim \mu^\pi \times \mu^\pi} \biggl[  \sum_{\{ (i_h, j_h)\}_{h \in \cS} \in \bar\Gamma(\cS)}\prod_{h\in\cS}\sigma_{-i_h}^\h \tilde\sigma_{-j_h}^\head{h} \cdot \ind(x_{-i_h} = z_{-j_h}) \biggl(\sum_{k\in[d]}\frac{\ind(x = z = e_k)}{\mu^\pi(e_k)} - 1 \biggr) \biggr] \\
        & =  \prod_{h\in\cS} \sigma_{-i_h^\star}^\h \tilde\sigma_{-j_h^\star}^\h \cdot B \\
        & \qquad  +  \EE_{\pi, (x,X), (z,Z)\sim \mu^\pi \times \mu^\pi} \biggl[  \sum_{(i_h, j_h)_{h \in \cS} \in \bar\Gamma(\cS)}\prod_{h\in\cS}\sigma_{-i_h}^\h \tilde\sigma_{-j_h}^\head{h} \cdot \ind(x_{-i_h} = z_{-j_h}) \biggl(\sum_{k\in[d]}\frac{\ind(x = z = e_k)}{\mu^\pi(e_k)} - 1\biggr) \biggr].
    \end{align}
    Here  last equality holds by the definition of the $B$ and the fact that $\tilde \sigma^\h$ and $\sigma^\h $ are fixed vectors.

Therefore, to prove this lemma, it suffices to upper bound the second term above. 
To this end, we apply \cref{lem:cross_mutual} stated below for any fixed set of indices $ (i_h, j_h) _{h\in \cS} \in \bar \Gamma (\cS)$.  Specifically,  let  $\cS_1 = \{i_h \colon      h\in\cS\} $ and $\cS_2 = \{j_h  \colon    h\in\cS\} $  denote the unique values of $( i_h )_{h\in \cS} $ and $( j_h )_{h\in \cS}$. 
\cref{lem:cross_mutual} implies that 
    \begin{align} \label{eq:lem_d7}
        \EE_{\pi, (x,X), (z,Z)\sim \mu^\pi \times \mu^\pi} \bigg[\prod_{h\in\cS} \ind(x_{-i_h} = z_{-j_h}) \cdot  \biggl(\sum_{k\in[d]}\frac{\ind(x = z = e_k)}{\mu^\pi(z = e_k)} - 1 \biggr) \bigg] \le I_{\chi^2}(\cS^\star).
    \end{align}
    Combining  \cref{eq:lem_d7} with the fact  that $$\sum_{(i_h, j_h)_{h \in \cS} \in \bar\Gamma(\cS)}\prod_{h\in\cS}\sigma_{-i_h}^\h \tilde\sigma_{-j_h}^\head{h} = 1 - \prod_{h\in\cS} \sigma_{-i_h^\star}^\h \tilde\sigma_{-j_h^\star}^\head{h}, $$ 
    the desired term  is  bounded above by $(1 - \prod_{h\in\cS} \sigma_{-i_h^\star}^\h \tilde\sigma_{-j_h^\star}^\head{h} )  \cdot I_{\chi^2}(\cS^\star)$, which concludes the proof.
\end{proof}

\begin{lemma} \label{lem:cross_mutual}
Let  $\cS \in \HleqD$ be a fixed subset and let   $\{(i_h, j_h)\}_{h\in\cS}$  be a fixed collection of index pairs, where $i_h, j_h\in[M]$ for all $h \in \cS$. 
    Let $\cS_1 = \{i_h  \colon  h\in\cS\} $ and $\cS_2 = \{j_h  \colon    h\in\cS\}$ denote the unique values of $(i_h )_{h\in \cS} $ and $ (j_h)_{h \in \cS}$. 
    We let $(z, Z)=(z, z_{-M}, \ldots, z_{-1})\in\cX^{M+1}$  and $(x, X)=(x, x_{-M}, \ldots, x_{-1})\in\cX^{M+1}$ be two independent  random variables sampled from the stationary distribution $\mu^{\pi}$, where $\pi$ is the transition kernel of the Markov chain and is sampled from prior $\cP$.
    If \Cref{asp:Markov_chain} holds, it follows that
    \begin{align}
        \EE_{\pi\sim\cP, (x,X), (z,Z)\sim \mu^\pi \times \mu^\pi} & \bigg[\prod_{h \in \cS} \ind(x_{-{i_h}} = z_{-{j_h}}) \cdot \biggl(\sum_{k\in[d]}\frac{\ind(x = z = e_k)}{\mu^\pi(z = e_k)} - 1 \biggr) \bigg] \\ 
        & \le \frac{1}{2}\left( \tilde I_{\chi^2}(\cS_1) + \tilde I_{\chi^2}(\cS_2) \right)
        \le \tilde I_{\chi^2}(\cS^\star), 
    \end{align}
    where $\tilde I_{\chi^2}(\cS)$ is the modified $\chi^2$-mutual information defined in \Cref{def:modified_chi_square_mi} and $\cS^\star = \argmax_{\cS\in\HleqD} \tilde I_{\chi^2}(\cS)$.
\end{lemma}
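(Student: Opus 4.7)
The plan is to exploit a convenient representation of $\tilde I_{\chi^2}(\cS)$ in terms of the collision event $\{X_{-\cS}=Z_{-\cS}\}$, and then use Cauchy-Schwarz/AM-GM to decouple the cross-matching indicator $T$ into contributions coming from $\cS_1$ and $\cS_2$ separately. First, I would verify the identity
\begin{align}
\tilde I_{\chi^2}(\cS)=\EE_{\pi,(x,X),(z,Z)\sim \mu^\pi\times\mu^\pi}\bigl[\ind(X_{-\cS}=Z_{-\cS})\cdot G\bigr],
\end{align}
where $G=\sum_k\ind(x=z=e_k)/\mu^\pi(e_k)-1$. This is checked by expanding the right side and noting that the two factors of $\mu^\pi(u)$ coming from the two independent copies of $(x,X)$ and $(z,Z)$ produce exactly the $\mu^\pi(Z_{-\cS})$ weight in the definition of $\tilde I_{\chi^2}(\cS)$, while $\sum_e\mu^\pi(e|u)^2/\mu^\pi(e)-1$ gives the modified $\chi^2$-divergence piece.

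Next, I would expand $T=\prod_{h\in\cS}\ind(x_{-i_h}=z_{-j_h})=\sum_{(e_h)_{h\in\cS}}\prod_h \ind(x_{-i_h}=e_h)\ind(z_{-j_h}=e_h)$ and, using $G+1=\sum_e\ind(x=z=e)/\mu^\pi(e)$, obtain
\begin{align}
\EE\bigl[T(G+1)\bigr]=\sum_{(e_h),e}\frac{\alpha(e_h,e)\,\beta(e_h,e)}{\mu^\pi(e)},
\end{align}
where $\alpha(e_h,e)=\mu^\pi(x=e,\,x_{-i_h}=e_h\,\forall h)$ and $\beta(e_h,e)=\mu^\pi(z=e,\,z_{-j_h}=e_h\,\forall h)$. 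Applying the Cauchy-Schwarz inequality over $(e_h)$ (and then over $e$ with weight $1/\mu^\pi(e)$) followed by AM-GM decouples the $X$- and $Z$-dependent pieces. The key observation is that $\alpha$ is nonzero only on consistent tuples, so $\sum_{(e_h),e}\alpha^2/\mu^\pi(e)$ collapses to $\sum_{u\in\cX^{\cS_1},e}\mu^\pi(e,u)^2/\mu^\pi(e)=\tilde I_{\chi^2}(\cS_1)+\sum_u\mu^\pi(u)^2$, and analogously for $\beta$ with $\cS_2$.

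The hard part will be handling the correction terms $\tfrac12\bigl(\sum_u\mu^\pi(u)^2+\sum_v\mu^\pi(v)^2\bigr)-\EE[T]$ left over after subtracting $\EE[T]$ from $\EE[T(G+1)]$: small examples show this quantity can be strictly positive, so a naïve AM-GM split does not suffice. The plan is to keep the Cauchy-Schwarz bound in its un-split form $\sqrt{(\tilde I_{\chi^2}(\cS_1)+\EE[T_1])(\tilde I_{\chi^2}(\cS_2)+\EE[T_2])}$ where $T_r=\ind(X_{-\cS_r}=Z_{-\cS_r})$, and then exploit the bijective/consistent pairing $v^\star:\cX^{\cS_1}\to\cX^{\cS_2}$ induced by $(i_h,j_h)_{h\in\cS}$ to relate $\EE[T]=\sum_u\mu^\pi(u)\mu^\pi(v^\star(u))$ to the square-root of the collision probabilities, cancelling the correction. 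Once the first inequality $\EE[TG]\le \tfrac12(\tilde I_{\chi^2}(\cS_1)+\tilde I_{\chi^2}(\cS_2))$ is established, the second inequality follows immediately: $\cS_1,\cS_2\subseteq[M]$ have cardinality $\le|\cS|\le D$, so they lie in $\HleqD$, and by definition $\cS^\star$ maximizes $\tilde I_{\chi^2}$ over $\HleqD$, yielding $\tfrac12(\tilde I_{\chi^2}(\cS_1)+\tilde I_{\chi^2}(\cS_2))\le \tilde I_{\chi^2}(\cS^\star)$.
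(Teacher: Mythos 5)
Your proposal correctly identifies the convenient representation $\tilde I_{\chi^2}(\cS)=\EE[\ind(X_{-\cS}=Z_{-\cS})\,G]$, and you also correctly diagnose that the naive ``bound $\EE[T(G+1)]$ and then subtract $\EE[T]$'' approach leaves a non-vanishing deficit. However, your proposed repair via a ``bijective/consistent pairing $v^\star:\cX^{\cS_1}\to\cX^{\cS_2}$'' has a gap you should not trust: when $|\cS_1|\neq|\cS_2|$ (which happens whenever the $(i_h)$'s or $(j_h)$'s have distinct multiplicities of repetition) no bijection between $\cX^{\cS_1}$ and $\cX^{\cS_2}$ exists; the bijection you actually have is only between the \emph{images} $\cJ_1(\cE)$ and $\cJ_2(\cE)$ of the doubly-compatible set $\cE$, which can be proper subsets of $\cX^{\cS_1}$ and $\cX^{\cS_2}$. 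Moreover, even on these images AM-GM gives $\EE[T]\le\frac12(\EE[T_1]+\EE[T_2])$, which is the \emph{wrong} direction for cancelling the deficit, and there is no apparent slack in the Cauchy-Schwarz step to compensate.

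The paper sidesteps the correction term entirely by \emph{centering before squaring}: after conditioning on $X_{-\cS_1}$ and $Z_{-\cS_2}$, the inner expectation factorizes as $\sum_k\mu^\pi(x=e_k\mid X_{-\cS_1})\mu^\pi(z=e_k\mid Z_{-\cS_2})/\mu^\pi(e_k)-1 = \sum_k(a_k-1)(b_k-1)\mu^\pi(e_k)$ with $a_k=\mu^\pi(x=e_k\mid X_{-\cS_1})/\mu^\pi(e_k)$, $b_k=\mu^\pi(z=e_k\mid Z_{-\cS_2})/\mu^\pi(e_k)$, since $\sum_k a_k\mu^\pi(e_k)=\sum_k b_k\mu^\pi(e_k)=1$. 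One then multiplies by the collision indicator $\ind(X_{-\cS_1}=Z_{-\cS_2})$, enumerates over the compatible set $\cE$, and only at that point applies $AB\le\frac12(A^2+B^2)$ with $A=\mu^\pi(X_{-\cS_1}=E)(a_k-1)$ and $B=\mu^\pi(Z_{-\cS_2}=E)(b_k-1)$. Because the $-1$'s are already inside the squares, the resulting two terms are exactly the modified $\chi^2$-MI sums restricted to $\cJ_1(\cE)$ and $\cJ_2(\cE)$; each summand is a nonnegative (weighted) $\chi^2$-divergence, so extending the sum to all of $\cX^{|\cS_1|}$ (resp.\ $\cX^{|\cS_2|}$) gives the bound by $\frac12\tilde I_{\chi^2}(\cS_1)+\frac12\tilde I_{\chi^2}(\cS_2)$ with no residual term. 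Your final step (optimality of $\cS^\star$ over $\HleqD$, noting $|\cS_1|,|\cS_2|\le|\cS|\le D$) is correct as stated.
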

\begin{proof}[Proof of \cref{lem:cross_mutual}]
    We first note that it is allowed $|\cS_1|\neq|\cS_2|$ as there could be duplicate values in both $ ( i_h)  _{h\in\cS}$ and $ ( j_h) _{h\in\cS}$, while $\cS_1$ and $\cS_2$ are the unique values.
    In the sequel, we let    $X_{-\cS_1} $ denote $\{ x_{-i_h}\} _{h\in \cS} $ and let $Z_{-\cS_2} $ denote $\{ z_{-j_h}\} _{h\in \cS } $, where repeated elements are removed. 
    Moreover, we let 
    $\{ X_{-\cS_1} = Z_{-\cS_2}\} $ be 
 the event that $x_{-i_h} = z_{-j_h}$ for all $h \in \cS$. 
 Notice that 
 $\prod_{h \in \cS} \ind(x_{-{i_h}} = z_{-{j_h}}) = \ind (X_{-\cS_1}= Z_{-\cS_2}) $.
 Then, we have
    \begin{align}
        &\EE_{\pi, (x,X), (z,Z)\sim \mu^\pi \times \mu^\pi} \biggl[\prod_{h \in \cS} \ind(x_{-{i_h}} = z_{-{j_h}}) \cdot  \biggl(\sum_{k\in[d]}\frac{\ind(x = z = e_k)}{\mu^\pi(z = e_k)} - 1 \biggr) \biggr] \label{eq:cross_mutual-0}\\
        & \quad = \EE_{\pi, (x,X), (z,Z)\sim \mu^\pi \times \mu^\pi}
        \biggl[ \ind(X_{-\cS_1}= Z_{-\cS_2}) \cdot \biggl(\sum_{k\in[d]}\frac{\ind(x = z = e_k)}{\mu^\pi(z = e_k)} - 1 \biggr) \biggr] \\
        & \quad =  \EE_{\pi, (x,X), (z,Z)\sim \mu^\pi \times \mu^\pi} \biggl[  \biggl(\sum_{k\in[d]}\frac{\mu^\pi(x = e_k | X_{-\cS_1})\cdot \mu^\pi(z = e_k | Z_{-\cS_2})}{\mu^\pi(z = e_k)} - 1 \biggr)\cdot \ind (X_{-\cS_1} = Z_{-\cS_2})  \biggr] \\
        &\quad =\EE_{\pi, (X, Z)\sim \mu^\pi \times \mu^\pi} \biggl[ \sum_{k\in[d]} \biggl(\frac{\mu^\pi(x = e_k | X_{-\cS_1} )}{\mu^\pi(x = e_k)} - 1 \biggr)\cdot  
        \biggl( \frac{\mu^\pi(z = e_k | Z_{-\cS_2})}{\mu^\pi(z = e_k)} - 1 \biggr)\cdot  \mu^\pi(z= e_k) \cdot  \ind (X_{-\cS_1} = Z_{-\cS_2}) \biggr].
    \end{align}
    Here in the second equality, we take a conditional expectation given $X_{-\cS_1}$ and $Z_{-\cS_2}$. 
    The last equality can be verified by direct computation. 
To simplify the expectation above, we aim to transform the indicator of $\ind (X_{-\cS_1} = Z_{-\cS_2})$ into probabilities involving $X_{-\cS_1}$ and $Z_{-\cS_2}$.
To this end, we need to explicitly enumerate all possible values that $X_{-\cS_1}$ and $Z_{-\cS_2}$ can take. 
This is challenging, as there may be duplicated values in both $i_h$ and $j_h$, and thus $X_{-\cS_1}$ and  $Z_{-\cS_2}$ can have different sizes. 
However, since $\cS_1$ is a ``reduction'' of $\{ i_h\} _{h \in \cS}$, we can revert to the original space and consider $E = (E_{h})_{h\in \cS}  \in \cX^{|\cS|}$ that \emph{respects} the reduction from $\{i_h\}_{h \in \cS}$ to $\cS_1$. 
Here each $E_{h}$ is the value $x_{i_h}$ takes. 
In other words, with $ (i_h)_{h \in \cS}$ that might have duplicated values, we consider the values taken by $(  x_{i_h})_{h\in \cS} $, with duplicates allowed. And $E$ has the same duplication structure as $(  x_{i_h})_{h\in \cS} $. 
 In the following, we describe these values by introducing the  notion of compatibility.

\begin{definition}[Compatible Value Set]
    We say that $E \in \cX^{|\cS|}$ is \emph{compatible} with $( i_h)  _{h \in \cS}$ if, for any $h \neq h'$ such that $i_h = i_{h'}$, we have $E_h = E_{h'}$. 
    In other words, the \emph{unique} values in $E$ can be indexed by $\{i_h\}_{h\in\cS} = \cS_1$ if $E$ is compatible with $(i_h)_{h\in\cS}$.
\end{definition}
By this definition,   $E$ is compatible with $( i_h) _{h \in \cS}$ if it respect duplication pattern of $(x_{i_h})_{h\in \cS}$. 
If $i_h = i_{h'}$, then we know that $x_{i_h}$ and $x_{i_{h'}}$ is the same token. Since $x_{i_h}$ and $x_{i_{h'}}$ take values  $E_{h} $ and $E_{h'} $, we must have $E_{h}  = E_{h'}$. 
As a concrete example, 
suppose $\cS = \{1, 2, 3\}$, 
and the values of $(i_h)_{h \in \cS}$ are given by  $(i_1, i_2, i_3) = (1, 2, 1)$. 
Therefore, we have $\cS_1=\{1, 2\}$, which contains  the unique values of $(i_1, i_2, i_3)$.
Now, let $E = (E_1, E_2, E_3)$. For $E$ to be \emph{compatible} with $(i_h)_{h \in \cS}$, we must have $E_1 = E_3$ since $i_1 = i_3$. 
There is no restriction on $E_2$. 
So, a compatible value set for this example could be $E = (a, b, a)$, where $a$ and $b$ are elements of $\cX$.

    In the sequel, we define $\cE$ as the set of vectors in $\cX^{|\cS|}$ that are compatible with both $\{i_h\}_{h\in\cS}$ and $\{j_h\}_{h\in\cS}$, i.e.,
    \begin{align}
        \cE = \cbr{E \in \cX^{|\cS|} \ | \ E \text{ is compatible with both } (i_h)_{h\in\cS} \text{ and }  (j_h)_{h\in\cS}}.
    \end{align}
    The compatibility condition allows us to assign $x_{-i_h}, z_{-j_h}$ the value $E_h$ for all $h\in\cS$ when  $E\in\cE$. 
    Under this assignment, the constraint $X_{-\cS_1} = Z_{-\cS_2}$ is automatically satisfied.
We use the notation $\{ X_{-\cS_1} = E\} $ to denote the event that $x_{-i_h} = E_h$ for all $h\in\cS$, and similarly for $Z_{-\cS_2} = E$.
    In particular, we are able to rewrite the indicator $\ind(X_{-\cS_1} = Z_{-\cS_2})$ as $\sum_{E\in\cE}\ind(X_{-\cS_1} = E, Z_{-\cS_2} = E)$. 
  Then we  can rewrite \eqref{eq:cross_mutual-0} by separating $X_{-\cS_1}$ and $Z_{-\cS_2}$ as 
    \begin{align}
        &\EE_{\pi, (x,X), (z,Z)\sim \mu^\pi \times \mu^\pi} \biggl[\prod_{h \in \cS} \ind(x_{-{i_h}} = z_{-{j_h}}) \cdot  \biggl(\sum_{k\in[d]}\frac{\ind(x = z = e_k)}{\mu^\pi(z = e_k)} - 1 \biggr) \biggr]\\
        &\quad =  \EE_{\pi} \biggl[ \sum_{E\in \cE}\sum_{k\in[d]} \biggl(\frac{\mu^\pi(x = e_k | X_{-\cS_1} =E)}{\mu^\pi(x = e_k)} - 1 \biggr) 
        \cdot \biggl( \frac{\mu^\pi(z = e_k | Z_{-\cS_2}=E)}{\mu^\pi(z = e_k)} - 1 \biggr) \\
        &\hspace{6.5cm} \cdot \mu^\pi(X_{-\cS_1} = E) \cdot \mu^\pi(Z_{-\cS_2}=E)\cdot \mu^\pi(z= e_k)  \biggr] \\
        &\quad \le \frac{1}{2}\EE_{\pi } \biggl[\sum_{E\in\cE}  \sum_{k\in[d]} \biggl(\frac{\mu^\pi(x = e_k | X_{-\cS_1} )}{\mu^\pi(x = e_k)} - 1 \biggr)^2 
        \cdot   \mu^\pi(z= e_k) \cdot \bigl( \mu^\pi(X_{-\cS_1}=E) \big) ^2 \biggr] \\
        & \qqquad + \frac{1}{2} \EE_{\pi} \biggl[\sum_{E\in\cE}  \sum_{k\in[d]} \biggl( \frac{\mu^\pi(z = e_k | Z_{-\cS})}{\mu^\pi(z = e_k)} - 1 \biggr)^2 \cdot  \mu^\pi(z= e_k)  \cdot \bigl( \mu^\pi(Z_{-\cS_2} = E) \bigr ) ^2 \biggr]. 
        \label{eq:lem:cross_mutual-1}
    \end{align}
    where in the last inequality, we apply $ab \leq a^2 + b^2/2$.
    
    Next, for each $E \in\cE$, consider $E'= (E'_{i})_{i\in\cS_1}$ such that
    \begin{align}
        E_{i_h}' = E_{h}, \quad \forall h\in\cS. \label{eq:cross-mutual-marginal}
    \end{align}
    Note that for each $E\in\cE$, $E'$ must exist and is unique. The existence follows from the compatibility definition, which allows us to index all the unique values in $E$ by restricting the indices to the set $\cS_1$.
    The uniqueness is due to the fact that \eqref{eq:cross-mutual-marginal} completely determines all the values in $E'$ because  enumerating over $i_h$ for $h\in\cS$ is just the same as enumerating over $i$ for $i\in\cS_1$.
    In fact, $E'$ contains all the unique values of $E$. 
In the above example, we have $\cS_1 = \{1,2\}$ and thus $E' = ( a, b)$ when $E = (a, b, a)$.

Since $E'$ is uniquely defined based on $E$,  we are able to define an operator $\cJ_1$ that maps $E\in\cE$ to $E' \in \cX^{|\cS_1|}$ according to the mapping given  in  \eqref{eq:cross-mutual-marginal}.
    Let   $\cJ_1(\cE)$ be  the image of $\cE$ under $\cJ_1$.
    It is important to note that for each $E'\in\cJ_1(\cE)$, there is also a unique pre-image $E\in\cE$ such that $\cJ_1(E) = E'$ according to the rule \eqref{eq:cross-mutual-marginal}.
    \textbf{\emph{Therefore, $\cJ_1$ is an one-to-one mapping from $\cE$ to $\cJ_1(\cE)$.}}
    In the following, for any  $E'\in\cJ_1(\cE)$, we denote by $\{ X_{-\cS_1} = E'\}$ the event where $x_{-i} = E_{i}'$ for all $i\in\cS_1$.
    Equivalently, we have $x_{-i_h} = E_{i_h}' = E_h $ for all $h \in \cS$. Thus,  the event $\{ X_{-\cS_1} = E'\}$ is exactly the same as $\{ X_{-\cS_1} = E\}$ introduced above. 
    Therefore, the first term on the right hand side of \eqref{eq:lem:cross_mutual-1} can be reformulated as 
    \begin{align}
        &\frac{1}{2} \cdot \EE_{\pi } \biggl[\sum_{E\in\cE}  \sum_{k\in[d]} \biggl(\frac{\mu^\pi(x = e_k | X_{-\cS_1} )}{\mu^\pi(x = e_k)} - 1 \biggr)^2 
        \cdot   \mu^\pi(z= e_k) \cdot \mu^\pi(X_{-\cS_1}=E)^2 \biggr] \\
        &\quad = \frac{1}{2} \cdot \EE_{\pi } \biggl[\sum_{E'\in \cJ_1(\cE)}  \sum_{k\in[d]} \biggl(\frac{\mu^\pi(x = e_k | X_{-\cS_1} )}{\mu^\pi(x = e_k)} - 1 \biggr)^2 
        \cdot   \mu^\pi(z= e_k) \cdot \bigl(\mu^\pi(X_{-\cS_1}=E') \bigr) ^2 \biggr]\\
        &\quad \le \frac{1}{2} \cdot \EE_{\pi } \biggl[\sum_{E'\in \cX^{|\cS_1|}}  \sum_{k\in[d]} \biggl(\frac{\mu^\pi(x = e_k | X_{-\cS_1} )}{\mu^\pi(x = e_k)} - 1 \biggr)^2 
        \cdot   \mu^\pi(z= e_k) \cdot \bigl(\mu^\pi(X_{-\cS_1}=E') \big) ^2 \biggr] = \frac{1}{2} \tilde I_{\chi^2}(\cS_1), 
    \end{align}
    where the equality follows from the 
    bijection between $\cE$ and $\cJ_1(\cE)$, and the last inequality holds by noting that $\cJ_1(\cE) \subseteq \cX^{|\cS_1|}$.
    The last equality follows from the definition of the modified mutual information.
    The argument for the second term on the right hand side of \eqref{eq:lem:cross_mutual-1} is similar, and we hence conclude that
    \begin{align}
        &\EE_{\pi, (x,X), (z,Z)\sim \mu^\pi \times \mu^\pi} \biggl[\prod_{h \in \cS} \ind(x_{-{i_h}} = z_{-{j_h}}) \cdot  \biggl(\sum_{k\in[d]}\frac{\ind(x = z = e_k)}{\mu^\pi(z = e_k)} - 1 \biggr) \biggr] \le \frac{1}{2}\tilde I_{\chi^2}(\cS_1) + \frac{1}{2}\tilde I_{\chi^2}(\cS_2). 
    \end{align}
    Lastly, note that $\tilde I_{\chi^2}(\cS) \le \tilde I_{\chi^2}(\cS^\star)$ for any $\cS\in\HleqD$ by the optimality of $\cS^\star $. 
    Hence, we complete the proof of \cref{lem:cross_mutual}.
\end{proof}

\cref{lem:approximation_error-3-1} quantifies the approximation error from $\sigma_l \approx \sigma_l^\star$ and $y(k) \approx y^\star(k)$ for Stage \RNum{3}.
    \begin{lemma}
        \label{lem:approximation_error-3-1}
        For the transformer model defined in \eqref{eq:transformer}, define two quantities $f_1$ and $f_2$ as
        \begin{align}
            f_1 &\defeq \EE\left[\sum_{l=M+1}^{L} \sigma_l \sum_{k=1}^d \biggl(\frac{\ind(x_{L+1} = x_l = e_k)}{y(k) + \varepsilon} - \frac{y(k)\ind(x_{L+1} = e_k)}{y(k)+\varepsilon} \biggr)  \cdot \prod_{h\in \cS^\star} \langle v_l^\head{h}, v_{L+1}^\head{h} \rangle \right], \\
            f_2 &\defeq \EE\left[\sum_{l=M+1}^{L} \sigma_l^\star \sum_{k=1}^d \biggl(\frac{\ind(x_{L+1} = x_l = e_k)}{y^\star(k) + \varepsilon} - \frac{y^\star(k)\ind(x_{L+1} = e_k)}{y^\star(k)+\varepsilon} \biggr) \cdot \prod_{h\in \cS^\star} \ind(x_{l-h}=x_{L+1-h}) \right], 
        \end{align}
        where the expectation is taken over all the randomness in the data, and 
        \begin{align}
            \sigma_l^\star := \frac{\exp\left(a \cdot \prod_{h\in\cS^\star}\ind(x_{l -  h} = x_{L+1-h})\right) }{\sum_{l'=1}^L \exp\left(a \cdot \prod_{h\in\cS^\star}\ind(x_{l' -  h} = x_{L+1-h})\right)}, \quad y^\star (k) \defeq \sum_{l=M+1}^{L} \sigma_l^\star \ind(x_l = e_k), 
        \end{align}
        with $\cS^\star$ is the optimal information set.
        Under \Cref{asp:Markov_chain}, it holds that
        \begin{align}
            |f_1 - f_2| \le 12 \cdot (1  + a \varepsilon^{-1}) \cdot (\Delta_1 + \Delta_2), 
        \end{align}
        where $\Delta_1 \defeq 1 - p_{\cS^\star}$ and $\Delta_2 \defeq 1 - \prod_{h\in \cS^\star} (\sigma_{-h}^\h)^2$.
    \end{lemma}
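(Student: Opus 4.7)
\textbf{Proof plan for Lemma~\ref{lem:approximation_error-3-1}.} The natural approach is a three-step telescoping: interpolate between $f_1$ and $f_2$ by swapping, one at a time, (i) the polynomial kernel factor $T_l := \prod_{h\in\cS^\star}\langle v_l^\h, v_{L+1}^\h\rangle$ for the indicator product $T_l^\star := \prod_{h\in\cS^\star}\ind(x_{l-h}=x_{L+1-h})$; (ii) the actual second-layer weight $\sigma_l=\sigma_l(a s)$ for the idealized weight $\sigma_l^\star=\sigma_l(a s^\star)$; and (iii) the realized normalization $y(k)$ for $y^\star(k)$ in every $\tfrac{1}{y(k)+\varepsilon}$ denominator. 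Writing $A_l := \sum_k \ind(x_{L+1}=e_k)\,[\ind(x_l=e_k)-y(k)]/(y(k)+\varepsilon)$ and $A_l^\star$ analogously, one gets
\begin{align}
\sigma_l A_l T_l - \sigma_l^\star A_l^\star T_l^\star
 = \sigma_l (A_l T_l - A_l T_l^\star)
 + (\sigma_l - \sigma_l^\star) A_l T_l^\star
 + \sigma_l^\star (A_l - A_l^\star) T_l^\star,
\end{align}
so that $|f_1 - f_2|$ is bounded by the expected sum of three pieces that can be estimated separately.

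First I would feed in the two raw inputs. Lemma~\ref{lem:misspecification} gives both $|T_l - T_l^\star|\le \Delta:=\Delta_1+\Delta_2$ and $|s_l - s_l^\star|\le\Delta$ uniformly in $l$. A short softmax Lipschitz computation (writing $\sigma_l(a s)/\sigma_l(a s^\star)$ as a ratio of normalizers and using $a\Delta = o(1)$) then gives $|\sigma_l - \sigma_l^\star|\le 4 a\Delta\cdot\sigma_l^\star$, and consequently $|y(k)-y^\star(k)|\le 4a\Delta$ for every $k$. These three pointwise bounds feed directly into the three telescoping pieces.

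The first piece is controlled by applying \cref{lem:boundedness} to the bounded function $f(X) := T_l - T_l^\star$ with $C = \Delta$, giving a contribution of at most $2\Delta$; note that no $\varepsilon^{-1}$ factor appears here because $T_l - T_l^\star$ is bounded without reference to $\varepsilon$. The second piece uses the crude bound $|A_l T_l^\star|\le 2\varepsilon^{-1}$ together with the multiplicative estimate on $|\sigma_l - \sigma_l^\star|$ and $\sum_l \sigma_l^\star \le 1$, giving a contribution of order $a\Delta/\varepsilon$.

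The third piece is where the main obstacle lies: a naive bound $|A_l - A_l^\star|\lesssim |y(k)-y^\star(k)|/\varepsilon^2$ would produce an unwanted $a/\varepsilon^2$. The plan is to combine the two fractions in $A_l - A_l^\star$ over a common denominator; this yields the algebraic identity
\begin{align}
A_l - A_l^\star
 = \sum_k \ind(x_{L+1}=e_k)\cdot \frac{(y^\star(k)-y(k))\,(\ind(x_l=e_k)+\varepsilon)}{(y(k)+\varepsilon)(y^\star(k)+\varepsilon)},
\end{align}
where the crucial numerator $\ind(x_l=e_k)+\varepsilon$ absorbs one power of $1/\varepsilon$. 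Summing against $\sigma_l^\star T_l^\star$ and using $T_l^\star\le 1$ together with $\sum_l \sigma_l^\star\ind(x_l=e_k)\le y^\star(k)$ collapses the expression to $\sum_k \ind(x_{L+1}=e_k)/(y(k)+\varepsilon)\le \varepsilon^{-1}$, producing the desired $a\Delta/\varepsilon$ bound. Adding the three pieces and packaging the constants gives $|f_1-f_2|\le 12(1+a\varepsilon^{-1})\Delta$, as claimed. I expect this algebraic cancellation in the third piece, rather than any analytic estimate, to be the only real subtlety.
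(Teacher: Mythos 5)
Your proposal is correct and follows essentially the same route as the paper: a three-way telescope through the substitutions $T_l \to T_l^\star$, $\sigma_l \to \sigma_l^\star$, and $y \to y^\star$, using \cref{lem:misspecification} for the raw $\Delta$-bound, the softmax Lipschitz estimate for $\|\sigma-\sigma^\star\|_1 \le 4a\Delta$ and hence $|y(k)-y^\star(k)|\le 4a\Delta$, and \cref{lem:boundedness} for the first piece. The only real distinction is bookkeeping, and your version is slightly cleaner in two places. First, by putting the actual $(\sigma_l, y)$ in the first piece you invoke \cref{lem:boundedness} with its hypothesis $y(k)=\sum_l \sigma_l\ind(x_l=e_k)$ literally satisfied; the paper's $\err_1$ pairs $\sigma_l$ with $y^\star$, which matches the lemma only up to the harmless $O(a\Delta/\varepsilon)$ correction the decomposition is controlling anyway. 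Second, your common-denominator identity $A_l - A_l^\star = \sum_k \ind(x_{L+1}=e_k)\,(y^\star(k)-y(k))(\ind(x_l=e_k)+\varepsilon)/[(y(k)+\varepsilon)(y^\star(k)+\varepsilon)]$ (which I verified) makes the cancellation that avoids $\varepsilon^{-2}$ explicit in one line; the paper reaches the same conclusion by treating the two subterms of $A_l-A_l^\star$ separately and observing the $\varepsilon$-factor in the second. One small slip in your write-up: after summing against $\sigma_l^\star T_l^\star$ the third piece collapses to $\sum_k \ind(x_{L+1}=e_k)\,|y^\star(k)-y(k)|/(y(k)+\varepsilon)$, not $\sum_k \ind(x_{L+1}=e_k)/(y(k)+\varepsilon)$ — you dropped the $|y^\star-y|$ factor in the displayed intermediate expression, though the final $O(a\Delta/\varepsilon)$ conclusion you draw is correct.
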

    \begin{proof}
    We separate the approximation error into three parts
    \(
        |f_1 - f_2| \le \err_1 + \err_2 + \err_3, 
    \)
    which are explained in detail as follows.

    \paragraph{The First Error Term}
    Here, the first error $\err_1$ captures the error of replacing $ \prod_{h\in \cS^\star} \ind(x_{l-h}=x_{L+1-h})$ with $\prod_{h\in \cS^\star} \langle v_l^\head{h}, v_{L+1}^\head{h} \rangle$ in $f_2$:
    \begin{align}
        \err_1 &\defeq \bigg|\EE\bigg[\sum_{l=M+1}^{L} \sigma_l \!\cdot\! \sum_{k\in[d]}\Bigl(\frac{\ind(x_{L+1} = x_l = e_k)}{y^\star(k) + \varepsilon} - \frac{y^\star(k)\ind(x_{L+1} = e_k)}{y^\star(k)+\varepsilon} \Bigr)  . \\
        &\hspace{3cm}  \cdot \Bigl(\prod_{h\in \cS^\star} \langle v_l^\head{h}, v_{L+1}^\head{h} \rangle - \prod_{h\in \cS^\star} \ind(x_{l-h}=x_{L+1-h})\Bigr )\bigg]\bigg|,
    \end{align}
    Using \cref{lem:boundedness}, we have 
    \begin{align}
        \biggl|\sum_{l=M+1}^{L} \sigma_l \!\cdot\! \sum_{k\in[d]}\Bigl(\frac{\ind(x_{L+1} = x_l = e_k)}{y^\star(k) + \varepsilon} - \frac{y^\star(k)\ind(x_{L+1} = e_k)}{y^\star(k)+\varepsilon} \Bigr)\biggr| \le 2.
    \end{align}
    Then using \cref{lem:misspecification}, we conclude that 
    \begin{align}
        \err_1 \le 2 \sup_{l\in [L]} \biggl|\prod_{h\in \cS^\star} \langle v_l^\head{h}, v_{L+1}^\head{h} \rangle - \prod_{h\in \cS^\star} \ind(x_{l-h}=x_{L+1-h})\biggr| \le 2(\Delta_1 + \Delta_2).
    \end{align}
    
    \paragraph{The Second Error Term}
    The second error term characterizes the difference in $\sigma_l$ and $\sigma_l^\star$:
    \begin{align}
        \err_2 &\defeq \bigg|\EE\bigg[\sum_{l=M+1}^{L} (\sigma_l^\star - \sigma_l) \!\cdot\! \sum_{k\in[d]}\biggl(\frac{\ind(x_{L+1} = x_l = e_k)}{y^\star(k) + \varepsilon} - \frac{y^\star(k)\ind(x_{L+1} = e_k)}{y^\star(k)+\varepsilon} \biggr) \! \cdot \! \prod_{h\in \cS^\star} \langle v_l^\head{h}, v_{L+1}^\head{h} \rangle \bigg]\bigg|. 
    \end{align}
    To characterize such an error, 
    we invoke equation (53) in Lemma 5.1 of \citet{chen2022adaptive}. This lemma states that for $\sigma$ and $\sigma^\star$ being the output of the softmax function with scaling parameters $a$ for $s$ and $s^\star$  respectively, i.e., 
    $$\sigma = \frac{\exp(a s)}{\sum_{l=M+1}^{L} \exp(a s_l)}, \qquad \text{and} \qquad \sigma^\star = \frac{\exp(a s^\star)}{\sum_{l=M+1}^{L} \exp(a s_l^\star)}, $$
    it holds that
    \(
        \left\| \sigma - \sigma^\star\right\|_1 \le 4 a \cdot \norm{s - s^\star}_\infty. 
    \)
    Consequently, we have $\norm{\sigma - \sigma^\star}_{1} \le 4 a \cdot (\Delta_1 + \Delta_2)$ by \cref{lem:misspecification}.
    We notice that
    \begin{align}
        \bigg|\sum_{k\in[d]}\biggl(\frac{\ind(x_{L+1} = x_l = e_k)}{y^\star(k) + \varepsilon} - \frac{y^\star(k)\ind(x_{L+1} = e_k)}{y^\star(k)+\varepsilon} \biggr) \! \cdot \! \prod_{h\in \cS^\star} \langle v_l^\head{h}, v_{L+1}^\head{h} \rangle \bigg| \le \max\{\varepsilon^{-1}, 1\} = \varepsilon^{-1}.
    \end{align}
    Thus, $\err_2 \le   4 a\varepsilon^{-1} \cdot (\Delta_1 + \Delta_2) $.
    
    \paragraph{The Third Error Term}
    The last error term characterizes the difference between $y^\star$ and $y$:
    \begin{align}
        \err_3 &\defeq \bigg|\EE\bigg[\sum_{l=M+1}^{L} \sigma_l \!\cdot\! \sum_{k\in[d]}\Big(\frac{1}{y^\star(k) + \varepsilon} - \frac{1}{y(k) + \varepsilon} \Big) \cdot \ind(x_{L+1} = x_l = e_k) \! \cdot \! \prod_{h\in \cS^\star} \langle v_l^\head{h}, v_{L+1}^\head{h} \rangle \bigg]\bigg| \\
        &
        \qquad + \bigg|\EE\bigg[\sum_{l=M+1}^{L} \sigma_l \!\cdot\! \sum_{k\in[d]}\Big(\frac{y^\star(k)}{y^\star(k)+\varepsilon} - \frac{y(k)}{y(k)+\varepsilon} \Big) \cdot \ind(x_{L+1} = e_k) \! \cdot \! \prod_{h\in \cS^\star} \langle v_l^\head{h}, v_{L+1}^\head{h} \rangle \bigg]\bigg|.
    \end{align}
    By noting that $y^\star = \sum_{l=M+1}^{L} \sigma_l^\star x_l$ and $y = \sum_{l=M+1}^{L} \sigma_l x_l$, we have
    \begin{align}
        \left\| y^\star - y\right\|_1 
        &= \bigg\| \sum_{l=M+1}^{L} (\sigma_l^\star - \sigma_l) x_l\bigg\|_1 \le \sum_{l=M+1}^{L} \left| \sigma_l^\star - \sigma_l\right|_1 \cdot \left\| x_l\right\|_1 
        \le \norm{\sigma - \sigma^\star}_1 \le 4 a \cdot (\Delta_1 + \Delta_2).
    \end{align}
    The first term of $\err_3$ can be bounded by 
    \begin{align}
        &\bigg|\EE\bigg[\sum_{l=M+1}^{L} \sigma_l \!\cdot\! \sum_{k\in[d]}\Big(\frac{1}{y^\star(k) + \varepsilon} - \frac{1}{y(k) + \varepsilon} \Big) \cdot \ind(x_{L+1} = x_l = e_k) \! \cdot \! \prod_{h\in \cS^\star} \langle v_l^\head{h}, v_{L+1}^\head{h} \rangle \bigg]\bigg|\\
        &\quad \le \sum_{k\in[d]}\frac{|y(k) - y^\star(k)|}{(y^\star(k)+\varepsilon)(y(k)+\varepsilon)} \cdot y(k) \ind(x_{L+1}=e_k) 
        \le \norm{y - y^\star}_1 \cdot \varepsilon^{-1} \le 4 a \varepsilon^{-1} (\Delta_1 + \Delta_2). 
    \end{align}
    Moreover, for the second term of $\err_3$, we have 
    \begin{align}
        &\bigg|\EE\bigg[\sum_{l=M+1}^{L} \sigma_l \!\cdot\! \sum_{k\in[d]}\biggl(\frac{y^\star(k)}{y^\star(k)+\varepsilon} - \frac{y(k)}{y(k)+\varepsilon} \biggr) \cdot \ind(x_{L+1} = e_k) \! \cdot \! \prod_{h\in \cS^\star} \langle v_l^\head{h}, v_{L+1}^\head{h} \rangle \bigg]\bigg| \\
        &\quad \le \sum_{k\in[d]}\frac{|y(k) - y^\star(k)|}{(y^\star(k)+\varepsilon)(y(k)+\varepsilon)} \cdot \varepsilon \cdot\ind(x_{L+1}=e_k)  \le 4 a \varepsilon^{-1} (\Delta_1 + \Delta_2). 
    \end{align}
    It then holds that 
    \begin{align}
        |f_1 - f_2| 
        &\le \err_1 + \err_2 + \err_3 \le 2(\Delta_1 + \Delta_2) + 4 a \varepsilon^{-1} (\Delta_1 + \Delta_2) + 8 a \varepsilon^{-1} (\Delta_1 + \Delta_2) \\
        &= 12 \cdot (1  + a \varepsilon^{-1}) \cdot (\Delta_1 + \Delta_2). 
    \end{align}
    Therefore, we complete the proof of \cref{lem:approximation_error-3-1}.
    \end{proof}

    \begin{lemma}\label{lem:approximation5}
        Let us define for brevity, 
        \begin{align}
            \tilde \mu_X^\pi(z, Z) = \tilde \mu^\pi(z, Z\given X_{L+1-\cS^\star}) = \frac{\mu^\pi(z, Z) \exp\left(
                a \cdot \prod_{h\in\cS^\star}\ind(z_{-  h} = x_{L+1-h})
            \right)}{\sum_{z', Z'} \mu^\pi(z', Z') \exp\left(
                a \cdot \prod_{h\in\cS^\star}\ind(z_{-h}' = x_{L+1-h})
            \right)}, 
        \end{align}
        where $Z = (z_{-M}, \dots, z_{-1})$ and $\mu^\pi$ is the stationary distribution of the Markov chain over a window of size $M+1$. 
        We denote by $\tilde \mu_X^\pi(e_k) = \tilde\mu_X^\pi(z=e_k)$ where $\tilde\mu_X^\pi(z)$ is the marginal distribution for $z$ and serves as the population counterpart for $y^\star = \sum_{l=M+1}^{L} \sigma_l^\star x_l$.
        We define quantity $A$ and $B$ as
        \begin{align}
            A &\defeq \EE\bigg[\sum_{l=M+1}^{L} \sigma_l^\star \sum_{k=1}^d \biggl(\frac{\ind(x_{L+1} = x_l = e_k)}{y^\star(k) + \varepsilon} - \frac{y^\star(k)\ind(x_{L+1} = e_k)}{y^\star(k)+\varepsilon} \biggr) \prod_{h\in \cS^\star} \ind(x_{l-h}=x_{L+1-h}) \bigg].\\
            B &\defeq \EE\bigg[\sum_{l=M+1}^{L} \sigma_l^\star \sum_{k=1}^d \biggl(\frac{\ind(x_{L+1} = x_l = e_k)}{\tilde \mu_X^\pi(e_k)} - {\ind(x_{L+1} = e_k)} \biggr) \prod_{h\in \cS^\star} \ind(x_{l-h}=x_{L+1-h})  \bigg].
        \end{align}
        Under \Cref{asp:Markov_chain}, we have
        \begin{align}
            |A - B| \le  \frac{8 (1-\lambda)^{-1/2} (D_{\chi^2}(\mu_0\,\|\, \mu^\pi) + 1)^{1/4} + 8\sqrt{M}}{L^{1/2}\cdot \gamma^{|\cS^\star|+1}} + \frac{2d\varepsilon}{\gamma}.
        \end{align} 
    \end{lemma}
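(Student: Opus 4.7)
\medskip

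\noindent\textbf{Proof plan for \Cref{lem:approximation5}.} The plan is to mirror the proof of \Cref{lem:approximation2}, treating $\tilde\mu_X^\pi(e_k)$ as the population object to which $y^\star(k)$ concentrates; the main technical novelty is that both $y^\star$ and $\tilde\mu_X^\pi$ are reweighted by the exponential factor $\exp(a \cdot \prod_{h\in\cS^\star}\ind(x_{l-h}=x_{L+1-h}))$, so the usual Markov-chain mixing argument must be carried out under the reweighting. First, I would expand $A-B$ and use the triangle inequality to split it into two pieces:
\begin{align}
    |A-B| \le \err_1 + \err_2,
\end{align}
where
\begin{align}
    \err_1 &:= \Biggl|\EE\biggl[\sum_{l=M+1}^L \sigma_l^\star \!\!\sum_{k\in[d]}\!\ind(x_{L+1}=x_l=e_k)\!\cdot\!\Bigl(\tfrac{1}{y^\star(k)+\varepsilon}-\tfrac{1}{\tilde\mu_X^\pi(e_k)}\Bigr)\!\prod_{h\in\cS^\star}\!\ind(x_{l-h}=x_{L+1-h})\biggr]\Biggr|,\\
    \err_2 &:= \Biggl|\EE\biggl[\sum_{l=M+1}^L \sigma_l^\star \!\!\sum_{k\in[d]}\!\ind(x_{L+1}=e_k)\!\cdot\!\tfrac{\varepsilon}{y^\star(k)+\varepsilon}\!\prod_{h\in\cS^\star}\!\ind(x_{l-h}=x_{L+1-h})\biggr]\Biggr|.
\end{align}
The term $\err_2$ is immediate: after bounding the indicators by one, summing $\sigma_l^\star$ over $l$ gives a factor $1$, and $\ind(x_{L+1}=e_k)/\tilde\mu_X^\pi(e_k)$ contributes a factor $\gamma^{-1}$ (via the lower bound on the transition kernel), so $\err_2 \le d\varepsilon/\gamma$ up to a small constant.

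For $\err_1$, I would write $1/(y^\star(k)+\varepsilon) - 1/\tilde\mu_X^\pi(e_k) = (\tilde\mu_X^\pi(e_k)-y^\star(k)-\varepsilon)/((y^\star(k)+\varepsilon)\tilde\mu_X^\pi(e_k))$ and peel off the $\varepsilon$-part (absorbed into another $O(d\varepsilon/\gamma)$ piece). For the remaining signal, I would apply Cauchy--Schwarz across the sum over $k$, exactly as in the proof of \Cref{lem:approximation2}:
\begin{align}
    \err_1 \lesssim \gamma^{-1/2}\cdot \sqrt{\EE_X\, D_{\chi^2}\bigl(y^\star(\cdot) \,\|\, \tilde\mu_X^\pi(\cdot)\bigr)} + \tfrac{d\varepsilon}{\gamma},
\end{align}
after using $\tilde\mu_X^\pi(e_k)\ge \gamma$ and absorbing the $\ind(x_{L+1}=e_k)$ piece into the expectation.

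The heart of the argument—and the step I expect to be the main obstacle—is bounding $\EE_X D_{\chi^2}(y^\star(\cdot)\,\|\,\tilde\mu_X^\pi(\cdot))$. Here $y^\star(k) = \sum_l \sigma_l^\star \ind(x_l=e_k)$ is a \emph{reweighted} empirical average, where the weights $\sigma_l^\star$ depend on whether $X_{l-\cS^\star}=X_{L+1-\cS^\star}$. I would decompose the numerator and denominator of $y^\star(k)$ by splitting the sum over $l$ into matched positions ($\prod_{h\in\cS^\star}\ind(x_{l-h}=x_{L+1-h})=1$) and unmatched positions, giving
\begin{align}
    y^\star(k) = \frac{e^{a}\cdot N_{k,\mathrm{m}} + N_{k,\mathrm{u}}}{e^{a}\cdot N_{\mathrm{m}} + N_{\mathrm{u}}},
\end{align}
with $N_{\mathrm{m}},N_{\mathrm{u}}$ the number of matched/unmatched positions and $N_{k,\cdot}$ the further refinement by $x_l=e_k$. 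The corresponding population fractions $N_\cdot/(L-M)$ converge to $\mu^\pi$-probabilities by \Cref{lem:convergence chi-square}, and the ratio then converges to the analogous ratio defining $\tilde\mu_X^\pi(e_k)$ (which, by the reformulation of $\tilde\mu^\pi$ in the proof of Stage III, equals $r^\pi(X_{-\cS^\star};a)\mu^\pi(e_k) + (1-r^\pi)\mu^\pi(e_k\mid X_{-\cS^\star})$). Propagating the empirical-to-population errors through the ratio (using that the denominator is lower bounded by $\gamma^{|\cS^\star|}$ since $\mu^\pi(X_{L+1-\cS^\star})\ge \gamma^{|\cS^\star|}$ by \Cref{cor:station_lb}) yields
\begin{align}
    \EE_X\, D_{\chi^2}\bigl(y^\star(\cdot)\,\|\,\tilde\mu_X^\pi(\cdot)\bigr) \lesssim \frac{(1-\lambda)^{-1}\sqrt{D_{\chi^2}(\mu_0\,\|\,\mu^\pi)+1}+M}{L\cdot \gamma^{2|\cS^\star|+1}},
\end{align}
and taking the square root together with the factor $\gamma^{-1/2}$ produces the leading term $\bigl(\sqrt M + (1-\lambda)^{-1/2}(D_{\chi^2}(\mu_0\|\mu^\pi)+1)^{1/4}\bigr)/(L^{1/2}\gamma^{|\cS^\star|+1})$ claimed in the lemma. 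Combining $\err_1$ and $\err_2$ and simplifying the constants completes the proof.
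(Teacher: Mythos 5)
Your high-level plan is sound — compare $y^\star$ to the reweighted population $\tilde\mu_X^\pi$, and absorb the $\varepsilon$-pieces separately — but there are two gaps. First, your $\err_2$ bound is not immediate: you cannot control $\varepsilon/(y^\star(k)+\varepsilon)$ by $\varepsilon/\gamma$ directly because $y^\star(k)$ is an empirical quantity that can vanish; the paper's proof first swaps $y^\star(k)+\varepsilon$ for $\tilde\mu_X^\pi(e_k)+\varepsilon$ (which is bounded below by $\gamma$) and then has to control another copy of $|y^\star(k)-\tilde\mu_X^\pi(e_k)|$, so your ``immediate'' term secretly requires the same concentration estimate as $\err_1$. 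Second and more importantly, your whole route through $\EE_X D_{\chi^2}(y^\star\,\|\,\tilde\mu_X^\pi)$ is unnecessary and is precisely the step you have not established: $y^\star$ is a \emph{ratio} of empirical sums, and controlling the $\chi^2$-divergence of such a reweighted empirical measure against $\tilde\mu_X^\pi$ requires propagating squared errors through a nonlinearity in a way your sketch does not spell out (your matched/unmatched decomposition is the right idea but stops short of the delicate ratio analysis).

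The paper takes a different and cleaner route. For $\err_1$ it uses no Cauchy--Schwarz at the top level at all: bounding $y^\star(k)/(y^\star(k)+\varepsilon)\le 1$ and using $\tilde\mu_X^\pi(e_k)\ge\gamma$ reduces $\err_1$ directly to $\gamma^{-1}\EE\|y^\star-\tilde\mu_X^\pi\|_1$, so the entire proof needs only an $\ell_1$ concentration bound, supplied by \Cref{prop:weighted_approximation}. That proposition handles the ratio structure via the algebraic identity $|\phi/\Phi - \hat\phi/\hat\Phi| \le (\hat\phi|\hat\Phi-\Phi| + |\phi-\hat\phi|\hat\Phi)/(\Phi\hat\Phi)$, summing which yields $\|y^\star-\tilde\mu_X^\pi\|_1 \le 2\sum|\phi-\hat\phi|/\Phi$; the exponential tilt is then handled by the elementary inequality $\frac{a+xb}{1+xc}\le a + b/c$ (splitting the weighted error into an unweighted piece plus a conditional-on-match piece), and only at that point does Cauchy--Schwarz plus \Cref{lem:convergence chi-square} enter. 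So your plan can probably be made to work, but the version you sketch is both harder than the paper's and incomplete at the crucial point; if you wish to keep your $\chi^2$-based version, you must actually carry out the concentration of the ratio $y^\star(k)=(e^a N_{k,\mathrm{m}}+N_{k,\mathrm{u}})/(e^a N_{\mathrm{m}}+N_{\mathrm{u}})$ under the joint randomness of the counts and of $X_{L+1-\cS^\star}$, which is the hard part you flagged but did not do.
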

    \begin{proof}[Proof of \cref{lem:approximation5}]
        The proof follows the same arguments as \cref{lem:approximation2}.
        We remind the readers that $y^\star(k)$ is also a function of the whole chain $X$.
        We note that
        \begin{align}
            |A - B|  &= \biggl|\EE \biggl[ \sum_{l=M+1}^{L} \sigma_l^\star  \cdot \biggl(\sum_{k\in[d]}  \frac{\ind(x_{L+1} = x_l = e_k)}{y^\star(k) + \varepsilon}  - \sum_{k\in[d]}\frac{\ind(x_{L+1} = x_l = e_k)}{\tilde\mu_X^\pi(e_k)}\\
            & \hspace{5cm}- \sum_{k\in[d]}\frac{y^\star(k)\ind(x_{L+1} = e_k)}{y^\star(k)+\varepsilon} + 1 \biggr) \cdot \prod_{h\in \cS^\star} \ind(x_{l-h}=x_{L+1-h})  \biggr] \biggr| \\
            &= \biggl| \EE \biggl[ \sum_{l=M+1}^{L} \sigma^\star_l  \cdot 
     \biggl(\sum_{k\in[d]} \Bigl(  \frac{\tilde\mu_X^\pi(e_k) -y^\star(k) }{(y^\star(k)+\varepsilon) \cdot 
     \tilde\mu_X^\pi(e_k)}  - \frac{\varepsilon}{(y^\star(k)+\varepsilon) \cdot 
     \tilde\mu_X^\pi(e_k)} \Bigr)\cdot \ind(x_{L+1} = x_l = e_k) \\
            &\hspace{5cm} - \sum_{k\in[d]}\frac{\varepsilon\ind(x_{L+1} = e_k) }{y^\star(k)+\varepsilon}\biggr) \cdot \prod_{h\in \cS^\star} \ind(x_{l-h}=x_{L+1-h})  \biggr] \biggr|. 
        \end{align}
        To handle this error, we define three error terms as 
        \begin{align}
            \err_1 &\defeq \bigg|\EE\bigg[\sum_{k\in[d]}  \frac{\tilde\mu_X^\pi(e_k) -y^\star(k) }{(y^\star(k)+\varepsilon) \cdot 
     \tilde\mu_X^\pi(e_k)}  \cdot \sum_{l=M+1}^{L} \sigma_l^\star  \cdot 
     \ind(x_{L+1} = x_l = e_k) \cdot \prod_{h\in \cS^\star} \ind(x_{l-h}=x_{L+1-h})  \bigg] \bigg|, \\
            \err_2 &\defeq \bigg|\EE\bigg[\sum_{k\in[d]}  \frac{\varepsilon}{(y^\star(k)+\varepsilon) \cdot 
     \tilde\mu_X^\pi(e_k)}  \cdot \sum_{l=M+1}^{L} \sigma_l^\star  \cdot 
     \ind(x_{L+1} = x_l = e_k) \cdot \prod_{h\in \cS^\star} \ind(x_{l-h}=x_{L+1-h})  \bigg]\bigg|, \\
            \err_3 &\defeq \bigg|\EE\bigg[\sum_{k\in[d]}  \frac{\varepsilon}{y^\star(k)+\varepsilon}  \cdot \ind(x_{L+1} = e_k) \cdot \sum_{l=M+1}^{L} \sigma_l^\star \cdot  \prod_{h\in \cS^\star} \ind(x_{l-h}=x_{L+1-h})  \bigg]\bigg|.
        \end{align}

        For the first error term, we have that
        \begin{align}
            \err_1 
            &\le \EE \bigg[
                \sum_{k\in[d]} \frac{|\tilde\mu_X^\pi(e_k) -y^\star(k)|}{(y^\star(k)+\varepsilon)} \cdot 
                \sum_{l=M+1}^{L} \frac{ \sigma_l^\star \ind(x_l = e_k)}{\tilde\mu_X^\pi(e_k)}
            \bigg] \\
            &= \EE \bigg[
                \sum_{k\in[d]} \frac{|\tilde\mu_X^\pi(e_k) -y^\star(k)|}{(y^\star(k)+\varepsilon)} \cdot 
                \frac{y^\star(k)}{\tilde\mu_X^\pi(e_k)}
            \bigg] \le \gamma^{-1} \cdot 
            \EE\bigg[
                \sum_{k\in[d]} |\tilde\mu_X^\pi(e_k) -y^\star(k)|
            \bigg], 
        \end{align}
        where we recall that by assumption, $\gamma $ provides a lower bound for $\pi(\cdot\given X_\pa)$, hence also a lower bound for $\tilde\mu_X^\pi(e_k)$.
        Next, we invoke \cref{prop:weighted_approximation} which provides
         an upper bound for the difference between the empirical and population distributions in terms of the $\ell_1$-norm:
        \begin{align}
            \EE\bigg[
            \bigg\|\tilde\mu_X^\pi(z=\cdot) -y^\star(\cdot)\bigg\|_{1}
            \bigg] 
            &\le \frac{4 \big ( (1-\lambda)^{-1}\sqrt{D_{\chi^2}(\mu_0\,\|\, \mu^\pi) + 1} + 4 M \big)^{1/2} } {L^{1/2}\cdot \min_{\pi, x_{L+1}, X_{L+1 -\cS^\star}} \mu^\pi(x_{L+1}, X_{L+1 -\cS^\star})} \\
            &\le \frac{4  (1-\lambda)^{-1/2} (D_{\chi^2}(\mu_0\,\|\, \mu^\pi) + 1)^{1/4} + 8\sqrt{M}}{L^{1/2}\cdot \gamma^{|\cS^\star|+1}}.
            \label{eq:approximation5-1}
        \end{align}
        Hence, we control the first error term. 
    
        For the second error term, we follow the same procedure and obtain an upper bound as
        \begin{align}
            \err_2 &\le \EE\bigg[
                \sum_{k\in[d]} \frac{\varepsilon}{\tilde\mu_X^\pi(e_k)} \cdot  \sum_{l=M+1}^{L} \frac{ \sigma_l^\star \ind(x_l = e_k)}{(y^\star(k) + \varepsilon)}
            \bigg] \le \EE\bigg[
                \sum_{k\in[d]} \frac{\varepsilon}{\tilde\mu_X^\pi(e_k)} 
            \bigg] 
            \le \gamma^{-1} d \varepsilon.
        \end{align}
    
        For the last error term, it holds that 
        \begin{align}
            \err_3 & \le  \EE\bigg[
                \sum_{k\in[d]}  \frac{\varepsilon}{y^\star(k)+\varepsilon}  \cdot \ind(x_{L+1} = e_k) 
            \bigg]\\
            & \le \bigg| \EE\bigg[
                \sum_{k\in[d]} \frac{\varepsilon \ind(x_{L+1} = e_k)}{\tilde\mu_X^\pi(e_k) + \varepsilon}
            \bigg] \bigg|
            + \bigg| \sum_{k\in[d]} \EE\bigg[
                \frac{\varepsilon(y^\star(k) - \tilde\mu_X^\pi(e_k)) \cdot \ind(x_{L+1}=e_k)}{(\tilde\mu_X^\pi(e_k)+\varepsilon) (y^\star(k) + \varepsilon)}
            \bigg] \bigg| \\
            & \le \frac{\varepsilon}{\gamma} + 
                \EE\bigg[
                    \sum_{k\in[d]} \frac{|y^\star(k) - \tilde\mu_X^\pi(e_k)| }{\gamma}
                \bigg] \le \frac{\varepsilon}{\gamma} + \frac{4  (1-\lambda)^{-1/2} (D_{\chi^2}(\mu_0\,\|\, \mu^\pi) + 1)^{1/4} + 8\sqrt{M}}{L^{1/2}\cdot \gamma^{|\cS^\star|+1}}.
        \end{align}
        where the last inequality follows directly from \eqref{eq:approximation5-1}.
    
        In summary, the difference between $f_2$ and $f_3$ is bounded by 
        \begin{align}
            |f_2 -f_3| &\le \err_1 + \err_2 + \err_3 \le   \frac{8 (1-\lambda)^{-1/2} (D_{\chi^2}(\mu_0\,\|\, \mu^\pi) + 1)^{1/4} + 8\sqrt{M}}{L^{1/2}\cdot \gamma^{|\cS^\star|+1}} + \frac{2d\varepsilon}{\gamma}, 
        \end{align}
        which completes our proof of \cref{lem:approximation5}.
    \end{proof}

The following lemmas are for analyzing the error $|f_3-f_4|$ for Stage \RNum{3}.
\begin{lemma}
    \label{lem:sigma_approx}
    We define 
    \begin{align}
        A &\defeq \EE\bigg[\sum_{l=M+1}^{L} \sigma_l^\star \cdot  \sum_{k=1}^d \biggl(\frac{\ind(x_{L+1} = x_l = e_k)}{\tilde \mu_X^\pi(e_k)} - {\ind(x_{L+1} = e_k)} \biggr)\cdot  \prod_{h\in \cS^\star} \ind(x_{l-h}=x_{L+1-h})  \bigg], \\
        B &\defeq \EE_{X, (z, Z)\sim \tilde \mu_X^\pi}\bigg[\sum_{k=1}^d \biggl(\frac{\ind(x_{L+1} = z = e_k)}{\tilde \mu_X^\pi(e_k)} - {\ind(x_{L+1} = e_k)}  \biggr) \cdot 
 \prod_{h\in \cS^\star} \ind(z_{l-h}=x_{L+1-h})  \bigg], 
    \end{align}
    where
    \begin{align}
        \sigma_l^\star &\defeq \frac{\exp\left(a \cdot \prod_{h\in\cS^\star}\ind(x_{l -  h} = x_{L+1-h})\right) }{\sum_{l'=1}^L \exp\left(a \cdot \prod_{h\in\cS^\star}\ind(x_{l' -  h} = x_{L+1-h})\right)}, \\
        \tilde \mu_X^\pi(z, Z) &\defeq \tilde \mu^\pi(z, Z\given X_{L+1-\cS^\star}) =  \frac{\mu^\pi(z, Z) \exp\left(
            a \cdot \prod_{h\in\cS^\star}\ind(z_{-  h} = x_{L+1-h})
        \right)}{\sum_{z', Z'} \mu^\pi(z', Z') \exp\left(
            a \cdot \prod_{h\in\cS^\star}\ind(z_{-h}' = x_{L+1-h})
        \right)}.
    \end{align}
    Under \Cref{asp:Markov_chain}, we have
    \begin{align}
        |A - B| \le \frac{8\gamma^{-1} (1-\lambda)^{-1/2} (D_{\chi^2}(\mu_0\,\|\, \mu^\pi) + 1)^{1/4} + 16\gamma^{-1} \sqrt{M}}{L^{1/2}\cdot \gamma^{|\cS^\star|+1}}.
    \end{align}
\end{lemma}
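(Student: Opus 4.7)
}

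The plan is to reduce the difference $|A-B|$ to a comparison between the empirical distribution of length-$(M+1)$ blocks along the chain and the stationary distribution $\mu^\pi$, and then invoke the mixing tools already developed for \Cref{lem:approximation2} and \Cref{lem:approximation5}. First, I would group terms in the sum defining $A$ by the value $(z,Z)$ that the block $(x_l, X_{l-M:l-1})$ takes. Introducing the empirical distribution
\begin{align}
\hat\mu_X(z,Z) := \frac{1}{L-M}\sum_{l=M+1}^{L}\ind\bigl((x_l,X_{l-M:l-1})=(z,Z)\bigr),
\end{align}
one sees that $\sigma_l^\star$ depends on $l$ only through $\phi(z,Z):=\prod_{h\in\cS^\star}\ind(z_{-h}=x_{L+1-h})$, hence
\begin{align}
\sum_{l=M+1}^{L}\sigma_l^\star\, G(x_l,X_{l-\cS^\star}) \;=\; \sum_{z,Z}\frac{\hat\mu_X(z,Z)\exp(a\phi(z,Z))}{\sum_{z',Z'}\hat\mu_X(z',Z')\exp(a\phi(z',Z'))}\, G(z,Z_{-\cS^\star}),
\end{align}
where $G$ is the inner scalar function featured in $A$. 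Meanwhile, by the very definition of $\tilde\mu_X^\pi$ given in the lemma, $B$ has exactly the same expression with $\hat\mu_X$ replaced by $\mu^\pi$.

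Next, I would compare these two reweighed distributions. Writing $\sigma^\star_{(z,Z)}$ and $\tilde\mu_X^\pi(z,Z)$ as ratios with common numerator/denominator structure, a standard calculation gives
\begin{align}
\bigl|\sigma^\star_{(z,Z)}-\tilde\mu_X^\pi(z,Z)\bigr| \;\leq\; \frac{\exp(a\phi(z,Z))}{\hat Z}\bigl|\hat\mu_X(z,Z)-\mu^\pi(z,Z)\bigr| + \tilde\mu_X^\pi(z,Z)\cdot\frac{|\hat Z-\tilde Z|}{\hat Z},
\end{align}
where $\hat Z$ and $\tilde Z$ are the normalizers built from $\hat\mu_X$ and $\mu^\pi$ respectively. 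Both $\hat Z$ and $\tilde Z$ lie in $[1,e^a]$, but crucially the exponential reweighing cancels in the right ratios because $\exp(a\phi)/\hat Z\leq 1$ pointwise, so the deviation of the reweighed distributions is controlled by $\|\hat\mu_X-\mu^\pi\|_{1}$ up to absolute constants, not up to $e^a$. This cancellation is the pivotal observation that makes the bound independent of $a$.

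The inner function $G$ in $A$ is bounded by $\gamma^{-(|\cS^\star|+1)}$, since $\tilde\mu_X^\pi(e_k)\geq\gamma^{|\cS^\star|+1}$ up to the $e^{-a}$ factor that again cancels against the indicator $\prod_{h\in\cS^\star}\ind(z_{-h}=x_{L+1-h})$ that restricts to the reweighed atom. Combining this uniform bound on $G$ with the $L_1$ comparison, I reduce the problem to bounding $\EE_X\|\hat\mu_X(z,Z)-\mu^\pi(z,Z)\|_{1}$, which is exactly the object controlled by \Cref{prop:weighted_approximation} (as used in \eqref{eq:approximation5-1}):
\begin{align}
\EE_X\|\hat\mu_X-\mu^\pi\|_{1} \;\leq\; \frac{4(1-\lambda)^{-1/2}(D_{\chi^2}(\mu_0\,\|\,\mu^\pi)+1)^{1/4}+8\sqrt{M}}{L^{1/2}}.
\end{align}

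Multiplying the $L_1$-bound by the uniform bound $2\gamma^{-(|\cS^\star|+1)}$ on $G$ and by the $\gamma^{-1}$ coming from the normalizer ratio yields the stated upper bound. The main obstacle is step two: carefully showing that the reweighing factors $\exp(a\phi)/\hat Z$ and $\exp(a\phi)/\tilde Z$ never inflate the total-variation comparison by $e^{a}$. Once this cancellation is handled, the rest is a direct repetition of the machinery already proved for \Cref{lem:approximation2}.
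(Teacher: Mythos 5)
Your high-level structure is the same as the paper's: rewrite both $A$ and $B$ as functionals of reweighted distributions (empirical on the $A$ side, stationary on the $B$ side), bound the inner scalar factor uniformly, and reduce to an $\ell_1$-comparison of the two reweighted distributions, which is handled by a mixing-type bound. However, the step you flag as "the pivotal observation" is wrong, and this is a genuine gap. You claim $\exp(a\phi(z,Z))/\hat Z \le 1$ pointwise. That is false: since $\phi$ is $\{0,1\}$-valued, $\hat Z = 1 + (e^a - 1)\hat\mu_X(\Gamma_X)$ where $\Gamma_X$ is the match set, so when $\hat\mu_X(\Gamma_X)$ is small (a typical event for large $a$), the reweighing factor on the match atoms is close to $e^a$, not $1$. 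Consequently the deviation of the reweighted distributions is \emph{not} controlled by $\|\hat\mu_X - \mu^\pi\|_1$ up to absolute constants, and the route through the unweighted $\ell_1$ comparison does not go through as stated.

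The fix is what \Cref{prop:weighted_approximation} actually does: bound $\|\tilde\mu_X^\pi - \hat\nu_X^\pi\|_1$ directly by splitting the sum into $\Gamma_X$ (weight $e^a$) and its complement (weight $1$), so that for the $\Gamma_X$ part the $e^a$ in the numerator cancels against the $(e^a-1)\sum_{\Gamma_X}\mu^\pi$ term in the normalizer, at the price of a factor $1/\sum_{\Gamma_X}\mu^\pi$. After taking expectation this $1/\sum_{\Gamma_X}\mu^\pi$ is precisely what produces the $\gamma^{|\cS^\star|+1}$ in the denominator of the final bound (via \Cref{lem:convergence chi-square} and \Cref{cor:station_lb}). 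Relatedly, the $\ell_1$ bound you cite is mis-stated: you drop the $\min_{x_{L+1},X_{L+1-\cS^\star}}\mu^\pi(x_{L+1},X_{L+1-\cS^\star})$ denominator that both \Cref{prop:weighted_approximation} and \eqref{eq:approximation5-1} carry, and that denominator is essential because it is exactly the trace of the $a$-dependent cancellation you tried to obtain for free. Finally, a minor point: the inner scalar $G$ is bounded by $\gamma^{-1}$ (since $\tilde\mu_X^\pi(z=e_k)\ge\gamma$ by the transition-kernel lower bound, being a mixture of conditionals each bounded below by $\gamma$), not by $\gamma^{-(|\cS^\star|+1)}$; the $\gamma^{|\cS^\star|+1}$ factor arrives via \Cref{prop:weighted_approximation}, not via $G$. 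Once you replace the unweighted $\ell_1$ route by a direct invocation of \Cref{prop:weighted_approximation}, the remainder of your argument is correct and matches the paper.
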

\begin{proof}[Proof of \cref{lem:sigma_approx}]
    For $Z = (z_{-M}, \dots, z_{-1})$ and $Z' = (z_{-M}', \dots, z_{-1}')$, we let $Z_{-\cS^\star} = (z_{-h})_{h\in\cS^\star}$, we define
    \begin{gather}
        \hat \mu_X^\pi(z, Z) = \frac{1}{L-M} \sum_{l=M+1}^{L} \ind(x_l = z, X_{l-M:l-1}=Z),\\
        R(Z, X_{L+1-\cS^\star}) = \exp\bigg(
            a \cdot \prod_{h\in\cS^\star} \ind(z_{-h} = x_{L+1-h})
        \bigg). 
    \end{gather}
    Using these notations, we can rewrite the normalizing factor in $\tilde\mu_X^\pi$ and $\sigma_l^\star$ respectively as
    \begin{align}
        \Phi = \sum_{z, Z} \mu^\pi(z, Z) \cdot R(Z, X_{L+1-\cS^\star}), \quad \hat\Phi = \sum_{z, Z} \hat\mu_X^\pi(z, Z) \cdot R(Z, X_{L+1-\cS^\star}).
    \end{align}
    We also define 
    \begin{gather}
        \phi(z, Z_{-\cS^\star}) = \mu^\pi(z, Z_{-\cS^\star}) \cdot R(Z_{-\cS^\star}, X_{-\cS^\star}) , \quad \hat\phi(z, Z_{-\cS^\star}) = \hat\mu_X^\pi(z, Z_{-\cS^\star})\cdot  R(Z_{-\cS^\star}, X_{L+1-\cS^\star}).
    \end{gather}
    If we further define $\hat\nu_X^\pi(z, Z_{-\cS^\star}) = \sum_{l=M+1}^L \ind(x_l = z, X_{l-\cS^\star} = Z_{-\cS^\star})$, then we have
    \begin{gather}
        \hat\nu_X^\pi(z, Z_{-\cS^\star}) = \frac{ \hat\mu_X^\pi(z, Z_{-\cS^\star})\cdot  R(Z_{-\cS^\star}, X_{L+1-\cS^\star}) }{\hat\Phi} = \frac{\hat\phi(z, Z_{-\cS^\star})}{\hat\Phi},\\
        \tilde\mu_X^\pi(z, Z_{-\cS^\star}) = \frac{\mu^\pi(z, Z_{-\cS^\star}) \cdot R(Z_{-\cS^\star}, X_{L+1-\cS^\star}) }{\Phi} = \frac{\phi(z, Z_{-\cS^\star})}{\Phi}.
    \end{gather}
    Using the above definitions and relationship, $A$ and $B$ can be rewritten as 
    \begin{align}
        A = \EE \bigg[ \sum_{k=1}^d \frac{\hat\phi(e_k, X_{L+1-\cS^\star})}{\hat \Phi \cdot \tilde\mu_X^\pi(e_k)} - \frac{\hat\phi(X_{L+1-\cS^\star})}{\hat\Phi}\bigg], \quad B= \EE\bigg[
            \sum_{k=1}^d \frac{\phi(e_k, X_{L+1-\cS^\star})}{\Phi \cdot \tilde\mu_X^\pi(e_k)} - \frac{\phi(X_{L+1-\cS^\star})}{\Phi}
        \bigg]. 
    \end{align}
    Therefore, the difference between $A$ and $B$ is given by
    \begin{align}
        |A - B| 
        &\le \frac{2}{\gamma} \cdot \EE \bigg[
            \sum_{z, Z_{-\cS^\star}} \bigg|\frac{\phi(z, Z_{-\cS^\star})}{\Phi} - \frac{\hat\phi(z, Z_{-\cS^\star})}{\hat\Phi} \bigg| 
        \bigg] \le \frac{2}{\gamma} \cdot \EE \bigg[
            \sum_{z, Z_{-\cS^\star}} \bigg| \tilde\mu_X^\pi(z, Z_{-\cS^\star}) - \hat\nu_X^\pi(z, Z_{-\cS^\star}) \bigg| 
        \bigg] \\
        &\le \frac{8\gamma^{-1} \cdot \bigl(  (1-\lambda)^{-1}\sqrt{D_{\chi^2}(\mu_0\,\|\, \mu^\pi) + 1} + 4 M \bigr) ^{1/2} }{L^{1/2}\cdot \min_{x_{L+1}, X_{L+1 -\cS^\star}} \mu^\pi(x_{L+1}, X_{L+1 -\cS^\star})}. 
    \end{align}where the last inequality follows from the result in \cref{prop:weighted_approximation}.
    Invoking the lower bound $\mu^\pi(x_{L+1}, X_{L+1 -\cS^\star}) \ge \gamma^{|\cS^\star|+1}$, we complete the proof of \cref{lem:sigma_approx}.
\end{proof}

\subsection{Lemmas on Concentration of Markov Chain} \label{sec:MC_convergence}

Recall that we previously define $X= (x_{1}, \ldots, x_L)$ as the observed sequence and $x_{L+1}$ as the value at time $L+1$ to be predicted.
For generality, we will use $X=(x_1, \ldots, x_{L+1})$ to denote the whole sequence in the following proof. 
We denote by $p^\pi(\cdot)$ the joint distribution for the sequence $X$ with kernel $\pi$.
Recall that we have the parent set $\pa=\{-r_1, \ldots, -r_n\}$, and as the start of a chain, we sample the first $r_n$ tokens by $(x_1, \ldots, x_{r_n})\sim \mu_0$. 

In the sequel, we will study concentration properties of the Markov chain $X$ for a window of tokens with window size at most $M$, where $M > r_n$.
To proceed, let us consider a fixed set $\cS\subseteq [M]$.
For any $l \in [M+1, L+1 ]$, we 
define $Y_l = (x_l, X_{l-\cS})$ as a new vector  containing the token at position $l$ and also the tokens in the past $\cS$ positions prior to $x_{l}$.
Here, we follow the convention that $X_{l-\cS}=(X_{l-i})_{i\in \cS}$.
We also consider another fixed  subset $\cS' \subseteq [M]$ and similarly define $Y_l' = (x_l, X_{l-\cS'})$.


The concentration properties of the Markov chain are rooted in the fact that when conditioning on all the parents, the current token is independent of all the past tokens.
Given the parent set structure $\pa=\{-r_1, \ldots, -r_n\}$, we aim to make $Y_{L+1}$ approximately independent of $Y_l$ by conditioning on some intermediate parent sets.
To this end, we define $A = (x_{L+1-M}, \ldots, x_{L-M+r_n}) \in \cX^{r_n}$ and $B_l = (x_{l-r_n+1}, \ldots, x_{l}) \in \cX^{r_n}$ as these intermediate parent sets.
By the Markov property and the parent set structure, we have the following conditional independence relations:
\begin{align}
    Y_{L+1} \indep (B_l, Y_l) \given A, \quad (Y_{L+1}, A) \indep Y_{l} \given B_l, \quad  \forall l = M+1, \ldots, L - M+r_n.
\end{align}
To illustrate, let us consider the first condition $Y_{L+1} \indep (B_l, Y_l) \given A$.
When $l \leq L - M + r_n$, the $B_l$ and $Y_l$ are both contained in the history $\{ x_{k} \colon k \leq L- M + r_n\}=A\cup \{ x_{k} \colon k \leq L- M\}$. 
When conditioning on $A$, the randomness of $(B_l, Y_l)$ is measurable by the $\sigma$-algebra generated by the ``past''  $\{ x_{k} \colon k \leq L - M  \}$.
Moreover, the randomness of $Y_{L+1}$ is measurable by the $\sigma$-algebra generated by the ``future'' $\{ x_{k} \colon k \in [ L+1 - M +r_n, L + 1]  \}$ when conditioning on $A$.
Notice that the parent to the any element in the future $\{ x_{k} \colon k \in [ L+1 - M +r_n, L + 1]  \}$ is either contained in $A$, or can be generated conditioned on $A$ without touching further history $\{ x_{k} \colon k \leq L- M\}$. 
Thus, by the Markov property, conditioning on $A$,  $Y_{L+1}$  is independent of the past $\{ x_{k} \colon k \leq L- M\}$, and in particular, $(B_l, Y_l)$.  
Similarly, 
since  $B $ contains the parent of $x_{l+1}$, conditioning on $B$, $Y_l$ is independent of $x_{l+1} $ and later tokens.  
Moreover, given $B$, the randomness of $Y_{l} $ comes from the randomness of $x_{l-M}, \ldots, x_{l-r_n}$. 
Since $l \leq L-M + r_n$, we have $L+1 - M \geq l +1 - r_n $. As a result, conditioning on $B$, the randomness of $(Y_{L+1}, A)$ comes from tokens generated no earlier than $x_{l+1}$. Therefore, $(Y_{L+1}, A)$ and $Y_{l}$ are conditionally independent given $B_{l}$.  We visualize the definition of $Y_{L+1}$, $A$, $B_l$, and $Y_l$ in \Cref{fig:markov_proof}

\begin{figure}[h]
    \centering
    \includegraphics[width=0.9\textwidth]{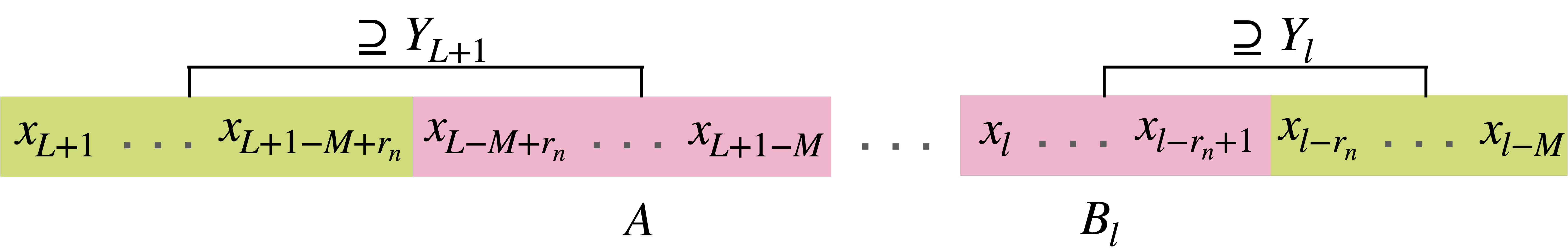}
    \caption{Illustration of the definition of $Y_{L+1}$, $A$, $B_l$, and $Y_l$.
    When conditioned on $A$, $Y_{L+1}$ is independent of $(B_l, Y_l)$.
    When conditioned on $B_l$, $Y_l$ is independent of $(A, Y_{L+1})$.}
    \label{fig:markov_proof}
\end{figure}

Similarly, for  $Y_{l}' = (x_{l}, X_{l - \cS'})$ defined using the subset $\cS'$, we also parallel conditional independence relations: 
\begin{align}
    Y_{L+1}' \indep (B_l, Y_l') \given A, \quad (Y_{L+1}', A) \indep Y_{l}' \given B_l, \quad  \forall l = M+1, \ldots, L - M+r_n.
\end{align}
In particular, we also have 
\begin{align}\label{eq:cond_indep_yy}
    Y_{L+1}  \indep (B_l, Y_l') \given A, \quad (Y_{L+1}, A) \indep Y_{l}' \given B_l, \quad  \forall l = M+1, \ldots, L - M+r_n.
\end{align}

Using $\{ Y_{l}, Y_{l}'\}$, we define a joint distribution  $\hat p^\pi$ over $2+ |\cS| + |\cS'|$ tokens as follows. 
For any  $E \in \cX^{|\cS| + 1}$ and $ E'  \in \cX^{|\cS'| + 1}$, the probability mass function of $\hat p^{\pi} $ is defined as 
\begin{align}
    &\hat p^\pi (Y_{L+1}=E, Y'=E') \\
    &\quad \defeq \frac{1}{L-M} \sum_{l=M+1}^L p^\pi(Y_{L+1}=E, Y_l'=E') 
    \\
    &\quad  = \frac{1}{L-M} \sum_{l=M+1}^L \sum_{A, B_l} \mu^\pi(Y_{L+1} = E \given A) \cdot P_\pi^{L -M+r_n - l} (A \given B_l)\cdot p^\pi(Y_l' = E'\given B_l) \cdot p^\pi(B_l).\label{def:hat-p}
\end{align}
Here, $Y'$ is just a placeholder for $Y_{l}'$
as $\hat p$ takes an average over $l$ and does not depend on any specific position index.
The summation $\sum_{  A,  B_{l}} $ means we sum over all possible values that $A$ and $B_{l}$ can take. 
In the last line of \eqref{def:hat-p}, we decompose the joint distribution $p^\pi(Y_{L+1}=E, Y_l'=E')$ into the product of the conditional distributions by the Markov property in \eqref{eq:cond_indep_yy}. 
That is, 
\begin{align}
p^\pi(Y_{L+1}=E, Y_l'=E')  & = \sum_{A, B_{l} } p^\pi(Y_{L+1}=E,  Y_l'=E', A , B_{l} ) \\
& =   \sum_{A, B_{l} } p^\pi(B_{l} )  \cdot p^\pi(Y_{L+1}=E, A \given B_{l}) \cdot  p^\pi(Y_{l}=E'  \given B_{l} ) \notag \\
& =   \sum_{A, B_{l} } p^\pi(Y_{L+1}=E \given  A) \cdot  p^\pi (A \given B_{l}) \cdot  p^\pi(Y_{l}=E'  \given B_{l} ) \cdot p^\pi(B_{l} ) .
\end{align}
Here the second equality follows from the fact that 
$(Y_{L+1}, A) \indep Y_{l}' \given B_l$  and the last equality follows from the fact that $ Y_{L+1}  \indep (B_l, Y_l') \given A$, which implies $p^\pi(Y_{L+1}=E \given  A, B_{l}) = p^\pi(Y_{L+1}=E \given  A)$. 
Moreover, we  denote by $P_\pi^{i}$ the $i$-step transition kernel of the chain, which corresponds to the $i$-th power of the transition matrix $P_\pi$.
Here, we are following the convention in the main text that 
\begin{align}
    P_\pi(Z',  Z) = \pi(z'_{l} \mid Z_{\mathtt{pa}(l)}) \cdot \mathbf{1}(Z'_{l-r_n + 1:-1} = Z_{l-r_n+1:-1}).
    \label{eq:transition_matrix-1}
\end{align}
In the following, we always consider a fixed transition kernel $\pi$ and omit the superscript/subscript $\pi$ in the matrix notation. We denote the transition matrix by $ P_\pi$ and the stationary distribution by $\mu^\pi$ for a window of length $r_n$.
For the transition matrix, we index each row by the next $r_n$-window $Z'$ and each column by the current $r_n$-window $Z$.
Under this notation, since both $A$ and $B_l$ have lengths $r_n$, we have 
\begin{align}
    \label{eq:transition_ab}
    p^\pi (A \given B_{l}) = P_\pi^{L -M+r_n - l} (A, B_l).
\end{align}
Here $P_\pi^{L -M+r_n - l} (A \given B_l)$ corresponds to the $( A, B_{l} )$-entry of the matrix $(P_{\pi})^{L- M + r_n - l} $. 
Combining \eqref{eq:cond_indep_yy} and \eqref{eq:transition_ab}, we obtain the last equality in \eqref{def:hat-p}. 

In the sequel, to simplify the notation, we write $P_{\pi}$ and $\mu^{\pi}$ as $P$ and $\mu$ respectively. 
Let us consider the reweighted transition kernel 
\begin{align}
    K \defeq \diag\bigl(\sqrt{\mu}\bigr)^{-1} \cdot P \cdot \diag\left(\sqrt{\mu}\right), 
\end{align}
where $\sqrt\mu$ is the element-wise square root of $\mu$.
Since the transition matrix is \emph{primitive} by assumption and having only one eigenvalue with value one  on its spectral circle, we also have for $K$ that the leading eigenvalue is one with eigenvector $\sqrt{\mu}$, i.e.
$\sqrt\mu = K \sqrt\mu$ and $\sqrt\mu^\top = \sqrt\mu^\top K$.
However, the projection in the leading eigenspace (or the Perron projection) is not of our interest. 
The following property of $K$ will be useful in the subsequent proof.
\begin{proposition}
    \label{prop:K}
    For the reweighted transition matrix $K$, we have for any integer $i \ge 0$
    \begin{align}
        P^i - \mu \vone^\top = \diag\left(\sqrt{\mu}\right) \cdot \bigl(K - \sqrt \mu \sqrt \mu^\top\bigr)^i \cdot \diag\bigl(\sqrt{\mu}^{-1}\bigr)
    \end{align}
\end{proposition}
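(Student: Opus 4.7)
} The plan is to pass through the conjugation $P = \diag(\sqrt{\mu}) K \diag(\sqrt{\mu})^{-1}$ and reduce the identity to an algebraic statement about $K$ and the rank-one matrix $\sqrt{\mu}\sqrt{\mu}^\top$. First I would verify the two conjugation identities: iterating the definition of $K$ gives $P^i = \diag(\sqrt{\mu}) K^i \diag(\sqrt{\mu})^{-1}$ for every $i \ge 0$, and a direct entrywise check shows
\begin{align}
    \diag(\sqrt{\mu})\,\sqrt{\mu}\sqrt{\mu}^\top\,\diag(\sqrt{\mu})^{-1} \;=\; \mu\,\vone^\top,
\end{align}
since the $(i,j)$-entry of the left-hand side is $\sqrt{\mu_i}\cdot\sqrt{\mu_i}\cdot\sqrt{\mu_j}\cdot\sqrt{\mu_j}^{-1} = \mu_i$. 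Taken together, proving the proposition reduces to establishing the identity
\begin{align}
    (K - \sqrt{\mu}\sqrt{\mu}^\top)^i \;=\; K^i - \sqrt{\mu}\sqrt{\mu}^\top. \label{eq:plan-key}
\end{align}

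Next I would record the three elementary facts about $\Pi \defeq \sqrt{\mu}\sqrt{\mu}^\top$. By Perron--Frobenius (\Cref{asp:Markov_chain}) applied to the primitive stochastic matrix $P$, the stationary vector $\mu$ is a left eigenvector with eigenvalue one, and $\vone$ is a right eigenvector with eigenvalue one. Translating through the similarity gives $K\sqrt{\mu} = \sqrt{\mu}$ and $\sqrt{\mu}^\top K = \sqrt{\mu}^\top$. Since $\mu$ is a probability vector, $\sqrt{\mu}^\top \sqrt{\mu} = 1$, so $\Pi^2 = \Pi$ and $\Pi$ is an orthogonal projection of rank one. Combining the eigenvector identities yields $K\Pi = \sqrt{\mu}\sqrt{\mu}^\top = \Pi$ and $\Pi K = \sqrt{\mu}\sqrt{\mu}^\top = \Pi$, so $K$ and $\Pi$ commute and satisfy $K^j \Pi = \Pi K^j = \Pi$ for all $j \ge 0$.

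Finally I would prove \eqref{eq:plan-key} by the binomial theorem (which applies thanks to commutativity):
\begin{align}
    (K - \Pi)^i \;=\; \sum_{k=0}^{i} \binom{i}{k}(-1)^k K^{i-k}\Pi^k \;=\; K^i + \Pi\sum_{k=1}^{i}\binom{i}{k}(-1)^k \;=\; K^i + \Pi\bigl((1-1)^i - 1\bigr) \;=\; K^i - \Pi,
\end{align}
where in the second equality I have used $\Pi^k = \Pi$ for $k\ge 1$ together with $K^{i-k}\Pi = \Pi$. Conjugating by $\diag(\sqrt{\mu})$ on both sides of \eqref{eq:plan-key} and invoking the two conjugation identities from the first step yields the claimed formula. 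There is no real obstacle here beyond bookkeeping; the only place one has to be careful is not to mistakenly use a symmetry of $K$ (it is in general non-symmetric unless the chain is reversible), but every step above uses only the one-sided eigenvector relations and the commutation $K\Pi = \Pi K = \Pi$, which hold irrespective of reversibility.
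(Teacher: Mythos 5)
Your argument is correct and reaches the same reduction as the paper — conjugate by $\diag(\sqrt\mu)$, pull $\mu\vone^\top$ to $\Pi := \sqrt\mu\sqrt\mu^\top$, and then it remains to show $(K-\Pi)^i = K^i - \Pi$ — but you prove that last identity by a genuinely different route. The paper argues vector-by-vector: since $(K-\Pi)$ annihilates $\sqrt\mu$ and agrees with $K$ on $\sqrt\mu^\perp$ (and $K$ preserves $\sqrt\mu^\perp$ because $\sqrt\mu^\top K = \sqrt\mu^\top$), one may replace $x$ by its projection onto $\sqrt\mu^\perp$ and unwind. You instead observe that $K\Pi = \Pi K = \Pi$ and $\Pi^2 = \Pi$, so $K$ and $\Pi$ commute and the binomial theorem gives $(K-\Pi)^i = K^i + \Pi\sum_{k\ge 1}\binom{i}{k}(-1)^k = K^i - \Pi$. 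The paper's route is geometric and makes the spectral-gap bound used immediately afterwards transparent (it is literally restricting $K$ to $\sqrt\mu^\perp$); your route is purely algebraic and arguably tighter bookkeeping. Both are fine.

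Two small remarks. First, the identity as stated (for all $i\ge 0$) is actually false at $i=0$: the left side is $I - \mu\vone^\top$ and the right side is $I$. Both the paper's proof and yours implicitly require $i\ge 1$ (in yours, the step $(1-1)^i = 0$ fails at $i=0$); this is a latent off-by-one in the paper's statement, harmless since the proposition is only invoked with $i\ge 1$, but worth noticing. Second, the paper's convention for $P_\pi$ is column-stochastic ($P(Z',Z)=\Pr[\text{next}=Z'\mid \text{cur}=Z]$), so $\mu$ is in fact the right Perron eigenvector ($P\mu=\mu$) and $\vone$ the left one ($\vone^\top P=\vone^\top$), the reverse of what you wrote; fortunately you only use the translated facts $K\sqrt\mu=\sqrt\mu$ and $\sqrt\mu^\top K = \sqrt\mu^\top$, which come out the same either way, so the conclusion is unaffected.
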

\begin{proof}[Proof of \Cref{prop:K}]
\begin{align}
    P^i - \mu \vone^\top 
    &= \left(\diag\bigl(\sqrt{\mu}\bigr) \cdot K \cdot \diag\bigl(\sqrt{\mu}\bigr)^{-1}\right)^i - \mu \vone^\top \\
    &= \diag\bigl(\sqrt{\mu}\bigr) \cdot \left(K^i  - \sqrt\mu \sqrt\mu^\top  \right)\cdot \diag\bigl(\sqrt{\mu}\bigr)^{-1}  \\
    &= \diag\left(\sqrt{\mu}\right) \cdot \bigl(K - \sqrt \mu \sqrt \mu^\top\bigr)^i \cdot \diag\bigl(\sqrt{\mu}^{-1}\bigr), 
\end{align}
where the last equality holds by noting that $K - \sqrt \mu \sqrt \mu^\top$ project $\sqrt\mu$ to the zero vector, and for any $v\perp \sqrt\mu$, we have $(K - \sqrt\mu\sqrt\mu^\top) v = K v$.
Thus for any test vector $x$:
\begin{align}
    (K - \sqrt\mu \sqrt\mu^\top)^i x 
    &= (K - \sqrt\mu \sqrt\mu^\top)^i (x - \langle \sqrt\mu, x\rangle \cdot \sqrt\mu)\\ 
    &= K^i (x - \langle \sqrt\mu, x\rangle \cdot \sqrt\mu) = (K^i - \sqrt\mu \sqrt\mu^\top) x.
\end{align}
This completes the proof of \Cref{prop:K}.
\end{proof}
Indeed, the second largest eigenvalue of $K$ (in magnitude) determines the mixing rate of the chain.
Let $\lambda$ denote the eigenvalue of $K$ with the second largest magnitude.

Furthermore, if the transition kernel $\pi$ admits a lower bound $\gamma>0$, then we can guarantee that both $p^\pi$ and $\mu^\pi$ admit a uniform lower bound. 
\begin{proposition}[Uniform Lower Bound]\label{prop:transit_lb}
    Suppose $\pi(\cdot\given X_{\pa}) \ge \gamma$ uniformly for some $\gamma > 0$ and $\pa=\{-r_1, \ldots, -r_n\}$. 
    Suppose $X_{1:r_n}\sim \mu_0(\cdot)$ where $\mu_0 \in \Delta(\cX^{r_n})$.
    Then for any $S$ tokens $x_{l_1}, x_{l_2}, \ldots, x_{l_S}$ such that $l_s\ge r_n$ for any $s\in[S]$, we have 
    \begin{align}
        p^\pi(x_{l_1}, \ldots, x_{l_S}) \ge \gamma^{S}.
    \end{align}
\end{proposition}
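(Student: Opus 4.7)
Without loss of generality, I will assume the positions are ordered $l_1 < l_2 < \cdots < l_S$, and argue by the chain rule together with the Markov property implied by the parent structure $\pa$. Specifically, I would factorize
\begin{align}
p^\pi(x_{l_1},\ldots,x_{l_S}) \;=\; p^\pi(x_{l_1})\cdot\prod_{s=2}^{S} p^\pi\bigl(x_{l_s}\,\big|\, x_{l_1},\ldots,x_{l_{s-1}}\bigr),
\end{align}
and try to show each factor is bounded below by $\gamma$, which would immediately give the claimed $\gamma^S$ lower bound.

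The key step is bounding each conditional factor. For $s\geq 2$ with $l_s > r_n$, I would introduce the parents $X_{\pa(l_s)}$ of $x_{l_s}$ as an intermediate conditioning variable:
\begin{align}
p^\pi\bigl(x_{l_s}\,\big|\, x_{l_1},\ldots,x_{l_{s-1}}\bigr) \;=\; \sum_{X_{\pa(l_s)}} p^\pi\bigl(X_{\pa(l_s)}\,\big|\, x_{l_1},\ldots,x_{l_{s-1}}\bigr)\cdot p^\pi\bigl(x_{l_s}\,\big|\, X_{\pa(l_s)},x_{l_1},\ldots,x_{l_{s-1}}\bigr).
\end{align}
Since $l_1,\ldots,l_{s-1}$ all precede $l_s$ by our ordering, the Markov property of the $n$-gram chain collapses the inner conditional to $\pi(x_{l_s}\mid X_{\pa(l_s)})$, which is at least $\gamma$ uniformly by hypothesis. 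Summing over $X_{\pa(l_s)}$ against a probability measure then yields the clean lower bound $\gamma$. The base case $s=1$ (with $l_1>r_n$) is the same argument with the marginal over $X_{\pa(l_1)}$ replacing the conditional measure; the boundary case $l_s=r_n$ (if it matters for the downstream use in Corollary~\ref{cor:station_lb}) can be handled by observing that $x_{r_n}$ is a marginal of $\mu_0$ and is only invoked inside the induction after at least one future transition has been summed out, so we can always pair it with a genuine transition factor bounded by $\gamma$.

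The main obstacle, such as it is, is bookkeeping rather than analysis: I need to be careful that when conditioning on $x_{l_1},\ldots,x_{l_{s-1}}$ and then introducing $X_{\pa(l_s)}$, the Markov property really does apply in the form I want, i.e., that $x_{l_s}$ is conditionally independent of the earlier $x_{l_j}$'s given $X_{\pa(l_s)}$. This is immediate from the definition of the $n$-gram parent structure and the factorization $p^\pi(X)=\mu_0(X_{1:r_n})\prod_{l>r_n}\pi(x_l\mid X_{\pa(l)})$, but I would state it as a short lemma or cite the DAG Markov property before invoking it. Once this is in place, telescoping the $S$ lower bounds $\gamma$ through the chain rule finishes the proof.
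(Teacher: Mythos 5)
Your approach is essentially the same as the paper's: the paper phrases the argument as an induction on $S$ where one peels off the last index $l_S$, conditions on the parents $X_{\pa(l_S)}$, uses $\pi(\cdot\given X_{\pa}) \ge \gamma$, and marginalizes; your chain-rule factorization $p^\pi(x_{l_1},\ldots,x_{l_S}) = p^\pi(x_{l_1})\prod_{s\ge2} p^\pi(x_{l_s}\mid x_{l_1},\ldots,x_{l_{s-1}})$ with each conditional bounded below by $\gamma$ is the unrolled version of exactly that induction. One small note: your version is in fact cleaner than what the paper writes, since the paper's inductive step displays the joint as $p^\pi(Y)\cdot p^\pi(X_{\pa(l_S)}\setminus Y)$ inside the sum (a factorization that is not generally correct), whereas your formulation keeps the correct joint $p^\pi(Y, X_{\pa(l_S)}\setminus Y)$ implicit in the conditional probability and avoids this. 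Both arguments arrive at the same bound because only the total mass $\sum_{X_{\pa(l_S)}\setminus Y} p^\pi(\cdots) = p^\pi(Y)$ matters after applying $\pi\ge\gamma$. Be slightly careful with the boundary case $l_s = r_n$ you mention: if $S=1$ and $l_1=r_n$, there is no transition factor to pair it with, so the statement really needs $l_s > r_n$ (or an implicit assumption on $\mu_0$); this is an ambiguity in the statement itself, and your hand-waving about "pairing with a future transition" does not resolve it for $S=1$, but it is not a flaw in your main argument.
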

Using \Cref{prop:transit_lb}, we show that the transition matrix $P_\pi$ is primitive.
\begin{corollary}[Uniform Lower Bound Implies Primitive Transition]\label{cor:primitive}
    Under the condition of \Cref{prop:transit_lb}, with $\pi(\cdot\given X_{\pa}) \ge \gamma>0$, the transition matrix defined in \eqref{eq:transition_matrix-1} is primitive. 
\end{corollary}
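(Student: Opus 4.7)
The plan is to exhibit an explicit exponent $k$ such that $P_\pi^k$ is entrywise strictly positive, which is exactly the definition of primitivity invoked in Definition 3.4 (and which automatically subsumes irreducibility, since $P^k>0$ means every state can reach every other in $k$ steps).

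The natural choice is $k=r_n$. Recall that a state in the Markov chain on which $P_\pi$ acts is an $r_n$-window of consecutive tokens $S_t = (x_{t-r_n+1},\dots,x_t)$, and that the one-step transition matrix carries the hard constraint $Z'_{l-r_n+1:-1} = Z_{l-r_n+1:-1}$ in \eqref{eq:transition_matrix-1}: each one-step transition overwrites exactly one token while preserving the overlap of length $r_n-1$. Iterating this $r_n$ times, I will note that the state $S_{t+r_n} = (x_{t+1},\dots,x_{t+r_n})$ is disjoint from $S_t$, so that no overlap constraint survives between the initial and final windows. Consequently, for any pair $(Z,Z')\in\cX^{r_n}\times\cX^{r_n}$,
\begin{align}
    P_\pi^{r_n}(Z',Z) \;=\; \prod_{i=1}^{r_n} \pi\bigl(z'_{i} \,\big|\, \text{parents of } z'_i\bigr),
\end{align}
where the parents of each $z'_i$ at virtual position $t+i$ live at positions $t+i-r_1,\ldots,t+i-r_n$, all of which lie in $Z \cup (z'_1,\dots,z'_{i-1})$, so the factors are well-defined.

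The key step is then immediate from the hypothesis $\pi(\,\cdot\,\mid X_{\pa})\geq \gamma$: each of the $r_n$ conditional factors is at least $\gamma$, hence
\begin{align}
    P_\pi^{r_n}(Z',Z) \;\geq\; \gamma^{r_n} \;>\; 0 \qquad \text{for every } Z,Z'\in\cX^{r_n}.
\end{align}
This can also be obtained as a one-line consequence of \Cref{prop:transit_lb} applied to the $r_n$ freshly generated tokens $x_{t+1},\ldots,x_{t+r_n}$, conditional on the previous window $S_t=Z$. Either way, every entry of $P_\pi^{r_n}$ is strictly positive, so $P_\pi$ is primitive with exponent at most $r_n$.

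I do not expect a genuine obstacle here; the only point that requires a little care is the bookkeeping of the window indices — verifying that after exactly $r_n$ one-step transitions the new window is disjoint from the old one, and that each newly generated token's parent set (in the sense of $\pa=\{-r_1,\dots,-r_n\}$) is already available among $Z$ together with the previously generated new tokens, so that the uniform lower bound $\gamma$ on $\pi(\,\cdot\,\mid X_{\pa})$ applies at every step of the product.
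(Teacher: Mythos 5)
Your proof is correct and follows essentially the same route as the paper: both set the primitivity exponent to $k=r_n$, observe that after $r_n$ shift-steps the new window is disjoint from the old so the overlap constraint vanishes, and lower-bound every entry of $P_\pi^{r_n}$ by $\gamma^{r_n}$ via the uniform bound on the transition kernel (the paper phrases this as an invocation of \Cref{prop:transit_lb} with a one-hot initial distribution, which is exactly the product expansion you write out directly).
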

\begin{proof}[Proof of \Cref{cor:primitive}]
If the initial distribution is set to be any one-hot vector in $\Delta(\cX^{r_n})$, and taking $x_{l_1}, \ldots, x_{l_S}$ in \Cref{prop:transit_lb} to be $x_{r_n+1}, \ldots, x_{2r_n}$, we conclude that $p^\pi( X_{r_n+1:2r_n}  \given X_{1:r_n} ) >0$ holds for any $ X_{r_n+1:2r_n}, X_{1:r_n}\in\cX^{r_n}$.
Recall from the definition that for a primitive matrix $P$, we can find some positive integer $k$ such that $P^k$ has all positive entries. 
For our case, we can set $k=r_n$ and everything  follows by noting that $p^\pi( X_{r_n+1:2r_n}  \given X_{1:r_n} ) = P_{\pi}^{r_n}( X_{r_n+1:2r_n} ,X_{1:r_n} )$. 
\end{proof}
Another corollary of \Cref{prop:transit_lb} is that, if we take $\mu_0=\mu^\pi$, which is the stationary distribution, we can replace $p^\pi$ in \Cref{prop:transit_lb} by $\mu^{\pi}$.

\begin{corollary} \label{cor:station_lb}
Suppose $\pi(\cdot\given X_{\pa}) \ge \gamma$ uniformly for some $\gamma > 0$ and $\pa=\{-r_1, \ldots, -r_n\}$. 
For the stationary distribution $\mu^\pi$ and $S$ tokens $x_{l_1}, x_{l_2}, \ldots, x_{l_S}$ such that $l_s\ge r_n$ for any $s\in [S]$, we have  $\mu^\pi(x_{l_1}, \ldots, x_{l_S})\ge \gamma^S$.
\end{corollary}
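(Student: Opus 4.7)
The plan is to deduce this corollary directly from Proposition~\ref{prop:transit_lb} by choosing the initial distribution to be the stationary one. Recall that Proposition~\ref{prop:transit_lb} holds for \emph{any} initial distribution $\mu_0 \in \Delta(\cX^{r_n})$ on the first $r_n$ tokens: as long as $l_s \ge r_n$ for every $s$ and $\pi(\cdot \mid X_{\pa}) \ge \gamma$ pointwise, one has $p^\pi(x_{l_1}, \ldots, x_{l_S}) \ge \gamma^S$.

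First I would observe that Assumption~\ref{asp:Markov_chain} together with Corollary~\ref{cor:primitive} guarantees that $P_\pi$ is primitive, and hence by the Perron--Frobenius theorem a unique stationary distribution $\mu^\pi$ on $r_n$-windows exists. Initializing the chain with $(x_1,\ldots,x_{r_n}) \sim \mu^\pi$ makes the joint law $p^\pi$ strictly stationary, so for any indices $l_1,\ldots,l_S \ge r_n$ the marginal of $p^\pi$ on $(x_{l_1},\ldots,x_{l_S})$ coincides with the marginal of the stationary distribution $\mu^\pi$ on the same tokens. This is the only substantive step and is immediate from the defining property of stationarity under the $n$-gram Markov structure.

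Then I would apply Proposition~\ref{prop:transit_lb} to this stationary initialization: the uniform bound $\pi(\cdot\mid X_{\pa}) \ge \gamma$ still holds for the same kernel $\pi$, the constraint $l_s \ge r_n$ is the same, and therefore $p^\pi(x_{l_1}, \ldots, x_{l_S}) \ge \gamma^S$. Combining this with the identification of marginals established in the previous step yields $\mu^\pi(x_{l_1}, \ldots, x_{l_S}) \ge \gamma^S$, which is exactly the claim.

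There is no real obstacle here; the corollary is essentially a one-line consequence obtained by swapping an arbitrary $\mu_0$ for the stationary $\mu^\pi$ and invoking the fact that, for a Markov chain started from its stationary distribution, every finite-dimensional marginal agrees with the corresponding marginal of the stationary law. The only subtlety worth flagging is that $\mu^\pi$ was originally defined as the stationary distribution on a window of length $M+1$; here we implicitly use that its marginal on any subset of coordinates with indices $\ge r_n$ is itself a marginal of the stationary law over the whole sequence, which follows from the definition of stationarity.
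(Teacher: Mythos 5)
Your argument is correct and is essentially the same as the paper's: right after stating the corollary the authors note that it follows from Proposition~\ref{prop:transit_lb} by taking $\mu_0 = \mu^\pi$, so that $p^\pi$ coincides with the stationary law. Your additional remark about the length-$(M+1)$ window versus the full stationary process is a legitimate (and correctly resolved) point of care that the paper leaves implicit.
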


We prove \Cref{prop:transit_lb} as follows. 
\begin{proof}
[Proof of \Cref{prop:transit_lb}]
    Without loss of generality, suppose that $M\le l_1 < l_2 <\ldots < l_S$. 
    We will prove the statement by induction on the number of tokens $S$. 
    If $S=1$, we can rewrite
    \begin{align}
        p^\pi(x_{l_1}) = \sum_{X_{\pa(l_1)}} \pi(x_{l_1}\given X_{\pa(l_1)}) p^\pi(X_{\pa(l_1)})  \ge \sum_{X_{\pa(l_1)}} \gamma \cdot p^\pi(X_{\pa(l_1)}) \ge \gamma. 
    \end{align}
    Now, suppose the statement holds for $1, 2, \ldots, S-1$. 
    Let $Y = x_{l_1}, \ldots, x_{l_{s-1}}$.
    Then, we have 
    \begin{align}
        p^\pi(x_{l_1}, \ldots, x_{l_S}) &= \sum_{X_{\pa(l_S)}\setminus Y} \pi(x_{l_S} \given X_{\pa(l_S)}) \cdot p^\pi(Y) \cdot p^\pi(X_{\pa(l_S)}\setminus Y) \\
        &\ge \sum_{X_{\pa(l_S)}\setminus Y} \gamma \cdot p^\pi(Y) \cdot p^\pi(X_{\pa(l_S)}\setminus Y)= \gamma \cdot p^\pi(Y) \ge \gamma^{S}, 
    \end{align}
    where the last inequality holds by the induction condition. Hence, we finish the proof. 
\end{proof}

Before analyzing $\hat{p}^\pi$, we first study a simpler convergence result: quantifying the closeness between $\sum_{l=M+1}^{L} \eta^{L-l} p^\pi(B_l=b) / \sum_{l=M+1}^{L} \eta^{L-l}$ and $\mu^\pi(b)$ for certain values of $\eta \in (0, 1]$.
\begin{lemma}
    \label{lem:p-2-mu}
    Following the notations introduced above, for the Markov chain with parent set $\pa=\{-r_1, \ldots, -r_n\}$,
    let $D_{\chi^2}(\mu_0\,\|\, \mu^\pi)$ be the $\chi^2$-divergence between the initial distribution $\mu_0$ and the stationary distribution $\mu^\pi$ over the first $r_n$ tokens.
    Take any $\cS\subseteq [M]$ and let $Y_l = (x_l, X_{l-\cS})$ for $l = M+1, \ldots, L+1$.
    Suppose $L/2 \ge M\ge r_n$.
    We have 
    \begin{align}
        \bigg\|\frac{\sum_{l=M+1}^L p^\pi(Y_l =\cdot)}{L - M} - \mu^\pi(Y_{L+1}=\cdot)\bigg\|_{\mathrm{TV}} 
        &\le \frac{2\sqrt{D_{\chi^2}(\mu_0\,\|\, \mu^\pi) + 1}}{L(1 - \lambda)}, 
        \label{eq:windows-TV}\\
        \norm{p^\pi (Y_{L+1}=\cdot) - \mu^\pi (Y_{L+1}=\cdot)}_{\TV} &\le \lambda^{L - M} \sqrt{D_{\chi^2}(\mu_0\,\|\, \mu^\pi) + 1}.\label{eq:p-2-mu-4}
    \end{align}
\end{lemma}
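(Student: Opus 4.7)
The plan is to reduce both inequalities to a single tail bound on the marginal law of an $r_n$-token anchor window. Define $A_l := (x_{l-M}, \ldots, x_{l-M+r_n-1})$. Since $Y_l = (x_l, X_{l-\cS})$ is supported on positions in $\{l-M, \ldots, l\}$ and, conditional on $A_l$, the remaining positions are generated by the time-homogeneous Markov kernel, the conditional law $q(\cdot \mid a) := p^\pi(Y_l = \cdot \mid A_l = a)$ depends on $a$ but not on $l$. The data-processing contraction of total variation under any Markov kernel then yields
\begin{align*}
\bigl\|p^\pi(Y_l) - \mu^\pi(Y_{L+1})\bigr\|_{\TV} \le \bigl\|p^\pi(A_l) - \mu^\pi\bigr\|_{\TV},
\end{align*}
and, by linearity, the same inequality holds after averaging over $l \in \{M+1,\ldots,L\}$ on both sides. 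This reduces the problem to controlling how fast the marginal of $A_l$ equilibrates.

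The next step is to identify $A_l$ with a state of the block Markov chain $Z_l = (x_{l-r_n+1}, \ldots, x_l)$, so $A_l = Z_{l-M+r_n-1}$ and, viewing distributions as column vectors, $p^\pi(A_l) = P_\pi^{l-M-1}\mu_0$ with $Z_{r_n}\sim\mu_0$. Applying \Cref{prop:K} with $T := K - \sqrt{\mu^\pi}\,\sqrt{\mu^\pi}^\top$ gives
\begin{align*}
p^\pi(A_l) - \mu^\pi = \diag(\sqrt{\mu^\pi})\, T^{l-M-1} (\mu_0/\sqrt{\mu^\pi}),
\end{align*}
where the anchor vector satisfies $\|\mu_0/\sqrt{\mu^\pi}\|_2^2 = \sum_Z \mu_0(Z)^2/\mu^\pi(Z) = D_{\chi^2}(\mu_0 \| \mu^\pi) + 1$. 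Under the spectral-gap portion of \Cref{asp:Markov_chain}, $T$ contracts the orthogonal complement of $\sqrt{\mu^\pi}$ with operator norm at most $\lambda$. For \eqref{eq:p-2-mu-4} I would set $l = L+1$ and convert the $\ell_1$ norm to an $\ell_2$ norm via Cauchy--Schwarz (using $\|\sqrt{\mu^\pi}\|_2 = 1$), yielding $\|p^\pi(A_{L+1}) - \mu^\pi\|_1 \le \lambda^{L-M}\sqrt{D_{\chi^2}(\mu_0\|\mu^\pi)+1}$. For \eqref{eq:windows-TV}, averaging over $l$ produces the partial geometric series $\sum_{k=0}^{L-M-1} T^k$ whose operator norm on $\sqrt{\mu^\pi}^\perp$ is at most $(1-\lambda)^{-1}$; combined with the prefactor $1/(L-M)$ and the hypothesis $L \ge 2M$ (so $L - M \ge L/2$), this produces the stated $2/[L(1-\lambda)]$ rate.

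The main subtlety will be that $K$ need not be self-adjoint in general, so neither $\|T^k\|_2 \le \lambda^k$ nor $\|(I - T)^{-1}\|_2 \le (1-\lambda)^{-1}$ is automatic from the eigenvalue bound $|\lambda_2(P_\pi)| \le \lambda$. I plan to treat these operator-norm bounds as built into the spectral-gap hypothesis of \Cref{asp:Markov_chain}, which is the standard convention in this literature and holds unconditionally when the block chain is reversible. With that in hand, the remaining steps---\Cref{prop:K}, Cauchy--Schwarz, and the geometric series bound---are routine, and the two displayed inequalities follow by specializing $l$ to $L+1$ and averaging, respectively.
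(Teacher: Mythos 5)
Your proposal is essentially the same argument as the paper's: both reduce the law of $Y_l$ to the law of an $r_n$-token anchor window by data processing (the paper uses $B_{l-M+r_n}$, you use $A_l$, which are the same up to an index shift), then pass through \Cref{prop:K} to re-express the deviation from stationarity via powers of $T = K - \sqrt{\mu}\sqrt{\mu}^\top$, and finish with Cauchy--Schwarz against $\sqrt\mu$ to convert an $\ell_2$ bound to a TV bound. The subtlety you flag is real and is also present, unacknowledged, in the paper's proof: the inequality in \eqref{eq:p-2-mu-1} bounds $\|T^{l-r_n}\|_2$ by $\lambda^{l-r_n}$ citing only that $T$ has spectral radius at most $\lambda$, which does not follow from \Cref{asp:Markov_chain} alone when $K$ is not normal (the block chain is generally not reversible, so $K$ need not be symmetric). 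Your decision to regard the operator-norm contraction as built into the spectral-gap hypothesis is the same implicit move the paper makes; neither proof closes this gap rigorously, but given that convention both are otherwise complete, and your geometric-series bound on $\|\sum_k T^k\|_2$ is equivalent to the paper's termwise scalar summation.
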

\begin{proof}[Proof of \cref{lem:p-2-mu}]
    Let $c_l = \eta^{L-l} / \sum_{l=r_n}^{L-M+r_n} \eta^{L-l}$, where $\eta \in [0, 1]$ is a constant to be determined. 
    Denote by $\mu_0$, a vector of length $|\cX|^{r_n}$, the initial distribution of the chain.
    We begin by quantifying the total variation (TV) distance:
    \begin{align}
        \bigg\| \frac{\sum_{l=r_n}^{L-M+r_n} \lambda^{L-l} p^\pi(B_l=\cdot)}{\sum_{l=r_n}^{L-M+r_n} \lambda^{L-l}} - \mu^\pi(\cdot)\bigg\|_{\mathrm{TV}} = \bigg \| \sum_{l=r_n}^{L-M+r_n} c_l \cdot \left(p^\pi(B_l=\cdot) - \mu^\pi(\cdot)\right)\bigg\|_{\mathrm{TV}}.
    \end{align}
    Let $b \in \cX^{r_n}$, representing the value for a length-$r_n$ window. Using matrix notation, we have:
    \begin{align}
        \sum_{l=r_n}^{L-M+r_n} c_l \left(p^\pi(B_l=b) - \mu^\pi(b)\right)
        &= \sum_{l=r_n}^{L-M+r_n} c_l \cdot \vone_b^\top   P^{l - r_n}  (\mu_0 - \mu)= \sum_{l=r_n}^{L-M+r_n} c_l \cdot \vone_b^\top  (P^{l - r_n}  - \mu \vone^\top)\mu_0 \\
        &= \sum_{l=r_n}^{L-M+r_n} c_l \cdot \vone_B^\top   \diag\left(\sqrt{\mu}\right)  \bigl(K - \sqrt \mu \sqrt \mu^\top\bigr)^{l - r_n} \diag\left(\sqrt{\mu}\right)^{-1} \mu_0, 
    \end{align}
    where $\vone_b$ is the indicator vector corresponding to $b$. The last equality follows from \Cref{prop:K}. 
    For any test vector $u \in \{0, 1\}^{|\cX|^{r_n}}$, using the variational representation of TV distance:
    \begin{align}
        &\bigg \|\sum_{l=r_n}^{L-M+r_n} c_l \left(p^\pi(B_l=\cdot) - \mu^\pi(\cdot)\right)\bigg\|_{\mathrm{TV}} 
        = \max_{u \in \{0, 1\}^{|\cX|^{r_n}}} u^\top  \sum_{l=r_n}^{L-M+r_n} c_l \left(p^\pi(B_l=\cdot) - \mu^\pi(\cdot)\right) \\
        &\quad =  \max_{u \in \{0, 1\}^{|\cX|^{r_n}}}\sum_{l=r_n}^{L-M+r_n} c_l \cdot  u^\top \diag\left(\sqrt{\mu}\right)  \cdot \bigl(K - \sqrt \mu \sqrt \mu^\top\bigr)^{l - r_n} \cdot \diag\left(\sqrt{\mu}\right)^{-1} \cdot \mu_0 \\
        &\quad \le \sum_{l=r_n}^{L-M+r_n} c_l \cdot \lambda^{l - r_n} \cdot \left\|{\diag\left(\sqrt{\mu}\right)^{-1} \cdot \mu_0}\right\|_2 = \sum_{l=r_n}^{L-M+r_n} c_l \cdot \lambda^{l - r_n} \cdot \sqrt{D_{\chi^2}(\mu_0\,\|\, \mu^\pi) + 1}, 
        \label{eq:p-2-mu-1}
    \end{align}
    where the inequality holds by $\norm{u^\top \diag\left(\sqrt{\mu}\right)}_2 \le \norm{\sqrt \mu}_2 =1$ and $K- \sqrt \mu \sqrt \mu^\top$ has leading eigenvalue with magnitude $\lambda$.
    The last identity follows directly from the definition of the $\chi^2$-divergence that $D_{\chi^2}(\mu_0\,\|\, \mu^\pi) +1= \sum_{b} { \mu_0(b)^2}/{\mu^\pi(b)} $.

    Substituting the definition of $c_l$, we have 
    \begin{align}
        \left\| \frac{\sum_{l=r_n}^{L-M+r_n} \eta^{L-l} p^\pi(B_l=b)}{\sum_{l=r_n}^{L-M+r_n} \eta^{L-l}} - \mu^\pi(A=b)\right\|_{\mathrm{TV}} 
        &\le  \frac{ \sum_{l=r_n}^{L-M+r_n} \eta^{L - l} \cdot \lambda^{l- r_n} \cdot \sqrt{D_{\chi^2}(\mu_0\,\|\, \mu^\pi) + 1}}{\sum_{l=r_n}^{L-M+r_n} \eta^{L-l}}. 
    \end{align}
    We consider two special cases. In the first case, we set $\eta = \lambda$, which gives us
    \begin{align}
        \left\| \frac{\sum_{l=r_n}^{L-M+r_n} \lambda^{L-l} p^\pi(B_l=b)}{\sum_{l=r_n}^{L-M+r_n} \lambda^{L-l}} - \mu^\pi(A=b)\right\|_{\mathrm{TV}} 
        &\le \frac{ \sum_{l=r_n}^{L-M+r_n} \lambda^{L- r_n} \cdot \sqrt{D_{\chi^2}(\mu_0\,\|\, \mu^\pi) + 1}}{(1 - \lambda^{L-M})/(1-\lambda)} \\
        &\le \frac{L \cdot \lambda^{L - r_n} \cdot (1-\lambda)}{1-\lambda^{L-M}} \cdot \sqrt{D_{\chi^2}(\mu_0\,\|\, \mu^\pi) + 1}. 
    \end{align}
    In the second case, we set $\eta = 1$, which gives us
    \begin{align}
        \left\| \frac{\sum_{l=r_n}^{L-M+r_n} p^\pi(B_l=\cdot)}{L -M} - \mu^\pi(A=\cdot)\right\|_{\mathrm{TV}} 
        \!\!\! \le  \frac{\sum_{l=r_n}^{L-M+r_n} \lambda^{l- r_n}  \sqrt{D_{\chi^2}(\mu_0\,\|\, \mu^\pi) + 1}}{L-M} \le \frac{\sqrt{D_{\chi^2}(\mu_0\,\|\, \mu^\pi) + 1}}{(L-M)(1 - \lambda)}. 
    \end{align}

    Note that the TV distance is an $f$-divergence.
    Thus, we can use the data processing inequality to obtain the desired result for $Y_l$ from the above inequality.
    To do so, note that $\sum_{l=M+1}^L p^\pi(Y_l=\cdot) / (L-M)$ and $ \mu^\pi(Y_{L+1}=\cdot)$ can be transformed from $\sum_{l=r_n}^{L-M+r_n-1} p^\pi(B_l=\cdot) / (L-M)$ and $\mu^\pi(A=\cdot)$ by the same emission kernel 
    \begin{align}
        p^\pi(Y_{L+1}=\cdot \given A = \cdot) = p^\pi(Y_{l}=\cdot \given B_{l-M+r_n} = \cdot) = \mu^\pi(Y_{L+1}=\cdot \given A = \cdot)= \mu^\pi(Y_{l}=\cdot \given B_{l-M+r_n} = \cdot).
    \end{align}
    Therefore, by the data processing inequality, it holds that
    \begin{align}
        \bigg\|\frac{\sum_{l=M+1}^L p^\pi(Y_l =\cdot)}{L-M} - \mu^\pi(Y_{L+1}=\cdot)\bigg\|_{\mathrm{TV}} 
        \!\!\!\!\!\!\leq \bigg \|\frac{\sum_{l=r_n}^{L-M+r_n} p^\pi(B_{l} =\cdot)}{L-M } - \mu^\pi(A=\cdot)\bigg\|_{\mathrm{TV}} \!\!\!\!\!\!\le \frac{\sqrt{D_{\chi^2}(\mu_0\,\|\, \mu^\pi) + 1}}{(L-M)(1 - \lambda)}.
    \end{align}
    Similarly for $p^\pi (Y_{L+1} = \cdot)$ and $\mu^\pi(\cdot)$, we have
    \begin{align}
        &\norm{p^\pi (Y_{L+1}=\cdot) - \mu^\pi (Y_{L+1}=\cdot)}_{\TV} 
        \le \norm{p^\pi (A=\cdot) - \mu^\pi (A=\cdot)}_{\TV} \\
        &\quad \le \max_{u\in\{0, 1\}^{|\cX|^{r_n}}} u^\top \cdot   \diag\left(\sqrt{\mu}\right) \cdot \bigl(K - \sqrt \mu \sqrt \mu^\top\bigr)^{L - M} \cdot \diag\left(\sqrt{\mu}\right)^{-1} \cdot \mu_0 
        \le \lambda^{L - M} \sqrt{D_{\chi^2}(\mu_0\,\|\, \mu^\pi) + 1}, 
    \end{align}
    where the latter two inequality follows from the same arguments as in \eqref{eq:p-2-mu-1}.
    Hence, the proof is completed.
\end{proof}

We have established that the average $\sum_{l=M+1}^L p^\pi(Y_l=\cdot)/(L-M)$ converges to $\mu^\pi(A=\cdot)$ in total variation distance.
This represents a ``first-order'' convergence since it involves the average of the marginal distribution of $Y_l$.
However, the quantity of interest in \eqref{def:hat-p} is the average of the joint distribution of $Y_{L+1}$ and $Y_l$, which concerns ``second-order'' convergence.
This is studied in the following lemma.
\begin{lemma}
    \label{lem:hat p-approx-TV}
    Following the notations introduced above, for the Markov chain with parent set $\pa=\{-r_1, \ldots, -r_n\}$,
    let $D_{\chi^2}(\mu_0\,\|\, \mu^\pi)$ be the $\chi^2$-divergence between the initial distribution $\mu_0$ and the stationary distribution $\mu^\pi$ over the first $r_n$ tokens.
    Take any $\cS, \cS'\subseteq [M]$ and let $Y_l = (x_l, X_{l-\cS})$ and $Y_l' = (x_l, X_{l-\cS'})$ for $l = M+1, \ldots, L+1$.
    Suppose $L/2 \ge M\ge r_n$.
    For $\hat p^\pi$ defined in \eqref{def:hat-p}, we have
    \begin{align}
        \left\|\hat p^\pi (Y_{L+1}=\cdot, Y'=\cdot) - \mu^\pi(Y_{L+1}=\cdot) \times \mu^\pi(Y'=\cdot)\right\|_{\mathrm{TV}}\le \frac{2M}{L} + \frac{4\sqrt{D_{\chi^2}(\mu_0\,\|\, \mu^\pi) + 1}}{L (1-\lambda)\cdot \sqrt{\min_{E} \mu^\pi(Y_{L+1}=E)}}. 
    \end{align}
    In particular, we have 
    \begin{align}
        &\bigg\|\hat p^\pi (Y_{L+1}=\cdot, Y'=\cdot) - \mu^\pi(Y_{L+1}=\cdot) \times \bigg(
        \frac{1}{L-M} \sum_{l=M+1}^L p^\pi(Y_l' = \cdot )
        \bigg) \bigg\|_{\TV} \\
        &\quad \le \frac{2M}{L} + \frac{2\sqrt{D_{\chi^2}(\mu_0\,\|\, \mu^\pi) + 1}}{L (1-\lambda)\cdot \sqrt{\min_{E} \mu^\pi(Y_{L+1}=E)}}.
    \label{eq:hat p-approx-TV-mid}
    \end{align}
\end{lemma}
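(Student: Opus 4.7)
The plan is to follow the two-step structure that the stated intermediate bound \eqref{eq:hat p-approx-TV-mid} suggests. Writing $\bar p(Y'):= (L-M)^{-1}\sum_{l=M+1}^L p^\pi(Y_l'=\cdot)$, I would apply the triangle inequality
\[
\|\hat p^\pi - \mu^\pi(Y_{L+1})\otimes\mu^\pi(Y')\|_{\mathrm{TV}} \le \|\hat p^\pi - \mu^\pi(Y_{L+1})\otimes\bar p(Y')\|_{\mathrm{TV}} + \|\bar p(Y') - \mu^\pi(Y')\|_{\mathrm{TV}}.
\]
The second summand is handled directly by \Cref{lem:p-2-mu} applied to the subset $\cS'$ defining $Y'$, yielding $2\sqrt{D_{\chi^2}(\mu_0\,\|\,\mu^\pi)+1}/[L(1-\lambda)]$; this gets absorbed into the looser $1/\sqrt{\min_E \mu^\pi(Y_{L+1}=E)}\ge 1$ factor appearing in the first summand. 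So the bulk of the work is to prove the intermediate bound \eqref{eq:hat p-approx-TV-mid}.

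For the intermediate bound, I would write the first summand as the $l$-average of $\Delta_l(E,E') := p^\pi(Y_{L+1}=E, Y_l'=E') - \mu^\pi(Y_{L+1}=E)\,p^\pi(Y_l'=E')$ and split the sum at $l = L-M+r_n$ into a ``good'' range $l \in [M+1, L-M+r_n]$ (on which the Markov splits in \eqref{eq:cond_indep_yy} hold) and a ``bad'' range of length at most $M-r_n$. Since $\|\Delta_l\|_{\mathrm{TV}}\le 1$ for every $l$, the bad range contributes at most $(M-r_n)/(L-M) \le 2M/L$ under $L\ge 2M$. For the good range, set $k_l := L-M+r_n-l \ge 0$; the two conditional independences then factor $\Delta_l$ as
\[
\Delta_l(E, E') = \sum_{A, B_l} \mu^\pi(Y_{L+1}=E\mid A)\,\bigl[P^{k_l}(A, B_l) - \mu^\pi(A)\bigr]\,p^\pi(B_l, Y_l' = E').
\]

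For any $\{0,1\}$-valued test $u(E, E')$, introducing $g(A, E') := \sum_E u(E,E')\,\mu^\pi(Y_{L+1}=E\mid A) \in [0,1]$ and invoking \Cref{prop:K} would rewrite
\[
\sum_{E, E'} u(E, E')\,\Delta_l(E, E') = \sum_{E'} \tilde v^{(E')\top}\,(K - \sqrt\mu\sqrt\mu^\top)^{k_l}\,\tilde w^{(E')},
\]
with $\tilde v^{(E')}_A = g(A, E')\sqrt{\mu^\pi(A)}$ and $\tilde w^{(E')}_{B_l} = p^\pi(B_l, Y_l'=E')/\sqrt{\mu^\pi(B_l)}$. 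Combining $\|(K - \sqrt\mu\sqrt\mu^\top)^{k_l}\|_{\mathrm{op}} \le \lambda^{k_l}$ with Cauchy--Schwarz over $E'$ would give $\lambda^{k_l}\sqrt{\sum_{E'}\|\tilde v^{(E')}\|_2^2}\cdot\sqrt{\sum_{E'}\|\tilde w^{(E')}\|_2^2}$. The first factor is bounded by $1$ since $g^2\le g$ and $\sum_{E'}g(A,E')\le 1$ jointly yield $\sum_{A,E'}g(A,E')^2\mu^\pi(A)\le\sum_A\mu^\pi(A)=1$. The second factor is controlled by $\sum_{E'}p^\pi(Y_l'=E'\mid B_l)^2\le 1$ followed by the data-processing inequality for $\chi^2$-divergence, giving $\sqrt{D_{\chi^2}(p^\pi(B_l)\,\|\,\mu^\pi(B_l))+1}\le\sqrt{D_{\chi^2}(\mu_0\,\|\,\mu^\pi)+1}$. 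Summing the geometric series $\sum_{k_l\ge 0}\lambda^{k_l}\le 1/(1-\lambda)$ over good $l$ then yields the bound $\tfrac{2M}{L} + \tfrac{2\sqrt{D_{\chi^2}(\mu_0\|\mu^\pi)+1}}{L(1-\lambda)}$, which is in fact slightly tighter than the form of \eqref{eq:hat p-approx-TV-mid} quoted in the lemma.

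The hard part is arranging this double Cauchy--Schwarz cleanly. The key challenge is to simultaneously (i) place the inner products in $\sqrt\mu$-weighted coordinates so that the operator norm of $(K-\sqrt\mu\sqrt\mu^\top)^{k_l}$ extracts the spectral decay $\lambda^{k_l}$ via \Cref{prop:K}; (ii) on the test-function side, sum over $E'$ and collapse via $\sum_{E'}g(A,E')\le 1$ using the $[0,1]$-range of $g$; and (iii) on the joint $(B_l, Y_l')$ side, sum over $E'$ and collapse via $\sum_{E'}p^\pi(Y_l'=E'\mid B_l)^2\le 1$ before invoking data processing to reduce to $D_{\chi^2}(\mu_0\,\|\,\mu^\pi)$. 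Once this scaffold is in place, the remaining pieces---the $2M/L$ bound on the tail of bad $l$ and the geometric summation in $k_l$---are mechanical.
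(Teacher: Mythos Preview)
Your overall architecture matches the paper's proof: triangle inequality through the intermediate measure $\mu^\pi(Y_{L+1})\otimes\bar p(Y')$, splitting the $l$-sum into a bad tail of length at most $M-r_n$ (contributing $\mathrm{TV}_0\le 2M/L$) and a good range on which the conditional-independence factorization \eqref{eq:cond_indep_yy} applies, then passing to the $\sqrt\mu$-reweighted kernel via \Cref{prop:K} to extract the geometric decay $\lambda^{k_l}$. Your treatment of the $B_l$-side factor $\sum_{E'}\|\tilde w^{(E')}\|_2^2\le D_{\chi^2}(\mu_0\,\|\,\mu^\pi)+1$ is also fine.

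However, your Cauchy--Schwarz on the test-function side has a genuine gap. You assert $\sum_{E'} g(A,E')\le 1$, but with $g(A,E')=\sum_E u(E,E')\,\mu^\pi(Y_{L+1}=E\mid A)$ and $u\equiv 1$ one gets $g(A,E')=1$ for every $A,E'$, hence $\sum_{E'} g(A,E')=|\cX|^{|\cS'|+1}$. So $\sum_{E'}\|\tilde v^{(E')}\|_2^2\le 1$ is false, and the ``slightly tighter'' bound you obtain---without the factor $1/\sqrt{\min_E\mu^\pi(Y_{L+1}=E)}$---is a symptom of this slip rather than an improvement.

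The paper orders the Cauchy--Schwarz differently to avoid this. For each fixed $E$ it first contracts the test row $u(E,\cdot)$ against the $B_l$-side, using that $p^\pi(Y_l'=\cdot\mid B_l)^\top u(E,\cdot)$ is entrywise in $[0,1]$ to get the uniform-in-$E$ bound $\lambda^{k_l}\sqrt{D_{\chi^2}(\mu_0\,\|\,\mu^\pi)+1}$. Only then does it sum over $E$: rewriting $\mu^\pi(Y_{L+1}=E\mid A)\sqrt{\mu^\pi(A)}=\mu^\pi(A\mid Y_{L+1}=E)\,\mu^\pi(Y_{L+1}=E)/\sqrt{\mu^\pi(A)}$ via Bayes and applying a second Cauchy--Schwarz with weights $\mu^\pi(Y_{L+1}=E)$ extracts the factor $\sqrt{I_{\chi^2}(A;Y_{L+1})+1}$, which is then bounded by $1/\sqrt{\min_E\mu^\pi(Y_{L+1}=E)}$. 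This is exactly the origin of that factor in \eqref{eq:hat p-approx-TV-mid}; with your Cauchy--Schwarz over $E'$ it cannot be removed.
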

\begin{proof}[Proof of \cref{lem:hat p-approx-TV}]
Let us take 
$ \mu^\pi(E) \cdot  
    (L-M)^{-1} \sum_{l=M+1}^L p^\pi(Y_l = E')
$ as the intermediate distribution, and we have by \eqref{def:hat-p} that
\begin{align}
    &\hat p^\pi (Y_{L+1}=E, Y'=E') - \mu^\pi(Y_{L+1}=E) \cdot \bigg(
        \frac{1}{L-M} \sum_{l=M+1}^L p^\pi(Y_l = E')
    \bigg)  \\
    &  = \underbrace{\frac{1}{L-M} \sum_{l=M+1}^{L-M+r_n}  \sum_{A, B_l} \mu^\pi(Y_{L+1} = E \given A)\cdot \bigl(P^{L - l - (M - r_n)} (A \given B_l) - \mu^\pi(A) \bigr)\cdot  p^\pi(Y_l' = E'\given B_l) \cdot  p^\pi(B_l)}_{\dr (\RNum{1})} \\
    & \quad + \underbrace{\frac{1}{L-M} \sum_{l=L-M+r_n+1}^L \left(p^\pi(Y_{L+1} = E, Y_l' = E') - \mu^\pi(Y_{L+1} = E) p^\pi(Y_l' = E')\right)}_{\dr (\RNum{2})}.\label{eq:hat p-2-product}
\end{align}
where we use the fact that $\sum_A \mu^\pi(Y_{L+1} = E \given A) \mu^\pi(A) = \mu^\pi(Y = E)$ for the first line.
The second term on the right hand side can be easily controlled as we already have an $L^{-1}$ factor. We let $\mathrm{TV}_0$ be the total variation distance of the second term.
It is easy to see that
\begin{align}
    \mathrm{TV}_0 \defeq \frac 1 2 \sum_{E, E'} |\text{(\RNum{2})}| \le \frac{M-r_n}{L-M} \le \frac{M}{L-M}, 
\end{align}
where we remark that (\RNum{2}) is a function of both $E$ and $E'$, and the total variation distance is just taking the sum of the absolute values of the differences.
Here, we also use the fact that $L\ge 2M$.
Using \Cref{prop:K},
we can also rewrite the first term on the right hand side of \eqref{eq:hat p-2-product} in the matrix form as
\begin{align}
    {\dr (\RNum{1})}&=  \frac{1}{L-M} \sum_{l=M+1}^{L-M+r_n}  \mu^\pi(Y_{L+1}=\cdot \given A=\cdot) \cdot \diag\left(\sqrt{\mu}\right) \cdot \bigl(K - \sqrt \mu \sqrt \mu^\top\bigr)^{L-l - (M-r_n)} \cdot \diag\left(\sqrt{\mu}\right)^{-1} \\
    &\autoquad{5} \cdot \diag(p^\pi(B_l = \cdot)) \cdot p^\pi(Y_l' = \cdot\given B_l=\cdot)^\top .
\end{align}
When considering the $\ell_1$-norm of the above term, we introduce a test matrix $U$ of shape $|\cX|^{|Y_{L+1}|} \times |\cX|^{|Y_{L+1}|}$ with each element of $U$ chosen from $\{0, 1\}$.
Let $\mathrm{TV}_1 $ be the total variation distance of the first term (\RNum{1}). 
Then, we have
\begin{align}
    \mathrm{TV}_1 & \le \max_{U} \trace\biggl[\frac{1}{L-M} \sum_{l=M+1}^{L-M+r_n}  \mu^\pi(Y_{L+1}=\cdot \given A=\cdot) \cdot \diag\left(\sqrt{\mu}\right) \cdot \bigl(K - \sqrt \mu \sqrt \mu^\top\bigr)^{L-l - (M-r_n)}  \\
    &\autoquad{5} \cdot \diag\left(\sqrt{\mu}\right)^{-1}\cdot \diag(p^\pi(B_l = \cdot)) \cdot p^\pi(Y_l' = \cdot\given B_l=\cdot)^\top \cdot U(\cdot, \cdot)^\top \biggr]. 
\end{align}
To upper bound this quantity, we consider each row of $U$ as $U(E, \cdot) = u(\cdot \given E)^\top$. 
Note that $u(\cdot \given E)$ is also a $\{0, 1\}$-valued vector.
By expanding the trace, we have
\begin{align}
    \mathrm{TV}_1 & \le 
    \sum_{E}\max_{u(\cdot \given E)} \frac{1}{L-M} \sum_{l=M+1}^{L-M+r_n}  \mu^\pi(Y_{L+1}=E \given A=\cdot) \cdot \diag\left(\sqrt{\mu}\right)  \\
    &\quad \cdot \bigl(K - \sqrt \mu \sqrt \mu^\top\bigr)^{L-l - (M-r_n)}  \cdot \diag\left(\sqrt{\mu}\right)^{-1}\cdot \diag(p^\pi(B_l = \cdot)) \cdot p^\pi(Y_l' = \cdot\given B_l=\cdot)^\top \cdot u(\cdot \given E). 
\end{align}
Note that the $\ell_2$-norm of the vector in the last line can be upper bounded by
\begin{align}
    \!\! &\left\|\bigl(K - \sqrt \mu \sqrt \mu^\top\bigr)^{L-l - (M-r_n)} \cdot \diag\left(\sqrt{\mu}\right)^{-1} \cdot \diag(p^\pi(B_l = \cdot)) \cdot p^\pi(Y_l' = \cdot\given B_l=\cdot)^\top \cdot u(\cdot \given E)\right\|_2 \\
    &\quad \le \left\|\lambda^{L-l - (M-r_n)} \cdot \diag\left(\sqrt{\mu}\right)^{-1}\cdot \diag(p^\pi(B_l = \cdot)) \cdot \vone\right\|_2  
    = \lambda^{L-l - (M-r_n)} \bigl\|\diag\left(\sqrt{\mu}\right)^{-1}\cdot p^\pi(B_l = \cdot)\bigr\|_2 \\
    &\quad = \lambda^{L-l - (M-r_n)} \sqrt{D_{\chi^2}(p^\pi(B_l =\cdot )\,\|\, \mu^\pi(B_l=\cdot)) + 1} \le \lambda^{L-l - (M-r_n)} \sqrt{D_{\chi^2}(\mu_0\,\|\, \mu^\pi) + 1},
    \label{eq:hat p-approx-TV-1}
\end{align}
where the first inequality holds by noting that $p^\pi(Y_l' = \cdot\given B_l=\cdot)^\top \cdot u(\cdot \given E)$ is a vector with element within $[0, 1]$, and also invoking the operator norm of the matrix $\tilde K - \sqrt \mu \sqrt \mu^\top$.
The second identity follows from the definition of the $\chi^2$-divergence that $D_{\chi^2}(p^\pi(B_l =\cdot )\,\|\, \mu^\pi(\cdot)) + 1 = \sum_{b} { p^\pi(B_l = b)^2}/{\mu^\pi(b)}$.
The last inequality is the data processing inequality as $p^\pi(B_l = \cdot)$ can be transformed from $\mu_0(B_{r_n})$ and $\mu^\pi(B_{l})$ can be transformed from $\mu^\pi(B_{r_n})$ by the same emission kernel $\mu^\pi(B_l = \cdot \given B_{r_n}=\cdot)$.
Consequently, we have for the TV distance that
\begin{align}
    \mathrm{TV}_1 & \le \frac{1}{L-M} \sum_{l=M+1}^{L-M+r_n} \lambda^{L-l - (M-r_n)}\cdot \sqrt{D_{\chi^2}(\mu_0\,\|\, \mu^\pi) + 1}  \\
    &\autoquad{3} \cdot  \max_{\{v(\cdot \given E)\}_{E}:\; \norm{v(\cdot \given E)}_2\le 1} \sum_{E, A} \mu^\pi(Y_{L+1}=E \given A) \cdot \sqrt{\mu^\pi(A)} \cdot v(A \given E)\\
    &\le \frac{\sqrt{D_{\chi^2}(\mu_0\,\|\, \mu^\pi) + 1}}{ (L - M)(1-\lambda)} \cdot \max_{\{v(\cdot \given E)\}_{E}:\; \norm{v(\cdot \given E)}_2\le 1}  \sum_{A, E} \frac{\mu^\pi(A\given Y_{L+1}= E)}{\sqrt{\mu^\pi(A)}} \cdot v(A\given E) \cdot \mu^\pi(Y_{L+1}=E) \\
    &\le \max_{\{v(\cdot \given E)\}_{E}:\; \norm{v(\cdot \given E)}_2\le 1} \frac{\sqrt{D_{\chi^2}(\mu_0\,\|\, \mu^\pi) + 1}}{(L-M) (1-\lambda)} \cdot \sqrt{I_{\chi^2}(A; Y_{L+1}) + 1} \cdot  \sqrt{\sum_{A, E} v(A\given E)^2 \cdot \mu^\pi(Y_{L+1}=E)}. 
\end{align}
where in the first equality, we use the variational form of the $\ell_2$-norm for vector $\mu^\pi(Y_{L+1}=E \given A=\cdot) \cdot \diag(\sqrt{\mu})$. 
In the second inequality, we apply \eqref{eq:hat p-approx-TV-1} and use the Bayes rule. 
The last inequality follows from the Cauchy-Schwarz inequality.
Here, the mutual information $I_{\chi^2}(A; Y_{L+1}) + 1$ can be upper bounded by 
\begin{align}
    I_{\chi^2}(A; Y_{L+1}) + 1 
    = \sum_{A, E} \frac{\mu^\pi(Y_{L+1}=E\given A)}{\mu^\pi(Y_{L+1}= E)} \cdot \mu^\pi(Y_{L+1}=E,  A) \le \frac{1}{\min_{E} \mu^\pi(Y_{L+1}=E)}, 
\end{align}
and the last term involving $v(A\given E)$ can be upper bounded by $1$ thanks to the constraint on $v(\cdot \given E)$.
In conclusion, 
\begin{align}
    \mathrm{TV}_1 \le \frac{\sqrt{D_{\chi^2}(\mu_0\,\|\, \mu^\pi) + 1}}{(L-M)(1 - \lambda)\cdot \sqrt{\min_{E} \mu^\pi(Y_{L+1}=E)}}.   
\end{align}
Lastly, let us relate the intermediate distribution to the final distribution $\mu^\pi(Y=\cdot) \times \mu^\pi(Y'=\cdot)$, where we define the total variation distance $\mathrm{TV}_2$ as
\begin{align}
    \mathrm{TV}_2 &\defeq \bigg\|\mu^\pi(\cdot) \cdot \bigg(
        \frac{1}{L-M} \sum_{l=M+1}^{L} p^\pi(Y_l' = \cdot)
    \bigg) - \mu^\pi(\cdot) \cdot \mu^\pi(\cdot)\bigg\|_{\mathrm{TV}} = \bigg\|\bigg(
        \frac{1}{L-M} \sum_{l=M+1}^{L} p^\pi(Y_l' = \cdot)
    \bigg) - \mu^\pi(\cdot)\bigg\|_{\mathrm{TV}}.
\end{align}
Invoking \eqref{eq:windows-TV} of \cref{lem:p-2-mu}, we have this quantity upper bounded by 
\begin{align}
    \mathrm{TV}_2 & \le \frac{\sqrt{D_{\chi^2}(\mu_0\,\|\, \mu^\pi) + 1}}{(L-M)(1 - \lambda)}.
\end{align}
Using the triangular inequality for the total variation distance, we have
\begin{align}
    &\left\|\hat p^\pi (Y_{L+1}=\cdot, Y'=\cdot) - \mu^\pi(Y_{L+1}=\cdot) \times \mu^\pi(Y'=\cdot)\right\|_{\mathrm{TV}} \\
    &\quad \le \mathrm{TV}_0 + \mathrm{TV}_1 + \mathrm{TV}_2 \\
    &\quad \le \frac{M}{L-M} + \frac{2\sqrt{D_{\chi^2}(\mu_0\,\|\, \mu^\pi) + 1}}{(L-M) (1-\lambda)\cdot \sqrt{\min_{E} \mu^\pi(Y_{L+1}=E)}} \\
    &\quad \le \frac{2M}{L} + \frac{4\sqrt{D_{\chi^2}(\mu_0\,\|\, \mu^\pi) + 1}}{L (1-\lambda)\cdot \sqrt{\min_{E} \mu^\pi(Y_{L+1}=E)}}, 
\end{align}
and the upper bound for \eqref{eq:hat p-approx-TV-mid} follows by the same arguments.
Hence, the proof is completed.
\end{proof}

In the following, we use a similar technique as in \cref{lem:hat p-approx-TV} to derive a bound for the chi-square divergence. 
\begin{lemma}
    \label{lem:convergence chi-square}
    For the $\chi^2$-divergence between the empirical distribution $(L-M)^{-1} \sum_{l=M+1}^L \allowbreak  \ind(Y_l = \cdot)$ and the stationary distribution $\mu^\pi(\cdot)$, we have 
    \begin{align}
        \EE\bigg[D_{\chi^2} \bigg(
            \frac{1}{L-M} \sum_{l=M+1}^{L}  \ind(Y_l = \cdot) \,\Big\|\, \mu^\pi(Y_{L+1}=\cdot)
        \bigg)\bigg] \le \frac{4 (1-\lambda)^{-1}\sqrt{D_{\chi^2}(\mu_0\,\|\, \mu^\pi) + 1} + 16 M}{L \cdot \min_{E} \mu^\pi(Y_{L+1}=E)}, 
    \end{align}
    where the expectation is with respect to $X\sim p^\pi$. 
\end{lemma}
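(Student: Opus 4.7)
}

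The plan is to first obtain a closed-form identity for the expected $\chi^2$ divergence, then bound the resulting double sum by a case split on the lag $|l-l'|$, leveraging the Markov mixing factorization from \Cref{prop:K} and the tools already developed for \Cref{lem:p-2-mu,lem:hat p-approx-TV}.

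\textbf{Step 1 (closed-form identity).} Expanding the square in $D_{\chi^2}$ and using $\sum_E \hat p(E) = \sum_E \mu^\pi(E) = 1$, one directly computes
\[
\EE\bigl[D_{\chi^2}(\hat p\,\|\, \mu^\pi)\bigr] \;=\; \sum_E \frac{\EE[\hat p(E)^2]}{\mu^\pi(E)} - 1 \;=\; \frac{1}{(L-M)^2}\sum_{l,l'=M+1}^L \Lambda(l,l'),
\]
where $\Lambda(l,l') := \sum_E \mu^\pi(E)^{-1}\,p^\pi(Y_l=E,\,Y_{l'}=E) - 1$. I would then split the outer double sum into near-diagonal pairs ($|l-l'|\le M$, for which the token windows of $Y_l$ and $Y_{l'}$ can overlap) and far pairs ($|l-l'|>M$, for which they are disjoint).

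\textbf{Step 2 (near-diagonal pairs).} For $|l-l'|\le M$, I would use the crude bound $p^\pi(Y_l=E, Y_{l'}=E) \le p^\pi(Y_l=E)$, giving $\Lambda(l,l')\le 1/\min_E \mu^\pi(E)$. Since there are at most $(2M+1)(L-M)$ such pairs, their contribution to the double sum is at most $(2M+1)/\bigl((L-M)\min_E\mu^\pi(E)\bigr)$, which matches the $16M/(L\min_E\mu^\pi(E))$ term after using $L-M\ge L/2$.

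\textbf{Step 3 (far pairs).} For $l'>l+M$ (the other case is symmetric), the token windows of $Y_l$ and $Y_{l'}$ are disjoint, so by the Markov property the joint distribution factorizes through a length-$r_n$ ``bridge'' state $B$ lying between the two windows:
\[
p^\pi(Y_l=E, Y_{l'}=E) \;=\; \sum_{B_{\mathrm{pre}},B_{\mathrm{post}}} \mu^\pi(Y_l=E\mid B_{\mathrm{pre}})\,p^\pi(B_{\mathrm{pre}})\,P^{\,l'-l-O(M)}(B_{\mathrm{post}},B_{\mathrm{pre}})\,\mu^\pi(Y_{l'}=E\mid B_{\mathrm{post}}).
\]
Subtracting the corresponding stationary quantity $\mu^\pi(E)^2$ (and using $\sum_E (p^\pi(Y_l=E)-\mu^\pi(E)) = 0$), one can rewrite
\[
\Lambda(l,l') \;=\; \sum_E \mu^\pi(E)^{-1}\!\!\!\!\sum_{B_{\mathrm{pre}},B_{\mathrm{post}}} \!\!\!\mu^\pi(Y_l=E\mid B_{\mathrm{pre}})\,p^\pi(B_{\mathrm{pre}})\bigl[P^{\,l'-l-O(M)} - \mu\vone^\top\bigr](B_{\mathrm{post}},B_{\mathrm{pre}})\,\mu^\pi(Y_{l'}=E\mid B_{\mathrm{post}}).
\]
Inserting the spectral factorization $P^i - \mu\vone^\top = \diag(\sqrt\mu)(K-\sqrt\mu\sqrt\mu^\top)^i\diag(\sqrt\mu)^{-1}$ from \Cref{prop:K} and applying Cauchy--Schwarz in the $\sqrt\mu$-weighted inner product (exactly as in the proofs of \Cref{lem:p-2-mu} and \Cref{lem:hat p-approx-TV}) produces a factor $\lambda^{l'-l-O(M)}$ from the operator norm of $K-\sqrt\mu\sqrt\mu^\top$, a factor $\sqrt{D_{\chi^2}(\mu_0\|\mu^\pi)+1}$ from pushing $\diag(\sqrt\mu)^{-1}p^\pi(B_{\mathrm{pre}})$ through the data-processing inequality, and a factor $1/\min_E\mu^\pi(E)$ absorbed from the weight $\mu^\pi(E)^{-1}$. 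Summing the geometric series over the far pairs $(l,l')$ telescopes to an overall factor $(1-\lambda)^{-1}/(L-M)$, yielding a contribution bounded by $4(1-\lambda)^{-1}\sqrt{D_{\chi^2}(\mu_0\|\mu^\pi)+1}/(L\min_E\mu^\pi(E))$ -- precisely the first term in the target bound. Combining Steps 2 and 3 completes the proof.

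\textbf{Main obstacle.} The delicate part is Step 3: the quantity $\Lambda(l,l')$ carries the $\chi^2$-style weight $1/\mu^\pi(E)$, whereas the spectral factorization in \Cref{prop:K} naturally controls $\ell_2$ (and hence TV) distances. One has to insert $1/\min_E\mu^\pi$ at the right place -- essentially by bounding $\mu^\pi(Y_{l'}=E\mid B_{\mathrm{post}})/\mu^\pi(E)\le 1/\min_E\mu^\pi(E)$ and pulling it outside the inner product -- so that the Cauchy--Schwarz estimate against the $K$-spectrum still yields the geometric decay $\lambda^{l'-l}\sqrt{D_{\chi^2}(\mu_0\|\mu^\pi)+1}$, rather than an inflated bound. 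It is exactly this combination of the geometric series in Step 3 with the $O(M)$ off-diagonal contribution from Step 2 that gives the $1/L$ scaling in the final estimate, as opposed to the $1/\sqrt L$ one would get from a naive TV-to-$\chi^2$ conversion.
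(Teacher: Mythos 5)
Your proposal follows essentially the same route as the paper's proof: expand the $\chi^2$-divergence into the double sum $\sum_{l,l'}\Lambda(l,l')$, crude-bound the $O(ML)$ near-diagonal pairs by $1/\min_E\mu^\pi(E)$, and handle the far pairs by factorizing the joint law through length-$r_n$ bridge states, inserting the spectral decomposition of \Cref{prop:K}, and applying Cauchy--Schwarz to extract the $\lambda^{|l-l'|}$ decay and the $\sqrt{D_{\chi^2}(\mu_0\|\mu^\pi)+1}$ factor. The only cosmetic difference is in how the $1/\min_E\mu^\pi(E)$ weight is distributed: the paper applies a trace Cauchy--Schwarz and gets a $1/\min_E$ factor from \emph{each} of the two resulting trace terms before the square root, rather than pulling $\mu^\pi(Y_{l'}=E\mid B)/\mu^\pi(E)$ out a priori as you suggest, but both accomplish the same thing and yield the same final bound.
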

\begin{proof}[Proof of \cref{lem:convergence chi-square}]
    By definition of the $\chi^2$-divergence, what we aim to bound is just 
    \begin{align}
        &\EE \bigg[
            \sum_{E}  \bigg(
                (L-M)^{-1} \sum_{l=M+1}^L  \ind(Y_l = E) - \mu^\pi(E)
                \bigg)^2 \bigg / \mu^\pi(E) 
                \bigg]  \\
        &\quad = \EE \bigg[
                    \sum_{E} \frac{
                        (L-M)^{-2} \sum_{l, l'= M+1}^L  \ind(Y_l = Y_{l'} = E) - \mu^\pi(E)^2
                    }{\mu^\pi(E)}
                \bigg]\\
        &\quad = \EE\bigg[
            \sum_E  \sum_{l, l'=M+1}^L  \frac{\ind(Y_l = Y_{l'} = E)}{(L-M)^{2} \mu^\pi(E)}  - 1
        \bigg] = \sum_E  \sum_{l, l'=M+1}^L  \frac{p^\pi(Y_l = Y_{l'} = E)}{(L-M)^{2} \mu^\pi(E)}  - 1.
    \end{align}
    To study the above quantity, for $l \ge 2M-r_n+2$, we define
    \begin{align}
        J_1(l) \defeq 
            \sum_E \sum_{l'=M+1}^{l - M + r_n-1}  \frac{p^\pi(Y_l = Y_{l'} = E)}{(L-M)^{2} \mu^\pi(E)} - \frac{l-2M+r_n}{(L-M)^2}.
    \end{align}
    Following our convention, we let $A_l=X_{l-M:l-M+r_n-1}$ and $B_{l'}=X_{l'-r_n+1:l'}$ be two length-$r_n$ window and by the Markov property, we have
    \begin{align}
        Y_{l+1} \indep (B_{l'}, Y_{l'}) \given A_l, \quad (Y_{l+1}, B_{l}) \indep Y_{l'} \given B_{l'}. 
    \end{align}
    Let us fix an index $l \ge 2M-r_n+2$ and take a summation over $M+1 \le l' \le l - M+r_n -1$.
    Expanding the joint distribution, we have
    \begin{align}
        J_1(l) &\defeq \frac{1}{(L-M)^2} \sum_{l'=M+1}^{l- M+r_n-1}  \sum_{E, A_l, B_{l'}} \mu^\pi(Y_{l} = E \given A_l)  \cdot \bigl(P^{l - l' - M+r_n-1} (A_l \given B_{l'}) - \mu^\pi(A_l) \bigr) \\
        &\hspace{3cm} \cdot p^\pi(Y_{l'} = E\given B_{l'}) \cdot p^\pi(B_{l'}) \cdot \mu^\pi(Y_{l'}=E)^{-1}\\
        &=  \frac{1}{(L-M)^2} \sum_{l'=M+1}^{l- M+r_n-1}  \trace\Bigl[\mu^\pi(Y_{l}=\cdot \given A_l=\cdot) \cdot \diag\left(\sqrt{\mu}\right) \cdot \bigl(K - \sqrt \mu \sqrt \mu^\top\bigr)^{l-l'- M + r_n - 1}  \\
        &\hspace{3cm} \cdot \diag\bigl(\sqrt{\mu}\bigr)^{-1} \cdot \diag(p^\pi(B_{l'} = \cdot)) \cdot p^\pi(Y_{l'} = \cdot\given B_{l'}=\cdot)^\top \cdot \diag(\mu^\pi(Y_{l'}=\cdot)^{-1}) \Bigr] \\
        &=  \frac{1}{(L-M)^2} \sum_{l'=M+1}^{l- M+r_n-1}  \trace\Bigl[\diag(\mu^\pi(Y_{l'}=\cdot)^{-1/2}) \cdot \mu^\pi(Y_{l}=\cdot \given A_l=\cdot) \cdot \diag\left(\sqrt{\mu}\right) \\
        &\hspace{3cm} \cdot \bigl(K - \sqrt \mu \sqrt \mu^\top\bigr)^{l-l'- M + r_n - 1}  \cdot \diag\bigl(\sqrt{\mu}\bigr)^{-1} \cdot \diag(p^\pi(B_{l'} = \cdot)) \\
        &\hspace{4cm}\cdot p^\pi(Y_{l'} = \cdot\given B_{l'}=\cdot)^\top \cdot \diag(\mu^\pi(Y_{l'}=\cdot)^{-1/2}) \Bigr],
    \end{align}
    where the first identity follows from the fact that 
    \begin{align} 
        &\sum_{E, A_l, B_l'} \mu^\pi(Y_l=E\given A_l) \cdot \mu^\pi(A_l) \cdot p^\pi(Y_{l'}=E\given B_{l'}) \cdot p^\pi(B_{l'}) \cdot \mu^\pi(Y_{l'}=E)^{-1}\\
       &\quad = \sum_{E} p^\pi(Y_{l'}=E) \cdot \mu^\pi(Y_{l'}=E) \cdot \mu^\pi(Y_{l'}=E)^{-1} = 1, 
    \end{align}
    and the second identity follows from \Cref{prop:K}.
    We next invoke the Cauchy-Schwarz inequality for trace, i.e., $\trace(W^\top V)^2\le \trace(W^\top W) \trace(V^\top V)$, where we take
    \begin{align}
        W^\top &= \diag(\mu^\pi(Y_{l}=\cdot)^{-1/2}) \cdot \mu^\pi(Y_{l}=\cdot \given A_l=\cdot) \cdot \diag(\sqrt{\mu}) \cdot \bigl(K - \sqrt \mu \sqrt \mu^\top\bigr)^{l-l' - M+r_n-1},  \\
        V & = \diag\bigl(\sqrt{\mu}\bigr)^{-1} \cdot \diag(p^\pi(B_{l'} = \cdot)) \cdot p^\pi(Y_{l'} = \cdot\given B_{l'}=\cdot)^\top \cdot \diag(\mu^\pi(Y_{l'}=\cdot)^{-1/2}) \\
        &= \diag\bigl(\sqrt{\mu}\bigr) \cdot p^\pi(Y_{l'} = \cdot\given B_{l'}=\cdot)^\top \cdot \diag(\mu^\pi(Y_{l'}=\cdot)^{-1/2})
    \end{align}
    Note that 
    \begin{align}
        \sqrt{\trace(W^\top W)} 
        &\le \lambda^{l-l'-M+r_n-1} \cdot \sqrt{\trace\left(\diag(\mu^\pi(Y_{l}=\cdot)^{-1})  \mu^\pi(Y_{l}=\cdot \given A=\cdot)  \diag\left(\mu \right)  \mu^\pi(Y_{l}=\cdot \given A=\cdot)^\top\right)}\\
        &=\lambda^{l-l'-M+r_n-1} \cdot \sqrt{\sum_{A_l, Y_l}\frac{\mu^\pi(Y_l , A_l)^2}{\mu^\pi(Y_l ) \cdot \mu^\pi(A_l )}}.
    \end{align}
    Following the same calculation, we have
    \begin{align}
        \sqrt{\trace(V^\top V)} 
        &= \sqrt{\sum_{Y_{l'}, B_{l'}} \frac{p^\pi(Y_{l'} , B_{l'})^2}{\mu^\pi(Y_{l'}) \mu^\pi(B_{l'})}}.
    \end{align}
    Therefore, 
    \begin{align}
        J_1(l)\le \frac{1}{(L-M)^2} \sum_{l'=M+1}^{l- M+r_n-1} \lambda^{l-l'- M+r_n-1} \cdot 
        \sqrt{\sum_{A_l, Y_l}\frac{\mu^\pi(Y_l , A_l)^2}{\mu^\pi(Y_l ) \cdot \mu^\pi(A_l )} \cdot \sum_{Y_{l'}, B_{l'}} \frac{p^\pi(Y_{l'} , B_{l'})^2}{\mu^\pi(Y_{l'}) \mu^\pi(B_{l'})}}.
    \end{align}
    We further have
    \begin{align}
        \sum_{Y_{l'}, B_{l'}} \frac{p^\pi(Y_{l'} , B_{l'})^2}{\mu^\pi(Y_{l'}) \mu^\pi(B_{l'})}
        &\le \max_{Y_{l'}, B_{l'}} \left\{\frac{p^\pi(Y_{l'} \given B_{l'} )}{\mu^\pi(Y_{l'} )}\right\} \cdot \sum_{B_{l'}}\frac{p^\pi(B_{l'})^2 }{\mu^\pi(B_{l'})} \\
        &\le \frac{D_{\chi^2} (p^\pi(B_{l'}=\cdot) \,\|\, \mu^\pi(B_{l'}=\cdot))  + 1}{\min_{E} \mu^\pi(Y_{L+1}=E)} 
        \le \frac{D_{\chi^2}(\mu_0 \,\|\, \mu^\pi) + 1}{\min_{E} \mu^\pi(Y_{L+1}=E)}, 
    \end{align}
    where the last inequality holds by the data processing inequality. 
    Similarly, we have 
    \begin{align}
        \sum_{A_l, Y_l}\frac{\mu^\pi(Y_l , A_l)^2}{\mu^\pi(Y_l ) \cdot \mu^\pi(A_l )} 
        &\le \max_{Y_l, A_l} \left\{ \frac{\mu^\pi(Y_l\given A_l)}{\mu^\pi(Y_l)}\right\} \le \frac{1}{\min_{E} \mu^\pi(Y_{L+1}=E)}.
    \end{align}
    Therefore, we conclude that 
    \begin{align}
        J_1(l) \le \frac{\sqrt{D_{\chi^2}(\mu_0 \,\|\, \mu^\pi) + 1}}{(L-M)^2 (1-\lambda) \cdot \min_{E} \mu^\pi(Y_{L+1}=E)},
    \end{align}
    and 
    \begin{align}
        2\sum_{l=2M-r_n+2}^{L} J_1(l) \le \frac{2\sqrt{D_{\chi^2}(\mu_0 \,\|\, \mu^\pi) + 1}}{(L-M) (1-\lambda) \cdot \min_{E} \mu^\pi(Y_{L+1}=E)}, 
    \end{align}
    where we double the value as $l>l'$ only contributes to half of the terms in the double summation. 
    Note that in the above summation for $l> l'$, we only include terms satisfying $l - l' \ge M-r_n+1$ and $l - (M+1) \ge M -r_n +1$. 
    For the remaining $(l, l')$ not included above, each term is bounded above by 
    \begin{align}
        \bigg|\frac{1}{(L-M)^{2}} \bigg(\sum_E  \frac{p^\pi(Y_l = Y_{l'} = E)}{\mu^\pi(E)} -1\bigg)\bigg| \le \frac{1}{(L-M)^{2} \min_{E} \mu^\pi(Y_{L+1}=E)}, 
    \end{align}
    and we have no more than $4 L (M -r_n +1)$ of these terms in total. 
    As a result, we conclude with $L/2\ge M \ge r_n$ that
    \begin{align}
        J_1 \le \frac{4 (1-\lambda)^{-1}\sqrt{D_{\chi^2}(\mu_0\,\|\, \mu^\pi) + 1} + 16 M}{L \cdot \min_{E} \mu^\pi(Y_{L+1}=E)}. 
    \end{align}
    Hence, we complete the proof of \Cref{lem:convergence chi-square}.
\end{proof}

\begin{proposition}
    \label{prop:weighted_approximation}
    Let us define
    \begin{align}
        \tilde \mu_X^\pi(z, Z_{-\cS^\star}) = \frac{\mu^\pi(z, Z_{-\cS^\star}) \exp\left(
            a \cdot \prod_{h\in\cS^\star}\ind(z_{-  h} = x_{L+1-h})
        \right)}{\sum_{z', Z_{-\cS^\star}'} \mu^\pi(z', Z_{-\cS^\star}') \exp\left(
            a \cdot \prod_{h\in\cS^\star}\ind(z_{-h}' = x_{L+1-h})
        \right)}, 
    \end{align}
    where $Z_{-\cS^\star} = (z_{-h})_{h\in\cS^\star}$ and $\mu^\pi$ is the stationary distribution of the Markov chain over a window of size $M+1$. 
    We also treat $\tilde \mu_X^\pi(\cdot)$ as a length $|\cX|$ vector where $\cX$ is the state space of the Markov chain.
    Let $\hat \nu_X^\pi(z, Z_{-\cS^\star}) = \sum_{l=M+1}^L \sigma_l^\star \ind(x_l = z, X_{l-\cS^\star} = Z_{-\cS^\star})$ where 
    \begin{align}
        \sigma_l^\star = \frac{\exp(a \cdot \prod_{h\in\cS^\star} \ind(x_{l-h} = x_{L+1-h}) )}{\sum_{l'=M+1}^L \exp(a \cdot \prod_{h\in\cS^\star} \ind(x_{l'-h} = x_{L+1-h}) )}.
    \end{align}
    Then, we have
    \begin{align}
        \EE_X\left[
        \left\|\tilde\mu_X^\pi(z=\cdot, Z_{-\cS^\star}=\cdot) -\hat \nu_X^\pi(z=\cdot, Z_{-\cS^\star}=\cdot)\right\|_{1}
    \right] 
    &\le \frac{4 \bigl(  (1-\lambda)^{-1}\sqrt{D_{\chi^2}(\mu_0\,\|\, \mu^\pi) + 1} + 4 M \bigr) ^{1/2}}{L^{1/2}\cdot \min_{x_{L+1}, X_{L+1 -\cS^\star}} \mu^\pi(x_{L+1}, X_{L+1 -\cS^\star})}. 
    \end{align}
\end{proposition}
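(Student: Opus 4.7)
}
The plan is to reduce the $\ell_1$ discrepancy between the reweighted population distribution $\tilde\mu_X^\pi$ and its empirical counterpart $\hat\nu_X^\pi$ to the $\ell_1$ discrepancy between the \emph{unweighted} stationary distribution $\mu^\pi$ on $(z,Z_{-\cS^\star})$ and the empirical $\hat\mu_X^\pi(z,Z_{-\cS^\star}) = (L-M)^{-1}\sum_{l=M+1}^L \ind(x_l = z,\, X_{l-\cS^\star} = Z_{-\cS^\star})$, which we already know how to control via \Cref{lem:convergence chi-square}. Concretely, let $R(Z_{-\cS^\star}) = \exp\bigl(a\prod_{h\in\cS^\star}\ind(z_{-h}=x_{L+1-h})\bigr) \in \{1, e^a\}$, so that $\tilde\mu_X^\pi = \mu^\pi R / \Phi$ and $\hat\nu_X^\pi = \hat\mu_X^\pi R / \hat\Phi$ with $\Phi = \sum \mu^\pi R$ and $\hat\Phi = \sum \hat\mu_X^\pi R$. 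Rescaling numerator and denominator by $e^{-a}$, I replace $R$ with $R' = Re^{-a} \in [e^{-a},1]$; the new normalizer $\Phi' = e^{-a} + (1-e^{-a})\mu^\pi(X_{L+1-\cS^\star})$ then satisfies the clean lower bound $\Phi' \ge \mu^\pi(X_{L+1-\cS^\star}) \ge \min_{x,X_{-\cS^\star}} \mu^\pi(x,X_{-\cS^\star})$.

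Next I apply the standard quotient identity
\begin{align}
\tilde\mu_X^\pi - \hat\nu_X^\pi
= \frac{R'(\mu^\pi - \hat\mu_X^\pi)}{\Phi'} + \hat\nu_X^\pi \cdot \frac{\hat\Phi' - \Phi'}{\Phi'}.
\end{align}
Taking $\ell_1$-norms, using $\|\hat\nu_X^\pi\|_1 = 1$, $|\Phi'-\hat\Phi'| \le \|R'(\mu^\pi - \hat\mu_X^\pi)\|_1 \le \|\mu^\pi - \hat\mu_X^\pi\|_1$ (since $R' \le 1$), and the lower bound on $\Phi'$, I obtain the deterministic inequality
\begin{align}
\|\tilde\mu_X^\pi - \hat\nu_X^\pi\|_1 \le \frac{2\|\mu^\pi - \hat\mu_X^\pi\|_1}{\min_{x,X_{-\cS^\star}} \mu^\pi(x,X_{-\cS^\star})}.
\end{align}

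The last step is to bound $\EE_X \|\mu^\pi - \hat\mu_X^\pi\|_1$. A Cauchy--Schwarz with weights $\sqrt{\mu^\pi(y)}$ gives the pointwise inequality $\|\mu^\pi - \hat\mu_X^\pi\|_1 \le \sqrt{D_{\chi^2}(\hat\mu_X^\pi \,\|\, \mu^\pi)}$, and Jensen then yields $\EE_X\|\mu^\pi - \hat\mu_X^\pi\|_1 \le \sqrt{\EE_X D_{\chi^2}(\hat\mu_X^\pi \,\|\, \mu^\pi)}$. Invoking \Cref{lem:convergence chi-square} with $Y_l = (x_l, X_{l-\cS^\star})$ and $\cS = \cS^\star$ converts this into the $L^{-1/2}$ rate controlled by the mixing quantity $(1-\lambda)^{-1}\sqrt{D_{\chi^2}(\mu_0\,\|\,\mu^\pi) + 1} + 4M$, and combining with the quotient bound above produces the claimed estimate.

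The main obstacle I anticipate is matching the exact denominator power in the statement: the straightforward chain above introduces an additional $\sqrt{\min\mu^\pi(Y_{L+1})}$ factor from the $\ell_1$-to-$\chi^2$ conversion, suggesting a denominator of order $(\min \mu^\pi)^{3/2}$ rather than the advertised $(\min\mu^\pi)^{1}$. Closing this gap likely requires either (i)~splitting the summation over $(z,Z_{-\cS^\star})$ into the matching block $\{Z_{-\cS^\star} = X_{L+1-\cS^\star}\}$ and its complement and handling each block with its own (tighter) normalizer control, or (ii)~bypassing the $\chi^2$ intermediate and performing a direct second-moment computation on $\|\mu^\pi - \hat\mu_X^\pi\|_1$ using the block structure $Y_l = (x_l, X_{l-\cS^\star})$ together with the second-order Markov concentration underlying \Cref{lem:convergence chi-square}. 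In either case, the remaining calculations are routine once the decomposition is in place, and the mixing input $(1-\lambda)^{-1}$ enters only through the chi-square bound already available from \Cref{lem:convergence chi-square}.
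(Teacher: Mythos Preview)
Your approach --- quotient identity, Cauchy--Schwarz to $\chi^2$, then \Cref{lem:convergence chi-square} --- is exactly the paper's, and the paper resolves the denominator-power issue precisely via your option~(i). Concretely, writing $R = 1 + (e^a-1)\ind_{\Gamma_X}$ with $\Gamma_X=\{Z_{-\cS^\star}=X_{L+1-\cS^\star}\}$ and applying $(A+B)/(C+D)\le A/C+B/D$ to the split numerator and denominator yields
\[
\frac{\sum_{z,Z_{-\cS^\star}}|\phi-\hat\phi|}{\Phi}\;\le\;\|\mu^\pi-\hat\mu_X^\pi\|_1\;+\;\frac{\sum_z\bigl|\mu^\pi(z,X_{L+1-\cS^\star})-\hat\mu_X^\pi(z,X_{L+1-\cS^\star})\bigr|}{\mu^\pi(X_{L+1-\cS^\star})}.
\]
The first piece contributes only $(\min\mu^\pi)^{-1/2}$ after the $\ell_1\to\chi^2$ conversion. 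For the second, the paper introduces $E$ for the (random) value of $X_{L+1-\cS^\star}$, applies Cauchy--Schwarz in $X$ to separate $\ind(X_{L+1-\cS^\star}=E)$ from the deviation $|\mu^\pi(z,E)-\hat\mu_X^\pi(z,E)|/\sqrt{\mu^\pi(E)}$, and then Cauchy--Schwarz over $(E,z)$: one factor is again $\EE D_{\chi^2}(\hat\mu_X^\pi\,\|\,\mu^\pi)$ bounded by \Cref{lem:convergence chi-square}, the other is $\sum_{E}p^\pi(X_{L+1-\cS^\star}=E)/\mu^\pi(Z_{-\cS^\star}=E)\le (\min_E\mu^\pi(Z_{-\cS^\star}=E))^{-1}$. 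Multiplying the square roots gives $(\min\mu^\pi(x,X_{-\cS^\star}))^{-1}$, which dominates the first piece and matches the stated exponent. Your direct route loses the extra $(\min\mu^\pi)^{1/2}$ only because it expends the normalizer lower bound uniformly on the \emph{full} $\ell_1$ difference rather than on the matching block alone; the $e^{-a}$-rescaling trick is otherwise equivalent to the paper's handling of $\Phi$.
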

\begin{proof}[Proof of \cref{prop:weighted_approximation}]
    To unify the notations, we let $Z=(z_{-M},\dots, z_{-1})$ and define
    \begin{gather}
        \hat \mu_X^\pi(z, Z_{-\cS^\star}) = \frac{1}{L-M} \sum_{l=M+1}^L \ind(x_l = z, X_{l-\cS^\star}=Z_{-\cS^\star}), \\
        R(Z_{-\cS^\star}, X_{L+1-\cS^\star}) = \exp\left(
            a \cdot  \ind(Z_{-\cS^\star} = x_{L+1-\cS^\star})
        \right). 
    \end{gather}
    Using these notations, we can define the normalizing factor in $\tilde\mu_X^\pi$ and $y_X^\star$ respectively as
    \begin{align}
        \Phi = \sum_{z, Z_{-\cS^\star}} \mu^\pi(z, Z_{-\cS^\star}) \cdot R(Z_{-\cS^\star}, X_{L+1-\cS^\star}), \quad \hat\Phi = \sum_{z, Z_{-\cS^\star}} \hat\mu_X^\pi(z, Z_{-\cS^\star}) \cdot R(Z_{-\cS^\star}, X_{L+1-\cS^\star}).
    \end{align}
    We also define 
    \begin{align}
        \phi(z, Z_{-\cS^\star}) = \mu^\pi(z, Z_{-\cS^\star}) \cdot R(Z_{-\cS^\star}, X_{L+1-\cS^\star}), \quad \hat\phi(z, Z_{-\cS^\star}) = \hat\mu_X^\pi(z, Z_{-\cS^\star}) \cdot R(Z_{-\cS^\star}, X_{L+1-\cS^\star}).
    \end{align}
    We can then rewrite the objective as
    \begin{align}
        &\left\|\tilde\mu_X^\pi(z=\cdot, Z_{-\cS^\star}=\cdot) -\hat \nu_X^\pi(z=\cdot, Z_{-\cS^\star}=\cdot)\right\|_{1} \\
        &\quad = \sum_{z, Z_{-\cS^\star}} \bigg |\frac{\phi(z, Z_{-\cS^\star})}{\Phi} - \frac{\hat\phi(z, Z_{-\cS^\star})}{\hat\Phi}\bigg|  \le \!\!\! \sum_{z, Z_{-\cS^\star}}\!\!\!  \frac{\hat \phi(z, Z_{-\cS^\star}) \cdot |\hat \Phi  - \Phi| + |\phi(z, Z_{-\cS^\star}) - \hat\phi(z, Z_{-\cS^\star})| \cdot \hat \Phi}{\Phi \cdot \hat\Phi} \\
        &\quad = \frac{|\hat\Phi - \Phi| + \sum_{z, Z_{-\cS^\star}} |\phi(z, Z_{-\cS^\star}) - \hat\phi(z, Z_{-\cS^\star})|}{\Phi}
        \le \frac{2\sum_{z, Z_{-\cS^\star}} |\phi(z, Z_{-\cS^\star}) - \hat\phi(z, Z_{-\cS^\star})|}{\Phi}. \label{eq:weighted_approximation-1}
    \end{align}
    Furthermore, notice that 
    \begin{align}
        &\frac{\sum_{z, Z_{-\cS^\star}} |\phi(z, Z_{-\cS^\star}) - \hat\phi(z, Z_{-\cS^\star})|}{\Phi} 
        = \frac{\sum_{z, Z_{-\cS^\star}} \left|(\mu^\pi(z, Z_{-\cS^\star}) - \hat \mu_X ^\pi(z, Z_{-\cS^\star}) ) \cdot R(Z_{-\cS^\star}, X_{L+1-\cS^\star}) \right|}{ \sum_{z, Z_{-\cS^\star}} \mu^\pi(z, Z_{-\cS^\star}) \cdot R(Z_{-\cS^\star}, X_{L+1-\cS^\star})} \\
        &\quad  \le \frac{
            \sum_{z, Z_{-\cS^\star}} \left|(\mu^\pi(z, Z_{-\cS^\star}) - \hat\mu_X^\pi(z, Z_{-\cS^\star}))\right| + (e^a - 1)\sum_{z, Z_{-\cS^\star}\in\Gamma_X} \left|\mu^\pi(z, Z_{-\cS^\star}) - \hat\mu_X^\pi(z, Z_{-\cS^\star})\right|
        }{1 + (e^a - 1) \cdot \sum_{z, Z_{-\cS^\star}\in\Gamma_X} \mu^\pi(z, Z_{-\cS^\star})}\\
        &\quad \le \sum_{z, Z_{-\cS^\star}} \left|\mu^\pi(z,Z_{-\cS^\star} ) - \hat\mu_X^\pi(z, Z_{-\cS^\star})\right| + \frac{\sum_{z, Z_{-\cS^\star}\in\Gamma_X} \left|(\mu^\pi(z, Z_{-\cS^\star}) - \hat\mu_X^\pi(z, Z_{-\cS^\star}))\right|}{\sum_{z, Z_{-\cS^\star}\in\Gamma_X}\mu^\pi(z, Z_{-\cS^\star})}
        \label{eq:weighted_approximation-2}.
    \end{align}
    where we define $\Gamma_X = \{Z_{-\cS^\star}: Z_{-\cS^\star}=X_{L+1 - \cS^\star}\}$. 
    Note that when $Z_{-\cS^\star}\in\Gamma_X$, we have $R(Z_{-\cS^\star}, X_{L+1-\cS^\star}) = e^a$ and when $Z_{-\cS^\star}\notin \Gamma_X$, we have $R(Z_{-\cS^\star}, X_{L+1-\cS^\star}) = 1$.
    For the first term on the right-hand side of \eqref{eq:weighted_approximation-2}, we have by Cauchy-Schwarz that
    \begin{align}
        \EE_X\bigg[
            \sum_{z, Z_{-\cS^\star}} \left|\mu^\pi(z, Z_{-\cS^\star}) - \hat\mu_X^\pi(z, Z_{-\cS^\star})\right|
        \bigg]
        &\le \biggl( \EE_X\bigg[
            \sum_{z, Z_{-\cS^\star}} \frac{(\mu^\pi(z, Z_{-\cS^\star}) - \hat\mu_X^\pi(z, Z_{-\cS^\star}))^2}{\mu^\pi(z, Z_{-\cS^\star})}
        \bigg] \bigg)^{1/2} \\
        &\le \bigg( \frac{4 (1-\lambda)^{-1}\sqrt{D_{\chi^2}(\mu_0\,\|\, \mu^\pi) + 1} + 16 M}{L \cdot \min_{x_{L+1}, X_{L+1-\cS^\star}} \mu^\pi(x_{L+1}, X_{L+1-\cS^\star})} \bigg)^{1/2} , 
    \end{align}
    where in the last inequality, we invoke \cref{lem:convergence chi-square} where we take $Y_l = x_l$ in the lemma. 
    For the second term on the right hand of \eqref{eq:weighted_approximation-2}, we note that 
    \begin{align}
        & \EE_X\bigg[\frac{\sum_{z, Z_{-\cS^\star}\in\Gamma_X} \left|(\mu^\pi(z, Z_{-\cS^\star}) - \hat\mu_X^\pi(z, Z_{-\cS^\star}))\right|}{\sum_{z, Z_{-\cS^\star}\in\Gamma_X}\mu^\pi(z, Z_{-\cS^\star})}\bigg] \\
        &\quad \le \sum_{E, z} \EE_X\left[\frac{\left|\mu^\pi(z, Z_{-\cS^\star} = E) - \hat\mu_X^\pi(z, Z_{-\cS^\star} = E)\right|}{\mu^\pi(Z_{-\cS^\star} = E)}  \cdot \ind(X_{L+1 - \cS^\star} = E)\right] \\
        &\quad \le \sum_{E, z} \bigg( \EE_X\bigg[
            \Big(\frac{\mu^\pi(z, Z_{-\cS^\star} = E) - \hat\mu_X^\pi(z, Z_{-\cS^\star} = E)}{\sqrt{\mu^\pi(Z_{-\cS^\star} = E)}}\Big)^2 
        \bigg] \cdot 
            \frac{p^\pi(X_{L+1-\cS^\star} = E)}{\mu^\pi(Z_{-\cS^\star}=E)} \bigg)^{1/2}  \\
        &\quad \le \bigg ( \EE_X\bigg[
            \sum_{E, z} \frac{\left(\mu^\pi(z, Z_{-\cS^\star} = E) - \hat\mu_X^\pi(z, Z_{-\cS^\star} = E)\right)^2 }{\mu^\pi(Z_{-\cS^\star} = E)}
        \bigg] \cdot 
        \sum_{E, z} \frac{p^\pi(X_{L+1-\cS^\star} = E)}{\mu^\pi(Z_{-\cS^\star}=E)} \bigg)^{1/2} , 
        \label{eq:weighted_approximation-3}
    \end{align}
    where the last two inequalities 
 follow from the Cauchy-Schwarz inequality.
    We have an upper bound for the second term on the right-hand side of \eqref{eq:weighted_approximation-3} that
    \begin{align}
        \bigg( \sum_{E, z} \frac{p^\pi(X_{L+1-\cS^\star} = E)}{\mu^\pi(Z_{-\cS^\star}=E)}\bigg)^{1/2 } \le \sqrt{\frac{1}{\min_E \mu^\pi(Z_{-\cS^\star}=E)}}.
        \label{eq:weighted_approximation-4}
    \end{align}
    We can also apply \cref{lem:convergence chi-square} to the first term with $Y_{L+1} = (x_{L+1}, X_{L+1 -\cS^\star})$
    and conclude that 
    \begin{align}
        &\biggl( \EE_X \bigg[
            \sum_{E, z} \frac{\left(\mu^\pi(z, Z_{-\cS^\star} = E) - \hat\mu_X^\pi(z, Z_{-\cS^\star} = E)\right)^2 }{\mu^\pi(Z_{-\cS^\star} = E) } \bigg]\bigg)^{1/2}
         \\
        &\quad \le \bigg( \frac{4 (1-\lambda)^{-1}\sqrt{D_{\chi^2}(\mu_0\,\|\, \mu^\pi) + 1} + 16 M}{L \cdot \min_{x_{L+1}, X_{L+1 -\cS^\star}} \mu^\pi(x_{L+1}, X_{L+1 -\cS^\star})}\bigg)^{1/2} . 
        \label{eq:weighted_approximation-5}
    \end{align}
    In summary, we have 
    \begin{align}
        &\EE_X\left[
            \left\|\tilde\mu_X^\pi(e_k) -y^\star(k)\right\|_{1}
        \right] \\ 
        &\quad\le \frac{2}{\min_{x_{L+1}, X_{L+1 -\cS^\star}} \mu^\pi(x_{L+1}, X_{L+1 -\cS^\star})}\cdot \bigg( \frac{ (1-\lambda)^{-1}\sqrt{D_{\chi^2}(\mu_0\,\|\, \mu^\pi) + 1} + 4 M}{L }\bigg)^{1/2}  \\
        &\qqquad + 2\bigg( \frac{(1-\lambda)^{-1}\sqrt{D_{\chi^2}(\mu_0\,\|\, \mu^\pi) + 1} + 4 M}{L \cdot \min_{x_{L+1}, X_{L+1-\cS^\star}} \mu^\pi(x_{L+1}, X_{L+1-\cS^\star}))} \bigg)^{1/2} .
    \end{align}
   Note that the second term is dominated by the first term. Thus, we conclude the proof of \Cref{prop:weighted_approximation}.
\end{proof}

\end{document}